\tikzset{box1/.style={draw,rectangle,fill=blue!20,minimum height=0.58cm, minimum width=0.85cm}}
\tikzstyle{na} = [baseline=-.5ex]
\tikzstyle{every picture}+=[remember picture]
\tikzset{
  main/.style={circle, minimum size = 5mm, thick, draw =black!80, node distance = 10mm},
  connect/.style={-latex, thick},
  box/.style={rectangle, draw=black!100}
}
\newcommand{\unlearning}{unlearning }
\newcommand{\accw}{ \mathring{\w}}
\newcommand{\adjust}{\emph{recomputation}}
\title{Machine Unlearning via Algorithmic Stability}
\author{Enayat Ullah 
\thanks{Johns Hopkins University. Email: \url{enayat@jhu.edu}}
\qquad
Tung Mai\thanks{Adobe Research. Email: \url{tumai@adobe.com}}
\qquad
Anup Rao \thanks{Adobe Research. Email: \url{anuprao@adobe.com}}
\qquad
Ryan Rossi\thanks{Adobe Research. Email: \url{rrossi@adobe.com}}  
\qquad
Raman Arora \thanks{Johns Hopkins University. Email: \url{arora@cs.jhu.edu}}
}
\date{}
\begin{document}

\maketitle

\pagenumbering{arabic}
\begin{abstract}
    We study the problem of machine unlearning and identify a notion of algorithmic stability, Total Variation (TV) stability, which we argue, is suitable for the goal of \textit{exact} unlearning. For convex risk minimization problems, we design TV-stable algorithms based on noisy Stochastic Gradient Descent (SGD). Our key contribution is the design of corresponding \emph{efficient} unlearning algorithms, which are based on constructing a (maximal) coupling of Markov chains for the noisy SGD procedure.
    To understand the trade-offs between accuracy and unlearning efficiency, 
    we give upper and lower bounds on excess empirical and populations risk of TV stable algorithms for convex risk minimization. 
    Our techniques generalize to arbitrary non-convex functions, and our algorithms are differentially private as well.
\end{abstract}
\newpage

\renewcommand{\baselinestretch}{0.98}\normalsize
\setcounter{tocdepth}{2}
{\small
\hypersetup{linkcolor=mydarkblue,linktocpage}
\tableofcontents
}
\renewcommand{\baselinestretch}{1.0}\normalsize
\newpage

\section{Introduction}
\label{sec:intro}

User data is employed in data analysis for various tasks such as drug discovery, as well as in third-party services for tasks such as recommendations.
With such practices becoming ubiquitous, there is a growing concern that sensitive personal data can get compromised.
This has resulted in a push for
broader awareness of data privacy and ownership.
These efforts have led to several regulatory bodies enacting laws such as the European Union General Data Protection Regulation (GDPR) and California Consumer Act, which empowers the user with (among other things) the right to request to have their personal data be 
\emph{deleted} (see Right to be forgotten, Wikipedia \cite{wiki:Right_to_be_forgotten}).
However, currently it is unclear what it means to be \emph{forgotten} or have the data \emph{deleted} in a rigorous sense.
Nonetheless, there is a reasonable expectation that merely deleting a user's data from the database without undoing the computations derived from the said data is insufficient. 
In the settings where user data are directly utilized to build machine learning models for, say prediction tasks, a reasonable criterion is that the system's state is \textit{adjusted} to what it would have been if the user data were absent to begin with -- this is the criteria we adopt in our work. We refer to it as \emph{exact unlearning} (see \cref{defn:exact-unlearning} for a formal definition).

A straightforward way to comply with the requirement of exact unlearning is to recompute (or retrain, in context of machine learning). This, however, is often computationally expensive, and hence the goal is to design \emph{efficient} unlearning algorithms - herein and afterwards, we use efficient, in the context of unlearning, to mean that the runtime is smaller than recompute time.
Most of the prior work focuses on specific \textit{structured} problems (eg: linear regression, clustering etc.) and the proposed unlearning algorithms carefully leverage this structure for efficiency - see a discussion of useful algorithmic principles like linearity, modularity etc. which enable efficient unlearning, in \cite{ginart2019making}.
In this work, we consider a \textit{larger} class of problems: smooth convex empirical risk minimization (ERM), which includes many machine learning problems, eg: linear and logistic regression, as special cases (see \cref{sec:setup} for definitions). 
This class of smooth convex ERM is sufficiently rich and arguably lacks structure useful in prior works.
Gradient-based optimization is a key algorithmic technique used for smooth convex ERM (and most of machine learning and even beyond).
Unlike prior works where the (learning) algorithms for the specific problems (like linear regression) are tailored enough to the problem to be amenable to efficient unlearning, an optimization method hardly has any such useful structure.
In particular, the sequential nature of gradient descent makes it challenging to design non-trivial unlearning algorithms, at least those which satisfy an \emph{exact} unlearning criterion.
To elaborate, if the point to be deleted participates in some iteration, then the subsequent steps are dependent on the to-be-deleted point, and there is no known way but to redo the computations. 
It is natural to then ask whether we can design unlearning algorithms with non-trivial guarantees for this class of smooth convex ERM problems.

\paragraph{Problem statement (informal).} 
We consider a smooth convex ERM problem over a given initial dataset, in a setting wherein we observe a stream of edits (insertion or deletion) to the dataset. The goal is to design a learning algorithm that outputs an initial model and a (corresponding) unlearning algorithm that updates the model after an edit request. We require the following properties to hold: 1) exact unlearning --  at every time point in the stream, the output model is indistinguishable from what we would have obtained if trained on the \emph{updated} dataset (i.e., without the deleted sample or with the inserted sample); 2) the unlearning runtime should be \emph{small}; 3) the output models should be sufficiently accurate (measured in empirical risk).

\subsection{Our contributions}

\paragraph{Total variation stability.} We develop new algorithmic principles which \textit{enable} exact unlearning in very general settings. In particular, we propose a notion of algorithmic stability, called total variation (TV) stability - an algorithmic property, which for any problem, yields an \emph{in-principle} exact unlearning algorithm.
Such an algorithm might not be efficiently implementable computationally or due to the data access restriction (sequential nature of edits). 
To demonstrate the generality of our framework, we discuss, in \cref{sec:qsgd} how the previous work of \cite{ginart2019making} for unlearning in $k$-means clustering using randomized quantization can be interpreted as a special case of our framework - a TV stable method, followed by efficient coupling based unlearning.
We also note that the notion of TV-stability has appeared before in \cite{bassily2016algorithmic}, although in the seemingly unrelated context of adaptive data analysis.

\paragraph{Convex risk minimization.}
We make the above ideas of TV stability constructive in the special case of smooth convex ERM problems.
To elaborate, we give a TV stable learning algorithm, and a corresponding \textit{efficient} exact unlearning algorithm for smooth convex ERM. 
Informally, for $n$ data points, and $d$ dimensional model and a given $0<\rho\leq 1$, our method retrains only on $\rho$ fraction of edit requests, while satisfying exact unlearning and maintaining that the accuracy (excess empirical risk) is at most $\min\bc{\frac{1}{\sqrt{\rho n}}, \br{\frac{\sqrt{d}}{\rho n}}^{4/5}}$ (see \cref{thm:main-result} for precise statement). This implies that for the (useful) regime of accuracy greater than
$\min\bc{\frac{1}{\sqrt{n}}, \br{\frac{\sqrt{d}}{n}}^{4/5}}$, our algorithms provide a \textit{strict} improvement over the only known method of re-computation - see remarks after \cref{thm:main-result} for details.
Furthermore, we also give excess population risk bounds by leveraging known connections between generalization and algorithmic stability (see \cref{sec:pop-risk}).
Finally, we give preliminary lower bounds on excess empirical and population risk for TV stable algorithms for convex risk minimization.

\paragraph{Extensions.} Our results yield a number of interesting properties and extensions.
\begin{CompactItemize}
    \item   \textit{Privacy:} Even though privacy is not the goal of this work, some of our $\rho$-TV stable algorithms, those based on noisy SGD like \cref{alg:noisy-m-a-sgd}, \ref{alg:noisy-m-sgd} are $(\epsilon,\delta)$-differentially private (see \cref{defn:dp}) with $\epsilon = \rho \sqrt{\log{1/\delta}}$, for any $\delta>0$.  It is easy to see that these parameters can lie in the regime reasonable for \emph{good} privacy properties i.e. $\epsilon=O(1), \delta = \text{negl}(n)$. However, not all TV-stable algorithms, for instance \cref{alg:sub-sample-GD}, may have good privacy properties. 
    Our work therefore demonstrates interesting connections between techniques developed for differential privacy and the problem of unlearning. 
 \item \textit{Beyond Convexity}: Interestingly, 
    our unlearning techniques only require finite sum structure in the optimization problem (for exact unlearning) and  Lipschitzness (for runtime bounds). Therefore
    our unlearning algorithms yield provable unlearning for gradient-descent based methods for \emph{any} ERM or a finite sum optimization problem.
    This means that we can apply the unlearning algorithm even to non-convex problems, like training deep neural networks, and everytime the unlearning algorithm does not recompute, it still guarantees exact unlearning.
    As is typical, the accuracy in those cases is verified empirically.
    Furthermore, Lipschitzness can  be enforced by \textit{clipping} gradients - a popular heuristic in deep learning.
    In the worst-case (non-Lipschcitz) scenario, it is easy to see we will need to recompute everytime, however this is not the case in \emph{typical} situations - by which we mean when the deleted or inserted data point is not an \emph{outlier} (measured in terms of its gradient norm). 
    This also means that our unlearning efficiency is based on, and can be stated, in terms of, \textit{instance-dependent} Lipschitz parameters, rather than the worse case upper bound.
    In contrast, DP-training for non-convex models still need Lipschitzness or clipping of gradients.
    
    \item  \textit{Beyond gradient-based ERM}: 
    We also consider an approximate notion of unlearning, based on differential privacy, which has appeared in the literature~\cite{neel2020descent,guo2019certified}. With such a notion, we show a simple reduction to a DP algorithm, to handle unlearning requests, and show how to use \emph{group privacy} to trade-off accuracy and runtime. For convex ERM, this method performs competitively with existing works (see \cref{sec:approx-unlearning}).
\end{CompactItemize}    

Our proofs are simple and conceptual and so we present the key ideas in the main text and defer the proofs to the appendix.

\subsection{Related work}
The problem of exact unlearning in smooth convex ERM has not been studied before, and therefore the only baseline is re-computation (using some variant of gradient descent).
The most related are the works of \cite{ginart2019making} and \cite{neel2020descent}, which we discuss as follows.
\cite{ginart2019making} studied the problem of $k$-means clustering with exact unlearning in a streaming setting of deletion requests  - we borrow the setting (while also allowing insertions) and the notion of exact unlearning from therein.
We note that in \citep{ginart2019making}, the notion of efficiency is based on the amortized (over edits) unlearning time being at most the training time since that is a natural lower bound on the overall computational cost. We, on the other hand, do not place such a restriction and so our methods can have unlearning runtime smaller than the training time. Most importantly, the general framework here (of TV-stable methods and coupling based unlearning) captures the quantized-$k$-means algorithm of \cite{ginart2019making} as a special case (see \cref{sec:qsgd} for details).

The work of \cite{neel2020descent} focuses on unlearning in convex ERM problems, with a stream of edit requests, the same as here.
However there are two key differences. First, the notion of unlearning in \cite{neel2020descent} is approximate, based on $(\epsilon, \delta)$-differential privacy, whereas we focus on exact unlearning. Second, the unlearning runtime in \cite{neel2020descent} is deterministic, whereas that of ours is random. These are akin to Monte-Carlo vs. Las Vegas-style of guarantee discrepancy.
We refer the reader to an extended literature survey along with a detailed comparison to \cite{neel2020descent} in \cref{sec:detailed-related-work}.
We show therein that with the same unlearning time, the accuracy guarantees of \cite{neel2020descent} are better than us only in regimes where their approximate unlearning parameters and hence the notion, is very weak.
\section{Problem setup and preliminaries}
\label{sec:setup}

\subsection{Streaming edit requests and exact unlearning}
\label{sec:streaming}
We describe the setup very generally.
Let $\cZ$ be the data space, $\Theta$ the output/parameter space, and  $\cM$ be the metadata/state space, which will be made clear later.
A procedure is a tuple $(\mathbf{A}(\cdot),\mathbf{U}(\cdot))$, where $\mathbf{A}: \cZ^* \rightarrow \Theta \times \cM$ is the \emph{batch} learning algorithm, and $\mathbf{U}:\Theta \times  \cM \times \cZ \rightarrow \Theta \times \cM$ is the unlearning algorithm which updates the current model (first argument) and meta data (second argument) given an edit request (third argument).
Examples of meta-data could be a compressed \emph{sketch} of the data points, or intermediate computations/state, which could be used upon edit time.
Let $\cA(\cdot)$ denote the first output of $\mathbf{A}$ i.e. $\cA(\cdot) = \mathbf{A}_1(\cdot)$. Similarly, let $\cU(\cdot)$ denote the first output of $\mathbf{U}$.
We remark that when we refer to the algorithm's output, we mean the model output and does not include the metadata.
Finally, given two sets $S$ and $S'$, we define $\Delta(S,S')$ to be the symmetric difference between these sets i.e. $\Delta(S,S') = \abs{S \backslash S'}+\abs{S' \backslash S}$.  We now define exact unlearning.
\begin{definition}[Exact unlearning]
\label{defn:exact-unlearning}
We say a procedure $(\mathbf{A},\mathbf{U})$ satisfies exact \unlearning{} if for any 
$S, S' \subset \cZ^*$ such that $\Delta(S,S')=1$,
$\mathbf{A}(S') = \mathbf{U}(\mathbf{A}(S),S'\backslash S \cup S\backslash S')$.
For randomized procedures, we want that for any measurable event $\mathcal{E} \subseteq \Theta \times \cM$, we have $\P{\mathbf{A}(S') \in \mathcal{E}} = \P{\mathbf{U}(\mathbf{A}(S),S'\backslash S \cup S\backslash S') \in \mathcal{E}}$
\end{definition}

\begin{remark}
\begin{enumerate}
    \item A relaxation of the above definition is to maintain that only the output and not the meta-data satisfy the above condition i.e. $\cA(S') = \cU(\mathbf{A}(S),S'\backslash S \cup S\backslash S',S)$. However, we will work with the stronger notion. This, with a slight difference, is referred to as \emph{perfect} unlearning in \cite{neel2020descent}.
    \item  Even though the above definition is for one edit request, it can be generalized for a stream of $k$ edit requests, by having that this condition holds inductively for every point in the stream. 
\end{enumerate}
\end{remark}

Let $S=S^0=\bc{\z_1,\z_2,\ldots, \z_n}, \z_i \in \cZ$ be the initial dataset. We observe $k$ edit requests, each being either an insertion or deletion request. We use $S^i$ to denote the set of data points available at time $i$ in the stream. For notational simplicity, as in \cite{neel2020descent}, we assume that at any point in the stream, the number of available data points is at least $n/2$ and at most $2n$. 

\subsection{Convex risk minimization}
We recall some basics from convex optimization.
Let $\cW \subset \bbR^d$ be a closed convex set such that $\text{diameter}(\cW)$ $\leq D$ where the diameter is measured in Euclidean distance. 
Let $\cZ$ be the instance space and let $f:\cW \times \cZ \rightarrow \bbR$ be an $L$-Lipschitz convex function in its first argument. 
For the constraint set $\cW$, given a point $\w$, a projection function $\cP:\bbR^d \rightarrow \bbR^d$ returns $\cP(\w) \in \arg \min_{\v \in \cW}\norm{\w-\v}$.  
The function $f$ is $L$-smooth in its first argument if $\norm{\nabla_\w f(\w_1,\z) - \nabla_\w f(\w_2,\z)}\leq L\norm{\w_1-\w_2} \ \forall \w_1,\w_2 \in \cW, \z \in \cZ$.
We will drop the subscript $\w$ in $\nabla$ from here on.
In this work, we will be concerned with smooth Lipschitz convex functions.

\paragraph{Empirical Risk Minimization (ERM).} Given data points $S=\bc{\z_1,\z_2,\ldots, \z_n}$, we look at the following class of problems, known as empirical risk minimization (ERM).
\begin{align}
\label{eqn:primal}
    \min_{\w \in \cW}\bc{\hat F_S(\w):=\frac{1}{n} \sum_{j=1}^n f(\w,\z_i)}
\end{align}

Let $\cA(S)$ be the output of algorithm $\cA$ on dataset $S$. We will give guarantees on \emph{expected excess empirical risk}, which is $\E{\hat F (\cA(S))} - F(\w_S^*)$, where $\w^*_S$ is the minimizer: $\w^*_S \in \arg \min_{\w \in \cW} \hat F_S(\w)$ and the expectation is taken with respect to the randomness in algorithm $\cA$. Finally, we note that the above class of problems is large enough to be applicable even beyond machine learning - for example: in statistics, estimation problems which reduce to optimization problems of the above form are called $M$-estimation. 

We discuss the \textit{related} notion of population risk and the problem of risk minimization along with our results in \cref{sec:pop-risk}.

\subsection{Total variation stability and maximal coupling}
We first state the definition of Total Variation (TV) distance between two distributions $P$ and $Q$. 
\begin{align*}
    \text{TV}(P,Q) = \sup_{\text{measurable sets }R} \abs{P(R)-Q(R)} = \frac{1}{2}\norm{\phi_P - \phi_Q}_1
\end{align*}
where the second equality holds if both distributions have probability densities with respect to a base measure which are denoted by $\phi_P$ an $\phi_Q$ respectively. 
We now define total variation stability (TV-stability), which is the notion of algorithmic stability we will use.
\begin{definition}[$\rho$-TV-stability]
 An algorithm $\cA$ is said to be $\rho$-TV-stable if 
\begin{align*}
    \sup_{S, S':\Delta(S,S')= 1} \text{TV}(\cA(S),\cA(S'))\leq \rho
\end{align*}
\end{definition}

\begin{remark}
\label{remark:tv_upto_k}
\begin{enumerate}
\item The above definition of TV stability considers the marginals of output and does not include the metadata.
\item  Suppose $S$ is a dataset of $n$ points, and $S'$ is a dataset of $n+k_2$ points such that $\abs{S \backslash S'} = k_1$. Then, if algorithm $\cA$ is $\rho$-TV stable, then by triangle inequality of TV and repeated applications of the above definition, we have that $\text{TV}(\cA(S),\cA(S'))\leq (2k_1+k_2)\rho$
\end{enumerate}
\end{remark}
We discuss the maximal coupling characterization of total variation distance, which is a key ingredient in the design of our unlearning algorithms.

 \paragraph{Coupling and total variation distance:} 
 A coupling between two probability distributions $P$ and $Q$ over a common measurable space $(\cX,\cB)$, where $\cB$ denotes the  sigma-algebra on $\cX$, is a distribution $\pi \in \bbP(\cX \times \cX, \cB \otimes \cB)$ such that the marginals along the projections $(x,y)\rightarrow x$ and $(x,y)\rightarrow y$ are $P$ and $Q$ respectively. Let $\Pi(P,Q)$ denotes the set of couplings between $P$ and $Q$. The following describes the maximal coupling characterization of total variation distance.
\begin{enumerate}
    \item For any coupling $\pi \in \Pi(P,Q)$, if the random variable $(p,q)\sim \pi$, then $\text{TV}(P,Q)\leq \P{p\neq q}$.
    \item There exists a ``maximal" coupling $\pi^*$ such that if $(p,q)\sim \pi^*$, then $\text{TV}(P,Q)= \P{p\neq q}$
\end{enumerate}
The above establishes that $\text{TV}(P,Q)= \inf_{\pi \in \Pi(P,Q)}\mathbb{P}_{(p,q)\sim \pi}\left[p\neq q\right]$.

As a final remark, in this work, we routinely deal with distances and divergence between probability distributions. In some cases, we abuse notation and write a divergence between random variables instead of probability distributions - these should be interpreted as the law of the random variables.

\section{Main results}
\label{sec:main-results}

We state our main result on designing learning and unlearning algorithms in a stream of edit requests.

\begin{theorem}[Main Theorem]
\label{thm:main-result}

For any $\frac{1}{n} \leq \rho < \infty$, there exist a learning and a corresponding unlearning algorithm such that for any $f(\cdot, \z)$, which is $L$-smooth and $G$-Lipschitz convex function $\forall \ \z$, and a stream of edit requests,
\begin{enumerate}
    \item Satisfies exact unlearning at every time point in the stream of edit requests.
    \item At time $i$ in the stream, outputs $\hat \w_{S^i}$ with excess empirical risk bounded as,
    \begin{align*}
    \E{\hat F_{S^i}(\hat \w_{S^i}) - \hat F_{S^i}(\w_{S^i}^*)} \lesssim
    \min\bc{\frac{GD}{\sqrt{\rho n}},\br{\frac{L^{1/4}GD^{3/2} \sqrt{d}}{(\rho n)}}^{4/5}}
\end{align*}
\item For $k$ edit requests, the  expected unlearning runtime is $O(\max\bc{\min\bc{\rho,1} k \cdot \text{Training time},k})$. 
\end{enumerate}
\end{theorem}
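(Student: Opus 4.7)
The plan is to prove the theorem in three stages matching the three claims. First, I would exhibit two $\rho$-TV-stable learning algorithms, one based on subsampled (projected) gradient descent and one based on noisy (mini-batch) SGD with Gaussian noise on the gradients. For subsampled GD, a uniform subsample of size $\rho n$ changes in distribution by at most $\rho$ in TV when a single data point is edited, since with probability $1-\rho$ the edited point is not in the subsample and the output distribution is identical. Running standard projected GD on the subsample gives expected excess empirical risk $O(GD/\sqrt{\rho n})$ by classical convex optimization analysis, yielding the first branch of the min. For noisy SGD, calibrating the per-iteration Gaussian noise so that the total TV divergence between the joint distributions of iterates on neighboring datasets is at most $\rho$, and optimizing the step size / number of iterations against the bias introduced by the noise, yields the dimension-dependent branch $((L^{1/4}GD^{3/2}\sqrt{d})/(\rho n))^{4/5}$. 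This step is routine convex analysis of noisy SGD combined with TV composition bounds along the Markov chain of iterates.

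Second, for each learning algorithm, I would design the unlearning algorithm via a maximal coupling between $\mathbf{A}(S^i)$ and $\mathbf{A}(S^{i+1})$, using the metadata $\cM$ to store exactly whatever state is needed to implement the coupling online. For subsample GD, the coupling is transparent: keep the current subsample if the edited point is absent (probability $\geq 1-\rho$), otherwise resample and retrain. For noisy SGD, I would construct a step-wise maximal coupling of the Gaussian noise vectors, iteration by iteration, so that the two chains agree on an event whose complement has probability $\leq \rho$; when they agree, no work is needed, and when they disagree at some iteration, we recompute from that iteration onward. Crucially, by the maximal coupling characterization of TV distance, the marginal law of the unlearned output is exactly $\mathbf{A}(S^{i+1})$, so exact unlearning holds at every time step by induction on the stream.

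Third, to bound expected unlearning runtime over $k$ edits, I would use linearity of expectation: at each edit the coupling fails with probability at most $O(\rho)$ (for $\rho \leq 1$; otherwise we simply retrain always), so the expected number of recomputations is $O(\min\{\rho,1\}\,k)$, each costing one training pass; the remaining $\Theta(k)$ comes from the unavoidable per-edit bookkeeping (e.g.\ drawing the coupling variable and updating the stored state). Combined with the $O(1)$-factor for projection, clipping, and the coupling draw, this gives exactly the $O(\max\{\min\{\rho,1\}\,k \cdot \text{Training time},\ k\})$ bound.

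The main obstacle, I expect, will be the noisy-SGD case: constructing a bona fide maximal coupling of an entire sequence of Gaussian-perturbed updates whose drift terms depend on different datasets, while only paying the single-step TV cost per iteration and maintaining that cost summed over all iterations is $\leq \rho$. The cleanest route is to couple the two chains by a ``shift'' of the Gaussian noise that exactly cancels the gradient discrepancy at each step (whenever the edited index is sampled), and then apply the standard fact that two Gaussians with means differing by $\Delta$ and common covariance $\sigma^2 I$ can be maximally coupled with disagreement probability $\lesssim \Delta/\sigma$. Summing over iterations, tuning $\sigma$ to keep the total below $\rho$, and balancing against the bias of the noisy gradients (which inflates excess risk) is where the minimax tradeoff $((\sqrt{d}/(\rho n))^{4/5}$ emerges. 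Once this coupling is in place, the rest of the theorem follows from Remark~\ref{remark:tv_upto_k} for accumulation over $k$ edits and from the standard stability-to-generalization pipeline for the claimed population-risk extension.
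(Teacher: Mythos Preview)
Your treatment of the subsample-GD branch is fine and matches the paper. The gap is in the noisy-SGD branch, and it is precisely the point where the $4/5$ exponent actually originates.

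You write that you will ``construct a step-wise maximal coupling of the Gaussian noise vectors, iteration by iteration, so that the two chains agree on an event whose complement has probability $\leq \rho$.'' This inference is wrong: maximally coupling each transition kernel does \emph{not} yield a maximal coupling of the $T$-step joint law. Concretely, if you calibrate $\sigma$ so that the joint TV of the two chains is $\rho$ (done via R\'enyi/KL composition over $T$ steps and Pinsker, giving $\rho \asymp \sqrt{T}G/(n\sigma)$), then the per-step Gaussian TV on an iteration where the edited point is sampled is of order $G/(m\sigma)$, and the expected number of such iterations is $Tm/n$. A step-wise coupling therefore disagrees with probability on the order of
\[
\frac{Tm}{n}\cdot \frac{G}{m\sigma} \;=\; \frac{TG}{n\sigma} \;\asymp\; \sqrt{T}\,\rho,
\]
not $\rho$. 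This $\sqrt{T}$ gap between the step-wise coupling and the (unknown, efficient) maximal coupling of the full Markov chain is exactly what the paper has to work around; building an efficient maximal coupling of the joint is left open there.

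Consequently, the $(\,\cdot\,)^{4/5}$ rate does not arise from ``balancing against the bias of the noisy gradients'' as you suggest. If the coupling were maximal, that balance would give $GD\sqrt{d}/(\rho n)$ outright (this is the content of \cref{thm:main-upper-bound}). The $4/5$ appears only because the recompute probability is $\sqrt{T}\tilde\rho$ rather than $\tilde\rho$: to make it equal to the target $\rho$ you must run the learning algorithm at stability level $\tilde\rho = \rho/\sqrt{T}$, which injects an extra $\sqrt{T}$ into the risk bound and forces a re-optimization over $T$. Crucially, the paper uses \emph{accelerated} mini-batch SGD (\emph{noisy-m-A-SGD}) so that the $LD^2/T^2$ term trades off against $GD\sqrt{d}\sqrt{T}/(\rho n)$, yielding the $4/5$ exponent. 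With the non-accelerated noisy SGD you propose, the analogous trade-off gives only $(\sqrt{d}/(\rho n))^{2/3}$, strictly worse than the theorem's claim. So to recover the stated bound you need both (i) to acknowledge the $\sqrt{T}$ loss in the coupling and compensate via $\tilde\rho=\rho/\sqrt{T}$, and (ii) to use acceleration.
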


We make some remarks about the result.

\paragraph{Training time:} 
Informally, what the above theorem says is that the algorithms satisfy exact unlearning and are accurate while only recomputing a $\rho$ fraction of times - this is indeed the nature of our algorithms.
Therefore, "Training time" here refers to the runtime of the learning algorithm.
If we measure training time in terms of number of gradient (oracle) computations, as is typical in convex optimization, then for the above accuracy, our algorithm has optimal oracle complexity in \textit{most} regimes (see details in \cref{sec:learning-runtime}).

\paragraph{Role of $\rho$:}
The external parameter $\rho$ controls the trade-off between accuracy and unlearning efficiency. In the extreme case where we don't care about unlearning efficiency and are fine with paying retraining computation for every edit request, then we can set $\rho>1$ as large as we want to get, as expected, arbitrary small excess empirical risk. However, the interesting case is when we set $\rho<1$: herein, we get an \textit{improved} (see below) unlearning time and yet a non-trivial accuracy, upto $\rho\gtrsim \frac{1}{n}$.

\paragraph{Strict improvement:} The above result may seem like a trade-off, but, as we argue below, is a \textbf{strict} improvement over the baseline of retraining after every edit request (which is the only other known method for exact unlearning for this problem). Let the target excess empirical risk be $\alpha > \alpha_0 =\min\bc{\frac{GD}{\sqrt{ n}},\br{\frac{L^{1/4}GD^{3/2} \sqrt{d}}{n}}^{4/5}}$. For any such $\alpha$, there exists a  $\rho<1$, such that our algorithms have $\rho k \cdot \text{Training time}(\alpha)$ expected unlearning time, which is smaller than $k\cdot\text{Training time}(\alpha)$ - the cost of retraining after every edit request. 
Furthermore, as remarked above, since our training time is optimal in number of gradient computations (for the said accuracy), the aforementioned improvement holds for re-computation with \textit{any} first-order optimization algorithm.
A small caveat is that we are comparing our \textit{expected} unlearning time with deterministic runtime of retraining. To summarize, with this caveat, we have a strict improvement in the \textit{low} accuracy regime, whereas in the high accuracy regime: $\alpha< \alpha_0$, our unlearning algorithms are as good as trivial re-computation.
However, this low accuracy regime is often the target in machine learning.
To elaborate, the goal is to minimize the population risk rather than empirical risk, and it is well known that this statistical nature of the problem results in an information-theoretic lower bound of $\frac{1}{\sqrt{n}}$ on excess population risk. We show in \cref{sec:pop-risk} that our algorithm guarantees an excess population risk of $\frac{1}{\sqrt{n}}+\alpha$, and so a very small $\alpha$ only becomes a lower order term in excess population risk.

\paragraph{Algorithms:}The first upper bound on accuracy in \cref{thm:main-result} is obtained by standard SGD, which, in each iteration samples a fraction of datapoints, called mini-batch, to compute the gradient, and performs the descent step - we call this \emph{sub-sample-GD}. 
The second upper bound is obtained using noisy accelerated mini-batch-SGD (\emph{noisy-m-A-SGD}), which is also used for differentially private ERM.
Our unlearning algorithm for \emph{sub-sample-GD} is rather straightforward, and most of the work is design of unlearning  algorithm for \emph{noisy-m-A-SGD}, which is based of efficient coupling of Markov chains corresponding to the learning algorithm. We describe the algorithms in detail in \cref{sec:algorithms}.

\paragraph{Sub-optimality within the TV stability framework:} If we consider  $L,G,D=O(1)$, and a simple model of computation wherein we pay a unit computation when we recompute, otherwise not, then the unlearning problem is equivalent to design of  $TV$-stable algorithms, and a corresponding (maximal) coupling  (see \cref{sec:optimal-transport} for more details). Our coupling construction for unlearning in \emph{noisy-m-A-SGD}, though efficient, is not maximal - this gap shows up in the accuracy bound (second term), which is $\br{\frac{\sqrt{d}}{\rho n }}^{1/5}$ worse than what we would have obtained via a maximal coupling i.e $\frac{\sqrt{d}}{\rho n}$.
We also note that in case we don't use acceleration, but rather vanilla noisy mini-batch SGD, and the "same" coupling construction for unlearning, then we obtain a worse accuracy bound of $\br{\frac{\sqrt{d}}{\rho n }}^{2/3}$ (see \cref{sec:noisy-m-sgd} for details). 
Finally, apart from closing the gap with the maximal coupling,  another potential improvement is by giving $\rho$-TV stable algorithms with \textit{better} accuracy. We discuss such upper and lower bounds as follows.

As pointed out, intermediate to the result in \cref{thm:main-result} is the design and analysis of $TV$-stable algorithms for smooth convex ERM. Our main result on upper bounds on accuracy of such algorithms is the following.

\begin{theorem}[Upper bound]
\label{thm:main-upper-bound}
For any $0<\rho<\infty$, there exists an algorithm which is $\min\bc{\rho,1}$-TV stable, such that for any $f(\cdot, \z)$ which is $L$-smooth and $G$-Lipschitz convex function $\forall \ \z$, and any dataset $S$ of $n$ points, outputs $\hat \w_S$ which satisfies the following.

\begin{align*}
    \E{\hat F_S(\hat \w_S) - \hat F_S(\w^*_S)} \lesssim GD\min\bc{\frac{1}{\sqrt{\rho n}}, \frac{\sqrt{d}}{\rho n}}
\end{align*}
\end{theorem}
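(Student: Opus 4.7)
The plan is to exhibit two different $\min\{\rho,1\}$-TV stable algorithms, one realizing each term in the minimum, and then pick whichever yields the smaller excess risk for the given $(n,d,\rho)$. The first term $GD/\sqrt{\rho n}$ comes from a \emph{sub-sample-then-minimize} approach, and the second term $GD\sqrt{d}/(\rho n)$ comes from \emph{noisy projected gradient descent}. The case $\rho \geq 1$ is trivial since $1$-TV stability is no constraint and the bound matches the standard convex ERM rate, so I focus on $\rho<1$.

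For the first algorithm, I would draw $\tilde{S}\subset S$ of size $m=\lceil \rho n\rceil$ uniformly at random without replacement, then run any standard convex optimizer on $\tilde S$ to obtain $\tilde \w \in \arg\min_{\w\in\cW}\hat F_{\tilde S}(\w)$ (to arbitrary precision, since runtime is not counted in this theorem). TV-stability follows from a direct coupling on the sub-samples: for $S,S'$ differing in one point, couple the draws so that $\tilde S=\tilde S'$ whenever the symmetric-difference point is not selected, which occurs with probability $\geq 1-\rho$. Post-processing by the optimizer preserves $\rho$-TV stability. For accuracy, I would bound $\E{\hat F_S(\tilde \w)-\hat F_S(\w_S^*)}$ by treating $S$ as a finite population and $\tilde S$ as a sub-sample of size $m$: standard uniform convergence for $G$-Lipschitz convex losses on a $D$-bounded convex set yields excess empirical risk $\lesssim GD/\sqrt{m}=GD/\sqrt{\rho n}$.

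For the second algorithm, I would run $T$ iterations of projected noisy gradient descent $\w_{t+1}=\cP(\w_t - \eta(\nabla \hat F_S(\w_t)+\xi_t))$ with $\xi_t\sim \mathcal{N}(0,\sigma^2 I_d)$, outputting the averaged iterate $\bar \w_T$. The per-step sensitivity of $\nabla \hat F_S$ under a one-point edit is $2G/n$, so by the standard Gaussian-mechanism bound the per-step KL between the step kernels under $S$ and $S'$ is $O(G^2/(n\sigma)^2)$. KL composes additively over the $T$ steps, and Pinsker's inequality then converts the total KL $O(TG^2/(n\sigma)^2)$ into a TV bound of $O(G\sqrt{T}/(n\sigma))$; choosing $\sigma=\Theta(G\sqrt{T}/(n\rho))$ enforces $\rho$-TV stability. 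A standard convergence analysis of noisy projected subgradient descent gives $\E{\hat F_S(\bar\w_T)-\hat F_S(\w_S^*)}\lesssim D\sqrt{G^2+d\sigma^2}/\sqrt{T}$; substituting $\sigma$ and taking $T$ sufficiently large drives the $1/\sqrt T$ term to zero and leaves the dimension-dependent floor $GD\sqrt{d}/(n\rho)$.

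The main obstacle, which requires care, is composing stability across the $T$ noisy GD iterations: TV is not additive under composition, so the natural route is to track KL (which is additive) and then invoke Pinsker, paying a square-root cost that is already absorbed into the choice of $\sigma$. A related subtlety is that the projection $\cP$ does not affect the sensitivity analysis (it is $1$-Lipschitz and data-independent) but is needed to keep iterates in $\cW$ for the accuracy bound. Given the two algorithms and their bounds, the overall construction simply selects whichever gives the smaller excess risk for the given $(n,d,\rho)$, yielding the $\min$ stated in the theorem.
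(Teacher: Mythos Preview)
Your strategy matches the paper's: exhibit two $\min\{\rho,1\}$-TV stable algorithms, one for each branch of the minimum, and take the better one. The instantiations differ, however. For the $\sqrt d/(\rho n)$ branch the paper uses noisy \emph{mini-batch accelerated} SGD (Proposition~\ref{prop:upper-bound-noisy-m-a-sgd} and Corollary~\ref{corr: upper-bound-m-A-sgd}), proving TV stability via R\'enyi divergence, sub-sampling amplification, adaptive composition, and finally Pinsker. Your full-batch noisy projected GD with direct KL chain-rule composition plus Pinsker is correct and considerably cleaner for \emph{this} theorem; the paper's more elaborate construction is there only because the same algorithm must later support the efficient coupling-based unlearning of Theorem~\ref{thm:main-result}, where mini-batching and the iteration count $T$ directly govern training time and recompute probability. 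For the $1/\sqrt{\rho n}$ branch the paper runs mini-batch SGD for $T$ iterations with fresh batches of size $m$ and $Tm=\rho n$ (Proposition~\ref{prop:upper_bound_subsample}); your one-shot subsample-then-optimize is a reasonable alternative and the coupling argument for TV stability goes through.

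The one genuine gap is the accuracy step for your first algorithm. You invoke ``standard uniform convergence for $G$-Lipschitz convex losses on a $D$-bounded convex set'' to conclude that the exact empirical minimizer $\tilde{\w}$ on $\tilde S$ has excess risk $O(GD/\sqrt m)$ on the full $S$. That is not true in general: for arbitrary Lipschitz convex $f(\cdot,z)$ there is no dimension-free uniform convergence, and plain ERM on $m$ samples can fail to achieve $O(1/\sqrt m)$ excess (population) risk --- this is exactly the Shalev-Shwartz--Shamir--Srebro--Sridharan and Feldman obstruction for stochastic convex optimization. The fix is immediate: replace ``any standard convex optimizer to $\arg\min$'' by single-pass SGD (or $\ell_2$-regularized ERM) on the $m=\lceil \rho n\rceil$ subsampled points, which does give the dimension-free $GD/\sqrt m$ rate and is still a deterministic function of $\tilde S$, so your TV-stability coupling is unaffected. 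But as written, the uniform-convergence justification does not stand.
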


We show that the condition $\rho \geq \frac{1}{n}$ in \cref{thm:main-upper-bound} is fundamental for any non-trivial accuracy, as evidenced by our lower bounds, with a matching dependence on $\rho$. Furthermore, we omit the regime $\rho\geq 1$ in our lower bound since it puts no constraint on the algorithm.

\begin{theorem}[Lower bound]
\label{thm:main-lower-bound}
For any $\rho$-TV-stable algorithm $\cA$, there exists a $G$-Lipschitz convex function $f$ and a dataset $S$ of $n$ points such the expected excess empirical risk is lower bounded as:
\begin{enumerate}
    \item   For any $0<\rho <1$, and any dimension $d$, $\E{\hat F_S(\cA(S)) - \hat F_{S}(\w_{S}^*)} \gtrsim  GD\min\bc{1,\frac{1}{\rho n}}$
    \item Assuming that $\cA(S)$ has a probability density function upper bounded by $K \leq O(2^d)$, then for $n>72,\frac{1}{n}\leq \rho\leq \frac{1}{4}$ and large enough $d$,  $\E{\hat F_S(\cA(S)) - \hat F_{S}(\w_{S}^*)} \gtrsim GD\min{\bc{1,\frac{1}{\sqrt{\rho n}}}}$
\end{enumerate}
\end{theorem}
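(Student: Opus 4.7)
The plan is to prove parts (1) and (2) separately, in each case relying on the maximal-coupling identity $\text{TV}(P,Q)=\inf_\pi \mathbb{P}_{(p,q)\sim\pi}[p\neq q]$ together with the multi-edit extension of TV-stability recorded in Remark~\ref{remark:tv_upto_k}.

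For part~(1), I would take the one-dimensional linear loss $f(\w,\z)=-G\w\z$ on $\cW=[-D/2,D/2]$ with $\z\in\{\pm 1\}$. For a parameter $k$ to be chosen, construct datasets $S_+$ and $S_-$ of $n$ points each with $|\Delta(S_+,S_-)|=k$, obtained by flipping $k/2$ labels in opposite directions from a common balanced reference, so that $\w^*_{S_\pm}=\pm D/2$. A direct computation gives the pointwise identity
\begin{align*}
    [\hat F_{S_+}(\w)-\hat F_{S_+}(\w^*_{S_+})] + [\hat F_{S_-}(\w)-\hat F_{S_-}(\w^*_{S_-})] \;=\; \frac{GDk}{2n} \quad \text{for every } \w\in\cW.
\end{align*}
Remark~\ref{remark:tv_upto_k} gives $\text{TV}(\cA(S_+),\cA(S_-))\leq k\rho$, so a maximal coupling $(W_+,W_-)$ has $\mathbb{P}[W_+=W_-]\geq 1-k\rho$; restricting the identity to this event forces the larger of the two expected excess risks to be $\gtrsim GDk(1-k\rho)/n$. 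Choosing $k\asymp 1/\rho$ when $\rho\geq 1/(2n)$ delivers $\gtrsim GD/(\rho n)$, while $k=n$ (i.e., $S_+$ all $+1$s and $S_-$ all $-1$s) delivers the saturating bound $\gtrsim GD$ when $\rho<1/(2n)$.

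For part~(2), I would use the high-dimensional linear loss $f(\w,\z)=-G\langle \w,\z\rangle$ on the scaled box $\cW=[-D/(2\sqrt d),D/(2\sqrt d)]^d$ with $\z\in\{\pm G/\sqrt d\}^d$, so that $f$ is $G$-Lipschitz and $\cW$ has Euclidean diameter $D$. I would parametrize datasets by a codebook $V\subseteq\{\pm 1\}^d$ of size $|V|=2^{\Omega(d)}$ and minimum Hamming distance $\Omega(d)$ (via Gilbert--Varshamov), and build $\{S_v\}_{v\in V}$ such that each $S_v$ has $|\Delta(S_v,S_{v_0})|\leq k$ for a common base $S_{v_0}$ and $\w^*_{S_v}=(D/(2\sqrt d))v$, so the minimizers are pairwise $\Theta(D)$-separated in Euclidean norm. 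If the algorithm had expected excess risk $\alpha$ uniformly over $V$, Markov would place $\cA(S_v)$ into $B_r(\w^*_{S_v})$ with $r\asymp \alpha/G$ with probability $\geq 1/2$. The density hypothesis $K\leq O(2^d)$ then caps $\pi(B_r)\leq K\cdot\mathrm{vol}(B_r)\lesssim (Cr/\sqrt d)^d$ (by Stirling) for any fixed distribution $\pi$; combined with the TV-coupling $\text{TV}(\cA(S_v),\cA(S_{v_0}))\leq k\rho$, one gets $1/2 \leq K\cdot\mathrm{vol}(B_r) + k\rho$, forcing either $r\gtrsim\sqrt d$ (whence $\alpha\gtrsim G\sqrt d\geq GD/\sqrt{\rho n}$ in the stated regime) or $k\gtrsim 1/\rho$. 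Iterating via a fingerprinting-style sum across the codebook (using that a single distribution can charge mass $\geq 1/2$ on only $O(1)$ of the $|V|$ disjoint packing balls) and balancing $k$, $r$, and $d$ then produces the bound $\alpha\gtrsim GD/\sqrt{\rho n}$.

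The easy case is part~(1), a clean pairwise coupling computation. The main obstacle is part~(2), where one must simultaneously calibrate the Hamming budget $k$, the packing radius $r$, the ambient dimension $d$, and the code size $|V|$ so that (a) the TV-coupling across the entire codebook has slack, (b) the density-volume estimate dominates the accuracy target, and (c) a single base dataset $S_{v_0}$ can be $k$-close to every $S_v$; the last being where the hypotheses ``$n>72$'' and ``$d$ large enough'' enter via sphere-packing estimates on $\{\pm 1\}^d$. The density bound is essential: without it, a pairwise coupling alone gives only the Part~(1) rate $1/(\rho n)$, whereas the density-plus-packing argument buys the extra $\sqrt{}$ improvement to $1/\sqrt{\rho n}$, in direct analogy with packing-based lower bounds for pure differential privacy.
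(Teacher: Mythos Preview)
Your Part~(1) argument is correct and takes a genuinely different, more elementary route than the paper. The paper first reduces convex ERM to TV-stable mean estimation (Proposition~\ref{prop:lower_bound_reduction}) and then applies a sample-amplification reduction (Proposition~\ref{prop:lower_bound_dp_reduction}): a $\rho$-TV-stable algorithm with accuracy $\alpha$ on $n$ points yields a $0.1$-TV-stable, $0.1$-accurate algorithm on $\Theta(n\alpha\rho)$ points, which must use at least one point, giving $\alpha\gtrsim 1/(\rho n)$. Your direct two-dataset coupling bypasses both reductions; the pointwise identity (which is actually $GDk/n$, not $GDk/(2n)$, but this is harmless) together with the maximal coupling is a clean Le~Cam two-point argument. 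Both deliver the same rate; yours is self-contained, the paper's is more modular.

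For Part~(2) there is a concrete gap. Your claim that Markov on the excess risk confines $\cA(S_v)$ to a Euclidean ball of radius $r\asymp\alpha/G$ around $\w^*_{S_v}$ is false for the linear loss: the condition $G\langle \w^*_{S_v}-\w,\mu(S_v)\rangle\leq 2\alpha$ only confines $\w$ to a half-space in the direction of $\mu(S_v)$, not to a ball. The paper avoids this by first reducing to mean estimation, where accuracy is $\mathbb{E}\|\cA(S)-\mu(S)\|^2$ and Markov gives ball confinement for free; the reduction in Proposition~\ref{prop:lower_bound_reduction} in fact shows $\|\cA(S)-\w^*_S\|^2\leq 2\cdot(\text{excess risk})/\|\mu(S)\|$, so even when ball confinement does hold the correct radius scales as $\sqrt{\alpha/\|\mu(S)\|}$, not $\alpha/G$. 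Since your entire balancing of $|V|$, $k$, $r$, and $d$ hinges on how $r$ depends on $\alpha$, this error propagates through the rest of the sketch.

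Structurally, the paper's Part~(2) proof (Theorem~\ref{thm:lower_bound2}) uses neither a Gilbert--Varshamov codebook nor a fingerprinting sum. It takes exactly $n$ datasets $S^1,\ldots,S^n$, each one edit away from a base $S$ with $\|\mu(S)\|=\Theta(1/\sqrt{\rho n})$, and with means pairwise $\Theta(1/n)$-separated. The bounded-density hypothesis is invoked only to show that the spherical caps where the radius-$1/K^{1/d}$ balls around different means overlap have vanishing probability as $d\to\infty$; this makes the exclusive regions $A_i$ disjoint with $\mathbb{P}[\cA(S^i)\in A_i]\geq \frac{1}{2}-4\alpha^2$. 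Then $\rho$-TV-stability plus disjointness yields $n(\frac{1}{2}-4\alpha^2-\rho)\leq 1$, and a short case analysis extracts $\alpha\gtrsim 1/\sqrt{\rho n}$. Your packing route might be salvageable with the corrected radius and the choice $\|\mu(S)\|\asymp 1/\sqrt{\rho n}$, but as written the mechanism by which the exponential codebook buys the extra $\sqrt{\cdot}$ over Part~(1) is not established.
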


In each of the lower bounds, the term $GD$ is trivial as it is attained if an algorithm outputs a constant regardless of the problem instance.
The first lower bound holds for \emph{all} problem instances without any assumptions on the relationship between the problem parameters $d$, $n$ and $\rho$.
Note that if we assume that the upper bound given by \cref{thm:main-upper-bound} were tight, in that case we would expect to derive a lower bound of 
$\frac{\sqrt{d}}{\rho n}$ whenever $\frac{\sqrt{d}}{\rho n} \leq \frac{1}{\sqrt{\rho n}} \iff d \leq \frac{1}{\rho n}$ - we would therefore need to shrink the class of problem instances explicitly.
Unfortunately, our techniques currently do not show improvement with this restriction.
The second result is obtained by a \textit{direct} analysis, where the key ingredient is the fact that normalized volume of spherical cap of a hypersphere goes to $0$ as $d\rightarrow \infty$, for a fixed width of the cap.
The condition that the probability distribution $\cA(S)$ has bounded density prevents it to have discrete atoms - this is not desirable especially since our (upper bound) algorithm \emph{sub-sample-SGD} outputs a mixture of discrete distributions, and therefore does not lie in this class.
Please see \cref{sec:lower-bounds} for derivations of the lower bounds.
\section{Main ideas}

In this section, we discuss the key ideas to our approach. The first is identifying a notion of stability. For this, we connect the problem of unlearning to optimal transport, and specifying a simple model of computation, the notion of total variation stability arises naturally. The second is the design of TV stable algorithms for convex ERM. 
One of the algorithms we propose is an existing differential private solution, which we show to be TV stable as well. We also discuss certain important differences between our setup and that of differential privacy. Finally, the bulk of the work, is the design and analysis of efficient unlearning algorithms. We show that this problem can be reduced to efficiently constructing couplings between Markov chains, and we give such a construction using rejection sampling and reflection mappings. We now discuss these one by one.
 
\subsection{Total variation stability from optimal transport}
\label{sec:optimal-transport}

In this section, we give a didactic treatment of our approach to motivate the notion of total variation stability.
Consider neighbouring datasets $S$ and $S'$ and let $P = \cA(S)$ and $Q=\cA(S')$ for some randomized algorithm $\cA$. The algorithm first computes on $S$, and then observes edit requests which generate $S'$ as the current dataset. To satisfy exact \unlearning, we need a procedure which \emph{moves} $P$ to $Q$. 
This is akin to the well-studied optimal transport problem \cite{villani2008optimal}, which briefly explain below.
Given probability distributions $P$ and $Q$ over measurable space $\cX$,  and a cost function $c: \cX \times \cX \rightarrow \bbR$, the goal is to transport from $P$ to $Q$ using the minimum cost. 
Formally, let  $\Pi(P, Q)$ denote the set of couplings (or transport plans) of $P$ and $Q$; the modern (Kantorovich's) formulation asks for a transport plan $\pi$ which minimizes the expected cost: $ \min_{\pi \in \Pi(P,Q)} \mathbb{E}_{(x,y)\sim \pi} c(x,y)$.

\paragraph{A model of computation:} 
Note that there is of course the trivial coupling in which we generate independent samples from $P$ and $Q$ - this corresponds to re-computation,  which, as argued, is not an efficient method in general.
Instead, we should correlate $P$ and $Q$ so that transporting from $P$ to $Q$ can reuse the randomness (computation) used for $P$.
For this, we use the cost function in the optimal transport problem as a surrogate of modelling computation.
In the optimal transport problem, the cost is typically a \textit{distance} on the space, whereas we are concerned with computational cost. So is there a distance function which corresponds to computational cost? Note that
the sequential nature of the problem already gives us samples generated from $P$, so a natural question is, can we use this to transport to $Q$?
We can set the cost function as $c(x,y)= \begin{cases} 1 & \text{ if } x\neq y \\ 0 & \text{ otherwise}\end{cases}$.
This corresponds to an oracle which charges a unit computation if we use $y$ which is different from $x$, which can correspond to a recomputation. Under this simple model of computation, the optimal expected computational cost becomes exactly equal to the total variation distance between $P$ and $Q$:
$\inf_{\pi \in \Pi(P,Q)} \mathbb{1}\bc{x \neq y}$ - the maximal coupling characterization of total variation distance.

\paragraph{TV stability:}
The above establishes that if we want to transport $P$ to $Q$ using minimum computation cost, the expected computation cost cannot be smaller than the total variation distance between $P$ and $Q$. 
Intuitively, this means that is least $1-\text{TV}(P,Q)$ fraction of samples are \emph{representative} for both $P$ and $Q$.
From the sequential nature of our problem, when we generate $P$ - the output on dataset $S$, we don't know what $Q$ would be, since we don't know the incoming edit request. Hence a reasonable property to have in the algorithm is that its output is close in total variation distance \emph{uniformly} over all possible $Q$'s. This motivates our definition of total variation stability. 

\paragraph{Optimal transport vs unlearning:} 
Unlike the optimal transport problem wherein we are given $P$ and $Q$, and the task is to find a coupling, in our setup, we have to find an algorithm generating $P$ and $Q$ as well as the coupling.
Moreover, for a fixed $\rho$, there may be many algorithms which are $\rho$-TV stable. 
The goal therefore, is to find among these algorithms, the one with the maximum accuracy for the (convex ERM) problem, and for which we can design a corresponding \textit{efficient} unlearning algorithm.

\subsection{TV-stable learning algorithms and differential privacy}
In this section, we discuss the ideas underlying the design of TV-stable learning algorithms. We first give the definition of differential privacy (DP), which will be a key tool.
Differential privacy is a notion of data privacy, introduced in \cite{dwork2006calibrating}, defined as follows.
\begin{definition}[Differential privacy (DP)]
\label{defn:dp}
    An algorithm $\mathcal{A}$ satisfies $(\epsilon,\delta)$-differential privacy if for any two neighboring datasets $S$ and $S'$, differing in one sample, for any measurable event $\cE \in \text{Range}(\cA)$,
\begin{align*}
   \mathbb{P}(\mathcal{A}(S)\in \cE) \leq e^\epsilon \mathbb{P}(\mathcal{A}(S')\in \cE)+\delta
\end{align*}
\end{definition}
Intuitively, a differentially private algorithm promises that the output distributions are close in a specific sense: the likelihood ratios for all events for two neighbouring datasets is uniformly close to $e^{\pm \epsilon}$, upto a failure probability $\delta$. 
Note that we have identified that we want our outputs to be $\rho$-TV stable. A natural question is whether we can relate the $(\epsilon, \delta)$-DP notion to $\rho$-TV-stability. An easy to see direction is that any $\rho$-TV stable method is (at least) $(0,\rho)$-DP. 
Similarly, for the other direction, under additional assumptions, such relations can be derived. The important part is that certain widely used DP methods are TV stable as well. The primary example, which we will use in this work, is Gaussian mechanism. It is known that adding Gaussian noise of variance $\frac{\sqrt{\log{1/\delta}}}{\epsilon}$ to a $1$-sensitive function, provides $(\epsilon,\delta)$-DP \cite{dwork2014algorithmic}. It can be shown that the same method also provides $\rho$-TV stability, with $\rho = \frac{\epsilon}{\sqrt{\log{1/\delta}}}$. 

\paragraph{TV-stable algorithm:}
For the problem of TV stable convex empirical risk minimization, we propose two algorithms: \emph{sub-sample-GD} and \emph{noisy-m-A-SGD}. We show that the expected excess empirical risk of \emph{noisy-m-A-SGD} is better than that of \emph{sub-sample-GD}, in regimes of small dimension.
The algorithm~\emph{noisy-m-A-SGD} is essentially \emph{noisy-m-SGD} algorithm, which appeared in  \cite{bassily2014private} for DP convex ERM, with an additional Nesterov's acceleration on top. 
For ease of presentation, we discuss using \emph{noisy-m-SGD} (no acceleration), and in 
paragraph titled "Fast algorithms and maximal coupling" in the next section, it would become clear why adding acceleration helps us.
In \emph{noisy-m-SGD},
at iteration $j$, we sample a mini-batch $b_j$ uniformly randomly, use it to compute the gradient on the previous iterate $\w_{j}$ denoted as $\nabla \hat F_S(\w_{j},\z_{b_{j}})$ and update as follows:
$$\w_{j+1} = \w_j - \eta\br{\nabla \hat F_S(\w_{j},\z_{b_{j}}) + \theta_t}$$
where $\theta_j \sim \cN(0,\sigma^2\bbI_d)$ and $\sigma$ is set appropriately. 
We ignore the projection step in the current discussion.
This procedure can be viewed as sampling from a Markov chain depicted in Figure \ref{fig:markov_chain_noisy_m_sgd}.

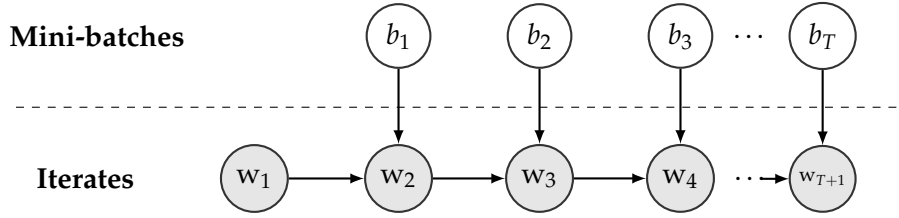
\begin{figure}[!htbp]
    \centering
\begin{tikzpicture}
  \node[box,draw=white!100] (Latent) {\textbf{Mini-batches}};
   \node (L0) [right=of Latent] {};
  \node[main] (L1) [right=of L0] {$b_1$};
  \node[main] (L2) [right=of L1] {$b_2$};
  \node[main] (L3) [right=of L2] {$b_3$};
  \node[main] (Lt) [right=of L3] {$b_T$};
   \node at (8.7, -1.9)   (c1) {};
  \node[main,fill=black!10] (O1) [below=of L1] {$\w_2$};
  \node[main,fill=black!10] (O0) [left=of O1] {$\w_1$};
  \node[main,fill=black!10] (O2) [below=of L2] {$\w_3$};
  \node[main,fill=black!10] (O3) [below=of L3] {$\w_4$};
  \node[main,fill=black!10,scale=0.69] (Ot) [below=of Lt] {$\w_{T+1}$};
  \node[box,draw=white!100,left=of O0] (Observed) {\textbf{Iterates}};
  \path (L3) -- node[auto=false]{\ldots} (Lt);
  \path (O0) edge [connect] (O1)
        (O1) edge [connect] (O2)
        (O2) edge [connect] (O3)
        (c1)  edge [connect](Ot)
        (O3) -- node[auto=false]{\ldots} (Ot);
  \path (L1) edge [connect] (O1);
  \path (L2) edge [connect] (O2);
  \path (L3) edge [connect] (O3);
  \path (Lt) edge [connect] (Ot);
  
  \draw [dashed, shorten >=-1cm, shorten <=-1cm]
      ($(Latent)!0.5!(Observed)$) coordinate (a) -- ($(Lt)!(a)!(Ot)$);
\end{tikzpicture}
   \caption{Markov chain for the \emph{noisy-m-SGD} algorithm}
    \label{fig:markov_chain_noisy_m_sgd}
\end{figure}

\paragraph{DP convex ERM and unlearning:}
We first discuss an important distinction in the differential privacy and our unlearning setup.
In the DP setup, we have a curator which possesses the dataset, and an analyst/adversary, against which the curator want to provide privacy. The analyst queries the dataset, and the curator provides DP answers to the queries. The curator can also reveal additional information pertaining to the algorithmic details, however, it is beneficial to the curator to only release limited information. In particular, the curator can chose to keep certain states of the algorithm secret.
This could be done in the case when only the marginals of the output satisfy a strong DP-guarantee. 
So, if the curator were to release the secret state as well, the adversary can correlate information and then the privacy level, which is now measured using the joint distribution of output and state, degrades.
In the \emph{noisy-m-SGD} algorithm for example, the output typically is the average or final iterate whereas the rest of iterates and mini-batch indices $b_j$'s are the secret state.

In the unlearning setup, there is no adversary per se, or in the idealized application, the curator is the adversary and the dataset owners wants it to have as little control as possible. 
It is therefore natural to
demand that the probability distribution of the entire state maintained by the algorithm, and not just the output be exactly identical after performing the unlearning operation. 
This, with slight differences, is referred to as perfect unlearning in \cite{neel2020descent}, and what our algorithms satisfy.
We have argued that designing TV stable algorithms is a good start, and for a moment suppose that the TV stability is same as DP.
Then should we measure TV stability between the joint distributions over the entire state?
This would limit the application of DP techniques in which keeping additional state hidden has stronger privacy property.
In that case, TV stability parameter, and hence the computational cost of unlearning would be large.
Interestingly, even though the previous work in differentially private convex ERM, for example \cite{bassily2014private}, argues that the released iterate (average/final iterate) is differentially private,  the analysis is typically carried out by first arguing, via a composition step, that \textit{all} iterates together are differentially private. This means that all iterates can be released without any additional cost of privacy.
This innocuous property arguably provides no benefit for privacy, but turns out to be extremely beneficial to us in unlearning. However, even though the all the iterates can be released, the mini-batches still need to kept secret.
We handle this in the unlearning algorithm using an estimation step - see paragraph titled ``Estimation of marginals" in \cref{sec:main-idea-unlearning}.

\subsection{Unlearning via (un)couplings}
\label{sec:main-idea-unlearning}
The final, though the most important piece, is the design of unlearning algorithms.
Recall that $S$ is the initial dataset,  $S'$ is the dataset after one edit request, and we want to design a transport from $P=\cA(S)$ to $Q=\cA(S')$, which means that we need to construct a coupling of $P$ and $Q$.
Broadly, there are two challenges: the first is the data access restriction - when generating a sample from $P$, we don't know what $Q$ would be, therefore, the coupling cannot be based on efficiently sampling from a joint distribution directly, but is limited to work with samples generated from $P$.
The other is that construction of the coupling should be computationally more efficient than drawing independent samples from $P$ and $Q$, which essentially amounts to our baseline of re-computation.

\paragraph{An efficient general approach:} 
We first setup some terminology - the diagonal of a coupling $\pi$ of two probability distributions, is the set $\bc{(p,q): p = q}$ where $(p,q)\sim \pi$, and similarly, the non-diagonal is the set $\bc{(p,q): p \neq q}$, $(p,q)\sim \pi$. We have that the measure of the non-diagonal, under a maximal coupling $\pi^*$, is $\mathbb{P}_{(p,q)\sim \pi^*} \mathbb{1}\bc{(p,q): p \neq q} = \text{TV}(P,Q)$.
This implies that when using $\rho$-TV stable algorithms, the probability measure of the diagonal under a maximal coupling, is \emph{large} - at least $1-\rho$.
At a high-level, our unlearning approach comprises of two stages: \emph{verification} and \adjust{}. 
We  first \emph{verify} whether our output on dataset $S$ (i.e. sample from $P$) \textit{falls} on the diagonal of \emph{any} maximal coupling of $P$ and $Q$ or not - if that is indeed the case, then the same sample for $Q$ suffices.
For computational efficiency, we require that verification be computationally much cheaper then recomputing (smaller that $\rho \cdot \text{recompute cost}$).
If the verification fails, we sample from the non-diagonal of any maximal coupling $P$ and $Q$, so that we have a valid transport.
As the name suggest, the computational cost of \adjust{} that we will shoot for is to be of the same order as (full) recompute. 
If we are able to design such a method, then we will show that for $k$ edit requests, the expected computational cost for unlearning is $k \cdot \text{verification cost} + k\rho\cdot  \text{recompute cost} \approx  k\rho \cdot \text{recompute cost}$.

The design of unlearning algorithm is dependent on the corresponding learning algorithm.
Since we proposed two learning algorithms, each has a corresponding unlearning algorithm. Herein, we will only discuss the more challenging case, which is for the \emph{noisy-m-SGD} (no acceleration) algorithm. 

\paragraph{Coupling of Markov chains:}
Our approach for unlearning is to construct a coupling of the optimization trajectories on neighbouring datasets.
We have discussed that the iterates from \emph{noisy-m-SGD} can be seen as generated from a Markov chain, depicted in Figure \ref{fig:markov_chain_noisy_m_sgd}.
Hence, for two neighbouring datasets, the iterates are sampled from two \emph{different} Markov chains $P$ and $Q$. 
Moreover, by design, we know that these Markov chains are $\rho$-TV close - we measure the total variation distance between joint distribution of marginals of iterates i.e. $\text{TV}\br{\bc{\w_j^P}_{j=1}^{T+1},\bc{\w_j^Q}_{j=1}^{T+1}} \leq \rho$.
The task is now to maximally couple these two Markov Chains. 
We remark that in the Markov chain literature, maximal coupling of Markov chains does not refer to the above but rather the setting wherein we have one Markov chain, but started at two different states, and the goal is to design a coupling such that their sampled states become and remain equal as soon as possible. In contrast, our notion of coupling of two Markov chains has also been recently studied by \cite{vollering2016maximal} and \cite{ernst2019mexit}, wherein they refer to this problem as design of \emph{uncoupling} or \emph{maximal agreement/exit couplings}.

\paragraph{Overview of unlearning algorithm:}
The learning algorithm saves all states depicted in Figure \ref{fig:markov_chain_noisy_m_sgd}. In the unlearning algorithm, we proceed sequentially: in iteration $j$, we first couple the mini-batches which amounts to replacing the deleted point by a uniformly random point, or inserting the new point in \emph{some} mini-batches: let the coupled mini-batches be $\bc{b^P_j}_{j=1}^T$ and $\bc{b^Q_j}_{j=1}^T$. 
We then compute an \textit{estimate} of marginal densities of $\w_j$ under $P$ and $Q$, via conditional densities under coupled mini-batches i.e. we compute $\phi_P(\w_j|b_j^Q)$ and $\phi_Q(\w_j|b_j^P)$ - note that these are just Gaussian densities evaluated at $\w_j,b_j^P$ and $\w_j,b_j^Q$ respectively, where  $\w_j$ is sample from $P$.
We then do a rejection sampling step wherin we draw a uniform random variable $u \sim \text{Unif}(0,1)$, and then check if $u \leq \frac{\phi_Q(\w_j|b_j^Q)}{\phi_P(\w_j|b_j^P)}$.
If the step succeeds, we accept $\w_j$ as a sample from $Q$ and move to the next iteration and repeat. If any of the rejection sampling step fails, say at step $t$, we generate $\w_{t+1}^Q$ by \emph{reflecting} $\w_{t+1}^P$ about the mid-point of means of the two Gaussians at step $t$ for $P$ and $Q$. After this reflection step, we abandon the rest of iterates from $P$ and generate the new iterates from $Q$ by continue retraining on dataset $S'$. 
This procedure is described as \cref{alg:unlearn-noisy-m-a-sgd} (please see \cref{sec:unlearn-noisy-m-a-sgd} for more details).
In the above, the rejection sampling steps comprise the verification stage, and if any of the rejection sampling fails, we move to re-computation. 
The reason why verification can be done efficiently here is due to the finite sum structure of the ERM problem. To elaborate, at any iteration, 
to compute the conditional density $\phi_Q(\w_j\vert b_j^Q)$, we need to compute the gradient with the new dataset $S'$ - this, using the gradient of the old dataset only requires subtracting the gradient at the deleted point, so $O_d(1)$ runtime as opposed to $O_d(m)$, if we were to compute from scratch, where $m$ is the mini-batch size. Moreover, throughout verification, this computation is done only for iterations which used the deleted point which are roughly $\frac{Tm}{n}$ iterations. Hence the total runtime of verification is $O_d\left(\frac{Tm}{n}\right)$ as opposed to $O_d\left(Tm\right)$ for re-computation.
Finally, if the probability of recompute is $\rho$, then expected unlearning time for $k$ edits  is $\approx \rho kTm + \frac{kTm}{n}$ - note that the second (verification time) is a lower-order term as long as $\rho\gtrsim \frac{1}{n}$ - furthermore, this $\rho\gtrsim \frac{1}{n}$ is the best possible within the TV stability framework for non-trivial accuracy as evidenced by our lower bounds (see \cref{thm:main-lower-bound}).
Please see \cref{sec:unlearning-runtime} for more details on runtime and efficient implementation using suitable data structures.

\paragraph{Fast algorithms and maximal coupling:}
The above procedure generates a coupling but not a maximal coupling - the measure of the diagonal under the coupling, and hence the probability to recompute, is $\sqrt{T}$ worse then the optimal, where $T$ is the number of iterations run of \emph{noisy-m-SGD}. This gives us that the faster the algorithm (in terms of iteration complexity) is, the smaller the probability to recompute, when using our coupling construction. This motivates why we use \textit{accelerated} mini-batch SGD, since it has a quadratically faster iteration complexity than vanilla mini-batch SGD. In \cref{sec:learning-runtime}, we also remark that using even faster algorithms like Katyusha \cite{allen2017katyusha} does not yield further improvements. Finally, the design of maximal coupling would (likely) be done via one step rejection sampling, instead of doing it iteratively. However, if the rejection sampling fails, sampling from the non-diagonal efficiently is tricky. We leave the question of obtaining a maximal coupling for future work.

\paragraph{Estimation of marginals:}
We remarked that we want to create maximal coupling of marginals of the output, and therefore measure TV distance between marginals, rather than the entire algorithmic state. 
Consider one, say $j^{\text{th}}$, iteration of \emph{noisy-m-SGD},
then $b_j$ is additional state, and we measure TV between marginals of $\w_j$ and $\w_j'$.
The distribution of $\w_j$ is such that, for any event $E$ in range of $\w_j$, $P(\w_j \in E) = \mathbb{E}_b P(\w_j \in E | b_j = b)$. 
To construct a coupling between the marginals via rejection sampling, we need to evaluate the ratio of marginal densities: $\frac{\phi_Q(\w_j)}{\phi_P(\w_j}$. However, the marginal is a mixture distributions with large (exponential in $m$ (mini-batch size)) number of components, and therefore even evaluating the marginal density is infeasible. One solution is to just consider $m=n$ i.e. full gradient descent, and then there is no additional state.
However, this makes the training runtime worse, and that means that we would be using a \textit{slower} learning algorithm than what we would have used if we were to simply recompute to unlearn.
Hence, to tackle this, as described in the previous paragraph, we evaluate the ratio of \textit{conditional} probability densities, where the conditioning is on the coupled mini-batch indices ($b_j^P$ and $b_j^Q$) i.e. $\frac{\phi_Q(\w_j|b_j^Q)}{\phi_P(\w_j|b_j^P)}$.
This corresponds to using unbiased estimates of the marginals densities. 
It is easy to verify, using convexity of the pointwise supremum for instance, that $\text{TV}((\w^P_j,b^P_j),(\w^Q_j,b^Q_j)) \geq \mathbb{E}_{(b_j^P,b_j^Q)} \text{TV}(\w_j^P|b_j^P, \w_j^Q|b_j^Q) \geq  \text{TV}(\w_j^P,\w_j^Q)$.
However, in general, this might still not be ideal since we are estimating with just one sample from the mixture and hence the estimation error would be large. However, we will show that since we are anyway not able to construct maximal couplings, we don't pay extra with this coarse estimate.
\section{Algorithms}
\label{sec:algorithms}

In this section, we present the algorithms for learning and unlearning. In our algorithms, we use functions  ``save" and ``load", which vaguely means saving and loading the variables to and from memory respectively. In \cref{sec:runtime}, we explain what data structures to use for computational efficiency.
The proofs of results in this section are deferred to \cref{sec:proofs-learning}.

\subsection{TV-stable learning algorithms}
\label{sec:alg-learning}
\subsubsection{\emph{sub-sample-GD}}
\label{sec:alg-learning-sub-sample-GD}

The first algorithm, which is superior in high dimensions, called \emph{sub-sample-GD}, is just vanilla mini-batch SGD wherein at each iteration, a mini-batch of size $m$ is sub-sampled uniformly randomly. Furthermore, we save all the mini-batch indices, gradients and iterates to memory. 
We will see that the unlearning algorithm presented (\cref{alg:unlearn-subsample-gd}) uses all the saved iterates. However this is done only for ease of presentation - in \cref{sec:space-complexity}, we discuss a simple efficient implementation (of the unlearning algorithm), which doesn't need \emph{any} iterate, yet has the same unlearning time complexity.

\begin{algorithm}[h]
\caption{\emph{sub-sample-GD}$(\w_{t_0}, t_0)$}
\label{alg:sub-sample-GD}
\begin{algorithmic}[1]
\REQUIRE{Initial model $\w_{t_0}$, data points $\bc{\z_1,\ldots, \z_n}, T, m, \eta$}
\FOR{$t=t_0,t_0+1\ldots, T$}
\STATE Sample mini-batch $b_t$ of size $m$ uniformly randomly
\STATE $\g_t = \frac{1}{m}\sum_{j \in b_t} \nabla f(\w_t,\z_j)$
\STATE $\w_{t+1} = \cP\br{\w_t - \eta \g_t}$
\STATE Save($b_t,\w_t,\g_t$)
\ENDFOR
\ENSURE{$\hat \w_S = \frac{1}{T}\sum_{t=1}^{T+1}\w_t$}
\end{algorithmic}
\end{algorithm}

We now give guarantees on excess empirical risk for \emph{sub-sample-GD}.

\begin{proposition}
\label{prop:upper_bound_subsample}
Let $f(., \z)$ be an $L$-smooth $G$-Lipschitz convex function $\forall \ \z$.
Algorithm \ref{alg:sub-sample-GD}, run with $t_0=1, \eta = \min\bc{\frac{1}{2L}, \frac{D\sqrt{\rho n}}{GT}}, T=\frac{DL\sqrt{\rho n}}{G}$, and $m= \max{\bc{\frac{G\sqrt{\rho n}}{DL},1}}$, outputs $\hat \w_S$ which is $\min\bc{\rho,1}$-TV-stable and satisfies $\E{\hat F_S(\hat \w_S) - \hat F_S(\w^*_S)} \lesssim \frac{GD}{\sqrt{\rho n}}.$
\end{proposition}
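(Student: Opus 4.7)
The plan is to establish the two claimed properties separately: $\min\{\rho,1\}$-TV-stability of the output via a coupling of the mini-batch draws across the two datasets, and the empirical-risk bound via the classical convergence analysis of SGD on smooth convex objectives.

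For TV-stability, I would first observe that, conditional on the common initialization $\w_{t_0}$, the trajectory $\w_{t_0+1}, \ldots, \w_{T+1}$ and hence the average $\hat \w_S$ is a deterministic function of the mini-batch index sequence $(b_{t_0}, \ldots, b_T)$ together with the data. So by the data-processing inequality for TV it suffices to couple these index sequences when running on $S$ versus $S'$. For a single iteration the mini-batch is uniform on $\binom{[n]}{m}$ on $S$ and on $\binom{[n']}{m}$ on $S'$, where $n' \in \{n-1,n+1\}$. A direct counting argument using $\binom{n}{m} = \binom{n-1}{m} + \binom{n-1}{m-1}$ shows the per-iteration TV distance between these uniform distributions is exactly $m/n$ (deletion) or $m/(n+1)$ (insertion). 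By the maximal-coupling characterization of TV, I can then couple $(b_t, b_t')$ so that $\P{b_t \neq b_t'} \leq m/n$; running these couplings independently across the $T$ iterations and applying a union bound, the full index sequences agree with probability at least $1-Tm/n$. On the agreement event the common indices lie in $S \cap S'$, so gradients and iterates on $S$ and $S'$ coincide exactly, hence $\hat \w_S = \hat \w_{S'}$. The coupling inequality then yields $\text{TV}(\cA(S),\cA(S')) \leq Tm/n$; substituting $T = DL\sqrt{\rho n}/G$ and $m = G\sqrt{\rho n}/(DL)$ gives $Tm/n = \rho$, while the clipping to $\min\{\rho,1\}$ absorbs the trivial regime $\rho \geq 1$.

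For the empirical-risk bound, I would note that $\g_t = \tfrac{1}{m}\sum_{j\in b_t} \nabla f(\w_t, \z_j)$ is an unbiased estimator of $\nabla \hat F_S(\w_t)$ and, since each summand has norm at most $G$ and the $m$ samples are drawn uniformly without replacement from $n$ points, has variance at most $G^2/m$. I would then apply the standard convergence result for SGD on an $L$-smooth convex objective with step size $\eta \leq 1/(2L)$ and unbiased gradient estimator of variance $\sigma^2$,
\begin{align*}
\E{\hat F_S(\hat \w_S) - \hat F_S(\w_S^*)} \lesssim \frac{LD^2}{T} + \frac{D\sigma}{\sqrt{T}},
\end{align*}
obtained by combining the standard per-iterate smooth-SGD recursion with Jensen's inequality on the convex $\hat F_S$ to pass to the average. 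Plugging in $\sigma^2 \leq G^2/m$ with the chosen $T$ and $m$ makes each term $O(GD/\sqrt{\rho n})$. The minimization $\eta = \min\{1/(2L), D\sqrt{\rho n}/(GT)\}$ collapses at the chosen $T$ to $\eta = 1/(2L)$, exactly the step size required by the convergence lemma.

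The main obstacle I expect is the TV half: pinning down the per-step TV to exactly $m/n$ in both the deletion and insertion cases, and making fully precise the step that coupled equality of \emph{index} sequences yields coupled equality of \emph{iterate} sequences (which relies on indices in agreement being drawn from $S \cap S'$, plus determinism of $\cP$). The corner case $m=1$ (activated when $G\sqrt{\rho n} < LD$, i.e.\ $\rho < L^2D^2/(G^2 n)$, which falls outside the $\rho \geq 1/n$ window of the main theorem for $O(1)$ problem parameters) is handled by the same coupling and the same convergence bound; one just tracks $Tm/n \leq \min\{\rho,1\}$ throughout.
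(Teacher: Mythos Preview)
Your proposal is correct and follows essentially the same route as the paper: for TV-stability, both reduce to bounding the probability that the deleted/inserted index is ever sampled (you via an explicit per-step maximal coupling plus union bound, the paper via data-processing plus the same union bound on the product sampling measure), yielding $Tm/n=\rho$; for accuracy, both invoke the standard smooth-convex SGD rate with unbiased mini-batch gradients of variance $O(G^2/m)$ and then balance $LD^2/T$ against $DG/\sqrt{mT}$ at $mT=\rho n$. The only cosmetic differences are that the paper carries out the variance computation explicitly (obtaining $2(1/m-1/n)G^2$) and keeps $\eta$ explicit before optimizing, whereas you quote the pre-optimized form directly.
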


\subsubsection{\emph{noisy-m-A-SGD}}

The second algorithm, superior in low dimensions, called  \emph{noisy-m-A-SGD} is mini-batch accelerated SGD with appropriate Gaussian noise added at each iteration. In the literature, this algorithm (with or without acceleration) is used for DP training of (non) convex models.
In each iteration, we save the mini-batch indices, the models, the gradients as well as the noise vectors to memory.

\begin{algorithm}[ht]
\caption{\emph{noisy-m-A-SGD}($\w_{t_0},t_0)$}
\label{alg:noisy-m-a-sgd}
\begin{algorithmic}[1]
\REQUIRE{Initial model $\w_{t_0}$, data points $\bc{\z_1,\ldots, \z_n}$, $T$, $\eta$, $m$}
\STATE $\w_{0} = 0$
\FOR{$t=t_0,t_0+1\ldots, T$}
\STATE Sample mini-batch $b_t$ of size $m$ uniformly randomly
\STATE Sample $\theta_t \sim \cN(0,\sigma^2\bbI_d)$
\STATE $ \accw_t =  (1 - \alpha_t)\w_t + \alpha_t  \w_{t-1} $ 
\STATE $\g_t =  \frac{1}{m}\sum_{j \in b_t}\nabla f(\accw_t,\z_j)$
\STATE $\w_{t+1} =  \cP\br{\accw_t - \eta\br{  \g_t +\theta_t}}$
\STATE Save($b_t,\theta_t,\w_t,\accw_t, \g_t$)
\ENDFOR
\ENSURE{$\hat \w_S =\w_{T+1}$}
\end{algorithmic}
\end{algorithm}

We now state our results for \emph{noisy-m-A-SGD}.

\begin{proposition}
\label{prop:upper-bound-noisy-m-a-sgd}
Let $f(., \z)$ be an $L$-smooth $G$-Lipschitz convex function $\forall \ \z$.
For any $0<\rho<\infty$,
Algorithm \ref{alg:noisy-m-a-sgd}, run with
$t_0=1, \eta = \min\bc{\frac{1}{2L}, \frac{D}{\br{\frac{G}{\sqrt{m}} +\sigma}T^{3/2}}}$, $\alpha_0=0, \alpha_t = \frac{1-t}{t+2}$,
$\sigma = \frac{8\sqrt{T}G}{n\rho}$, and $T \geq \frac{(n\rho)^2}{16m^2}$ outputs $\hat \w_S$ which is $\min\bc{\rho,1}$-TV stable and satisfies 
$$\E{\hat F_S(\hat \w_S) - \hat F_S(\w^*_S)} \lesssim  \frac{LD^2}{T^2}  + \frac{GD}{\sqrt{Tm}} + \frac{G D\sqrt{d}}{n \rho}.$$
\end{proposition}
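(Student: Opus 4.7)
The plan is to prove two separate properties of \cref{alg:noisy-m-a-sgd}: (i) $\min\{\rho,1\}$-TV stability, and (ii) the excess empirical risk bound. Both pieces are essentially standard in the noisy-SGD literature, but adapted to TV distance rather than $(\epsilon,\delta)$-differential privacy. The overall strategy is to control TV via KL divergence (using Pinsker's inequality) so that the standard Gaussian-mechanism-plus-composition machinery can be applied, and then to invoke the convergence analysis of accelerated mini-batch SGD with a stochastic/noisy oracle.

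For the TV-stability claim, I would fix neighbouring datasets $S, S'$ with $\Delta(S,S')=1$ and consider the joint laws of the sequences $(b_t,\theta_t,\accw_t,\w_{t+1})_{t=1}^T$. The key is a per-iteration analysis: conditioned on the history, $\w_{t+1}$ under $S$ and $S'$ is a Gaussian with the same covariance $\eta^2\sigma^2\bbI_d$ and with means differing by at most $2\eta G/m$ (by $G$-Lipschitzness, when the differing point lies in the mini-batch). Combined with subsampling amplification (the differing point is in $b_t$ with probability $\le m/n$), the per-step KL contribution is of order $\frac{m}{n}\cdot\frac{G^2}{m^2\sigma^2}=\frac{G^2}{mn\sigma^2}$; summing over $T$ iterations by the chain rule for KL and applying Pinsker yields $\mathrm{TV}\lesssim \sqrt{T}\cdot G/(\sigma\sqrt{mn})$. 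Substituting $\sigma=8\sqrt{T}G/(n\rho)$ and using $T\ge (n\rho)^2/(16m^2)$ (which ensures we are in the subsampling-amplification regime) gives the desired $\rho$-TV-stability. A cleaner alternative, which should recover the stated constants, is to import a sub-sampled Gaussian mechanism bound from the DP literature and then convert $(\epsilon,\delta)$-DP to TV via $\mathrm{TV}\le \epsilon+\delta$ with $\epsilon\asymp \rho$.

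For the accuracy claim, I would invoke the standard convergence result for Nesterov's accelerated method driven by the stochastic oracle $\g_t+\theta_t$. This oracle is unbiased (mini-batches are uniformly sampled and the Gaussian noise has mean zero) and its variance is bounded by $G^2/m + d\sigma^2$. Plugging into the classical inexact accelerated stochastic method bound yields
\[
\E{\hat F_S(\hat \w_S) - \hat F_S(\w^*_S)} \lesssim \frac{L D^2}{T^2} + D\sqrt{\frac{G^2/m + d\sigma^2}{T}},
\]
and splitting the variance gives the three summands $\frac{L D^2}{T^2}$, $\frac{GD}{\sqrt{Tm}}$, and $\frac{\sigma D\sqrt{d}}{\sqrt{T}}$. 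The stated step size $\eta=\min\{1/(2L), D/((G/\sqrt{m}+\sigma)T^{3/2})\}$ is precisely the choice that balances the acceleration and variance terms in this bound (the $T^{3/2}$ arises from the acceleration weights $\alpha_t$). Substituting $\sigma=8\sqrt{T}G/(n\rho)$ turns the noise term into $GD\sqrt{d}/(n\rho)$, matching the proposition.

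The main technical obstacle I expect is the TV-composition step: a naive triangle-inequality composition over $T$ iterations gives a linear-in-$T$ bound, which would force $\sigma\propto T/(n\rho)$ and destroy the accuracy. Recovering the correct $\sqrt{T}$ scaling requires either routing through KL with the chain rule and Pinsker (as above) or importing an advanced composition result for the sub-sampled Gaussian mechanism and converting back to TV. Both routes must handle the fact that the conditional Gaussians only arise after conditioning on the history and on the mini-batch index, so some care is needed to set up the correct coupling on the joint $(b_t,\theta_t,\accw_t,\w_{t+1})$ tuples; the condition $T\ge (n\rho)^2/(16m^2)$ is precisely what keeps subsampling in the amplification regime where this analysis is tight. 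The remaining convergence computation, while notationally heavy, is essentially a black-box application of accelerated stochastic gradient theory.
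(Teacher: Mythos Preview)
Your accuracy argument is correct and matches the paper's: it invokes Lan's accelerated stochastic method with the unbiased oracle $\g_t+\theta_t$ whose variance is $\lesssim G^2/m + d\sigma^2$, then substitutes $\sigma = 8\sqrt{T}G/(n\rho)$.

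The TV-stability argument, however, has a genuine gap. Your claimed per-step KL contribution of $\frac{m}{n}\cdot\frac{G^2}{m^2\sigma^2}=\frac{G^2}{mn\sigma^2}$ is the joint-convexity bound $D_{\mathrm{KL}}(\mathbb{E}_b P_b\,\|\,\mathbb{E}_b Q_b)\le \mathbb{E}_b D_{\mathrm{KL}}(P_b\,\|\,Q_b)$, which gives only \emph{linear} amplification in the subsampling rate $m/n$. If you carry this through, the chain rule plus Pinsker gives $\mathrm{TV}\lesssim \sqrt{T}G/(\sigma\sqrt{mn})$, and substituting $\sigma=8\sqrt{T}G/(n\rho)$ yields $\mathrm{TV}\lesssim \rho\sqrt{n/m}$, not $\rho$. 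Since the interesting regime is $m\ll n$, this bound is useless.

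The paper obtains the correct rate by routing through R\'enyi divergence and invoking the sub-sampled Gaussian mechanism bound (Balle et al.), which gives \emph{quadratic} amplification: the per-step R\'enyi divergence after subsampling is $O\!\bigl(G^2/(n^2\sigma^2)\bigr)$, not $O\!\bigl(G^2/(mn\sigma^2)\bigr)$. Adaptive composition over $T$ steps then gives KL $\lesssim TG^2/(n^2\sigma^2)$, and Pinsker gives $\mathrm{TV}\le 8\sqrt{T}G/(n\sigma)=\rho$. The condition $T\ge (n\rho)^2/(16m^2)$ is exactly the requirement $G^2/(m^2\sigma^2)=O(1)$ needed to linearize the exponential in the R\'enyi amplification lemma; it is not a regime condition for your convexity bound, which holds unconditionally but is too weak. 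In short, your ``cleaner alternative'' of importing the sub-sampled Gaussian mechanism bound is not merely a way to recover constants---it is the only route that yields the stated $\rho$-TV stability.
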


Choosing $T$ and $m$ appropriately gives us the following corollary.
\begin{corollary}
\label{corr: upper-bound-m-A-sgd}
Let $f(., \z)$ be an $L$-smooth $G$-Lipschitz convex function $\forall \ \z$.
For any $0<\rho<\infty$,
Algorithm \ref{alg:noisy-m-a-sgd}, run with $t_0=1, m \geq \min\bc{\frac{d}{16}, \frac{1}{4}\br{\frac{(\rho n)^3 G\sqrt{d}}{LD}}^{1/4}}$,
$\eta = \min\bc{\frac{1}{2L}, \frac{D}{\br{\frac{G}{\sqrt{m}} +\sigma}T^{3/2}}}$, $\alpha_0=0, \alpha_t = \frac{1-t}{t+2}$,
$\sigma = \frac{8\sqrt{T}G}{n\rho}$, and $T=\max\bc{\frac{(\rho n)^2}{md}, \sqrt{\frac{LD\rho n}{G\sqrt{d}}}}$ outputs $\hat \w_S$ which is $\min\bc{\rho,1}$-TV stable and satisfies $\E{\hat F_S(\hat \w) - \hat F_S(\w^*_S)} \lesssim \frac{GD\sqrt{d}}{\rho n}$.
\end{corollary}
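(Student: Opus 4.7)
The plan is to invoke \cref{prop:upper-bound-noisy-m-a-sgd} directly and choose $T$ and $m$ to balance the three terms in the excess empirical risk bound $\frac{LD^2}{T^2} + \frac{GD}{\sqrt{Tm}} + \frac{GD\sqrt{d}}{n\rho}$. Since the third term is exactly the target rate and does not depend on $T$ or $m$, the only work is to make the first two terms at most of the same order, and then verify that the constraint $T \geq (n\rho)^2 / (16 m^2)$ from the proposition is compatible with the resulting choice.

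First I would solve the two inequalities $\frac{LD^2}{T^2} \lesssim \frac{GD\sqrt{d}}{\rho n}$ and $\frac{GD}{\sqrt{Tm}} \lesssim \frac{GD\sqrt{d}}{\rho n}$ separately. The first gives $T \gtrsim \sqrt{\frac{LD \rho n}{G\sqrt{d}}}$, and the second gives $Tm \gtrsim \frac{(\rho n)^2}{d}$, i.e. $T \gtrsim \frac{(\rho n)^2}{md}$. Taking
\[
T = \max\!\left\{\frac{(\rho n)^2}{md},\; \sqrt{\frac{LD \rho n}{G\sqrt{d}}}\right\}
\]
as in the statement then makes both first two terms $O\!\left(\frac{GD\sqrt{d}}{\rho n}\right)$, and by \cref{prop:upper-bound-noisy-m-a-sgd} the algorithm is $\min\{\rho,1\}$-TV stable with the claimed excess risk.

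The main (and essentially only nontrivial) obstacle is to verify that the prerequisite $T \geq \frac{(n\rho)^2}{16 m^2}$ of \cref{prop:upper-bound-noisy-m-a-sgd} is satisfied. There are two cases depending on which branch of the $\max$ defining $T$ is active. If $T = \frac{(\rho n)^2}{md}$, then the constraint $\frac{(\rho n)^2}{md} \geq \frac{(\rho n)^2}{16 m^2}$ is equivalent to $m \geq d/16$. If instead $T = \sqrt{\frac{LD \rho n}{G\sqrt{d}}}$, then the constraint becomes $\sqrt{\frac{LD \rho n}{G\sqrt{d}}} \geq \frac{(\rho n)^2}{16 m^2}$, which rearranges to $m \geq \tfrac{1}{4}\bigl(\frac{(\rho n)^3 G\sqrt{d}}{LD}\bigr)^{1/4}$. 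Since at least one of the two branches governs $T$, it suffices to assume $m$ exceeds the minimum of these two thresholds, which is exactly the hypothesis of the corollary.

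Putting these pieces together yields the claim with no further computation, since the stated values of $\eta$, $\alpha_t$, and $\sigma$ are unchanged from the proposition. I do not expect any hidden difficulty: the argument is a bookkeeping exercise of choosing $T,m$ to equalize the three risk terms and verifying the single feasibility constraint, with the two-case structure on $T$ producing the $\min$ in the hypothesis on $m$.
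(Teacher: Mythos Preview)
Your proposal is correct and follows essentially the same approach as the paper's own proof: invoke \cref{prop:upper-bound-noisy-m-a-sgd}, choose $T$ as the maximum of the two thresholds that make $\frac{LD^2}{T^2}$ and $\frac{GD}{\sqrt{Tm}}$ each at most $\frac{GD\sqrt{d}}{\rho n}$, and then verify the feasibility constraint $T \geq \frac{(n\rho)^2}{16m^2}$ via the same two-case analysis yielding the $\min$ condition on $m$. The paper's proof is organized as ``balancing trade-offs'' between pairs of terms, but the resulting algebra and verification are identical to yours.
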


\begin{remark}
The choice of $T$ in  \cref{corr: upper-bound-m-A-sgd} yields that the largest mini-batch size that can be set, without hurting runtime, is $m = \br{\frac{(\rho n)^3 G}{\sqrt{d}^3 L D}}^{1/2} = \br{\frac{G}{LD}}^2T^3$. 
Furthermore, the condition $m \geq \min\bc{\frac{d}{16}, \frac{1}{4}\br{\frac{(\rho n)^3 G\sqrt{d}}{LD}}^{1/4}}$ yields $(\rho n) \geq \br{\frac{LD (\sqrt{d})^7}{256 G}}^{1/3}$.
\end{remark}

In \cref{prop:tv-stability-noisy-sgd} (in Appendix , \cref{sec:alg-learning}), we  show that the upper bound on total variation stability parameter of Algorithm \ref{alg:noisy-m-a-sgd} 
derived in Proposition
\ref{prop:upper-bound-noisy-m-a-sgd} is tight in all problem parameters, upto constants.

\subsection{Unlearning algorithms}
\label{sec:alg-unlearning}

We now discuss algorithms to handle edit requests which are based on efficiently constructing couplings, in some cases maximal couplings. 
An important component on constructing such couplings is, what we call \emph{verification}, wherein, at a high-level, we check if the current model is \emph{likely} after the edit request or not. If the verification is successful, we don't do any additional computation, otherwise we do a partial or full recompute (i.e. retrain), which we call \adjust{}.
The key insight is that verification can be done efficiently, and fails with small probability (depending on the TV-stability parameter).

We now discuss the two algorithms, one for handling unlearning in Algorithm \ref{alg:sub-sample-GD} and the other for Algorithm \ref{alg:noisy-m-a-sgd}, and show that the probability with which a recompute is triggered is small -  please see  \cref{sec:runtime}, for a finer analysis of runtime.
The proofs of results in this section are deferred to \cref{sec:proofs-unlearning}.

\subsubsection{Unlearning for \emph{sub-sample-GD}}

At the start of the stream, at every iteration of
\emph{sub-sample-SGD},
we sample a mini-batch of size $m$ out of $n$ points uniformly randomly, and then compute a gradient using these samples - note that this is the only source of randomness in the algorithm. 
As we progress along the stream observing edit requests, the number of available data points changes.
Therefore, if the algorithm were executed on this dataset of, say $\tilde n$ points, at every iteration it would have sub-sampled $m$ out of $\tilde n$ (and not $n$) points. 
The way to account for this discrepancy is to simply adjust the sub-sampling probability measure accordingly.

\paragraph{Coupling mini-batch indices:} 
The main idea to unlearning in \cref{alg:unlearn-subsample-gd} is to couple the sub-sample indices. 
For deletion, we just look at each mini-batch, and (literately) verify if the deleted point were used or not. If the deletion point was not used in any iterations, then we don't do anything, otherwise, we trigger a recompute. In the case of insertion, there is no such way of selecting iterations in which the point was sampled, because the inserted point was absent. However, we know that the new point would have been sampled with probability $m/(n+1)$. 
We can thus verify by selecting each iteration with the same probability.
We then replace a uniformly sampled point in the mini-batch of that step by the inserted point.
Algorithm \ref{alg:unlearn-subsample-gd} implements the above procedure.

\label{sec:unlearn-noisy-m-a-sgd-appendix-main}

\begin{algorithm}[!htbp]
\caption{Unlearning for \emph{sub-sample-GD}}
\label{alg:unlearn-subsample-gd}
\begin{algorithmic}[1]
\REQUIRE{Data point index $j$ to delete or data point $\z$ to insert (index $n+1$)}
\FOR{$t=1,2\ldots, T$}
\STATE Load$\br{ b_t,\g_t,\w_t}$
\IF{\textit{deletion} and $j \in b_t$}
\STATE \emph{sub-sample-GD}$(\w_t,t)$ \hfill{\textit{// Continue training on current dataset}}
\BREAK
\ELSIF{\textit{insertion}  and Bernoulli$\br{\frac{m}{n+1}}$}
\STATE Sample $i \sim \text{Uniform}(b_t)$
\STATE $\g_t' = \g_t-  \frac{1}{m}\br{\nabla f(\w_t,\z_i) -  \nabla f(\w_t,\z)}$
\STATE $\w_{t+1} =  \cP\br{\w_t - \eta\br{  \g_t' +\theta_t}}$
\STATE Save($\w_{t+1}, \g_t', b_t\backslash \bc{i} \cup \bc{n+1} $)
\STATE \emph{sub-sample-GD}$(\w_{t+1}, t+1)$ \hfill{\textit{// Continue training on current dataset}}
\BREAK
\ENDIF
\ENDFOR
\end{algorithmic}
\end{algorithm}

We state our main result for unlearning with \cref{alg:unlearn-subsample-gd} below.

\begin{proposition}
\label{prop:unlearn-subsample-GD}
(Algorithm \ref{alg:sub-sample-GD}, Algorithm \ref{alg:unlearn-subsample-gd})  satisfies exact unlearning. Moreover, for $k$ edits, Algorithm \ref{alg:unlearn-subsample-gd} recomputes with probability at most $2k\rho$.
\end{proposition}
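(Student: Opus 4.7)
The plan is to establish the two claims separately. Since the iterates and gradients of \cref{alg:sub-sample-GD} are deterministic functions of the initial iterate $\w_1$ and the mini-batch indices, exact unlearning reduces to showing that the joint law of the mini-batch sequence $(\tilde b_1,\ldots,\tilde b_T)$ after \cref{alg:unlearn-subsample-gd} equals the joint law of the mini-batches of a fresh run of \cref{alg:sub-sample-GD} on the updated dataset. Inducting across the stream of edits, I only need to handle a single edit.

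For a deletion of index $j$, the key identity I would use is that a uniform $m$-subset of $S$ conditioned on not containing $j$ is distributed as a uniform $m$-subset of $S\setminus\{j\}$. Let $t^{\ast}$ be the first iteration with $j\in b_t$ (or $T+1$ if none exists). Conditional on $t^{\ast}$, the preserved $\tilde b_1,\ldots,\tilde b_{t^{\ast}-1}$ are i.i.d.\ uniform $m$-subsets of $S\setminus\{j\}$ by the identity, and the continued fresh run of \cref{alg:sub-sample-GD} on $S\setminus\{j\}$ produces i.i.d.\ uniform $m$-subsets of $S\setminus\{j\}$ from $t^{\ast}$ onward. A short induction on $t$ verifies the unconditional joint; the only step to check is that for any $m$-subset $a\subseteq S\setminus\{j\}$,
\[
\Pr[\tilde b_t = a \mid \text{no switch before } t] \;=\; \frac{1}{\binom{n}{m}} \;+\; \frac{m}{n}\cdot\frac{1}{\binom{n-1}{m}} \;=\; \frac{1}{\binom{n-1}{m}},
\]
which matches the fresh-sampling marginal on $S\setminus\{j\}$.

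For an insertion of $\z$, the analogous identity is that replacing a uniformly random element of a uniform $m$-subset of $S$ by $\z$ yields a uniform $m$-subset of $S\cup\{\z\}$ conditioned on containing $\z$. Under a fresh run on $S\cup\{\z\}$ the first iteration $\tau$ containing $\z$ is geometric with parameter $m/(n+1)$ truncated at $T$, matching the distribution of the first successful Bernoulli$(m/(n+1))$ trial in \cref{alg:unlearn-subsample-gd}; conditional on $\tau$, the iterations strictly before, at, and strictly after $\tau$ align with the corresponding fresh-run regimes by the same identity, yielding exact unlearning.

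For the recompute probability I plan a straightforward union bound. At every point in the stream the dataset size $\tilde n$ lies in $[n/2,2n]$, so the probability that a deleted point appears in any fixed mini-batch is at most $m/\tilde n\le 2m/n$, and the per-iteration Bernoulli success probability for insertion is at most $m/(\tilde n+1)\le 2m/n$. Summing over $T$ iterations bounds the per-edit recompute probability by $2Tm/n\le 2\rho$, using the bound $Tm/n\le \rho$ that underlies the TV-stability argument in \cref{prop:upper_bound_subsample}; a union bound over $k$ edits yields $2k\rho$. The main (mild) obstacle is the inductive distributional argument for exact unlearning, where one must verify that the mixture of ``keep the original mini-batch'' and ``sample fresh on the updated dataset'' produces exactly the fresh-run marginal at every step; once that is in hand, the remainder is a union bound.
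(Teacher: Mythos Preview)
Your proposal is correct and follows essentially the same route as the paper: reduce exact unlearning to showing that the post-edit mini-batch sequence has law $\mu_{n\pm 1,m}^{\otimes T}$, verify this via the two elementary identities (a uniform $m$-subset of $[n]$ conditioned on avoiding $j$ is uniform on $[n]\setminus\{j\}$; replacing a uniform element of a uniform $m$-subset by the new index gives the conditional-on-containing law), and then bound the recompute probability by a union bound $Tm/\tilde n \le 2Tm/n = 2\rho$ per edit using $\tilde n \ge n/2$. Your conditioning on the first ``switch'' time $t^\ast$ (resp.\ $\tau$) is in fact more faithful to what \cref{alg:unlearn-subsample-gd} actually does---it retrains from the \emph{first} offending iteration onward---than the paper's coupling paragraph, which describes resampling each offending coordinate independently; both constructions yield the same marginal for $\mathbf{b}^{(2)}$, so the conclusion is identical.
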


\subsubsection{Unlearning for \emph{noisy-m-A-SGD}}
\label{sec:unlearn-noisy-m-a-sgd}

\begin{algorithm}[ht]
    \caption{Unlearning for \emph{noisy-m-A-SGD}}
    \label{alg:unlearn-noisy-m-a-sgd}
    \begin{algorithmic}[1]
        \REQUIRE{Data point index $j$ to delete or data point $\z$ to insert (index $n+1$)}
        \FOR{$t=1,2\ldots, T$}
        \STATE Load$\br{\theta_t, \w_t,\accw_t, b_t, \g_t}$
        \IF{\textit{deletion} and $j \in b_t$}
        \STATE Sample $i \sim \text{Uniform}([n]\backslash b_t )$
        \STATE $ \g_t' = \g_t -   \frac{1}{m}\br{\nabla f(\accw_t,\z_j) -  \nabla f(\accw_t,\z_i)}$
        \STATE Save($ \g_t', b_t\backslash \bc{j} \cup \bc{i} $)
        \ELSIF{\textit{insertion}  and Bernoulli$\br{\frac{m}{n+1}}$}
        \STATE Sample $i \sim \text{Uniform}(b_t)$
        \STATE $\g_t' = \g_t-  \frac{1}{m}\br{\nabla f(\accw_t,\z_i) -  \nabla f(\accw_t,\z)}$
        \STATE Save($ \g_t', b_t\backslash \bc{i} \cup \bc{n+1} $)
        \ELSE
        \CONTINUE
        \ENDIF
        \STATE $\xi_t = \g_t + \theta_t$
        \IF{Uniform$\br{0, 1} \geq \frac{\phi_{\cN(\g_t',\sigma^2\bbI)}(\xi_t)}{ \phi_{\cN(\g_t,\sigma^2\bbI)}(\xi_t)}$}
        \STATE $ \xi'_t = \text{reflect}(\xi_t,\g_t', \g_{t})$
        \STATE $\w_{t+1} = \w_t - \eta  \xi_t'$
        \STATE $\text{Save}(\xi'_t)$
        \STATE \emph{noisy-m-A-SGD($\w_{t+1},t+1$)} \hfill{\textit{ // Continue retraining on current dataset }}
        \BREAK
        \ENDIF
        \ENDFOR
    \end{algorithmic}
\end{algorithm}

\noindent
Our unlearning algorithm for \emph{noisy-m-A-SGD} is based on efficiently constructing a coupling of Markov chain describing \emph{noisy-m-A-SGD}, with large mass on its diagonal.
The key ideas have already been described in \cref{sec:main-idea-unlearning}, and we just fill in some details here.
We first describe how \cref{alg:unlearn-noisy-m-a-sgd} couples mini-batch indices while handling edit request.

\paragraph{Coupling mini-batch indices:} 
After observing a deletion request, in Algorithm \ref{alg:unlearn-noisy-m-a-sgd},
we look at all iterations in which the deleted point was sampled. We then replace the deleted point with a uniformly random point not already sampled in that iteration.  For insertion, at each step, we again replace a uniformly sampled point in the mini-batch of that step by the inserted point with probability $\frac{m}{n+1}$.

\paragraph{Reflection maps:}
We define the notion of \textit{reflection map}, which will be used in our coupling construction.

\begin{definition}[Reflection map]
Given a vector $\u$ and two vectors $\x$ and $\y$, the reflection of $\u$ under $(\x,\y)$, denoted as $\text{reflect}(\u,\x,\y)$, is defined as
\begin{align*}
   \text{reflect}(\u,\x,\y) = \x +(\y -\u)
\end{align*}
\end{definition}

Reflection coupling is a classical idea in probability, used to construct couplings between symmetric probability distributions \cite{lindvall1986coupling}.
The reflection map, given $\u, \x, \y$, reflects $\u$ about the mid-point of $\x$ and $\y$. 
The context in which we will use it is $\u$ would be a sampled point from a Gaussian under old dataset $S$ (on which the model was trained on), and $\x$ and $\y$ being the means of the Gaussian under  new dataset $S'$ (after edit request) and $S$ respectively. 
The map essentially exploits the spherical symmetry of the Gaussian to generate a \emph{good} sample for the distribution under $S'$. Please see \cref{sec:reflection} for some properties of the reflection map, which are used in the final proofs.

\paragraph{Iterative rejection sampling:} Our unlearning algorithm is based on iteratively verifying each model $\w_{t+1}$ using rejection sampling. To elaborate, at each iteration, we check if the noisy iterate, defined as $\bar \w_{t+1} = \accw_{t} - \eta (\g_{t} + \theta_{t})$ is a \emph{good} sample for the dataset $S'$, where $\g_t$ is the gradient computed on $\accw_t$ using a uniform sub-sample from $S$. 
To do this, we need to compute a ratio of \emph{estimated} marginal densities of $\w_{t+1}$ for both datasets, evaluated at the noisy iterate, and compare it with $\text{Uniform}(0,1)$. 
It it succeeds, we move to the next iteration and repeat. If any of the rejection sampling fails, we do a reflection, and continue retraining on $S'$.

\paragraph{Estimation of marginals:} 
We explain what we mean by \emph{estimated} marginal densities in the previous paragraph.
As remarked before, if we did not sub-sample mini-batches (i.e. used gradient descent), then we would simply use the marginal distribution of iterates for rejection sampling. However, that would amount to a worse runtime. Instead, we estimate the marginal densities as follows: 
fix all iterates before iteration $t$, and consider noisy iterate $\bar \w_{t+1} = \accw_t - \eta (\g_t + \theta_t)$. 
If we also fix the sampled mini-batch $b_t$, then $\bar \w_{t+1}$
is distributed as $\cN(\accw_t - \eta \g_t, \eta^2\sigma^2 \bbI)$.
However, once we unfix 
$b_t$, then $\w_{t+1}$ is mixture of Gaussians, with the number of components being exponential in $m$.  
Ideally, to do rejection sampling, we need to compute the marginal density of the distribution of $\w_{t+1}$ (and $\w_{t+1}'$ - the iterate for dataset $S'$) evaluated at $\accw_t - \eta(\g_t+\theta_t)$ - computing which however, is infeasible. 
Therefore, we just use the coupled mini-batches indices as a sample from the mixture and estimate the marginal density using the conditional density - this is done in line 15 of \cref{alg:noisy-m-a-sgd}, with a small change that we evaluate the ratio of conditional densities of noisy gradients rather than iterates, but it can be verified that the ratio is invariant to this shift and scaling.

Please see \cref{sec:unlearn-noisy-m-a-sgd-appendix} for a more formal treatment of the coupling procedure. We now state the main result for this section.

\begin{proposition}
\label{prop:unlearn-noisy-m-A-SGD}
(Algorithm \ref{alg:noisy-m-a-sgd}, Algorithm \ref{alg:unlearn-noisy-m-a-sgd}) satisfies exact unlearning. Moreover, for $k$ edits, Algorithm \ref{alg:unlearn-noisy-m-a-sgd} recomputes with probability at most $\frac{k\rho\sqrt{T}}{4}$
\end{proposition}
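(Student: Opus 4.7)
The plan is to verify both parts by induction along the iteration index $t$ of a single edit request, and then union-bound over the $k$ edits for the runtime statement. Viewing \cref{alg:noisy-m-a-sgd} as sampling from the Markov chain of \cref{fig:markov_chain_noisy_m_sgd}, I need to show that \cref{alg:unlearn-noisy-m-a-sgd} assembles a coupling of the chains $P = \cA(S)$ and $Q = \cA(S')$ whose full joint state (mini-batches $b_t$, noises $\theta_t$, iterates $\w_t$, accelerated iterates $\accw_t$, and gradients $\g_t$) has marginal under $Q$ exactly equal to fresh execution on $S'$. The coupling is built iteration by iteration from two blocks: (i) a combinatorial coupling of the mini-batches $(b_t^P, b_t^Q)$; and (ii), conditional on those mini-batches, a reflection rejection coupling of the noisy gradient $\xi_t = \g_t + \theta_t$. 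If every rejection succeeds the two chains remain synchronized; if one fails at step $t$, the reflected $\xi_t'$ is used for that step and the remaining iterates are drawn by a fresh call to \emph{noisy-m-A-SGD} on $S'$, which trivially has the correct law.

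For block (i), I would do a direct combinatorial check: on a deletion iteration with $j \in b_t^P$, the rule ``draw $i \sim \text{Unif}([n] \setminus b_t^P)$ and output $b_t^Q = (b_t^P \setminus \{j\}) \cup \{i\}$'' has uniform marginal over $\binom{[n]\setminus\{j\}}{m}$; summing $\P{b_t^Q = B}$ over the $m+1$ preimages of a target $B$ yields $1/\binom{n-1}{m}$. The insertion case with the Bernoulli$(m/(n+1))$ gate is symmetric. For block (ii), conditional on $(b_t^P, b_t^Q, \accw_t)$ being synchronized by the induction hypothesis, $\xi_t$ under $P$ is $\cN(\g_t, \sigma^2 \bbI_d)$ while the target under $Q$ is $\cN(\g_t', \sigma^2 \bbI_d)$; the accept-with-ratio/reflect rule is the classical reflection maximal coupling of two isotropic Gaussians with equal covariance. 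Using the symmetry $\phi_{\cN(\g_t, \sigma^2 \bbI_d)}(\g_t + \g_t' - y) = \phi_{\cN(\g_t', \sigma^2 \bbI_d)}(y)$, a change of variables in the rejected branch shows the marginal of $\xi_t'$ is $\cN(\g_t', \sigma^2 \bbI_d)$ (the relevant properties of the reflection map are collected in \cref{sec:reflection}). Composing (i) and (ii) inductively closes the exact-unlearning claim.

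For the recompute probability, note that retraining is triggered only if some rejection step fails. Under the mini-batch coupling, $b_t^P = b_t^Q$ (hence automatic acceptance) on every iteration except the \emph{active} ones where the edit actually modifies the mini-batch; by linearity of expectation there are at most $Tm/n$ active iterations per edit. On an active iteration, $G$-Lipschitzness gives $\|\g_t - \g_t'\| \leq 2G/m$ since only one summand of the $m$-average is swapped, so the per-step rejection probability is bounded by $\text{TV}(\cN(\g_t, \sigma^2 \bbI_d), \cN(\g_t', \sigma^2 \bbI_d)) \leq \|\g_t - \g_t'\|/(2\sigma) \leq G/(m\sigma)$. Summing over active iterations and plugging in $\sigma = 8\sqrt{T}G/(n\rho)$ gives a per-edit recompute probability of at most $TG/(n\sigma) = \rho\sqrt{T}/8$, and a union bound over $k$ edits yields the claimed $k\rho\sqrt{T}/4$ (with slack).

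The main obstacle I expect is the bookkeeping in the inductive joint-law argument: the algorithm has three branches (``no-op'', ``swap mini-batch and accept'', ``swap mini-batch and reflect then retrain''), and one must check that these compose so that the joint distribution of all saved metadata after unlearning equals that produced by a fresh run of \emph{noisy-m-A-SGD} on $S'$, not merely the marginal of the final iterate. In particular, because $\theta_t$ and $b_t$ interact through $\xi_t$, the reflection must be applied to $\xi_t$ rather than to $\theta_t$ alone, and the invariant preserved across each iteration is the conditional law of $\xi_t$ given $b_t^Q$ -- this is precisely what makes the ratio computed in the rejection step the correct unbiased estimator of the marginal Radon--Nikodym derivative discussed in the ``Estimation of marginals'' paragraph.
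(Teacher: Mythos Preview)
Your proposal is correct and follows essentially the same approach as the paper: the paper decomposes the argument into exactly your blocks (i) and (ii), proving the mini-batch coupling correctness combinatorially (\cref{claim:sgd_deletion}, \cref{claim:sgd_insetion}), the per-step reflection coupling via \cref{lem:reflection_coupling_basic} and \cref{lem:reflection}, and then assembles them by induction on $T$ in \cref{lem:m-sgd_coupling} with the same three-branch case analysis you anticipate; the recompute bound (\cref{lem:noisy-m-sgd-accept}) is obtained exactly as you describe, and the extra factor of $2$ turning $\rho\sqrt{T}/8$ into $k\rho\sqrt{T}/4$ comes from the $n/2 \le |S^i| \le 2n$ assumption combined with \cref{remark:tv_upto_k}, which your ``with slack'' correctly absorbs.
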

\section{Proofs of main results}
In this section, we give the proofs of main results, stated in \cref{sec:main-results}, using the results in the preceding sections.

\subsection{Proof of Theorem \ref{thm:main-result}}
The proof follows by combining the guarantees for the two algorithms we present: \emph{sub-sample-GD} (Algorithm \ref{alg:sub-sample-GD}) and \emph{noisy-m-A-SGD} (Algorithm \ref{alg:noisy-m-a-sgd}), and their corresponding unlearning algorithms: Algorithm \ref{alg:unlearn-subsample-gd} and Algorithm \ref{alg:unlearn-noisy-m-a-sgd}. 
We discuss these one by one.
From \cref{prop:upper_bound_subsample}, we have that, given $0<\rho\leq 1$, \emph{sub-sample-GD} is $\rho$-TV stable and has excess empirical risk bounded by $O\br{\frac{GD}{\sqrt{\rho n}}}$. This holds at every point in the stream by assumption that the number of samples are between $\frac{n}{2}$ and $2n$. Furthermore, from \cref{prop:unlearn-subsample-GD}, we have that the unlearning algorithm 
satisfies exact unlearning at every point in the stream, proving the first part of the claim for \emph{sub-sample-GD}.
Moreover, it states that recompute probability for $k$ edit requests is $O(\rho k)$. Finally, from \cref{claim:runtime_sub_sample_GD}, we have that there exist efficient implementations, such that the runtime of unlearning for \emph{sub-sample-GD} is $O(\max\bc{k,\min\bc{\rho,1} k \cdot \text{Training time}}$, where "Training time" is the runtime of the corresponding learning algorithm - this means that re-computations overwhelm the total unlearning time. This establishes all the guarantees for one algorithm and recovers one of the upper bounds in the second claim.

The situation for the other algorithm is a little more involved.
From \cref{prop:upper-bound-noisy-m-a-sgd}, for dataset $S$ of $n$ points, we have that, given $0<\tilde \rho\leq 1$, \emph{noisy-m-A-SGD} is $\tilde \rho$-TV stable and its excess empirical risk is bounded as follows:

$$\E{\hat F_S(\hat \w_S) - \hat F_S(\w^*_S)} \lesssim  \frac{LD^2}{T^2}  + \frac{GD}{\sqrt{Tm}} + \frac{G D\sqrt{d}}{n \tilde\rho},$$

\noindent where $T$ is the number of iterations for \emph{noisy-m-A-SGD} algorithm, and $m$ the mini-batch size.
From Proposition \ref{prop:unlearn-noisy-m-A-SGD}, we have that the unlearning algorithm satisfies exact unlearning (establishing the first claim) and recomputes, for $k$ edit requests, with probability $O(\tilde \rho k \sqrt{T})$.
Finally, from \cref{claim:runtime_noisy_A_SGD}, we have that there exist efficient implementations, such that the runtime of unlearning for \emph{noisy-m-A-SGD} is $O(\max\{k,k \min\bc{\tilde \rho  \sqrt{T},1} \cdot \text{Training time}\})$.
In the statement of \cref{thm:main-result}, we want that the unlearning runtime be such 
that we recompute  for a $\rho$ fraction of edit requests (as opposed to something dependent on $T$). 
Therefore, we substitute $\tilde \rho = \frac{\rho}{\sqrt{T}}$, and this changes the excess empirical risk bound for  \emph{noisy-m-A-SGD}, as follows:

$$\E{\hat F_S(\hat \w_S) - \hat F_S(\w^*_S)} \lesssim  \frac{LD^2}{T^2}  + \frac{GD}{\sqrt{Tm}} + \frac{G D\sqrt{d}\sqrt{T}}{n \rho}.$$

We use the largest mini-batch size, which does not hurt runtime, which is $m=\br{\frac{G}{LD}}^2T^3$. This simplifies the upper bound to $\frac{LD^2}{T^2} +  \frac{G D\sqrt{d}\sqrt{T}}{n \rho} $. Optimizing the trade-off, we have $ \frac{LD^2}{T^2} = \frac{G D\sqrt{d}\sqrt{T}}{n \rho} \iff T = \br{\frac{LD (n\rho)}{G \sqrt{d}}}^{2/5}$, and the excess empirical risk becomes $\E{\hat F_S(\hat \w_S) - \hat F_S(\w^*_S)} \lesssim \frac{LD^2}{T^2} = \br{\frac{L^{1/4}GD^{3/2} \sqrt{d}}{(\rho n)}}^{4/5}$ -- this recovers the other term in the upper bound in \cref{thm:main-result}. However, note that  Proposition \ref{prop:upper-bound-noisy-m-a-sgd} has an additional condition that $T\geq \frac{(n\tilde \rho)^2}{16m^2}$ - we show that in our setting of $\tilde \rho$ and $m$, this condition is equivalent to the excess empirical risk of \emph{noisy-m-A-SGD} being smaller than that of \emph{sub-sample-GD}. Hence, the regime in which the aforementioned condition is violated is the same regime in which it is better to use the other \emph{sub-sample-GD} algorithm, and therefore
is benign. Setting $m=\br{\frac{G}{LD}}^2T^3$ and $\tilde \rho = \rho/ \sqrt{T}$, the condition simplifies as $T\geq \frac{(n\rho)^2}{16 T\cdot T^6 } \br{\frac{LD}{G}}^2 \iff T^8 \geq \frac{(n\rho)^2}{16} \br{\frac{LD}{G}}^4 \iff \br{\frac{LD (n\rho)}{G \sqrt{d}}}^{16/5} \geq \frac{(n\rho)^2}{16} \br{\frac{LD}{G}}^4 \iff  \br{\frac{\sqrt{d}}{(n\rho)}}^{4/5} \leq \frac{2}{\sqrt{n\rho}}\br{\frac{LD}{G}}^{1/5} \iff \br{\frac{L^{1/4}GD^{3/2} \sqrt{d}}{(\rho n)}}^{4/5} \leq \frac{2GD}{\sqrt{n\rho}}$, where the final inequality indicates that the expected excess empirical risk of \emph{noisy-m-A-SGD} is at most that of \emph{sub-sample-GD}, up to constants.
The above is established for dataset $S$ but holds for any dataset $S^i$ in the stream using the assumption that the number of samples are between $\frac{n}{2}$ and $2n$.

Combining the above arguments finishes the proof of \cref{thm:main-result}.
\hfill \qed

\subsection{Proof of Theorem \ref{thm:main-upper-bound}}
We give two algorithms, \emph{sub-sample-GD} (Algorithm \ref{alg:sub-sample-GD}) and \emph{noisy-m-A-SGD} (Algorithm \ref{alg:noisy-m-a-sgd}), one for each of the upper bounds.
From \cref{prop:upper_bound_subsample} and \cref{corr: upper-bound-m-A-sgd}, we have that, given $0<\rho < \infty$, these are $\min\bc{\rho,1}$-TV stable and their excess empirical risk is bounded is $O\br{\frac{GD}{\sqrt{\rho n}}}$ and $O\br{\frac{GD\sqrt{d}}{\tilde \rho n}}$  respectively.
Hence combining the above by taking a minimum, establishes the claimed result. 
\hfill \qed

\subsection{Proof of Theorem \ref{thm:main-lower-bound}}
In all the lower bounds, we have a $GD$ term - this is a trivial lower bound, since if an algorithm  is defined as $\cA(S)=0$ (or any constant), then this is perfectly $TV$ stable ($\rho=0$), and the expected excess empirical risk is upper bounded as $\hat F_S(\cA(S)) - \hat F_{S}(\w_{S}^*) \leq G \norm{\cA(S)-\w_{S}^*} \leq GD$, where the first inequality uses $G$-Lipschitzness of $\hat F_S$ and the second the fact the both $\cA(S)$ and $\w_{S}^*$ lie in a ball of diameter $D$. Hence, attaining an excess empirical risk of $GD$ is trivial, and we now focus on deriving the other terms in the bounds.

Firstly, as discussed in \cite{bassily2014private}, we  consider $G=D=1$, since a simple reduction gives a factor of $GD$ for general $G$ and $D$. Furthermore, similar to \cite{bassily2014private}, we show that the problem of TV-stable convex ERM is at least as hard as that of TV stable mean computation of a dataset with bounded mean - we state this reduction in \cref{prop:lower_bound_reduction}.
We now focus on showing accuracy lower bounds for $\rho$-TV-stable mean computation of dataset $S$ of size $n$, with mean $\frac{M}{2}\leq \norm{\mu(S)} \leq 2M$. The accuracy, denoted by $\alpha$, is defined as $\alpha^2 = \mathbb{E}\norm{\cA(S)-\mu(S)}^2$,
$\cA$ is a $\rho$-TV stable algorithm, and the expectation is taken over the algorithm's randomness.
The first part of \cref{thm:main-lower-bound} follows  \cref{thm:lower_bound11} which is based on a simple reduction argument.
This gives us that $\alpha \geq \frac{1}{\rho n}$ with $M = \frac{1}{\rho n}$. Plugging it in \cref{prop:lower_bound_reduction}, this gives us that excess empirical risk is lower bounded by $\Omega\br{\frac{1}{\rho n}}$.
Similarly, the second part follows from \cref{thm:lower_bound2} which gives us $\alpha \geq \frac{1}{\sqrt{\rho n}}$ with $M=\frac{1}{\sqrt{\rho n}}$ - the condition $\alpha \leq \frac{1}{4}$ in the statement of \cref{thm:lower_bound2} can be absorbed in the trivial lower bound $GD$.
\hfill \qed
\section{Discussion}
In this work, we presented the TV stability framework for machine unlearning and instantiated it to develop unlearning algorithms for convex risk minimization problems. Currently, our results indicate two gaps, and motivate the following future directions.
\begin{enumerate}
    \item \textbf{Optimal TV-stable algorithm:} Our upper and lower bound on excess empirical risk of TV stable algorithms don't match. 
    Hence, we either need to establish stronger lower bounds (arguably, more likely) or search for better algorithms.
    \item \textbf{Maximal coupling for unlearning:} Our coupling procedure for unlearning for \emph{noisy-m-A-SGD} is sub-optimal, in measure of its diagonal, by a $\sqrt{T}$ factor. A natural question is whether we can design an \emph{efficient} maximal coupling. 
    We note that if efficiency were not a criteria, then this can be done -  briefly, do a one step rejection sampling by computing the ratio of joint distribution iterates, if it fails, keep retraining, until the iterates generated is accepted by a rejection sampling.
    However, in this case, the expected number of retrains can be shown to be one, and so is trivial. The challenge in this case is to give an efficient procedure when the first rejection sampling fails.
     \item \textbf{Beyond smooth convex functions}: 
     The focus of this work was on smooth convex (loss) functions, but our techniques, and results for unlearning, extend to general non-convex functions.
     However, a careful investigation of trade-offs between accuracy and unlearning efficiency, in classes of, say strongly-convex, non-smooth or even \emph{some} non-convex functions, is an interesting future direction.
\end{enumerate}

\section*{Acknowledgements}
This research was supported in part by NSF BIGDATA award IIS-1838139 and NSF CAREER award IIS-1943251.

\bibliographystyle{alpha}
\bibliography{main}

\appendix

\section{Additional related work}
\label{sec:detailed-related-work}
We survey the works on machine unlearning - 
\cite{cao2015towards} were one of the first papers to study the topic of machine unlearning. Their approach implements statistical query (SQ) algorithms by estimating the statistical queries using training data. Since the estimates are usually the mean of query evaluations computed on training data, unlearning is cheap, as we only need to subtract the evaluation on the deleted point.
\cite{bourtoule2019machine} studies this problem, with the goal to design systems to efficiently handle deletion requests.
Their approach, called SISA, is essentially a divide-and-conquer strategy, wherein the data is divided into disjoint sets, called \emph{shards}, and a model on each shard is trained separately and aggregated. Furthermore, they do several check-pointing of states for each shard. In the average case, this provides a speedup of $\frac{(R+1)S}{2}$ for $S$ shards and $R$ checkpoints per shard, over retraining. They however give no guarantees on accuracy with this divide-and-conquer training method.
\cite{guo2019certified} is another work which uses $(\epsilon, \delta)$-differential privacy like guarantee. They study unlearning in generalized linear models, and propose a Newtons-step based method, leveraging connections with influence functions. Their computational cost is $O(d^3)$ computations for one unlearning. They, however give no guarantees on excess empirical risk achieved by the training method.
Finally, the work of \cite{izzo2020approximate} studies batch unlearning in linear regression, with the goal to improve the computational cost of batch $k$ unlearning requests. Their method achieves a runtime of $O(k^2d)$ as opposed to $O(kd^2)$ for a naive approach. However, their notion of unlearning is again approximate, in the sense that model returned after unlearning is closest to the exact unlearning model among models in the $d$ dimensional subspace spanned by the to-be-deleted $k$ points. So it is easy to see that with larger $k$, the notion of approximation improves, which explains the $k^2$ term in the runtime as opposed to $k$.

\paragraph{Comparison with  \cite{neel2020descent}.}
Our algorithm guarantees provable exact unlearning with probabilistic  
runtime guarantees, whereas \cite{neel2020descent} give algorithms with deterministic runtime and provide only an approximate $(\epsilon, \delta)$-DP based unlearning guarantee -- the $\delta$ can be interpreted as probability of the failure event in Monte-Carlo guarantees.
To handle these discrepancies when comparing, our stated runtime is the in-expecatation runtime. For a fixed runtime, we will look at regimes of $\epsilon$ and $\delta$, when the accuracy guarantee of \cite{neel2020descent} is smaller than ours. We remind that a large $\epsilon,\delta$ means a weaker unlearning criterion.
We will see that with same runtime, the accuracy of \cite{neel2020descent}  is smaller than ours in the regime when their unlearning parameters and hence the notion, is rather weak.

Considering the Lipschitz, smoothness parameters and diameter as constants,  for smooth convex functions and $k$ edit requests, \cite{neel2020descent} (Theorem 3.4) achieve an excess empirical risk of $O\br{\frac{\sqrt{d}\sqrt{\log{1/\delta}}}{\epsilon n k}}^{2/5}$ with an unlearning runtime of $k^2$ full-gradient computations. On the other hand, our algorithms achieve an an excess empirical risk of $\min\bc{\frac{1}{\sqrt{\rho n}}, \br{\frac{\sqrt{d}}{\rho n}}^{4/5}}$ with $\rho k$ expected re-computations. Each re-computation takes $m\cdot T$ gradient computations where $m$ is the mini-batch size and $T$ the number of iterations. Therefore, in order to have the same runtime, we need $\rho k m T= k^2n \iff \rho = \frac{kn}{mT}$. 
Firstly, note that as as long as $d \leq (\rho n)^{3/4}$, \emph{noisy-m-A-SGD} has smaller excess empirical risk than \emph{sub-sample-GD} - this are the two regimes of interest.
We now set $m$ and $T$ for both the algorithms:
for Algorithm \ref{alg:sub-sample-GD}, $m=\frac{\rho n}{T}$ and  $T= \sqrt{\rho n}$. This gives us $\rho = \frac{kn}{\rho n} \iff \rho = \sqrt{k}$,  however $\rho$ is the Total Variation distance and is at most $1$. Hence in regime $d\geq (\rho n)^{3/4}$, our runtime is always smaller than \cite{neel2020descent}: $kn$ as opposed to $k^2n$ gradient computations. Even with $\rho=1$, our excess empirical risk is $\frac{1}{\sqrt{n}}$ and the excess empirical risk of \cite{neel2020descent} is smaller than ours when $\br{\frac{\sqrt{d}\sqrt{\log{1/\delta}}}{\epsilon n k}}^{2/5} \lesssim \frac{1}{\sqrt{n}} \iff \frac{\epsilon}{\sqrt{\log{1/\delta}}} \gtrsim \frac{\sqrt{d}n^{1/4}}{k}$. 
In the second regime $d< (\rho n)^{3/4}$, we use Algorithm \ref{alg:noisy-m-sgd}, wherein we have $mT = \frac{(\rho n)^2}{d}$. This gives us $\rho = \frac{kn d}{(\rho n)^2} \iff \rho = \sqrt{\frac{kd}{n}}$, and our excess empirical risk is $O\br{\br{\frac{\sqrt{d}}{\rho n}}^{4/5}} = O\br{\frac{1}{(nk)^{2/5}}}$.
Therefore, excess empirical risk of \cite{neel2020descent} is smaller than ours when $\br{\frac{\sqrt{d}\sqrt{\log{1/\delta}}}{\epsilon n k}}^{2/5} \lesssim \frac{1}{(nk)^{2/5}} \iff \frac{\epsilon}{\sqrt{\log{1/\delta}}} \gtrsim d$.
We therefore have that unless $k$ is very large, the accuracy of \cite{neel2020descent} is smaller than ours when $\epsilon$ and $\delta$,  take prohibitively large values which correspond to a weak notion of approximate unlearning.
We can similarly compare against Theorem 3.5 in \cite{neel2020descent}, which will yield qualitatively similar conclusions.
\section{Proofs for Section \ref{sec:alg-learning}}
\label{sec:proofs-learning}

\begin{proof}[Proof of Proposition \ref{prop:upper_bound_subsample}]
We first show that Algorithm \ref{alg:sub-sample-GD} is $\min\bc{1,\rho}$-TV stable for the aforementioned choice of number of iterations $T$ and mini-batch size $m$. 
Consider neighbouring dataset $S$ and $S'$ of $n$ points which differs in one sample, WLOG say the $n^\text{th}$ sample. 
Let $\cA(S):=\hat\w_S$ and  $\cA(S'):=\hat\w_{S'}$ denote the outputs of Algorithm \ref{alg:sub-sample-GD} on $S$ and $S'$ respectively.
Since in Algorithm \ref{alg:sub-sample-GD}, the randomness is only on indices, rather than actual data points, say that 
$S=\bc{1,2,\ldots,n}$.
Now we consider neighbouring dataset $S'$, which contains  $n+1$ or $n-1$ samples. 
We will now consider the case when $S'$ contains $n-1$ elements and the case with $n+1$ elements will follow analogously.
Let $n$ be the index present in $S$ but absent in $S'$ 
i.e. $S' = \bc{1,2,\ldots, n-1}$.
Let the sigma-algebra on these sets be the power sets of $S$ and $S'$ respectively, denoted by $\text{Pow}(S)$ and $\text{Pow}(S')$ respectively. Moreover, let $\mu_{n,m}$ denote the sub-sampling probability measure on $n$ points in $S$ i.e it sub-samples $m$ out of $n$ elements in $S$ uniformly randomly. Let $\mu_{n,m}^{\otimes T}$ denote the product measure of $T$ of $\mu_{n,m}$'s. 
We similarly define $\mu_{n-1,m}$ and $\mu_{n-1,m}^{\otimes T}$ for $S'$.

We first extend the sigma-algebra for the probability spaces so that the random variables $\mu_{n,m}$ and $\mu_{n-1,m}$, are defined on a common probability space. 
For this, we will just add an event where the index $n$ can be sampled under $\mu_{n-1,m}$ with probability $0$. We define $\mu_{n,m}'$ as follows: for any set $b \in \text{Pow}(S)$,
 $\mu_{n,m}'(b) = \begin{cases}\mu_{n-1,m}(b) & \text{ if } n \not \in b\\ 0 & \text{otherwise} \end{cases}$.
We similarly extend the sigma algebra for the product space with measure
$\mu_{n-1,m}^{\otimes T}$  to get $\mu_{n,m}^{'\otimes T}$.

Observe that for fixed initialization $\w_0$ and other parameters, Algorithm $\cA(S)$ and $\cA(S')$ is the \emph{same} (deterministic) map from ${\mathbf{b}} = (b_1,b_2,\ldots, b_T)$ where $b_j \in [n]^m$ to $\cW$. 
They only differ because of different measures on the input space.
Hence total variation distance between $\cA(S)$ and $\cA(S')$ is just the total variation distance between the push-forward measures $\cA(S)_\# \mu_{n,m}^{\otimes T}$ and $\cA(S')_\# \mu_{n,m}^{'\otimes T}$ which by using the fact that $\cA(S)\equiv\cA(S')  $ and data-processing inequality, is at most the total variation distance between $\mu_{n,m}^{\otimes T}$ and $\mu_{n,m}^{'\otimes T}$.
Now the total variation distance can be bounded as,

\begin{align*}
    \text{TV}(\cA(S), \cA(S')) & \leq  \text{TV}(\mu_{n,m}^{\otimes T}, \mu_{n,m}^{'\otimes T}) =  
    \sup_{{\mathbf{b}} \in \text{Pow}([n]^m)^T)} \abs{\mu_{n,m}^{\otimes T}({\mathbf{b}}) -  \mu_{n,m}^{'\otimes T}({\mathbf{b}})} 
    \\& = \mu_{n,m}^{\otimes T}\left({\mathbf{b}} \text{ such that at least one }b_j \text{ contains  }n \right)\\
    & \leq T \mu_{n,m}(b_1 \text{ contains } n) = \frac{Tm}{n}
\end{align*}
where the inequality follows using a union bound.

A similar argument works when $S'$ is an neighbouring dataset of $n+1$ elements, yielding a total variation bound of $\frac{Tm}{n+1} \leq \frac{Tm}{n}$.
Taking a uniform bound over all neighbouring datasets $S'$, we get that $\sup_{\Delta(S,S')=1}\text{TV}(\cA(S),\cA(S')) \leq \frac{Tm}{n}$. 
By definition of TV distance, we trivially have that $\sup_{\Delta(S,S')=1}\text{TV}(\cA(S),\cA(S'))\leq 1$. Therefore, setting $m = \frac{\rho n}{T}$, we get the desired result that the output of Algorithm \ref{alg:sub-sample-GD} is $\min\bc{\rho,1}$-TV stable.

We now proceed to the accuracy guarantee which follows directly by analysis of SGD. We first show that the sub-sampling procedure produces unbiased gradients and bound its variance. For a fixed model $\w$, we have that

\begin{align*}
    \mathbb{E}_b\frac{\sum_{j \in b} \nabla f(\w, z_j)}{m} = \sum_{\binom{n}{m} \text{ choices for }b}\frac{ \sum_{j \in b}\nabla f(\w, z_j)}{m \binom{n}{m} } = \frac{\binom{n-1}{m-1}}{m \binom{n}{m}} \sum_{j=1}^n \nabla f(\w, z_j) = \frac{\sum_{j=1}^n \nabla f(\w, z_j)}{n}
\end{align*}

where in the second equality, we use the observation that every $\z_j$ appears in exactly $\binom{n-1}{m-1}$ terms over all choices for $b$. We now bound its variance, denoted by a $\cV^2$ by direct computation.

\begin{align*}
       \cV^2 &= \mathbb{E}_b \norm{\frac{\sum_{j \in b} \nabla f(\w, z_j)}{m} -     \mathbb{E}_b \left[\frac{\sum_{j \in b} \nabla f(\w, z_j)}{m} \right]}^2  \\&=  \mathbb{E}_b \norm{\frac{\sum_{j \in b} \nabla f(\w, z_j)}{m}}^2 - \norm{ \mathbb{E}_b \left[\frac{\sum_{j \in b} \nabla f(\w, z_j)}{m} \right]}^2 \\
       & = \sum_{{n \choose m} \text{ choices for }b} \frac{1}{{n \choose m}} \frac{1}{m^2} \norm{\sum_{j \in b} \nabla f(\w, z_j)}^2 - \norm{\frac{\sum_{j=1}^n \nabla f(\w, z_j)}{n}}^2
\end{align*}

In the first term, expanding the square and summing over all choices of $b$, we get exactly ${n-1 \choose m-1}$ terms of the form $\norm{\nabla f(\w,\z_j)}^2$ for $j=1$ to $n$, and ${n-2 \choose m-2}$ cross terms of the form $\ip{\nabla f(\w,\z_i)}{\nabla f(\w,\z_j)}$ for $i \neq j$, $i, j=1$ to $n$. Similarly, expanding the second term produces both these kind of terms. Accumulating the coefficients of all the terms, we get
\begin{align*}
     \cV^2 &=\mathbb{E}_b \norm{\frac{\sum_{j \in b} \nabla f(\w, z_j)}{m} -     \mathbb{E}_b \left[\frac{\sum_{j \in b} \nabla f(\w, z_j)}{m} \right]}^2  
     \\&= \br{\frac{{n-1 \choose m-1}}{m^2 {n \choose m}} - \frac{1}{n^2}} \sum_{j=1}^n \norm{\nabla f(\w,\z_j)}^2 + \br{\frac{{n-2 \choose m-2}}{m^2 {n \choose m}} - \frac{1}{n^2}}\sum_{i,j=1, i\neq j}^n\ip{\nabla f(\w,\z_i)}{\nabla f(\w,\z_j)} \\
     & \leq \br{\frac{1}{mn} -\frac{1}{n^2}}nG^2 + \abs{\frac{m-1}{nm(n-1)} -\frac{1}{n^2}}\sum_{i,j=1, i\neq j}^n\norm{\nabla f(\w,\z_i)}\norm{\nabla f(\w,\z_j)} \\
     & \leq \br{\frac{1}{m} -\frac{1}{n}}G^2 + \abs{\frac{m-1}{nm(n-1)} -\frac{1}{n^2}}n(n-1)G^2\\
     & = \br{\frac{1}{m} -\frac{1}{n}}G^2 + \abs{\frac{m-1}{m} -\frac{(n-1)}{n}}G^2 \\
     & = \br{\frac{1}{m} -\frac{1}{n}}G^2  + \abs{\frac{1}{n}-\frac{1}{m}}G^2\\
     & = 2\br{\frac{1}{m} -\frac{1}{n}}G^2  \leq \frac{2G^2}{m}
\end{align*}
where in the first inequality we used Cauchy-Schwartz inequality, and the fact the $G$-Lipschitzness implies the gradient norms are bounded by $G$. Finally, in the second last equality and the last inequality we used the fact that $m\leq n$.

Since the sub-sampled gradients are unbiased, we can use the convergence guarantee of SGD on smooth convex function (see Theorem 4.1 in \cite{allen2018make}) which when using step size $\eta \leq \frac{1}{L}$ gives us
\begin{align*}
   \E{ \hat F_S(\hat \w_S) - \hat F_S(\w^*_S)} \leq O\br{\frac{ \eta\cV^2}{(1-\eta L)} + \frac{D^2}{\eta T}}
\end{align*}
    
Using step size $\eta \leq \frac{1}{2L}$, the right hand side simplifies to $2\eta \cV^2 + \frac{D^2}{\eta T} \leq \frac{4G^2\eta}{m} + \frac{D^2}{\eta T} =  \frac{4G^2T\eta}{\rho n} + \frac{D^2}{\eta T}$, where in the last equality, we substituted $m  = \frac{\rho n}{T}$ to ensure $\rho$ TV-stability. Balancing the trade off in $\eta$ gives us $\eta = \frac{D\sqrt{\rho n}}{GT}$. Therefore setting $\eta = \min\bc{\frac{1}{2L}, \frac{D\sqrt{\rho n}}{GT}}$ gives us 
 \begin{align*}
       \E{ \hat F_S(\hat \w_S) - \hat F_S(\w^*_S)} \leq O\br{\frac{GD}{\sqrt{\rho n}} + \frac{D^2L}{T}}
    \end{align*}
Setting $T = \frac{DL\sqrt{\rho n}}{G}$ achieves the claimed result.
\end{proof}

\begin{proof}[Proof of Proposition \ref{prop:upper-bound-noisy-m-a-sgd}]
We first prove the stability guarantee. For this, we use the R\`enyi-divergence based analysis used in differential privacy literature.
Let  $P$ and $Q$ be probability distributions such that $P$ is absolutely continuous with respect to $Q$ and have densities $\phi_P$ and $\phi_Q$, respectively.
For $\alpha \in (1,\infty)$, the $\alpha$ R\`enyi-divergence between $P$ and $Q$ is defined as follows \cite{renyi1961measures}:
\begin{align*}
    D_\alpha(P\Vert Q) = \frac{1}{\alpha-1} \ln \br{ \int f_P(x)^\alpha f_Q(x)^{1-\alpha} dx}
\end{align*}

Consider two neighbouring datasets $S=\bc{\z_j}_{j}$ and $S'=\bc{\z_j'}_j$ such that $\Delta(S,S')=1$, and let $\bc{b_t'}_{t=1}^T$ and $\bc{\w_t'}_{t=1}^T$ denote the mini-batch indices and iterates of Algorithm \ref{alg:noisy-m-a-sgd} on dataset $S'$ respectively. We look at iteration $t$, and fix all the randomness before $t$ i.e. fix $\w_t $ (and $\w_{t}'$), as well as randomness in sub-sampling mini-batch indices i.e. fix $b_t$. The $\alpha$-R\`enyi Divergence between $\w_{t+1}$ and $\w_{t+1}'$ can be bounded as,

{\begin{small}
\begin{align*}
    D_\alpha(\w_{t+1} \Vert \w_{t+1}') &=  D_\alpha\br{\cP\br{\accw_t - \eta \br{ \frac{\sum_{j \in b_t} \nabla f(\accw_t,\z_j)}{m} +\theta_t}} \Big\Vert \cP\br{\accw_t - \eta \br{ \frac{\sum_{j \in b_t} \nabla f(\accw_t,\z_j')}{m} +\theta_t}}}\\
    & \leq D_\alpha\br{\accw_t - \eta \br{ \frac{\sum_{j \in b_t} \nabla f(\accw_t,\z_j)}{m} +\theta_t} \Big\Vert \accw_t - \eta \br{ \frac{\sum_{j \in b_t} \nabla f(\accw_t,\z_j')}{m} +\theta_t}}\\
    & \leq D_\alpha\br{ \frac{\sum_{j \in b_t} \nabla f(\accw_t,\z_j)}{m} +\theta_t \Big \Vert  \frac{\sum_{j \in b_t} \nabla f(\accw_t,\z_j')}{m} +\theta_t}
    \leq \frac{2\alpha G^2}{m^2 \sigma^2}
\end{align*}
\end{small}}

where in the first and second inequality, we used post-processing property of R\`enyi divergence, and in the last inequality, we use the fact that datasets $S$ and $S'$ differ in at most one sample, therefore $\norm{\frac{\sum_{j \in b_t} \nabla f(\accw_t,\z_j)}{m}  - \frac{\sum_{j \in b_t} \nabla f(\accw_t,\z_j')}{m} }^2 \leq \frac{4G^2}{m^2}$. Hence the divergence is between two multivariate Gaussians of same variance and with the square of the separation of their means  at most $\frac{2\alpha G^2}{m^2\sigma^2}$. Therefore, the inequality follows by using the formula for  R\`enyi divergence between two such multivariate Gaussians.

We now unfix $b_t$, and use the fact the $b_t$ is a uniform sample of $m$ out of $n$ (or $n-1$ or $n+1$) indices. By privacy amplification by sub-sampling result in \citep{balle2018privacy}, for $\alpha\leq 2$, we will argue that the R\`enyi divergence upper bound amplifies to
$\frac{32\alpha G^2}{n^2 \sigma^2}$. There are certain subtleties about the application of this result, so we explain, as follows. The first is that Theorem $9$ stated in \citep{balle2018privacy},
when considering $\alpha\leq 2$, the right hand side 
simplifies as $  \frac{1}{\alpha-1}\log{1+\frac{m^2}{n^2}\frac{\alpha(\alpha-1)}{2}4\br{\exp{\frac{8G^2}{m^2\sigma^2}}-1}} \leq \frac{2\alpha m^2}{n^2}\br{\exp{\frac{8G^2}{m^2\sigma^2}}-1} \leq \frac{32\alpha G^2}{n^2 \sigma^2}$ where the last inequality use the numeric inequality $\exp{x} \leq 1+2x$ when $x\leq 1.256$; this means that we need the following condition $\frac{8G^2}{m^2\sigma^2} \leq 1.256$ - we will revisit this condition later. 
The second point is that Theorem $9$ in \citep{balle2018privacy}
holds integer $\alpha\geq 2$, which only leaves us with $\alpha=2$.
In the subsequent part of the proof, we will need to take $\alpha \rightarrow 1$. This discrepancy can be accounted for by using the fact the $\alpha$-R\`enyi Divergence is non-decreasing for $\alpha \in [0,\infty]$ (see Theorem 3 in \cite{van2014renyi}). Therefore the result holds for all $\alpha \leq 2$, and we can replace the upper bound to be $\frac{64G^2}{n^2\sigma^2}$
The third and final point is that even though the amplification result in \cite{balle2018privacy} is established under the neighbouring relation that one point is replaced between datasets, it can be shown that the same result holds (perhaps upto constants) when the neighbouring relation is add/delete one data-point; see Lemma 3, \cite{abadi2016deep} for example. 
We now use adaptive sequential composition property of R\`enyi divergence (Proposition 1 in \cite{mironov2017renyi}) which linearly accumulates the divergence across iterations, yielding that the R\`enyi divergence between the iterates $(\w_1,\w_2,\ldots,\w_T)$ and $(\w_1',\w_2',\ldots,\w_T')$ is bounded as, $D_\alpha((\w_1,\w_2,\ldots,\w_T) \Vert (\w_1',\w_2',\ldots,\w_T')) \leq \frac{64T G^2}{n^2 \sigma^2}$. An application of data-processing inequality gives us the same upper bound on the R\`enyi divergence between the final iterates $\hat \w_S$ and $\hat \w_S'$.
Moreover, this holds uniformly over all neighbouring datasets $S'$. We now use the result that $\lim_{\alpha \rightarrow 1}D_\alpha(\hat\w_S \Vert \hat\w_S') = D_{\text{KL}}(\hat\w_S \Vert \hat\w_S')$ where $D_{\text{KL}}$ denotes the KL-divergence (see Theorem 5 in \cite{van2014renyi}). Hence we get that $ D_{\text{KL}}(\hat\w_S \Vert \hat\w_S') \leq \frac{64TG^2}{n^2 \sigma^2}$. Finally, we use Pinsker's inequality to further lower bound the left hand side by total variation distance, which yields $\text{TV}(\hat\w_S \Vert \hat\w_S') \leq \sqrt{\frac{ D_{\text{KL}}(\hat\w_S \Vert \hat\w_S')}{2}} \leq \frac{8\sqrt{T}G}{n\sigma}$. As remarked before, this is a uniform bound over all neighbouring datasets. Finally, as before, we trivially have that $\text{TV}(\hat\w_S \Vert \hat\w_S') \leq 1$; therefore setting $\sigma = \frac{8\sqrt{T}G}{n\rho}$ gives us that the algorithm's output is $\min\bc{\rho,1}$ TV-stable.

We now proceed to the accuracy guarantee. This follows simply by guarantee of Accelerated SGD on smooth convex functions. We have already shown in \cref{prop:upper_bound_subsample} that the gradients computed by sub-sampling are unbiased and its variance bounded by $\frac{2G^2}{m}$. The mean-zero Gaussian noise added preserves unbiasedness but the variance is bounded as,
    \begin{align*}
        \cV^2 = \E{\norm{\frac{\sum_{j \in b_t} \nabla f(\accw_t,\z_j)}{m} +\theta_t -\nabla \hat F_S(\accw_t) }}^2 &=
         \E{\norm{\frac{\sum_{j \in b_t} \nabla f(\accw_t,\z_j)}{m} -\nabla \hat F_S(\accw_t) }}^2 + \E{\norm{\theta_t}}^2 \\
         & \leq \frac{2G^2}{m} + \sigma^2d
    \end{align*}

We now use Theorem 2 from \cite{lan2012optimal} - they use notation $\bc{\beta_t}_t$ and $\bc{\gamma_t}_t$ for the step size schedule of Accelerated SGD and set $\beta_t = \frac{t+1}{2}$ and $\gamma_t = \frac{t+1}{2}\gamma$. Even though the updates of their A-SGD seem different than us, it can be verified that they are the same with $\alpha_t = \beta_{t+1}(1-\beta_{t}^{-1}) = \frac{1-t}{t+2}$ with $\alpha_0=0$ and $\eta = \gamma$. 
Finally, using step-size $\eta \leq \frac{1}{2L}$, and appealing to Theorem 2 in \cite{lan2012optimal}, we get,
\begin{align*}
   \E{ \hat F(\hat \w_S) - \hat F(\w^*)} \leq O\br{ T\eta\cV^2 + \frac{D^2}{\eta T^2}} = O\br{\eta T \br{\frac{2G^2}{m} +\sigma^2d} + \frac{D^2}{\eta T^2}}
\end{align*}

Let $\tilde G^2 = \frac{2G^2}{m} +\sigma^2d$, balancing the trade-off in $\eta$ gives us $\eta = \frac{D}{\tilde G T^{3/2}}$. Therefore, setting $\eta = \min\bc{\frac{1}{2L},\frac{D}{\tilde GT^{3/2}}}$ gives us 
\begin{align*}
   \E{ \hat F(\hat \w_S) - \hat F(\w^*)} 
   &\leq O\br{\frac{LD^2}{T^2} + \frac{\tilde GD}{\sqrt{T}}} \leq O\br{\frac{LD^2}{T^2}  + \frac{GD}{\sqrt{Tm}} +  \frac{\sigma\sqrt{d} D}{\sqrt{T}}} \\
  &\leq O\br{\frac{LD^2}{T^2}  + \frac{GD}{\sqrt{Tm}} + \frac{G D\sqrt{d}}{n \rho}} 
\end{align*}

Finally, note that when using the amplification lemma, we arrived at the condition $\frac{8G^2}{m^2\sigma^2} \leq 1.256$. Substituting $\sigma = \frac{8\sqrt{T}G}{n\rho}$, this reduces to $\frac{(n\rho)^2}{8m^2T} \leq 1.256 \iff T \geq \frac{(n\rho)^2}{16m^2}$. 

\end{proof}

\begin{proof}[Proof of \cref{corr: upper-bound-m-A-sgd}]
We start with the result in Proposition \ref{prop:upper-bound-noisy-m-a-sgd}, and balance the two trade-offs:
the first between the terms $\frac{GD}{\sqrt{mT}}$ and  $\frac{GD\sqrt{d}}{\rho n}$, and the second between $\frac{GD\sqrt{d}}{\rho n}$ and  $\frac{LD^2}{T^2}$. 
Note that as long as $\frac{GD}{\sqrt{mT}} \geq \frac{LD^2}{T^2} \iff m \leq \frac{T^3 G^2}{(LD)^2}$, the second term is larger than the first.
Optimizing the trade-off between second and third term gives us
$\frac{GD}{\sqrt{mT}} = \frac{GD\sqrt{d}}{\rho n} \iff T = \frac{(\rho n)^2}{md}$.
Similarly, optimizing the trade-off between the first and third term gives us
$\frac{GD\sqrt{d}}{\rho n} = \frac{LD^2}{T^2} \iff T = \sqrt{\frac{LD(\rho n)}{G\sqrt{d}}}$. Hence setting $T = \max\br{\frac{(\rho n)^2}{md},\sqrt{\frac{LD(\rho n)}{G\sqrt{d}}} }$ yields an expected excess empirical risk of $O\br{\frac{GD\sqrt{d}}{n \rho}}$.

We now look at the given condition $T\geq \frac{(n\rho)^2}{16m^2}$ given in \cref{prop:upper-bound-noisy-m-a-sgd}.
We have set  $T = \max\br{\frac{(\rho n)^2}{md},\sqrt{\frac{LD(\rho n)}{G\sqrt{d}}} }$, there we need to ensure that $\frac{(\rho n)^2}{md} \geq \frac{(\rho n)^2}{16m^2} \iff m \geq \frac{d}{16}$, as well as $\sqrt{\frac{LD(\rho n)}{G\sqrt{d}}} \geq  \frac{(\rho n)^2}{16m^2}  \iff  m \geq \frac{1}{4}\br{\frac{(\rho n)^3 G\sqrt{d}}{LD}}^{1/4}$ - this recovers the condition $m \geq \min \bc{\frac{d}{16}, \frac{1}{4}\br{\frac{(\rho n)^3 G\sqrt{d}}{LD}}^{1/4}}$ in the Proposition statement.
Combining all the above arguments, we get that for any $m \geq \min \bc{\frac{d}{16}, \frac{1}{4}\br{\frac{(\rho n)^3 G\sqrt{d}}{LD}}^{1/4}}$ , setting $T = \max\bc{\frac{(\rho n)^2}{md},\sqrt{\frac{LD(\rho n)}{G\sqrt{d}}}}$, yields an expected excess empirical risk of $O\br{\frac{GD\sqrt{d}}{n \rho}}$.
\end{proof}

\begin{remark}

Note that in the above proof, if we use the stronger variance bound of $2L^2\br{\frac{1}{m}-\frac{1}{n}}$ from sub-sampling (derived in the proof of Proposition \ref{prop:upper_bound_subsample}), we get that when doing full-gradient descent, the variance, as expected is zero, which yields a running time of $T = \sqrt{\frac{LD\rho n}{G\sqrt{d}}}$.
\end{remark}

\begin{proposition}
\label{prop:tv-stability-noisy-sgd}
There exists neighbouring datasets $S$ and $S'$ of $n$ points, and smooth $G$-Lipshcitz convex functions $f$ and constraint set $\cW$ such that the total variation distance between iterates produced by Algorithm \ref{alg:noisy-m-a-sgd} run on datasets $S$ and $S'$, denoted by $\bc{\w_1,\w_2,\ldots, \w_T}$ and $\bc{\w_1',\w_2',\ldots, \w_T'}$ respectively, is bounded as $\text{TV}(\br{\w_1,\w_2,\ldots, \w_T}, \br{\w_1',\w_2',\ldots, \w_T'}) \geq  \min\bc{\Omega\br{\frac{G\sqrt{T}}{n\sigma}},1}$.
\end{proposition}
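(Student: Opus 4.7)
The plan is to construct a one-dimensional linear instance on which the upper bound of Proposition~\ref{prop:upper-bound-noisy-m-a-sgd} is attained up to constants. Take $d=1$, $\cW = [-D, D]$ with $D$ large enough that projection is almost surely inactive throughout the run, $f(w, z) = wz$ (which is $0$-smooth, convex, and $|z|$-Lipschitz), initial dataset $S = (0, \ldots, 0)$ of $n$ zeros, and neighboring dataset $S' = S \cup \{G\}$ (so $\Delta(S,S') = 1$). Because $\nabla_w f(\mathring{\w}, z) = z$ does not depend on $\mathring{\w}$, Algorithm~\ref{alg:noisy-m-a-sgd} collapses to a linear Gaussian system: unfolding the recurrence with momentum schedule $\alpha_t = (1-t)/(t+2)$, one obtains positive coefficients $\{a_s^{T+1}\}_{s=1}^T$ satisfying $a_s^{t+1} = (1-\alpha_t) a_s^t + \alpha_t a_s^{t-1}$ with $a_t^{t+1} = 1$, such that
\begin{align*}
\w_{T+1}^S \;=\; -\eta \sum_{s=1}^T a_s^{T+1}\,\theta_s,
\qquad
\w_{T+1}^{S'} \;=\; \w_{T+1}^S \;-\; \frac{\eta G}{m}\sum_{s=1}^T a_s^{T+1}\, X_s,
\end{align*}
where $X_s := \mathbb{1}[n+1 \in b_s]$ are i.i.d.\ $\mathrm{Bernoulli}(m/(n+1))$. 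Consequently the marginal of $\w_{T+1}^S$ is $\mathcal{N}(0, \tau^2)$ with $\tau^2 = \eta^2\sigma^2 \Sigma_2$, and the marginal of $\w_{T+1}^{S'}$ is a Gaussian mixture with mean $-\bar\mu_0 := -\eta G \Sigma_1 /(n+1)$, where $\Sigma_1 := \sum_s a_s^{T+1}$ and $\Sigma_2 := \sum_s (a_s^{T+1})^2$.

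To turn this into a TV lower bound, I would use the test event $A = \{\w \leq 0\}$: $\mathbb{P}_S(A) = 1/2$, while $\mathbb{P}_{S'}(A) = \mathbb{E}_X[\Phi(-M/\tau)]$, where $M = -(\eta G/m)\sum_s a_s^{T+1} X_s$ has mean $-\bar\mu_0$ and (by independence of the $X_s$'s) variance at most $O(\bar\mu_0^2 \cdot n/(Tm))$. A first-order Taylor expansion of $\Phi$ around $0$, combined with Chebyshev applied to $M$ on the event $\{M \in [-3\bar\mu_0/2,\, -\bar\mu_0/2]\}$, yields $\mathbb{P}_{S'}(A) - \mathbb{P}_S(A) \gtrsim \min\{1,\, \bar\mu_0/\tau\}$ in the regime $\sigma \gtrsim G/\sqrt{mn}$ where the upper bound of Proposition~\ref{prop:upper-bound-noisy-m-a-sgd} is non-vacuous (outside this regime the claim becomes $\Omega(1)$, which matches the $\min\{\cdot,1\}$ trivially). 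Since $\bar\mu_0/\tau = \bigl(G/((n+1)\sigma)\bigr)\cdot\Sigma_1/\sqrt{\Sigma_2}$, the remaining task is to show $\Sigma_1/\sqrt{\Sigma_2} = \Omega(\sqrt{T})$. Finally, the data processing inequality gives $\mathrm{TV}\bigl((\w_1,\ldots,\w_{T+1}),(\w_1',\ldots,\w_{T+1}')\bigr) \geq \mathrm{TV}(\w_{T+1}^S, \w_{T+1}^{S'})$, so the bound on the marginal transfers to the joint.

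The main obstacle is the coefficient estimate $\Sigma_1/\sqrt{\Sigma_2} = \Omega(\sqrt{T})$, equivalently that Cauchy--Schwarz is tight up to constants for the vector $(a_1^{T+1}, \ldots, a_T^{T+1})$. Because $\alpha_t \to -1$ (so $1-\alpha_t \to 2$) as $t \to \infty$, the recurrence asymptotically behaves like the discrete wave equation $a_s^{t+1} = 2 a_s^t - a_s^{t-1}$, whose solutions are linear in $t$; I would make this precise via induction (or a generating-function argument) to show $a_s^{T+1} = \Theta(T - s + 1)$ uniformly in $s$, giving $\Sigma_1 = \Theta(T^2)$, $\Sigma_2 = \Theta(T^3)$ and hence $\Sigma_1/\sqrt{\Sigma_2} = \Theta(\sqrt{T})$. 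The difficulty is that $\alpha_t < 0$ for $t \geq 2$, so the recurrence is not a convex combination and standard monotonicity/contraction arguments do not apply directly; tracking the polynomial growth carefully is the bulk of the work. A minor secondary point is to pick $D$ large enough (e.g. polynomial in $T, \sigma, \eta, G/n$) so that $\mathbb{P}(\sup_t |\w_t| > D) = o(1)$ and the projection step does not distort the computed marginals.
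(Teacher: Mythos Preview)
Your overall plan—linear instance, reduce to the last-iterate marginal by data processing, and show $\Sigma_1/\sqrt{\Sigma_2}=\Theta(\sqrt T)$—matches the paper's. Two points need correcting.

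First, the coefficient claim $a_s^{T+1}=\Theta(T-s+1)$ is false. Writing $\Delta_t:=a_s^{t+1}-a_s^t$, the recurrence gives $\Delta_t=-\alpha_t\,\Delta_{t-1}=\tfrac{t-1}{t+2}\Delta_{t-1}$, which telescopes to $\Delta_t=\tfrac{s(s+1)(s+2)}{t(t+1)(t+2)}$ and hence
\[
a_s^{T+1}=\frac{s+2}{2}\Bigl(1-\frac{s(s+1)}{(T+1)(T+2)}\Bigr).
\]
So $a_s^{T+1}$ is \emph{bounded in $T$} and grows like $\Theta(s)$—the opposite of what you assert. Your wave-equation heuristic fails because $-\alpha_t=\tfrac{t-1}{t+2}<1$ strictly, so the increments decay like $1/t^3$ rather than remain constant. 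By the coincidence $\sum_s s^k=\sum_s(T-s+1)^k$ your conclusions $\Sigma_1=\Theta(T^2)$, $\Sigma_2=\Theta(T^3)$ happen to survive, but the induction you proposed would not establish them; the closed form above does.

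Second, the paper bypasses your Chebyshev step entirely by taking a \emph{symmetric} instance: both datasets have $n$ points, with the differing point equal to $-G\mathbf{e}_1$ in $S$ and $+G\mathbf{e}_1$ in $S'$. Then $\w_{T+1}^S\stackrel{d}{=}-\w_{T+1}^{S'}$, so $\{w_1\ge0\}$ is the \emph{exact} TV-witness and one obtains directly $\mathrm{TV}=\mathbb{E}_X\bigl[2\Phi(Y)-1\bigr]$ with $Y=\tfrac{G}{m\sigma\sqrt{\Sigma_2}}\sum_s a_sX_s$; no concentration of $M$ around its mean is needed. Your Chebyshev argument requires $\mathrm{Var}(M)/\bar\mu_0^{\,2}=\Theta\bigl(n/(Tm)\bigr)\ll1$, i.e.\ $Tm\gg n$, which is unrelated to the condition $\sigma\gtrsim G/\sqrt{mn}$ you cite and can fail even when $G\sqrt{T}/(n\sigma)\ll1$. (As an aside, the paper's own computation is written with $a_s\equiv1$, i.e.\ as if for non-accelerated SGD, so your explicit tracking of the momentum coefficients is a genuine refinement—once you use the correct closed form above.)
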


\begin{proof}[Proof of \cref{prop:tv-stability-noisy-sgd}]
We first prove this without projection - let the constraint set $\cW=\bbR^d$, and so the projection $\cP$ is the identity map. Also, for simplicity, let the initial model be $0$. Consider data sets $S$ and $S'$ such that all points are $0$ but the $n^{\text{th}}$ differing point. Let the $n^{\text{th}}$ point of $S$ be $-G\e_1$ and that of $S'$ be $G\e_1$,  where $\e_1$ is the first canonical basis vector. Let the function $f(\w,\z) = \ip{\w}{\z}$. The gradients are just data points $\z$, therefore gradients are $0$ on all but the differing points, wherein in the differing point in dataset $S$, the gradient is a constant $-G\e_1$ and for dataset $S'$, it is $G\e_1$.
Consider the map $\Psi: (x_1,x_2,\ldots, x_T) \rightarrow x_T$; using data processing inequality and this map, we have that 
\begin{align*}
    \text{TV}(\br{\w_1,\w_2,\ldots, \w_T}, \br{\w_1',\w_2',\ldots, \w_T'}) \geq    \text{TV}(\Psi\br{\w_1,\w_2,\ldots, \w_T}, \Psi\br{\w_1',\w_2',\ldots, \w_T'}) = \text{TV}(\w_T, \w_T')
\end{align*}
We now focus on bounding the total variation distance between the last iterates. Furthermore, by data-processing inequality, we can get rid of the step size scaling, and therefore can consider the last iterates as just the sum of all gradients. By simple calculations, we get that $\w_T$ is a mixture of multivariate Gaussians, all with variance $T\sigma^2 \bbI$ but with varying means: $G\e_1, 2G\e_1, \ldots, TG\e_1$, similarly for $\w_T'$. We denote the mixtures probabilities by $\pi_i$ where the $i^{th}$ conditional distribution, denoted by $\w_T^i$ and $\w_T'^i$ respectively, has means $iG\e_1$ and $-iG\e_1$ respectively. Also, we denote the conditional probability densities of the $i^{th}$ distribution by $\phi_S^i(\w)$ and $\phi_{S'}^i(\w)$ respectively.
We will show that the total variation between these mixtures is expected total variation distance between the mixture components. This follows due the symmetry between these two mixtures, which implies that the set that achieves the total variation distance is $\bc{\w: \w_1 \geq 0}$. 
We can therefore write the total variation distance as,

\begin{align*}
    \text{TV}(\w_T \Vert \w_T') &= \frac{1}{2} \norm{\phi_S(\w) - \phi_{S'}(\w)}_1 = \int_{\w_1\geq 0} \phi_S(\w) -\phi_{S'}(\w)d\w = \int_{\w_1\geq 0} \sum_{i} \pi_i (\phi^i_S(\w) -\phi^j_{S'}(\w))d\w \\
    & =  \sum_{i} \pi_i \int_{\w\geq 0}  (\phi^i_S(\w) -\phi^j_{S'}(\w))d\w = \sum_{i}\pi_i \text{TV} (\w_T^i, \w_T'^i) \\
    & \gtrsim \sum_i \pi_i \frac{2Gi}{m \sqrt{T}\sigma} = \frac{2G}{m\sqrt{T}\sigma}\E{i} =\frac{2G\sqrt{T}}{n\sigma}
\end{align*}

where in the inequality, we use the fact that $\w_T^i$ and $\w_T'^i$ are Gaussians with means separated by $2G$, and variance being $T\sigma^2\bbI$ and use the lower bound result on TV between high-dimensional Gaussians \cite{devroye2018total}. Finally, in the last equality, we compute the Expected value of $i$ under the mixture distribution - recall that $i$ is a sum of $T$ Bernoulli random variables with bias $\frac{m}{n}$, the expectation of which is $\frac{Tm}{n}$.

We now argue why projection doesn't change the above claim. Note the with the projection, all the Gaussians in the mixture are truncated forming a discrete distributions at the boundary of the constraint set. 
The probability mass on either sides of the (original) mean is unchanged. Hence $\bc{\w: \w_1 \geq 0}$ is still the witness set of total variation distance between the mixtures, and the total variation distance in both constrained/unconstrained cases is the same.
The same holds for the total variation between the corresponding mixture components. 
These observations suffices for application of proof of the unconstrained case. Finally, since TV distance, by definition is upper bounded by $1$ - this gives a trivial lower bound of $1$, and hence the TV distance is lower bound by $\min\bc{\Omega\br{\frac{G\sqrt{T}}{n \rho}},1}$.
\end{proof}
\section{Proofs for Section \ref{sec:alg-unlearning}}
\label{sec:proofs-unlearning}
We introduce some notation and setup the roadmap.
In the start of the stream, we have a model trained on the initial dataset of $n$ samples. We then observe an insertion or deletion request. We enumerate the data points from $1$ to $n$, and without loss of generality, assume that the $n^\text{th}$ sample is to be deleted, and the inserted sample has index $n+1$.
We want to show that the unlearning algorithm satisfies exact unlearning at every time point in the stream, and what suffices is to argue that this holds for one edit request, since by mathematical induction it then holds for the entire stream.
For one edit request, we will show the following:  
1. unlearning (deletion/insertion) algorithm is a valid transport, and 
2. the probability of recompute is small, and we will see that together these will imply, that it is a coupling, with large enough measure of the diagonal.

Let $\mu_{n,m}$ denote the sub-sampling probability measure 
to sample $m$ out of $n$ elements uniformly randomly.  
In the deletion and insertion algorithms, we replace \emph{some} mini-batch indices in  \emph{some} iterations - let these operations be denoted by $\textsc{Del}$ and  $\textsc{Ins}$ respectively. 
To elaborate, $\textsc{Del}$ is a (deterministic) map from $\br{[n]^m}^T$ to $\br{[n-1]^m}^T$ and $\textsc{Ins}$ is a map from $\br{[n]^m}^T$ to $\br{[n+1]^m}^T$. 
For an input $b \in ([n]^m)^T$, we have that ${\mathbf{b}} \sim \mu_{n,m}^{\otimes T}$. 
Furthermore, define $\mu_{n,m}^{\text{del} \otimes T} := \textsc{Del}_\# \mu_{n,m}^{\otimes T}$ and $\mu_{n,m}^{\text{ins} \otimes T} := \textsc{Ins}_\# \mu_{n,m}^{\otimes T}$. 
An important observation is that in the unlearning Algorithm \ref{alg:unlearn-subsample-gd}, the sub-sampled indices ${\mathbf{b}}$ are drawn from a product distribution $ \mu_{n,m}^{\otimes T}$ and in each iteration of \cref{alg:unlearn-subsample-gd} or \cref{alg:unlearn-noisy-m-a-sgd}, the maps $\textsc{Del}$ and  $\textsc{Ins}$ act \emph{component-wise} and \emph{symmetrically}.
This implies that $\textsc{Del}({\mathbf{b}}) = [\text{del}(b_1),\text{del}(b_2),\ldots, \text{del}(b_T)]$ where $\text{del}: [n]^m \rightarrow  [n-1]^m$ is the function which describes one iteration of the unlearning algorithm for handling mini-batch indices. 
We similarly have function $\text{ins}: [n]^m \rightarrow   [n+1]^m$ for insertion.
We finally define $\mu_{n,m}^{\text{del}} := \text{del}_\# \mu_{n,m}$ and $\mu_{n,m}^{\text{ins}} := \text{ins}_\# \mu_{n,m}$
- these are the probability measures induced on the sub-sampling indices by deletion and insertion operations, respectively.

\subsection{Unlearning for \emph{sub-sample-GD}}

We first show that $\mu^{\text{del}}_{n,m}$, the probability distribution, induced at a given iteration during deletion, over mini-batch indices $b \in [n]^m$ is a transport.

\begin{claim}[Deletion]
\label{claim:sub-sample-GD-deletion}
For any set $b \in [n]^m$, we have that $\mu^{\text{del}}_{n,m}(b) = \mu_{n-1,m}(b)$
\end{claim}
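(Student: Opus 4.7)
The plan is to prove the identity $\mu^{\text{del}}_{n,m}(b) = \mu_{n-1,m}(b)$ by a direct pushforward computation, splitting cases on whether the deleted index $n$ lies in $b$. The map $\text{del}$ encoding one iteration of \cref{alg:unlearn-subsample-gd} has two branches: if the sampled $B \sim \mu_{n,m}$ does not contain $n$, then $\text{del}(B) = B$ (the algorithm does nothing); otherwise, continued training on the updated dataset effectively produces a fresh uniform $m$-subset of $[n-1]$ at this iteration. In either branch the output lies in $\binom{[n-1]}{m}$, so the case $n \in b$ is immediate: both sides vanish.

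For $b$ of size $m$ with $n \notin b$, I will decompose the event $\{\text{del}(B) = b\}$ into two disjoint sub-events. The first is $\{B = b\}$, which automatically forces $n \notin B$ and contributes $\mu_{n,m}(b) = 1/\binom{n}{m}$. The second is $\{n \in B\} \cap \{\text{fresh resample equals } b\}$, which contributes $(m/n) \cdot 1/\binom{n-1}{m}$ using the standard fact $\Pr_{B \sim \mu_{n,m}}[n \in B] = \binom{n-1}{m-1}/\binom{n}{m} = m/n$ together with independence and uniformity of the resample over $\binom{[n-1]}{m}$. Summing and applying the elementary identity $1/\binom{n}{m} = (n-m)/(n\binom{n-1}{m})$, the total collapses to $((n-m) + m)/(n\binom{n-1}{m}) = 1/\binom{n-1}{m} = \mu_{n-1,m}(b)$, as desired.

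There is no real obstacle here: the argument is a short bookkeeping computation. The only subtlety worth flagging is that the two sub-events above are disjoint (they are distinguished by whether $n \in B$) and jointly exhaust $\{\text{del}(B) = b\}$ when $n \notin b$, and that the claim describes the marginal law of a single iteration — the joint law across iterations carries correlations introduced by the retrain cut-off, but those do not enter the per-iteration marginal computed here.
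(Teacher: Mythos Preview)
Your proof is correct and takes essentially the same approach as the paper: both split on whether the deleted index $n$ lies in the sampled mini-batch $B$, with the ``keep'' branch contributing the original sample and the other branch contributing a fresh draw from $\mu_{n-1,m}$. The only presentational difference is that the paper phrases the keep branch as a conditional law, observing directly that $\mu_{n,m}(\,\cdot \mid n\notin B)=\mu_{n-1,m}$, so both branches give $\mu_{n-1,m}$ and no final algebraic combination is needed; you instead sum the unconditional contributions and close with the identity $1/\binom{n}{m}=(n-m)/(n\binom{n-1}{m})$, which is equivalent.
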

\begin{proof}

First note that if the verification is unsuccessful, then a recompute is triggered and therein at each iteration, we drawn $b \sim \mu_{n-1,m}$.
Therefore, $\mu^{\text{del}}_{n,m}(b)= \mu_{n-1,m}(b)$ follows trivially.
We now argue for the other case.
The verification is successful if the deleted point was not present in any of iterations, i.e. at any iteration the sub-sample batch $b_t$ doesn't contain the deleted point $\z$. 
The measure $\mu_{n,m}^{\text{del}}$ is therefore just the probability under the original sub-sampling measure $\mu_{n,m}$ conditioned on the event that $\z \not \in b$. 
We therefore have,
\begin{align*}
    \mu_{n,m}^{\text{del}}(b) &= \mu_{n,m}(b|\bc{\z \not \in b}) =\frac{ \mu_{n,m}(b \cap \bc{\z \not \in b}) }{\mu_{n,m}(\bc{\z \not \in b})}
\end{align*}
By direct computation, $\mu_{n,m}(\bc{\z \not \in b}) = 1-\mu_{n,m}(\bc{\z \in b}) = 1 - \frac{{n-1 \choose m-1}}{{n \choose m}} = 1- \frac{m}{n}$. We now look at two choices for $b$. First suppose $z \in b$, then the numerator $\mu_{n,m}(b \cap \bc{\z \not \in b}) = 0$, which gives us that $ \mu_{n,m}^{\text{del}}(b) = 0 = \mu_{n-1,m}(b)$. We now look at a $b$ such that $z \not \in b$. We have,

\begin{align*}
      \mu_{n,m}^{\text{del}}(b) &=\frac{ \mu_{n,m}(b) }{\mu_{n,m}(\bc{\z \not \in b})}=\frac{1/{n \choose m}}{1- m/n} = \frac{n}{n-m}\frac{(n-m)!m!}{n!} \\&=\frac{(n-m-1)!m!}{(n-1)!} = \frac{1}{{n-1 \choose m}} = \mu_{n-1,m}(b)
\end{align*}
\end{proof}

Similarly, for insertion, we show that
$\mu^{\text{ins}}_n$, the probability distribution, induced at a given iteration during insertion, over mini-batch indices $b \in [n]^m$, is a is valid transport.

\begin{claim}
\label{claim:sub-sample-gd-insertion}
For any set $b \in [n+1]^m$, we have that $\mu^{\text{ins}}_{n,m}(b) = \mu_{n+1,m}(b)$
\end{claim}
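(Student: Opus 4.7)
The plan is to mimic the case analysis used in Claim \ref{claim:sub-sample-GD-deletion}, but now keyed on whether the new index $n+1$ appears in the target set $b$ or not. The insertion operation either leaves the mini-batch alone (with probability $1 - m/(n+1)$) or, with probability $m/(n+1)$, picks a uniformly random element $i$ of the original mini-batch $b^{\text{orig}}$ and replaces it by $n+1$. So the law $\mu_{n,m}^{\text{ins}}$ is a mixture of two components, and I would write $\mu_{n,m}^{\text{ins}}(b)$ as the sum of the contributions from these two branches and check each equals $1/\binom{n+1}{m}$.

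First I would handle the case $n+1 \notin b$. The only way the insertion procedure can output such a $b$ is through the no-replacement branch, and in that branch we must have started with $b^{\text{orig}} = b \subset [n]$, which under $\mu_{n,m}$ has probability $1/\binom{n}{m}$. Thus the contribution is $\bigl(1 - \tfrac{m}{n+1}\bigr)\cdot \tfrac{1}{\binom{n}{m}} = \tfrac{n+1-m}{n+1}\cdot \tfrac{m!(n-m)!}{n!} = \tfrac{(n+1-m)!\, m!}{(n+1)!} = 1/\binom{n+1}{m}$, as required.

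Next I would handle the case $n+1 \in b$. Here only the replacement branch contributes, and I would enumerate the pre-images: any mini-batch of the form $b^{\text{orig}} = (b \setminus \{n+1\}) \cup \{i\}$ with $i \in [n] \setminus (b \setminus \{n+1\})$ maps to $b$ precisely when the replacement step selects $i$, which happens with probability $1/m$ by the uniform choice inside $b^{\text{orig}}$. There are $n-(m-1) = n-m+1$ such $i$'s, and each corresponding $b^{\text{orig}}$ has $\mu_{n,m}$-mass $1/\binom{n}{m}$. Summing, the contribution is $\tfrac{m}{n+1}\cdot (n-m+1) \cdot \tfrac{1}{\binom{n}{m}} \cdot \tfrac{1}{m} = \tfrac{(n-m+1)\, m!\, (n-m)!}{(n+1)\, n!} = 1/\binom{n+1}{m}$, matching the target.

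There is no real obstacle here; the computation is pure counting. The one thing to keep clean is making sure the two branches partition the output space according to the indicator $\mathbb{1}\{n+1 \in b\}$, and that the pre-image enumeration in the second case is exhaustive and disjoint (which it is, because $b^{\text{orig}}$ is determined by the pair $(b, i)$). Once both cases are verified to equal $1/\binom{n+1}{m} = \mu_{n+1,m}(b)$, we are done.
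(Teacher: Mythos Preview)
Your proof is correct and follows essentially the same approach as the paper: split on whether $n+1 \in b$, then in each branch compute the probability by enumerating pre-images of the insertion map. If anything, your bookkeeping in the $n+1 \in b$ case (counting the $n-m+1$ pre-images, each with mass $1/\binom{n}{m}$, times the $1/m$ chance of picking the right slot) is a bit more transparent than the paper's formulation via conditional events and the auxiliary measure~$\nu$, but the argument is the same.
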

\begin{proof}[Proof of \cref{claim:sub-sample-gd-insertion}]
Let $\nu$ denote the uniform probability measure over $n+1-m$ elements.
Given $b$, we consider two cases based of whether last/inserted index $n+1$ lies in $b$ or not.
In the first case, we know that the outcome of $\text{Bernoulli}(m/(n+1))$ must have been $1$ i.e. the iteration was selected. Furthermore, in that case, the inserted point would have replaced some other point not in $b$ - the total number of possibilities are $n+1-m$. Let $E_i$ be event that the inserted point replaced the $i^{\text{th}}$ data point, whose index we denote by $s_i$. Note that the events $E_i's$ are disjoint and the event $b$ is $\cup_{i=1}^{n+1-m} E_i$.
Furthermore, $\mu^{\text{ins}}_{n,m}(E_i) = \mu^{\text{ins}}_{n,m}(\text{original subsample is } b\backslash \bc{n+1}\cup \bc{s_i} | \bc{s_i} \text{ replaced}) \mu^{\text{ins}}_{n,m}(\bc{s_i} \text{replaced})) = \mu_{n,m}(b\backslash \bc{n+1}\cup \bc{s_i} | \bc{s_i}) \nu(\bc{s_i}) = \frac{1}{{n \choose m-1}}\frac{1}{(n+1-m)}$.
We therefore have that 

\begin{align*}
\mu^{ins}_{n}(b) &=\frac{m}{n+1} \mu^{ins}_{n}(\cup_{i=1}^{n+1-m} E_i) = \frac{m}{n+1}\sum_{i=1}^{n+1-m}\mu^{ins}_{n}(E_i) = \frac{m}{n+1}\sum_{i=1}^{n+1-m}\frac{1}{{n \choose m-1}}\frac{1}{(n+1-m)}  \\
& = \frac{m}{n+1}\frac{1}{{n \choose m-1}} = \frac{m (m-1)! (n-(m-1))!}{(n+1) n!} = \frac{1}{{n+1 \choose m}} = \mu_{n+1,m}(b)
\end{align*}
In the other case, we know that Bernoulli($m/(n+1)$) resulted in $0$, so there is no replacement. Therefore, we have
\begin{align*}
\mu^{ins}_{n}(b)  = \br{1-\frac{m}{n+1}}\frac{1}{{n \choose m}} = \frac{(n+1-m)(n-m)!m!}{n!(n+1)} = \frac{1}{{n+1 \choose m}} = \mu_{n+1,m}(b)
\end{align*}
\end{proof}

\paragraph{Coupling.} We formally describe the coupling constructed by the unlearning \cref{alg:unlearn-subsample-gd}. We first the discuss deletion case - consider datasets $S$ and $S'$ of sizes $n$ and $n-1$ respectively,  and wlog assume that the last sample of $S$ differs. 
We first sample $\mathbf{b} = [b_1,b_2,\ldots, b_T] \sim \mu_{n,m}^{\otimes T}$.
We set $\mathbf{b}^{(1)}=\mathbf{b}$.
For each $j \in T$,
if $n \in b_j$, then sample $b^{(2)}_j \sim \mu_{n-1,m}$, otherwise set $b^{(2)}_j  = b_j$.
This produces the coupled mini-batches $(\mathbf{b}^{(1)}, \mathbf{b}^{(2)})$ for deletion.

For insertion, we have datasets $S$ and $S'$ of sizes $n$ and $n+1$ respectively, and again assume that the last of point of $S'$ differs. Sample $\mathbf{b} = [b_1,b_2,\ldots, b_T] \sim \mu_{n,m}^{\otimes T}$.
and set $\mathbf{b}^{(1)}=\mathbf{b}$. 
Now sample $\bc{c_{j}}_{j=1}^{T}$, where $c_{j} \sim \text{Bernoulli}\br{\frac{m}{n}}$, if $c_{j}=1$, then sample uniformly a point in $b^{(2)}_j$, and replace it with $n+1$. Otherwise  set $b^{(2)}_j = b_j$, which gives us the coupled mini-batches $(\mathbf{b}^{(1)}, \mathbf{b}^{(2)})$.

It is easy to see that the above procedure is how Algorithm \ref{alg:unlearn-subsample-gd} handles insertions and deletions going from  $S$ to $S'$.
We first show that this is a valid coupling. 

\begin{claim}
\label{claim:subsample-gd-coupling}
For the coupling described above, for any $b$,
\begin{enumerate}
    \item $  \P{\mathbf{b}^{(1)} = b} = \mu_{n}^{\otimes T}(b)$
    \item $\P{\mathbf{b}^{(2)} = b} = \mu_{n-1}^{\otimes T}(b) \ \ \text{(deletion)}$, \ \  $\P{\mathbf{b}^{(2)} = b} = \mu_{n+1}^{\otimes T}(b) \ \ \ \text{(insertion)}$
\end{enumerate}
\end{claim}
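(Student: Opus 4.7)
The plan is to verify the two marginal assertions separately; the first is essentially a restatement of the construction, and the second reduces to the single-iteration push-forward identities already proven in Claim \ref{claim:sub-sample-GD-deletion} and Claim \ref{claim:sub-sample-gd-insertion}, combined with an independence argument across iterations.

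For the first marginal, by construction we draw $\mathbf{b} \sim \mu_{n,m}^{\otimes T}$ and then set $\mathbf{b}^{(1)} = \mathbf{b}$, so $\P{\mathbf{b}^{(1)} = b} = \mu_{n,m}^{\otimes T}(b)$ is immediate. For the second marginal, I would first argue coordinate-wise and then lift to the product via independence. Concretely, fix an iteration $j$ and consider the law of $b_j^{(2)}$. In the deletion case, $b_j^{(2)}$ is produced by applying exactly the map $\text{del}$ to $b_j \sim \mu_{n,m}$ (if $n\in b_j$, resample from $\mu_{n-1,m}$; otherwise retain $b_j$), so the law of $b_j^{(2)}$ is the push-forward $\mu_{n,m}^{\text{del}} = \text{del}_\# \mu_{n,m}$, which by Claim \ref{claim:sub-sample-GD-deletion} equals $\mu_{n-1,m}$. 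Similarly, in the insertion case, $b_j^{(2)}$ is obtained by applying the map $\text{ins}$ (flip a Bernoulli$(m/(n+1))$, and if it fires, replace a uniform element of $b_j$ by $n+1$) to $b_j \sim \mu_{n,m}$, so by Claim \ref{claim:sub-sample-gd-insertion} its law is $\mu_{n+1,m}$.

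To upgrade the coordinate-wise marginals to the full product law on $\mathbf{b}^{(2)}$, I would use that the construction acts componentwise and uses fresh randomness per iteration. Specifically, the $b_j$'s are independent under $\mu_{n,m}^{\otimes T}$, and for each $j$ the coupled $b_j^{(2)}$ is a function of $b_j$ and an independent auxiliary randomness (the resample in deletion, or the Bernoulli coin and the uniform replacement index in insertion), with all auxiliary randomnesses drawn independently across $j$. Thus $(b_1^{(2)}, \dots, b_T^{(2)})$ is a tuple of mutually independent random variables with marginals $\mu_{n-1,m}$ (respectively $\mu_{n+1,m}$), giving the claimed product law.

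I do not expect a real obstacle here; the content is bookkeeping once the two single-step transport claims are in hand. The only thing to be mindful of is that the auxiliary randomnesses be drawn independently across iterations (which matches the algorithm's per-iteration loop), so that independence of $\{b_j^{(2)}\}_j$ is preserved; if one accidentally re-used randomness across iterations, the product structure could break, but the stated algorithm does not do this.
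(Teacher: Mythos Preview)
Your proposal is correct and takes essentially the same approach as the paper: the paper's proof simply says ``Follows immediately from Claims \ref{claim:sub-sample-GD-deletion} and \ref{claim:sub-sample-gd-insertion},'' relying implicitly on the component-wise, independent-across-iterations structure that the paper set up just before these claims. You have spelled out that implicit step (independence of the per-iteration auxiliary randomness lifting the single-step transport identities to the product law) more carefully, but the route is the same.
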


\begin{proof}[Proof of \cref{claim:subsample-gd-coupling}]
Follows immediately from Claims \ref{claim:sub-sample-GD-deletion} and  \ref{claim:sub-sample-gd-insertion} .
\end{proof}

We now show that the probability of disagreement under the above coupling is upper bounded by $k$ times TV-stability parameter of \cref{alg:sub-sample-GD}.

\begin{claim}
\label{claim:unlearn_couple_subsample_GD}
For the $\rho$-TV stable \cref{alg:sub-sample-GD},
under the coupling described above, the following holds
\begin{align*}
    \bbP_{(\mathbf{b}^{(1)},\mathbf{b}^{(2)})}[\mathbf{b}^{(1)} \neq \mathbf{b}^{(2)}] \leq \rho
\end{align*}
\end{claim}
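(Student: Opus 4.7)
The plan is to simply upper bound the probability of the event $\{\mathbf{b}^{(1)} \neq \mathbf{b}^{(2)}\}$ by a union bound over the $T$ iterations, and then identify the resulting quantity with the $\rho$-TV stability parameter established in \cref{prop:upper_bound_subsample}. The key observation is that by the very construction of the coupling, a disagreement $b_j^{(1)} \neq b_j^{(2)}$ at iteration $j$ can occur only on a specific, easily-described event.

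For the deletion case (dataset shrinks from $n$ to $n-1$ by removing the index $n$), the coupling sets $b_j^{(2)} = b_j^{(1)} = b_j$ whenever $n \notin b_j$, and only resamples $b_j^{(2)} \sim \mu_{n-1,m}$ when $n \in b_j$. Hence $\{\mathbf{b}^{(1)} \neq \mathbf{b}^{(2)}\} \subseteq \bigcup_{j=1}^{T} \{n \in b_j\}$, and a union bound gives
\begin{align*}
\bbP\bigl[\mathbf{b}^{(1)} \neq \mathbf{b}^{(2)}\bigr]
\;\leq\; \sum_{j=1}^{T} \mu_{n,m}(\{n \in b_j\})
\;=\; T \cdot \frac{m}{n}.
\end{align*}
For the insertion case, disagreement at iteration $j$ requires the Bernoulli variable $c_j$ to equal $1$ (otherwise $b_j^{(2)} = b_j^{(1)}$ by construction), so an identical union bound yields $\bbP[\mathbf{b}^{(1)} \neq \mathbf{b}^{(2)}] \leq T \cdot \frac{m}{n+1} \leq \frac{Tm}{n}$.

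Finally, I recall from the proof of \cref{prop:upper_bound_subsample} that the choice of mini-batch size $m = \rho n / T$ was made precisely so that $Tm/n = \rho$; this identifies $Tm/n$ with the TV stability parameter and completes the bound $\bbP[\mathbf{b}^{(1)} \neq \mathbf{b}^{(2)}] \leq \rho$ in both cases. There is no real obstacle here: the coupling was engineered so that the ``bad'' event at each iteration is exactly the event witnessing the per-iteration TV gap, and the union bound converts the per-iteration gap into the claimed global bound. The only point that deserves a line of care is the insertion case, where one must note that the Bernoulli parameter $m/(n+1)$ is at most $m/n$, so the deletion bound dominates and the same $\rho$ suffices.
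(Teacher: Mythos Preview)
Your proof is correct and follows essentially the same approach as the paper: identify the disagreement event at each iteration (either $n\in b_j$ for deletion or $c_j=1$ for insertion), apply a union bound over the $T$ iterations to obtain $Tm/n$, and invoke the setting $m=\rho n/T$ from \cref{prop:upper_bound_subsample} to conclude. The only cosmetic difference is that the paper states the identification of $\{\mathbf{b}^{(1)}\neq\mathbf{b}^{(2)}\}$ with $\{\exists j:\,n\in b_j\}$ (resp.\ $\{\exists j:\,c_j=1\}$) as an equality rather than your inclusion, which is justified since the replacement always introduces an index absent from the other mini-batch; your inclusion is sufficient for the bound.
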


\begin{proof}
For deletion, we have,
\begin{align*}
    \mathbb{P}_{(\mathbf{b}^{(1)}, \mathbf{b}^{(2)})}[\mathbf{b}^{(1)} \neq \mathbf{b}^{(2)}] =  \mathbb{P}_{(\mathbf{b}^{(1)}, \mathbf{b}^{(2)})}[ \exists j \in [T]: \mathbf{b}^{(1)}_j \neq \mathbf{b}^{(2)}_j] = \mathbb{P}_{\mathbf{b}}[\exists j \in [T]:n \in \mathbf{b}_j]
\leq \frac{Tm}{n} 
\end{align*}

For insertion, we have
\begin{align*}
    \mathbb{P}_{(\mathbf{b}^{(1)}, \mathbf{b}^{(2)})}[\mathbf{b}^{(1)} \neq  \mathbf{b}^{(2)}] &= \bbP_{{\mathbf{b}}, \mathbf{c}}[ \exists j \in [T] : c_j=1] \leq  \frac{Tm}{n}
\end{align*}

In Proposition \ref{prop:upper_bound_subsample}, we showed that the total variation distance of the algorithm under  change of one point is at most $\frac{Tm}{n} =\rho$, which completes the proof.
\end{proof}

We are now ready to prove \cref{prop:unlearn-subsample-GD}.

\begin{proof}[Proof of \cref{prop:unlearn-subsample-GD}]
The following argument is for deletion, but the insertion case follows similarly.
Consider dataset $S$ and $S'$ of $n$ points and $n-1$ points respectively, differing in one sample.
As in the proof of \cref{prop:upper_bound_subsample}, we embed the randomness for Algorithm \ref{alg:sub-sample-GD} executed on $S$ and $S'$ into a common probability space.
Therefore, similar to the proof of \cref{prop:upper_bound_subsample} given the datasets (and other parameters),
Algorithm \ref{alg:sub-sample-GD}, $\cA(S)$
is a deterministic map from  sub-sampled indices ${\mathbf{b}} = (b_1,b_2,\ldots, \b_T)$ to the model: $\cA(S) :{\mathbf{b}} \rightarrow \cW$, where $b_j \in [n]^m$, for both datasets.
Hence, what suffices is to show that the input probability measure $\mu_{n,m}^{\otimes T}$ is transported to the one that would have been produced on the current dataset $S'$ i.e $\mu_{n-1,m}^{\otimes T}$ - this follows from \cref{claim:subsample-gd-coupling}.
Hence it follows that the output generated by Algorithm \ref{alg:sub-sample-GD} has the same measure as $\cA(S)_\# \mu_{n-1,m}^{\otimes T}$, which proves first part of the claim.
The probability of recompute, being at most $\rho$, for one edit, follows directly from \ref{claim:unlearn_couple_subsample_GD}. Finally, from  \cref{remark:tv_upto_k}, for $k$ edits, and the assumption the number of samples throughout the stream is between  $n/2$ and $2n$, the recompute probability is at most $2k\rho$.
\end{proof}

\subsection{Unlearning for \emph{noisy-m-A-SGD}}

\subsubsection{Coupling mini-batches}
In this section, we show that Algorithm \ref{alg:unlearn-noisy-m-a-sgd} transports sub-sampling probability measures while handling edit requests.
We remind that $\mu^{\text{del}}_{n,m}$ denotes the probability measure induced on the sub-sampled indices by the deletion procedure, in any iteration.
We show that, for any mini-batch, the probability mass of the mini-batched indices under $\mu^{\text{del}}_{n,m}$ is same as that under the sub-sampling measure $\mu_{n-1,m}$.

\begin{claim}
\label{claim:sgd_deletion}
For any set $b \in [n-1]^m$, we have that $\mu^{\text{del}}_{n,m}(b) = \mu_{n-1,m}(b)$
\end{claim}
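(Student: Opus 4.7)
The plan is to compute $\mu^{\text{del}}_{n,m}(b)$ directly by pushing the uniform measure $\mu_{n,m}$ through the deletion map defined in Algorithm \ref{alg:unlearn-noisy-m-a-sgd}. Unlike the \emph{sub-sample-GD} case (Claim \ref{claim:sub-sample-GD-deletion}), here the deletion operation does not trigger a recomputation when the deleted point $n$ belongs to the sub-sampled batch; instead, it replaces $n$ by an index chosen uniformly at random from $[n]\setminus b'$, where $b'$ is the original batch. So $\mu^{\text{del}}_{n,m}$ is genuinely the push-forward of $\mu_{n,m}$ by the (randomized) map $\text{del}$, and we simply need to identify the pre-image fibers and sum their contributions.

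Fix $b \in [n-1]^m$, so $n \notin b$. First I would enumerate all $b' \in [n]^m$ that can be mapped to $b$ under $\text{del}$:
\begin{enumerate}
\item $b' = b$ itself, in which case $n \notin b'$ and the deletion procedure returns $b$ with probability $1$;
\item $b' = (b \setminus \{i\}) \cup \{n\}$ for some $i \in b$, in which case $n \in b'$, and to reach $b$ the uniform draw from $[n]\setminus b'$ (a set of size $n-m$) must select $i$, an event of probability $\tfrac{1}{n-m}$.
\end{enumerate}
No other $b'$ can produce $b$: any $b'$ containing $n$ gets mapped to $(b' \setminus \{n\}) \cup \{i^*\}$ for some $i^* \in [n]\setminus b'$, which must agree with $b$, forcing $b'$ to have the form in item 2.

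Using $\mu_{n,m}(b') = \binom{n}{m}^{-1}$ for every $b' \in [n]^m$, combining the two contributions gives
\begin{align*}
\mu^{\text{del}}_{n,m}(b)
&= \frac{1}{\binom{n}{m}} \cdot 1 + \sum_{i \in b} \frac{1}{\binom{n}{m}} \cdot \frac{1}{n-m}
= \frac{1}{\binom{n}{m}}\left(1 + \frac{m}{n-m}\right)
= \frac{1}{\binom{n}{m}} \cdot \frac{n}{n-m}.
\end{align*}
Finally I would simplify using $\binom{n}{m}^{-1} = \tfrac{m!(n-m)!}{n!}$, yielding $\mu^{\text{del}}_{n,m}(b) = \tfrac{m!(n-m-1)!}{(n-1)!} = \binom{n-1}{m}^{-1} = \mu_{n-1,m}(b)$, which is the claimed identity.

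There is no serious obstacle here; the only place to be careful is the enumeration of pre-images (ensuring no $b'$ is double-counted and that the uniform draw is over the correct set $[n]\setminus b'$ of size $n-m$, not $[n]\setminus b$). Everything else is a one-line binomial manipulation.
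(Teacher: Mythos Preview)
Your proof is correct and follows essentially the same approach as the paper: both partition according to whether the original batch $b'$ contained the deleted index $n$, enumerate the $m+1$ possible pre-images of $b$ (one with $n\notin b'$ and $m$ with $n\in b'$), and sum their contributions to obtain $\tfrac{1}{\binom{n}{m}}\cdot\tfrac{n}{n-m}=\tfrac{1}{\binom{n-1}{m}}$. The only cosmetic difference is that you phrase it as a direct push-forward over pre-image fibers, whereas the paper uses a conditional-probability decomposition on the event $E=\{n\in b'\}$; the arithmetic is identical.
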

\begin{proof}[Proof of \cref{claim:sgd_deletion}]
Firstly, note that deletion uses additional randomness which is used to uniformly sample one element from $n-(m-1)$ elements - let $\nu$ denote the uniform probability measure on $n-(m-1)$ elements.
Let $E$ be the event that the $n^{th}$ was sub-sampled originally, and therefore replaced upon verification. By direct computation $\mu_{n,m}(E) = \frac{m}{n}$. We can therefore write $\mu^{\text{del}}_{n,m}(b)$ as follows
\begin{align*}
    \mu^{\text{del}}_{n,m}(b) &= \mu^{\text{del}}_{n,m}(b|E) \mu_{n,m}(E) + \mu^{\text{del}}_{n,m}(b|E^c) \mu_{n,m}(E^c) \\
\end{align*}
Under event $E$, we have the deleted index was replaced. But it can be any element of $b$ that arised out of this replacement. Hence we decompose the event $b|E$ into events $E_i$'s, where $E_i$ corresponds to the event that $b_i$ was replaced. We have that $b|E = \cup_{i=1}^m E_i$, and furthermore, due to the uniform measure, $\mu^{\text{del}}_{n,m}(E_i) = \mu^{\text{del}}_{n,m}(E_j) \forall i,j$. Note that in the event $E_i$, we require that the original sub-sampling measure on $n$ points $\mu_{n,m}$ to have produced the set $b\backslash b_i \cup \bc{n}$ and then a uniform $b_i$ is drawn upon replacement. 
Therefore, $\mu^{\text{del}}_{n,m}(E_i) = \mu_{n,m}(b \backslash b_i \cup \bc{n}) \nu(b_i) = \frac{1}{{n-1 \choose m-1}} \frac{1}{n-1-(m-1)}$. Similarly, when the event $E^c$ occurs, probability of outputting $b$ corresponds to the event when $b$ was generated  using the original sub-sampling measure $\mu_m$ (and no additional randomness used upon verification). Therefore, we get $\mu^{\text{del}}_{n,m}(b|E^c) = \mu_{n,m}(b | E^c) = \frac{1}{{n-1 \choose m}}$. Plugging these in, and with simple calculations, we have
\begin{align*}
    \mu^{\text{del}}_{n,m}(b)  &= \sum_{i=1}^m\mu^{\text{del}}_{n,m}(E_i) \mu_{n,m}(E_i) + \mu^{\text{del}}_{n,m}(b|E^c) \mu_{n,m}(E) \\
    &= \sum_{i=1}^m\mu_{n,m}(b \backslash b_i \cup \bc{n}) \nu(b_i)  \frac{m}{n} + \frac{1}{{n-1 \choose m}} \br{1-\frac{m}{n}} \\
    & =\frac{m}{{n-1 \choose m-1}} \frac{1}{n-1-(m-1)}\frac{m}{n}+ \frac{1}{{n-1 \choose m}} \br{1-\frac{m}{n}}\\
    & = \frac{1}{{n-1 \choose m}} + \frac{m}{n}\br{\frac{m}{{n-1 \choose m-1}(n-1-(m-1))} - \frac{1}{{n-1 \choose m}}} \\
      & = \frac{1}{{n-1 \choose m}} + \frac{m}{n}\br{\frac{m(m-1)!(n-1-(m-1))!}{(n-1)!(n-1-(m-1))} - \frac{1}{{n-1 \choose m}}} \\
      & = \frac{1}{{n-1 \choose m}} + \frac{m}{n}\br{\frac{1}{{n-1 \choose m}} - \frac{1}{{n-1 \choose m}}} = \frac{1}{{n-1 \choose m}} = \mu_{n-1,m}(b)  \qedhere
    \end{align*} 
\end{proof}

Similarly, for insertion, we now show that the probability mass of any mini-batch under $\mu^{\text{ins}}_{n,m}$, the probability measure induced by insertion on the $n+1$ data points, is same as that under $\mu_{n+1,m}$.

\begin{claim}
\label{claim:sgd_insetion}
For any set $b \in [n+1]^m$, we have that $\mu^{\text{ins}}_{n,m}(b) = \mu_{n+1,m}(b)$
\end{claim}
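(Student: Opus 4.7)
My plan is to mirror the structure of the proof of Claim \ref{claim:sub-sample-gd-insertion}, since the insertion branch of Algorithm \ref{alg:unlearn-noisy-m-a-sgd} uses exactly the same sampling procedure on the mini-batch indices as the insertion branch of Algorithm \ref{alg:unlearn-subsample-gd}: draw $b' \sim \mu_{n,m}$, flip $c \sim \text{Bernoulli}(m/(n+1))$, and if $c=1$ pick $i$ uniformly from $b'$ and output $b = b' \setminus \{i\} \cup \{n+1\}$, otherwise output $b = b'$. All other state of \emph{noisy-m-A-SGD} (iterates, gradients, Gaussian noise) is irrelevant to this marginal, so it suffices to analyze these two sources of randomness alone.

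The proof then proceeds by splitting on whether $n+1 \in b$ or not. If $n+1 \notin b$, the Bernoulli must have been $0$ (a successful replacement always introduces $n+1$), so $\mu^{\text{ins}}_{n,m}(b) = (1 - m/(n+1))\cdot \mu_{n,m}(b)$, and a direct simplification gives $1/\binom{n+1}{m} = \mu_{n+1,m}(b)$. If $n+1 \in b$, the Bernoulli fired, and the pre-insertion batch $b'$ must be of the form $b \setminus \{n+1\} \cup \{s\}$ with $s$ ranging over the $n - (m-1)$ elements of $[n]\setminus (b\setminus\{n+1\})$; these events are disjoint, each has probability $\mu_{n,m}(b')\cdot \frac{1}{m}$ (the $\frac{1}{m}$ accounting for selecting $s$ uniformly inside $b'$), so summing and multiplying by the Bernoulli factor $m/(n+1)$ gives
\[
\mu^{\text{ins}}_{n,m}(b) \;=\; \frac{m}{n+1}\cdot (n-m+1) \cdot \frac{1}{\binom{n}{m}} \cdot \frac{1}{m} \;=\; \frac{(n-m+1)\, m!\,(n-m)!}{(n+1)!} \;=\; \frac{1}{\binom{n+1}{m}},
\]
which again equals $\mu_{n+1,m}(b)$.

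There is essentially no obstacle here; the only thing to be careful about is the combinatorial bookkeeping in Case~2, specifically that the count of valid pre-images is $n-(m-1)$ (the size of $[n]\setminus (b\setminus\{n+1\})$) rather than $n-m$, and that the factor $1/m$ from the uniform choice of $i$ inside $b'$ cancels cleanly with the Bernoulli probability. Both cases must land on the same value $1/\binom{n+1}{m}$, which is what makes the insertion step a valid transport from $\mu_{n,m}$ to $\mu_{n+1,m}$ and is the key ingredient subsequently needed to verify that (Algorithm \ref{alg:noisy-m-a-sgd}, Algorithm \ref{alg:unlearn-noisy-m-a-sgd}) satisfies exact unlearning.
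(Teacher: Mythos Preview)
Your proposal is correct and is essentially identical to the paper's approach: the paper's proof of this claim simply reads ``Same as that of \cref{claim:sub-sample-gd-insertion},'' and your argument reproduces that proof---the same two-case split on whether $n+1 \in b$, the same enumeration of the $n-m+1$ possible pre-images in the replacement case, and the same binomial simplification to $1/\binom{n+1}{m}$. The only cosmetic difference is that the paper writes each pre-image probability as $\frac{1}{\binom{n}{m-1}}\cdot\frac{1}{n+1-m}$ whereas you write it as $\frac{1}{\binom{n}{m}}\cdot\frac{1}{m}$, but these are equal.
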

\begin{proof}[Proof of \cref{claim:sgd_insetion}]
Same as that of \cref{claim:sub-sample-gd-insertion}.
\end{proof}

\subsubsection{Lemmas for reflection coupling}
\label{sec:reflection}
We state and prove some results about reflection mapping and couplings.
\begin{lemma}
\label{lem:reflection_coupling_basic}
Let $P$ and $Q$ be probability distributions over $\bbR^d$. 
Let $\psi:\bbR^d\rightarrow\bbR^d$ be a bijection such that $\phi_P(\psi(x))= \phi_Q(x)$, $\phi_P(\psi^{-1}(x)) = \phi_Q(x)$ and  $\abs{\det\left(\frac{d\psi(x)}{dx}\right)}=1$, where $\frac{d\psi(x)}{dx}$ is the Jacobian of the multivariate map $\psi$.
Let $x \sim P$ be a sample from $P$. Let $y=x$ if $\text{Unif}(0,1)\leq \frac{\phi_Q(x)}{\phi_P(x)}$, otherwise $y=\psi(x)$. Then $(x,y)$ is a maximal coupling of $P$ and $Q$.
\end{lemma}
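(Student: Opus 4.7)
The plan is to verify two things: (i) the marginal of $y$ is $Q$, making $(x,y)$ a coupling; (ii) $\mathbb{P}(x \neq y) = \mathrm{TV}(P,Q)$, which by the maximal coupling characterization recalled in Section 2.3 upgrades the coupling to a maximal one. The marginal of $x$ is $P$ by construction, so nothing is needed there.

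For (i), I would fix an arbitrary measurable set $A \subseteq \mathbb{R}^d$ and decompose
\[
\mathbb{P}(y \in A) = \mathbb{P}(y \in A,\, y = x) + \mathbb{P}(y \in A,\, y = \psi(x)).
\]
The first term, by the rejection-sampling step, is $\int_A \phi_P(x)\min\!\bigl(1,\phi_Q(x)/\phi_P(x)\bigr)\,dx = \int_A \min(\phi_P,\phi_Q)$. The second term equals $\int_{\psi^{-1}(A)} (\phi_P(x)-\phi_Q(x))_+\,dx$, since the rejection probability at $x$ is $(1-\phi_Q(x)/\phi_P(x))_+$. Now I would apply the change of variables $z = \psi(x)$: since $|\det(d\psi/dx)|=1$ the Lebesgue measure is preserved, and the hypotheses on $\psi$ give $\phi_P(\psi^{-1}(z)) = \phi_Q(z)$ and $\phi_Q(\psi^{-1}(z)) = \phi_P(z)$ (the latter by substituting $w=\psi(x)$ into $\phi_P(\psi(x))=\phi_Q(x)$). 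This converts the integral to $\int_A (\phi_Q(z)-\phi_P(z))_+\,dz$. Using the elementary identity $\min(a,b) + (b-a)_+ = b$, the two contributions sum to $\int_A \phi_Q(z)\,dz$, so $y \sim Q$.

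For (ii), I would observe that $\{x = y\}$ is precisely the acceptance event, so
\[
\mathbb{P}(x=y) = \int \phi_P(x)\min\!\bigl(1,\phi_Q(x)/\phi_P(x)\bigr)\,dx = \int \min(\phi_P,\phi_Q) = 1 - \mathrm{TV}(P,Q),
\]
where the last equality is the standard $L^1$/min identity for total variation recalled in Section 2.3. Hence $\mathbb{P}(x \neq y) = \mathrm{TV}(P,Q)$, matching the lower bound from the maximal coupling characterization.

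The only delicate step is the change-of-variables argument in (i), and its success is exactly what the three hypotheses on $\psi$ are designed to guarantee: the bijection lets us pass from $\psi^{-1}(A)$ back to $A$, the unit Jacobian avoids a density correction, and the two density-preservation identities convert $(\phi_P-\phi_Q)_+\circ\psi^{-1}$ into $(\phi_Q-\phi_P)_+$. Once that bookkeeping is done, the maximality in (ii) is essentially automatic.
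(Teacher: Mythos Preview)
Your proposal is correct and follows essentially the same approach as the paper's proof: decompose $\mathbb{P}(y\in A)$ into the accept and reject pieces, apply the change of variables $z=\psi(x)$ to the reject piece using the three hypotheses on $\psi$, and then combine via the identity $\min(a,b)+(b-a)_+=b$; the maximality computation $\mathbb{P}(x=y)=\int\min(\phi_P,\phi_Q)=1-\mathrm{TV}(P,Q)$ is likewise the same. The only cosmetic difference is that the paper writes the integrals with indicator functions over all of $\mathbb{R}^d$ rather than restricting the domain to $A$ or $\psi^{-1}(A)$, and it verifies the final identity by a two-case argument on the sign of $\phi_P-\phi_Q$ rather than invoking $\min(a,b)+(b-a)_+=b$ directly.
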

\begin{proof}
We first show that $y$ is a sample from $Q$. Let $E$ be an event in the range of $Q$. Let accept be the event when $u \sim \text{Unif}(0,1)$, $u \leq  \frac{\phi_Q(x)}{\phi_P(x)}$. We have,
\begin{align*}
    \P{y \in E} & = \P{y\in E, \text{accept}} + \P{y\in E, \text{reject}} \\
    & = \mathbb{E}_{x,u}\left[\mathbb{1}\bc{x \in E} \mathbb{1}\bc{u \leq \frac{\phi_Q(x)}{\phi_P(x)}}\right] +  \mathbb{E}_{x,u}\left[\mathbb{1}\bc{\psi(x) \in E} \mathbb{1}\bc{u > \frac{\phi_Q(x)}{\phi_P(x)}}\right] \\
    & = \mathbb{E}_{x}\left[\mathbb{1}\bc{x \in E} \P{\bc{u \leq \frac{\phi_Q(x)}{\phi_P(x)}} \Big\vert x}\right] +  \mathbb{E}_{x}\left[\mathbb{1}\bc{\psi(x) \in E} \P{\bc{u > \frac{\phi_Q(x)}{\phi_P(x)}}\Big \vert x}\right] \\
    & =  \int_{\bbR^d}\mathbb{1}\bc{x \in E} \min \bc{1, \frac{\phi_Q(x)}{\phi_P(x)}}\phi_P(x)dx \\&+  \int_{\bbR^d}\mathbb{1}\bc{\psi(x) \in E} \br{1 - \min \bc{1, \frac{\phi_Q(x)}{\phi_P(x)}}}\phi_P(x)dx \\
    & =  \int_{\bbR^d}\mathbb{1}\bc{x \in E} \min \bc{\phi_P(x), \phi_Q(x)}dx +  \int_{\bbR^d}\mathbb{1}\bc{\psi(x) \in E} \max \bc{0,\phi_P(x) - \phi_Q(x)}dx
\end{align*}
For the second term, we now do change of variable - let $v  = \phi(x)$ - using the given properties of $\psi$, we have $\phi_P(x) = \phi_P(\psi^{-1}(v)) = \phi_Q(v)$ and $\phi_Q(x) = \phi_P(v)$. Furthermore $dv = \abs{\det\left(\frac{d\psi(x)}{dx}\right)}dx = dx$. Finally, we are integrating over $\bbR^d$, and since $\phi$ is a bijection, it can flip the limits of some of the coordinates, however, that is taken into account with using the absolute value of the determinant of the Jacobian.
The second term therefore becomes $\int_{\bbR^d}\mathbb{1}\bc{v \in E} \max \bc{0,\phi_Q(v) - \phi_P(v)}dv$. We now combine the integrands of both the terms, and  substitute $v=x$ as the variable in the second term. This gives us,
\begin{align*}
    \P{y \in E} & = \int_{\bbR^d} \mathbb{1}\bc{x \in E} \br{\min \bc{\phi_P(x), \phi_Q(x)} +  \max \bc{0,\phi_Q(x) - \phi_P(x)}}dx
\end{align*}
Note that for a fixed $x$, if $\phi_P(x) \leq \phi_Q(x)$, the integrand becomes $ \mathbb{1}\bc{x \in E} \br{\phi_P(x) +  \phi_Q(x) - \phi_P(x)} = \mathbb{1}\bc{x \in E} Q(x)$. On the other hand, if $\phi_P(x) > \phi_Q(x)$, the integrand becomes $\mathbb{1}\bc{x \in E}\phi_Q(x)$. Hence, for all cases, we get that,
$$ \P{y \in E} =  \int_{\bbR^d} \mathbb{1}\bc{x \in E} \phi_Q(x)dx = Q(E)$$

We now show that it is a maximal coupling i.e. the probability of accept is $1-\text{TV}(P,Q)$. We have,
\begin{align*}
    \P{\text{accept}} &= \mathbb{E}_{x,u}\left[\mathbb{1}\bc{u \leq \frac{\phi_Q(x)}{\phi_P(x)}}\right] = \int_{\bbR^d} \min\bc{1,\frac{\phi_Q(x)}{\phi_P(x)}}\phi_P(x)dx \\
    & = \int_{\bbR^d} \min\bc{\phi_P(x),\phi_Q(x)}dx  = 1-\text{TV}(P,Q) \qedhere
\end{align*}
\end{proof}

\begin{lemma}
\label{lem:reflection}
Let $P$ and $Q$ be two isotropic probability distributions over $\bbR^d$ with means $\mu_P$ and $\mu_Q$ such that for any vectors $\x,\y$, $\phi_P(\x) = \phi_Q(\y)$ if $\norm{\x-\mu_P} = \norm{\y-\mu_Q}$.
Given vector $\u$ in $\bbR^d$,  the reflection of $\u$ under $(Q,P)$,
$\v = \text{reflect}(\u,\mu_Q,\mu_P) = \mu_Q +(\mu_P -\u)$, 
satisfies:
\begin{enumerate}
  \item Invertibility: $\u = \u_Q + (\mu_P-\v)$
    \item $\phi_Q(\v) = \phi_P(\u)$ and $\phi_P(\v) = \phi_Q(\u)$ 
    \item $\abs{\text{det}\br{\frac{d \ \text{reflect}(\u,\mu_Q,\mu_P) }{d\u}}}=1$
\end{enumerate}

\end{lemma}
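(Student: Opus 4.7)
The plan is to verify each of the three listed properties by direct computation, exploiting the fact that $\text{reflect}(\cdot, \mu_Q, \mu_P)$ is an affine involution about the point $\tfrac{1}{2}(\mu_P + \mu_Q)$. Each part reduces to a one-line calculation, so the proof is essentially bookkeeping; no genuine obstacle is expected.

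First I would establish invertibility by substitution: if $\v = \mu_Q + (\mu_P - \u)$, then $\mu_Q + (\mu_P - \v) = \mu_Q + \mu_P - (\mu_Q + \mu_P - \u) = \u$. This makes it clear that the map is its own inverse (up to swapping the roles of $\mu_P$ and $\mu_Q$), which is precisely the involution property underlying reflection couplings.

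Next, for the density identities, I would compute the two relevant radii. Since $\v - \mu_Q = \mu_P - \u$, we get $\|\v - \mu_Q\| = \|\mu_P - \u\| = \|\u - \mu_P\|$, and the assumed radial symmetry (i.e., $\phi_P(\x) = \phi_Q(\y)$ whenever $\|\x - \mu_P\| = \|\y - \mu_Q\|$) immediately yields $\phi_Q(\v) = \phi_P(\u)$. Symmetrically, $\v - \mu_P = \mu_Q - \u$ gives $\|\v - \mu_P\| = \|\u - \mu_Q\|$, and the hypothesis applied in the opposite direction yields $\phi_P(\v) = \phi_Q(\u)$. Here I would briefly note that ``isotropic'' together with the stated radial hypothesis is exactly what is needed: the densities depend only on distance from the mean, and the two distributions agree as functions of that distance.

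Finally, for the Jacobian, the reflection map is affine: $\u \mapsto (\mu_P + \mu_Q) - \u$, so its Jacobian matrix is $-I_d$, whose determinant is $(-1)^d$ and therefore has absolute value $1$. Combining the three parts completes the proof. I would then remark that these three properties are exactly the hypotheses of \cref{lem:reflection_coupling_basic} with $\psi = \text{reflect}(\cdot, \mu_Q, \mu_P)$, so this lemma is what justifies using reflection as the ``rejection branch'' to obtain a maximal coupling of $P$ and $Q$ in the unlearning algorithm.
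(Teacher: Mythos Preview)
Your proposal is correct and follows the same approach as the paper, which simply states that the proofs follow immediately from the given assumptions. You have spelled out the direct computations that the paper leaves implicit, and the added remark connecting the three properties to the hypotheses of \cref{lem:reflection_coupling_basic} is accurate and helpful.
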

\begin{proof}[Proof of Lemma \ref{lem:reflection}]
The proofs follows immediately using the given assumptions.
\end{proof}

\subsubsection{Coupling Markov chains} 
\label{sec:unlearn-noisy-m-a-sgd-appendix}
We setup some notation to describe the coupling that \cref{alg:unlearn-noisy-m-a-sgd} constructs.
The following discussion is for deletion of index $n$, but it can be verified that the arguments naturally extend to the insertion case. 
We remind that $\mu_{n,m}$ denotes the distribution of sampling $m$ elements uniformly randomly from $[n]$, and mini-batches $b_j \sim \mu_{n,m}$.
Furthermore, we will use  $\mathbf{b}_j = [b_1,b_2,\ldots,b_j]$ denote the set of indices upto $j$.
For dataset $S$ and mini-batch indices $b$, let the gradient $\nabla \hat F_S(\w,\z_{b}) := \frac{1}{|b|}\sum_{j\in b} \nabla f(\w,\z_j)$.
Define $\tilde \w_{j+1} := \accw_j - \eta \nabla F_S(\accw_j,\z_{b_j})$,
$\bar \w_{j+1}  := \tilde \w_{j+1} -\eta \theta_j$ and $\w_{j+1} := \cP(\bar \w_{j+1})$. Note that $\tilde \w_j$ is also function of $b_j$ but this dependency is not highlighted for notational simplicity.

The iterates and the mini-batches $[(\bar \w_2,b_1),(\bar \w_3,b_2),\ldots, (\bar \w_{T+1},b_T)]$ produced by Algorithm \ref{alg:noisy-m-a-sgd} is a sample from a $T$-step first order Markov Chain over an uncountable state space $\bbR^d \times [n]^*$.
We remark that $\bar \w_1$ is a constant initialization, and so isn't considered.
Let $P$ be the joint distribution over the $T$ iterates $\times$ mini-batches. The joint density of $P$ can be factored as,
\begin{align*}
    \phi_P((\bar \w_2,b_1),(\bar \w_3,b_2),\ldots, (\bar \w_{T+1},b_T)) = \phi_{P}(\bar \w_2,b_1)\phi_{P}(\bar\w_3,b_2 | \w_2) \ldots \phi_{P}(\bar\w_{T+1},b_T|\w_{T},\w_{T-1})
\end{align*}
where $\phi_P(\bar \w_2,b_1) = \bar \phi_P(\bar \w_2 | b_1)\mu_{n,m}(b_1)$ and $\bar \phi_P(\bar \w_2|b_1)$ is the density of $\cN(\tilde \w_2, \eta^2\sigma^2\bbI)$. 
Similarly, the conditionals $\phi_P(\bar \w_j,b_{j-1}| \w_{j-1},\w_{j-2}) = \bar \phi_P(\bar \w_j | b_{j-1},\w_{j-1},\w_{j-2}) \mu_{n,m}(b_{j-1})$. 
Furthermore, let $\tilde P$ denote the marginal of $[\bar \w_2,\bar \w_2, \ldots, \bar \w_{T+1}]$, the joint density of which can be factored as, 
\begin{align*}
    \tilde \phi_{\tilde P}(\bar \w_2, \bar \w_3, \ldots, \bar \w_{T+1}) = \phi_{\tilde P}(\bar \w_2)\phi_{\tilde P}(\bar\w_3 | \w_2) \ldots \phi_{\tilde P}(\bar\w_{T+1}| \w_{T},\w_{T-1})
\end{align*}
where $\phi_{\tilde P}(\bar \w_2) = \mathbb{E}_{b_1}\phi_P(\bar \w_2,b_1)$, and the conditional $\phi_{\tilde P}(\bar \w_j| \w_{j-1}, \w_{j-2}) = \mathbb{E}_{b_{j-1}} \bar \phi_P(\bar \w_j, b_{j-1} | \w_{j-1},\w_{j-2})$.
Finally, given a fixed mini-batch sequence $\mathbf{b}=\bc{b_1,b_2,\ldots,b_T}$, let $P_{\mathbf{b}}$ denote the joint conditional distribution of $\bc{\bar \w_2,\bar \w_3, \ldots, \bar \w_{T+1}}$ given $\mathbf{b}$. In this case, $P_{\mathbf{b}}$ factorizes as:
\begin{align*}
    \phi_{P_{\mathbf{b}}}(\bar \w_2, \bar \w_3, \ldots, \bar \w_{T+1}) = \phi_{P_{b_1}}(\bar \w_2)\phi_{ P_{b_2}}(\bar\w_3 | \w_2) \ldots \phi_{ P_{b_T}}(\bar\w_{T+1}| \w_{T},\w_{T-1})
\end{align*}
where $\phi_{P_{b_1}}(\bar \w_2) = \bar \phi_P(\w_2 | b_1)$ and $\phi_{ P_{b_{j-1}}}(\bar\w_j | \w_{j-1},\w_{j-2}) = \bar \phi_P(\bar \w_j | b_{j-1}, \w_{j-1},\w_{j-2})$.
We similarly have a Markov Chain to generate the iterates for dataset $S'$ - call this joint distribution over iterates and mini-batches as $Q$, the marginals over iterates as $\tilde Q$ and for a given $\mathbf{b} \sim \bc{\mu_{n-1,m}}^{\otimes T}$, the conditionals over the iterates as $Q_{\mathbf{b}}$.

\begin{figure}[!htpb]
    \centering
    \begin{tikzpicture}
    \node[box,draw=white!100] (Latent) {\textbf{Mini-batches}};
    \node (L0) [right=of Latent] {};
    \node[main] (L1) [right=of L0] {$b_1$};
    \node[main] (L2) [right=of L1] {$b_2$};
    \node[main] (L3) [right=of L2] {$b_3$};
    \node[main] (Lt) [right=of L3] {$b_T$};
    \node[main,fill=black!10] (O1) [below=of L1] {$\bar \w_2$};
    \node[main,fill=black!10] (O0) [left=of O1] {$\bar \w_1$};
    \node[main,fill=black!10] (O2) [below=of L2] {$\bar \w_3$};
    \node[main,fill=black!10] (O3) [below=of L3] {$\bar \w_4$};
    \node[main,fill=black!10,scale=0.69] (Ot) [below=of Lt] {$\mathbf{\bar w}_{T+1}$};
    \node at (8.5, -2.5)   (c0) {};
    \node at (8.6, -1.9)   (c1) {};
  \node[box,draw=white!100,left=of O0] (Observed) {\textbf{Iterates}};
  \path (L3) -- node[auto=false]{\ldots} (Lt);
  \path (O0) edge [connect] (O1)
        (O0) edge[bend right] [connect] (O2)
        (O1) edge [connect] (O2)
        (O1) edge[bend right] [connect] (O3)
        (O2) edge [connect] (O3)
        (c0) edge[bend right]  [connect] (Ot)
        (c1) edge [connect] (Ot)
        (O3) -- node[auto=false]{\ldots} (Ot);
  \path (L1) edge [connect] (O1);
  \path (L2) edge [connect] (O2);
  \path (L3) edge [connect] (O3);
  \path (Lt) edge [connect] (Ot);

  \draw [dashed, shorten >=-1cm, shorten <=-1cm]
      ($(Latent)!0.5!(Observed)$) coordinate (a) -- ($(Lt)!(a)!(Ot)$);
\end{tikzpicture}
   \caption{Markov chain for \emph{noisy-m-A-SGD} Algorithm}
    \label{fig:markov_chain_noisy_m_A_SGD}
\end{figure}
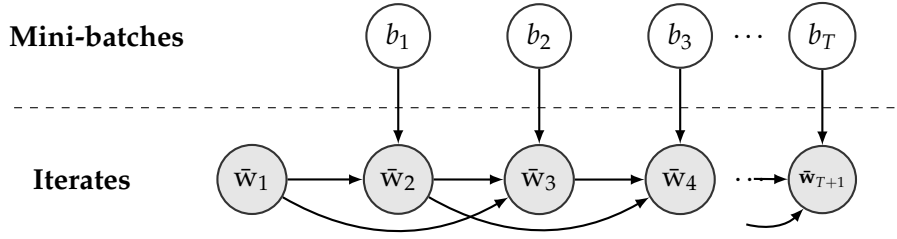

We now describe how the unlearning Algorithm \ref{alg:unlearn-noisy-m-a-sgd} constructs a coupling between $P$ and $Q$ to generate $(\mathbf{\bar w}^{(1)},\mathbf{\bar w}^{(2)})$. We first describe the coupling of mini-batch indices.
Sample $\mathbf{b} \sim (\mu^{m}_{n})^{\otimes T}$, let $\mathbf{b}^{(1)} = \mathbf{b}$. 
We now look at all $b_j^{(1)} \in \mathbf{b}^{(1)}$: if $n \not \in b_j^{(1)}$, then let $b_j^{(2)} = b_j^{(1)}$, otherwise for each such $b_j^{(1)}$, we replace $n$ by randomly sampling an index from $[n] \backslash b_j^{(1)}$, and call this $b_j^{(2)}$. We then define the ordered set $\mathbf{b}^{(2)} = \bc{b_j^{(2)}}_{j=1}^T$.
From \cref{claim:sgd_deletion}, this is a valid coupling of mini-batch indices.
Sample $\mathbf{\bar w} = [\bar \w_2,\bar \w_3,\ldots, \bar \w_{T+1}] \sim P_{\mathbf{b}^{(1)}}$, which corresponds to training with Algorithm \ref{alg:noisy-m-a-sgd} on dataset $S$. 
Set $\mathbf{\bar w}^{(1)}:=\mathbf{\bar w}$. 
To generate $\mathbf{\bar w}^{(2)}$, we do rejection sampling steps at each iteration. At the first step, we sample $u_1\sim  \text{Unif}(0,1)$, and check if $u_1 \leq \frac{\phi_{Q_{\mathbf{b}^{(2)}}}(\bar \w_2)}{\phi_{P_{\mathbf{b}^{(1)}}}(\bar \w_2)}$. If the step succeeds, then we proceed to the second iteration, wherein we again do a step of rejection sampling with ratio of conditional densities and so on. However, if anyone of the rejection sampling step fails, lets say the $t^{\text{th}}$ step, then we do a \emph{reflection} of iterate $\bar \w_{t+1}$ about the mid-point of the means of $P_{\mathbf{b}^{(1)}}(\cdot|\w_{t},\w_{t-1})$ and $Q_{\mathbf{b}^{(2)}}(\cdot|\w_{t},\w_{t-1})$ , which are  $\tilde \w^{(1)}_{t+1} = \w_{t} - \eta \nabla F_{S}(\w_{t},\z_{b_t})$ and $\tilde \w^{(2)}_{t+1} = \w_{t} - \eta \nabla F_{S'}(\w_{t},\z_{b_t'})$ respectively.
Set $\bar \w^{(2)}_{t+1} = \text{reflect}(\bar \w_{t+1},\tilde \w^{(2)}_{t+1},\tilde \w^{(1)}_{t+1})$. 
After the reflection, we continue training on dataset $S'$ which corresponds to continue sampling from the $(t+1)^{\text{th}}$ step of the Markov chain for $Q_{\mathbf{b}^{(2)}}$ conditioned on the $t^{\text{th}}$ sample being $\bar \w^{(2)}_{t}$.
This generates the random variables $\mathbf{\bar w}^{(1)}$ and $\mathbf{\bar w}^{(2)}$.

We now show that this is indeed a coupling.

\begin{lemma}
\label{lem:m-sgd_coupling}
For any measurable set $E \subseteq \bbR^{dT}$, $\P{\mathbf{\bar w}^{(2)} \in E} = \tilde Q(E)$
\end{lemma}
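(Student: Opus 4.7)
The plan is to prove the lemma by showing a stronger statement: conditional on the coupled mini-batch sequence $\mathbf{b}^{(2)}$, the trajectory $\mathbf{\bar w}^{(2)}$ has law $Q_{\mathbf{b}^{(2)}}$. Marginalizing over $\mathbf{b}^{(2)}$, whose law is $\mu_{n-1,m}^{\otimes T}$ by \cref{claim:sgd_deletion}, then gives the result, since $\tilde Q(E) = \mathbb{E}_{\mathbf{b}^{(2)} \sim \mu_{n-1,m}^{\otimes T}}\bigl[Q_{\mathbf{b}^{(2)}}(E)\bigr]$ by definition of $\tilde Q$.

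To establish the conditional statement, I would induct on the step index $j$, showing that for every $j \in \{1,\ldots,T\}$, given $\mathbf{b}^{(1)}, \mathbf{b}^{(2)}$ and the past iterates $\bar\w^{(2)}_2,\ldots,\bar\w^{(2)}_j$, the next iterate $\bar\w^{(2)}_{j+1}$ is distributed according to $\phi_{Q_{b_j^{(2)}}}(\,\cdot\mid \w^{(2)}_j,\w^{(2)}_{j-1})$. There are two regimes to handle. If no rejection has yet occurred by step $j$, we have $\w^{(1)}_i=\w^{(2)}_i$ for $i\leq j$, so $\bar\w^{(1)}_{j+1}$ is a sample from $P_{b_j^{(1)}}(\,\cdot\mid \w_j^{(1)},\w_{j-1}^{(1)})$, which is the isotropic Gaussian $\mathcal{N}(\tilde\w^{(1)}_{j+1},\eta^2\sigma^2\mathbb{I})$, and we apply a rejection-sampling-plus-reflection step against $Q_{b_j^{(2)}}(\,\cdot\mid \w_j^{(2)},\w_{j-1}^{(2)})= \mathcal{N}(\tilde\w^{(2)}_{j+1},\eta^2\sigma^2\mathbb{I})$; by Lemma \ref{lem:reflection} the reflection map $\psi(\u)=\mathrm{reflect}(\u,\tilde\w^{(2)}_{j+1},\tilde\w^{(1)}_{j+1})$ is a density-swapping involution with unit Jacobian, so Lemma \ref{lem:reflection_coupling_basic} applies and yields the desired conditional law. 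If a rejection has already occurred at some step $t < j$, then $\bar\w^{(2)}_{j+1}$ is drawn directly by continuing \emph{noisy-m-A-SGD} on $S'$, i.e.\ by sampling from $Q_{b_j^{(2)}}(\,\cdot\mid \w^{(2)}_j,\w^{(2)}_{j-1})$, so the claim is immediate.

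Combining the two cases, the one-step transition kernel for $\mathbf{\bar w}^{(2)}$ conditional on the past matches exactly the Markov kernel of $Q_{\mathbf{b}^{(2)}}$, which completes the induction and shows that the joint conditional density factors as $\phi_{Q_{\mathbf{b}^{(2)}}}$. The main technical obstacle is the delicate bookkeeping at the first rejection step: one must argue that although $\bar\w^{(2)}_{t+1}$ is produced deterministically as a function of $\bar\w^{(1)}_{t+1}$ and the acceptance flag, its conditional law is still $Q_{b_t^{(2)}}(\,\cdot\mid \w_t^{(2)},\w_{t-1}^{(2)})$, and that the post-failure steps are conditionally independent of the rejection history given $(\w^{(2)}_{t},\w^{(2)}_{t-1})$. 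Both facts follow from Lemma \ref{lem:reflection_coupling_basic} together with the strong Markov property of the re-initialized chain on $S'$, but writing them out carefully is the only nontrivial piece.
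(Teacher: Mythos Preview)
Your proposal is correct and follows essentially the same approach as the paper's proof. Both argue by induction that, conditionally on the coupled mini-batches $(\mathbf{b}^{(1)},\mathbf{b}^{(2)})$, the trajectory $\mathbf{\bar w}^{(2)}$ has law $Q_{\mathbf{b}^{(2)}}$, and then marginalize; both split into the ``no rejection yet'' regime (handled via Lemmas~\ref{lem:reflection} and~\ref{lem:reflection_coupling_basic}) versus the ``rejection already occurred'' regime (fresh sampling from $Q$), and both identify the first-rejection step as the place where the bookkeeping must be done carefully. The only cosmetic difference is that the paper inducts on the chain length $T$ while you induct on the step index $j$ and phrase things in terms of matching one-step transition kernels---these are equivalent presentations of the same argument.
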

\begin{proof}[Proof of \cref{lem:m-sgd_coupling}]
We will first show that $\P{\mathbf{\bar w}^{(2)}\in E \vert  (\mathbf{b}^{(1)}, \mathbf{b}^{(2)})} = Q_{\mathbf{b}^{(2)}}(E)$. The proof is based on induction on the length of the Markov chain $T$. 
Define $\mathbf{\bar w}_T^{(2)} := \bc{\bar \w_2^{(2)},\cdots, \bar \w_{T-1}^{(2)}}$.
The key to the proof is the observation that the marginals ${P_{b_1^{(1)}}}(\cdot)$ and ${Q_{b_1^{(2)}}}(\cdot)$ are Gaussian $\cN(\tilde \w_2^{(1)},\eta^2\sigma^2\bbI)$ and $\cN(\tilde \w_2^{(2)},\eta^2\sigma^2\bbI)$ respectively, and the conditionals ${P_{b_j^{(1)}}}(\cdot | \w_{j})$ and ${Q_{b_j^{(2)}}(\cdot| \w_{j})}$ are also Gaussian $\cN(\tilde \w_{j+1}^{(1)},\eta^2\sigma^2\bbI)$ and $\cN(\tilde \w_{j+1}^{(2)},\eta^2\sigma^2\bbI)$. 

For $T=1$, we only care about the marginals ${P_{b_1^{(1)}}}(\cdot)$ and ${Q_{b_1^{(2)}}}(\cdot)$, which as argued before, are normally distributed.
From Lemma \ref{lem:reflection}, we have established that the reflection map satisfies the conditions in Lemma \ref{lem:reflection_coupling_basic}.
Combining these,
we have that the base case $T=1$ follows from the reflection coupling result stated as Lemma \ref{lem:reflection_coupling_basic}.

We proceed to the induction step.
There are two cases, depending on whether we do a rejection sampling in the $T^{th}$ step or not: we call these "rej-sample" and "no-rej-sample" respectively. If we do a rejection sampling, we further have two cases (1a). accept: either all rejection samplings, including the one in the $T^{\text{th}}$ step are accepts, (1b). reflect: all rejection samplings, except the one in the $T^{\text{th}}$ step are accepts, and in the $T^{\text{th}}$ step, we reflect. Finally, if we don't do a rejection sampling step, we have the third case (2). reject: some rejection sampling prior to $T$ results in reject; in this case, the $T^{\text{th}}$ sample $\bar\w_{T+1}^{(2)} \sim Q_{b_T^{(2)}}(\cdot | \w_{T}^{(2)}, \w_{T-1}^{(2)})$. 
Cases (1) and (2) partition the whole event space for $T$ draws, whereas cases (1a) and cases (1b) partitions the space of the $T^{\text{th}}$ draw, conditioned on the first event. Also note that case (1) vs (2) distinction is measurable w.r.t. the natural filtration generated by the Markov chain upto $T-1$ draws.

Note that conditioned on the events "rej sample" as well as $\mathbf{w}^{(2)}_{T-1}$, the last step is just a one-step reflection coupling method. To elaborate, the conditionals $Q_{b_T^{(2)}} (\cdot | \w_{T}^{(2)},\w_{T-1}^{(2)})$ and $P_{b_T^{(2)}} (\cdot | \w_{T}^{(2)},\w_{T-1}^{(2)})$ used in the $T^{\text{th}}$ rejection sampling are Gaussians, which along with the reflection map satisfies properties of Lemma \ref{lem:reflection_coupling_basic}, as in the base case. 
Let $E_{last} = \bc{y | \exists \mathbf{x}: (\mathbf{x},y) \in E}$ be the projection of $E$ on the last co-ordinate and $E_{y} = \bc{ \mathbf{x}| (\mathbf{x},y) \in E}$.
According to \cref{lem:reflection}, the conditional distribution of $\w_{T+1}^{(2)}$ is $Q_{b_T^{(2)}} (\cdot | \w_{T}^{(2)},\w_{T-1}^{(2)})$:
\begin{align*}
     &\P{\bar \w_{T+1}^{(2)}\in E_{last} \vert \text{rej-sample}, { \mathbf{\bar w}_{T-1}^{(2)}}, (\mathbf{b}^{(1)},\mathbf{b}^{(2)})} \\
     &=  \P{\bar \w_{T+1}^{(2)}\in E_{last}, \text{accept} \vert \text{rej-sample}, { \mathbf{\bar w}_{T-1}^{(2)}}, (\mathbf{b}^{(1)},\mathbf{b}^{(2)})}  \\
      &\qquad +\P{\bar \w_{T+1}^{(2)}\in E_{last}, \text{reflect} \vert \text{rej-sample}, { \mathbf{\bar w}_{T-1}^{(2)}}, (\mathbf{b}^{(1)},\mathbf{b}^{(2)})} \\
      &= \Large\int_{\bbR^{d}} \mathbb{1}\br{\bar \w_{T+1}^{(2)}\in E_{last}}\phi_{Q_{b_T^{(2)}}}(\bar \w_{T+1}^{(2)}| \w_{T}^{(2)},  \w_{T-1}^{(2)})
    d\bar \w_{T+1}^{(2)}
\end{align*}

For the "no-rej-sample" case, we have:
\begin{align*}
    &\P{\bar \w_{T+1}^{(2)}\in E_{last}\vert \text{no-rej-sample}, { \mathbf{\bar w}_{T-1}^{(2)}}, (\mathbf{b}^{(1)},\mathbf{b}^{(2)})} =
    \underset{\bar \w_{T+1}^{(2)}}{\mathbb{E}}  \left[ \mathbb{1}\br{\bar \w_{T+1}^{(2)}\in E_{last}}\right]\\
      & = \Large\int_{\bbR^{d}} \mathbb{1}\br{\bar \w_{T+1}^{(2)}\in E_{last}}  \phi_{Q_{b_T^{(2)}}}(\bar \w_{T+1}^{(2)}|\w_{T}^{(2)},\w_{T-1}^{(2)}) d\bar \w_{T+1}^{(2)}\\
\end{align*}

We will now combine the two cases. Let $\phi^{(r)}(\cdot)$ and $\phi^{(nr)}(\cdot)$ denote the densities of $\mathbf{\bar w}_{T-1}^{(2)}$ under the "rej-sample" and "no-rej-sample" events respectively.
\begin{align*}
    &\P{\mathbf{\bar w}^{(2)} \in E | (\mathbf{b}^{(1)},\mathbf{b}^{(2)})} 
    \\&=     \P{\bar \w_{T+1}^{(2)}\in E_{last} \, \Big\vert \, \text{rej-sample}, { \mathbf{\bar w}_{T-1}^{(2)}}, (\mathbf{b}^{(1)},\mathbf{b}^{(2)})} \P{\mathbf{\bar w}_{T-1}^{(2)} \in E_{\bar \w_{T+1}^{(2)}},\text{rej-sample}}\\
    &+ \P{\bar \w_{T+1}^{(2)}\in E_{last} \, \Big\vert \, \text{no-rej-sample}, { \mathbf{\bar w}_{T-1}^{(2)}}, (\mathbf{b}^{(1)},\mathbf{b}^{(2)})} \P{\mathbf{\bar w}_{T-1}^{(2)} \in E_{\bar \w_{T+1}^{(2)}},\text{no-rej-sample}}\\
    & = \int_{\bbR^{dT}}\mathbb{1}\bc{\bar \w_{T+1}^{(2)}\in E_{last}}\mathbb{1}\bc{\mathbf{\bar w}_{T-1}^{(2)} \in E_{\bar \w_{T+1}^{(2)}}} \\&\cdot\bc{\mathbb{1}_\text{rej-sample}\bc{\mathbf{\bar w}_{T-1}^{(2)}} \phi^{(r)}(\mathbf{\bar w}_{T-1}^{(2)}) + \mathbb{1}_{\text{no-rej-sample}}\bc{\mathbf{\bar w}_{T-1}^{(2)}}\phi^{(nr)}(\mathbf{\bar w}_{T-1}^{(2)})}
    \phi_{Q_{{b_T^{(2)}}}}(\bar \w_{T+1}^{(2)}|\w_{T}^{(2)},\w_{T-1}^{(2)})  d \mathbf{\bar w}_{T}^{(2)}\\
    & = \int_{\bbR^{dT}}\mathbb{1}\bc{\mathbf{\bar w}^{(2)} \in E}
    \phi_{Q_{\mathbf{b}_{T-1}^{(2)}}}(\mathbf{\bar w}_{T-1}^{(2)})
    \phi_{Q_{{b^{(2)}_T}}}(\bar \w_{T+1}^{(2)}| \w_{T}^{(2)},\w_{T-1}^{(2)})  d \mathbf{\bar w}_{T}^{(2)}\\
        & = \int_{\bbR^{dT}}\mathbb{1}\bc{\mathbf{\bar w}^{(2)} \in E}
    \phi_{Q_{\mathbf{b}^{(2)}}}(\mathbf{w}_{T}^{(2)})
    d \mathbf{\bar w}_{T}^{(2)} = Q_{\mathbf{b}^{(2)}}( E)
\end{align*}
where the third equality uses the induction hypothesis that $\mathbf{\bar w}_{T-1}^{(2)}$, conditioned on $\mathbf{b}^{(1)}$ and $\mathbf{b}^{(2)}$, is distributed as $Q_{\mathbf{b}_{T-1}^{(2)}}$.
Finally, we integrate with respect to the coupling generating $ (\mathbf{b}^{(1)},\mathbf{b}^{(2)})$; we get

\begin{align*}
    \P{\mathbf{\bar w}^{(2)} \in E} &=  \sum_{(\mathbf{b}^{(1)},\mathbf{b}^{(2)})}\P{\mathbf{\bar w}^{(2)} \in E | (\mathbf{b}^{(1)},\mathbf{b}^{(2)})} \P{\mathbf{b}^{(1)},\mathbf{b}^{(2)}}\\
    & =\sum_{(\mathbf{b}^{(1)},\mathbf{b}^{(2)})}Q_{\mathbf{b}^{(2)}}( E) \P{\mathbf{b}^{(1)},\mathbf{b}^{(2)}} = \sum_{\mathbf{b}^{(2)}}Q_{\mathbf{b}^{(2)}}( E) \P{\mathbf{b}^{(2)}} = \tilde Q(E)
\end{align*}

This completes the proof.
\end{proof}
We now show that not only the marginals over the iterates, but the entire state maintained by the algorithm, which includes the mini-batching indices is transported.
\begin{lemma}
\label{lem:noisy-m-sgd-transport}
For any measurable event in $\br{\bbR^d \times [n]^m}^{\otimes T}$, we have
\begin{align*}
    \P{(\mathbf{\bar w}^{(2)}, \mathbf{b}^{(2)}) \in E} = Q(E)
\end{align*}
\end{lemma}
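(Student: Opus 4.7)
The plan is to upgrade Lemma~\ref{lem:m-sgd_coupling}, which controls only the marginal over the iterates, to the full joint over iterates and mini-batch indices. The key observation is that the computation inside the proof of Lemma~\ref{lem:m-sgd_coupling} actually established the \emph{conditional} statement
\[
\P{\mathbf{\bar w}^{(2)} \in E' \mid (\mathbf{b}^{(1)}, \mathbf{b}^{(2)})} \;=\; Q_{\mathbf{b}^{(2)}}(E'),
\]
and moreover this conditional depends only on $\mathbf{b}^{(2)}$, since the reflection map and the rejection-sampling densities at step $t$ only use $\tilde\w_{t+1}^{(2)}$ on the $Q$-side (which is determined by $b_t^{(2)}$ and the previously-sampled iterates). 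Thus by the tower property, $\P{\mathbf{\bar w}^{(2)} \in E' \mid \mathbf{b}^{(2)}} = Q_{\mathbf{b}^{(2)}}(E')$.

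Second, I would argue that the coupling construction produces $\mathbf{b}^{(2)}$ with the correct marginal law $\mu_{n-1,m}^{\otimes T}$. This is because the coupling of mini-batches acts component-wise and symmetrically across iterations (as noted in the setup at the start of Section~\ref{sec:proofs-unlearning}), and in each coordinate Claim~\ref{claim:sgd_deletion} gives $\P{b_j^{(2)} = b} = \mu_{n-1,m}(b)$. Independence across $j$ is preserved because the replacement step uses fresh randomness at each iteration.

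Third, combining these two facts with disintegration finishes the proof. For a measurable $E \subseteq (\mathbb{R}^d \times [n-1]^m)^{\otimes T}$, let $E_{\mathbf{b}} = \{\mathbf{\bar w} : (\mathbf{\bar w},\mathbf{b}) \in E\}$. Then
\begin{align*}
\P{(\mathbf{\bar w}^{(2)}, \mathbf{b}^{(2)}) \in E}
&= \sum_{\mathbf{b}} \P{\mathbf{\bar w}^{(2)} \in E_{\mathbf{b}} \mid \mathbf{b}^{(2)} = \mathbf{b}}\,\P{\mathbf{b}^{(2)} = \mathbf{b}} \\
&= \sum_{\mathbf{b}} Q_{\mathbf{b}}(E_{\mathbf{b}})\, \mu_{n-1,m}^{\otimes T}(\mathbf{b}) \;=\; Q(E),
\end{align*}
where the final equality is just the disintegration of $Q$ into its mini-batch marginal and the conditional over iterates given mini-batches, which matches the factorization of $Q$ written down before Lemma~\ref{lem:m-sgd_coupling}.

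I do not expect a real obstacle here, since the bulk of the analytic work was already done in Lemma~\ref{lem:m-sgd_coupling}. The only subtle point that needs to be stated cleanly is that the conditional law of $\mathbf{\bar w}^{(2)}$ given $(\mathbf{b}^{(1)}, \mathbf{b}^{(2)})$ only depends on $\mathbf{b}^{(2)}$, so that reducing to conditioning on $\mathbf{b}^{(2)}$ alone is justified. Once that is in place, the identity $\P{(\mathbf{\bar w}^{(2)}, \mathbf{b}^{(2)}) \in E} = Q(E)$ follows by one line of Fubini against the correct mini-batch marginal supplied by Claim~\ref{claim:sgd_deletion}.
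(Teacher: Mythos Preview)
Your proposal is correct and follows essentially the same route as the paper: both combine the conditional identity $\P{\mathbf{\bar w}^{(2)} \in \cdot \mid \mathbf{b}^{(1)}, \mathbf{b}^{(2)}} = Q_{\mathbf{b}^{(2)}}(\cdot)$ established inside the proof of Lemma~\ref{lem:m-sgd_coupling} with the mini-batch marginal from Claim~\ref{claim:sgd_deletion}, and then integrate. The only cosmetic difference is that the paper checks the identity on product events $E = E_1 \times E_2$ (implicitly relying on a $\pi$--$\lambda$ argument to extend), whereas your disintegration via sections $E_{\mathbf{b}}$ handles general measurable $E$ directly; one small caveat is that your parenthetical justification for why the conditional depends only on $\mathbf{b}^{(2)}$ is slightly imprecise---the rejection-sampling mechanism does use $\mathbf{b}^{(1)}$ through the denominator $\phi_{P_{b_j^{(1)}}}$, but the \emph{resulting} conditional law equals $Q_{\mathbf{b}^{(2)}}$, which is precisely what the proof of Lemma~\ref{lem:m-sgd_coupling} shows.
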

\begin{proof}[Proof of \cref{lem:noisy-m-sgd-transport}]
We first decompose the event $E \subseteq \br{\bbR^d \times [n]^m}^{\otimes T}$ as two events, $E=E_1 \times E_2$ where $E_1 \subseteq \bbR^{dT}$ and $E_2 \subseteq \br{[n]^m}^T$. We have
\begin{align*}
    \P{(\mathbf{\mathbf{\bar w}}^{(2)}, \mathbf{b}^{(2)}) \in E} &=  
    \mathbb{E}_{\mathbf{b}^{(1)}}
    \P{\mathbf{\mathbf{\bar w}}^{(2)} \in E_1 |\mathbf{b}^{(1)},  \mathbf{b}^{(2)} } \P{\mathbf{b}^{(2)} \in E_2} \\
     & = \mathbb{E}_{\mathbf{b}^{(1)}} \tilde Q_{\mathbf{b}^{(2)}}(E_1)\mu_{n,m}^{\text{del}\otimes T}(E_2) \\
     & = \mathbb{E}_{\mathbf{b}^{(1)}} \tilde Q_{\mathbf{b}^{(2)}}(E_1)\mu_{n-1,m}^{\otimes T}(E_2) \\
     & = \tilde Q_{\mathbf{b}^{(2)}}(E_1)\mu_{n-1,m}^{\otimes T}(E_2) = Q(E)
\end{align*}

where the second and third equality follows from \cref{lem:m-sgd_coupling} and \cref{claim:sgd_deletion}, and the final equality follows from the definition of event $E$ and probability distribution $Q$.
\end{proof}

We now lower bound the probability of accepting at all rejection sampling steps.

\begin{lemma}
Let ``\text{accept}" be the event in which all rejection sampling result in accepts so there is no reflection or recompute. The probability of accept is lower bounded as,
\label{lem:noisy-m-sgd-accept}
$$\P{\text{accept}} \geq 1 - \frac{\sqrt{T}\rho}{8}$$
\end{lemma}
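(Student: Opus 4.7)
The plan is to bound the probability of the complementary event ``some rejection'' via a union bound over the $T$ iterations, and then bound each per-iteration rejection probability by the total variation distance between the two conditional Gaussians that drive the rejection sampling step.

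First, for the $t^{\text{th}}$ rejection sampling step, conditioned on reaching step $t$ (i.e.\ all prior accepts), on the iterates $\w_{t-1}, \w_t$, and on the coupled mini-batches $b_t^{(1)}, b_t^{(2)}$, the probability of rejection is, by the usual rejection-sampling identity,
\begin{align*}
\E_{\bar\w_{t+1}\sim P_{b_t^{(1)}}(\cdot\mid \w_t,\w_{t-1})}\!\Bigl[1-\min\Bigl(1,\tfrac{\phi_{Q_{b_t^{(2)}}}(\bar\w_{t+1}\mid\w_t,\w_{t-1})}{\phi_{P_{b_t^{(1)}}}(\bar\w_{t+1}\mid\w_t,\w_{t-1})}\Bigr)\Bigr] \;=\; \mathrm{TV}\bigl(P_{b_t^{(1)}}(\cdot\mid\w_t,\w_{t-1}),Q_{b_t^{(2)}}(\cdot\mid\w_t,\w_{t-1})\bigr).
\end{align*}
I would emphasize that, under the event of all prior accepts, the $P$-chain and $Q$-chain iterates agree, so conditioning on the common $(\w_{t-1},\w_t)$ is unambiguous, and the bound we obtain for the conditional TV will be uniform in $(\w_{t-1},\w_t)$.

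Next, I would observe that the two conditional distributions are both Gaussians $\cN(\tilde\w_{t+1}^{(1)},\eta^2\sigma^2 I)$ and $\cN(\tilde\w_{t+1}^{(2)},\eta^2\sigma^2 I)$ with the same covariance, and with means differing only by $\eta$ times the gradient contribution of the single index in which $b_t^{(1)}$ and $b_t^{(2)}$ disagree. Hence $\|\tilde\w_{t+1}^{(1)}-\tilde\w_{t+1}^{(2)}\|\leq 2\eta G/m$ by $G$-Lipschitzness. Applying the standard bound for the TV distance between two Gaussians of equal covariance (via the 1D reduction along the line connecting the means and $\Phi(x)-\tfrac12 \leq x/\sqrt{2\pi}$) gives, whenever $b_t^{(1)}\neq b_t^{(2)}$,
\begin{align*}
\mathrm{TV}\bigl(P_{b_t^{(1)}}(\cdot\mid\w_t,\w_{t-1}),Q_{b_t^{(2)}}(\cdot\mid\w_t,\w_{t-1})\bigr)\;\leq\;\frac{2G}{m\sigma\sqrt{2\pi}},
\end{align*}
and the TV is of course $0$ when $b_t^{(1)}=b_t^{(2)}$.

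Finally, I would take expectation over the mini-batch coupling and use the union bound over $t=1,\dots,T$. The coupling construction in \cref{alg:unlearn-noisy-m-a-sgd} has $b_t^{(1)}\neq b_t^{(2)}$ only on iterations where the edited point is present in $b_t^{(1)}$, which occurs with probability at most $m/n$ per iteration. Therefore
\begin{align*}
\P{\text{some reject}} \;\leq\; \sum_{t=1}^T \P{\text{reject at step }t} \;\leq\; T\cdot\frac{m}{n}\cdot\frac{2G}{m\sigma\sqrt{2\pi}}\;=\;\frac{2TG}{n\sigma\sqrt{2\pi}}.
\end{align*}
Plugging in $\sigma=8\sqrt{T}G/(n\rho)$ from \cref{prop:upper-bound-noisy-m-a-sgd} yields $\frac{\sqrt{T}\rho}{4\sqrt{2\pi}}\leq\frac{\sqrt{T}\rho}{8}$ (since $4\sqrt{2\pi}>8$), giving $\P{\text{accept}}\geq 1-\sqrt{T}\rho/8$.

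The main obstacle is conceptual bookkeeping rather than computation: one has to argue carefully that, conditional on reaching step $t$ (all previous accepts), the $P$-chain and $Q$-chain iterates coincide so that the conditional-TV bound applies uniformly, and that the union bound only needs the marginal per-step rejection probability (no delicate handling of the reflection branch is required, since that branch is already on the ``some reject'' side). Apart from this, the argument is a direct chain of Gaussian TV estimates and the sub-sampling probability.
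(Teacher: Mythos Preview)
Your proposal is correct and follows essentially the same route as the paper: the paper lower bounds $\P{\text{accept}\mid \mathbf{b}^{(1)},\mathbf{b}^{(2)}}$ by the product $\prod_t(1-\mathrm{TV}_t)$ and then applies the Weierstrass inequality $\prod(1-x_t)\geq 1-\sum x_t$, which is exactly your union bound phrased dually, and both arguments then invoke the same per-step Gaussian TV estimate together with $\P{b_t^{(1)}\neq b_t^{(2)}}\leq m/n$ and the substitution $\sigma=8\sqrt{T}G/(n\rho)$. Your constant $2/\sqrt{2\pi}$ is a hair sharper than the paper's cruder $\mathrm{TV}\leq G/(m\sigma)$, but this is immaterial for the stated bound.
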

\begin{proof}[Proof of \cref{lem:noisy-m-sgd-accept}]
We evaluate the probability that all rejection sampling steps result in accepts. We first do it conditioned on $\mathbf{b}^{(1)},\mathbf{b}^{(2)}$
\begin{align*}
    \P{\text{accept}| \mathbf{b}^{(1)},\mathbf{b}^{(2)}} &=   \underset{\bar \w_T^{(1)}, \bar \w_{T+1}^{(2)}, \bc{u_j}_{j=1}^T}{\mathbb{E}}  \left[ \prod_{j=1}^T\mathbb{1}_{\text{accept}}(u_j)\right]\\
    & = \underset{\mathbf{\bar w}_T^{(1)}, \mathbf{\bar w}_{T+1}^{(2)}}{\mathbb{E}}  \left[ \prod_{j=1}^T \P{u_j \leq \frac{\phi_{Q_{b^{(2)}_j}} (\bar \w_{j+1}^{(1)} | \w_{j}^{(1)}, \w_{j-1}^{(1)}) }{\phi_{P_{b^{(1)}_j}} (\bar \w_{j+1}^{(1)} | \w_{j}^{(1)}, \w_{j-1}^{(1)})  } \Big \vert \mathbf{\bar w}_T^{(1)}, \mathbf{\bar w}_T^{(2)}}\right]\\
    & = \int_{\bbR^{dT}}   \prod_{j=1}^T \min\bc{\phi_{P_{b^{(1)}_j}} (\bar \w_{j+1}^{(1)} | \w_{j}^{(1)})  \w_{j-1}^{(1)}) , \phi_{Q_{b^{(2)}_j}} (\bar \w_{j+1}^{(1)} |  \w_{j}^{(1)}), \w_{j-1}^{(1)})} d \mathbf{\bar w}^{(1)}_{T} \\
    &  = \prod_{j=1}^T \br{\int_{\bbR^{d}}  \min\bc{\phi_{P_{b^{(1)}_j}} (\bar \w_{j+1}^{(1)} |  \w_{j}^{(1)}), \w_{j-1}^{(1)}) , \phi_{Q_{b^{(2)}_j}} (\bar \w_{j+1}^{(1)} |  \w_{j}^{(1)}), \w_{j-1}^{(1)})} d \bar \w^{(1)}_{j+1} }\\
\end{align*}

The term 
\begin{align*}
&\int_{\bbR^{d}}  \min\bc{\phi_{P_{b^{(1)}_j}} (\bar \w_{j+1}^{(1)} |  \w_{j}^{(1)}), \w_{j-1}^{(1)}) , \phi_{Q_{b^{(2)}_j}} (\bar \w_{j+1}^{(1)} |  \w_{j}^{(1)}), \w_{j-1}^{(1)})} d \bar \w^{(1)}_{j+1} \\
&= 1 -\text{TV}_{\mathbf{\bar w}_{j}^{(1)}, \mathbf{b}^{(1)},\mathbf{b}^{(2)} }(P_{b_j^{(1)}},Q_{b_j^{(2)}})
\end{align*}
where the notation $\text{TV}_{x}(\cdot, \cdot)$ denotes the conditional TV between the arguments, conditioned on the subscript.
Let $\gamma_j= \frac{\Delta(b_j^{(1)},b_j^{(2)})}{2}$ i.e. the number of elements differing in $b_j^{(1)}$ and $b_j^{(2)}$ . Note that if $\gamma_j = 0$, then $b_j^{(1)}= b_j^{(2)}$ and $P_{b_j^{(1)}} = Q_{b_j^{(2)}}$, and hence $\text{TV}_{\bar \w_{j-1}^{(1)}, \mathbf{b}^{(1)},\mathbf{b}^{(2)} }(P_{b_j^{(1)}},Q_{b_j^{(2)}}) = 0$.  In the other case, $\gamma_j = 1$, which corresponds to the case when the deleted point was used in the $j^{\text{th}}$ mini-batch. In this case, the means of $P_{b_j^{(1)}}$ are $Q_{b_j^{(2)}}$ at separated by at most $\frac{2G\eta}{m}$ - this follows as in the proof of Proposition \ref{prop:upper-bound-noisy-m-a-sgd}. In particular, fixing previous iterates and $\bar \w_{j-1}^{(1)}$ and mini-batch indices $\mathbf{b}^{(1)},\mathbf{b}^{(2)}$, using the fact that gradients are in norm bounded by $G$, $P_{b_j^{(1)}}$ and $ Q_{b_j^{(2)}}$ are Gaussians with variance $\eta^2\sigma^2\bbI$
and means separated by either $\frac{2G\eta\gamma_j}{m}$ or $0$, depending on $\gamma_j$. Combining the two cases, and using TV between Gaussians formula \cite{devroye2018total}, we have $1- \text{TV}_{\bar \w_{j-1}^{(1)}, \mathbf{b}^{(1)},\mathbf{b}^{(2)} }(P_{b_j^{(1)}},Q_{b_j^{(2)}}) \geq \br{ 1- \frac{G\eta}{\eta\sigma  m}}^{\gamma_j} = \br{ 1- \frac{G}{\sigma  m}}^{\gamma_j}$. We therefore get $\int_{\bbR^{d}}  \min\bc{\phi_{P_{b^{(1)}_j}} (\bar \w_{j+1}^{(1)} |  \w_{j}^{(1)}), \w_{j-1}^{(1)}) , \phi_{Q_{b^{(2)}_j}} (\bar \w_{j+1}^{(1)} |  \w_{j}^{(1)}), \w_{j-1}^{(1)})} d \bar \w^{(1)}_{j+1}  \geq \br{ 1- \frac{G}{\sigma m}}^{\gamma_j}$. Plugging this in the conditional probability of accept expression, we get

\begin{align*}
    \P{\text{accept}| \mathbf{b}^{(1)},\mathbf{b}^{(2)}} &\geq  \prod_{j=1}^T \br{ 1- \frac{G}{\sigma m}}^{\gamma_j} =  \br{1- \frac{G}{\sigma m}}^{\sum_{j=1}^T \gamma_j} \geq 1 - \frac{G\sum_{j=1}^T \gamma_j}{\sigma m}
\end{align*}

We now integrate with respect to $\mathbf{b}^{(1)},\mathbf{b}^{(2)}$. Note that $\sum_{j=1}^T \gamma_j$ is the number of mini-batches which contain the deleted point. Since in each mini-batch, $m$ points are selected uniformly randomly from $n$,  $\mathbb{E}_{\mathbf{b}^{(1)},\mathbf{b}^{(2)}}{\gamma_j} = \frac{m}{n}$, which gives us $\mathbb{E}_{\mathbf{b}^{(1)},\mathbf{b}^{(2)}}\sum_{j=1}^T \gamma_j = \frac{Tm}{n}$. Hence,
\begin{align*}
    \P{\text{accept}} \geq 1 - \frac{G \mathbb{E}_{\mathbf{b}^{(1)},\mathbf{b}^{(2)}}\sum_{j=1}^T \gamma_j}{\sigma m} = 1- \frac{GT}{\sigma n} = 1 -\frac{\sqrt{T}\rho}{8}
\end{align*}
where the last equality follows from plugging in $\sigma = \frac{8\sqrt{T}G}{n \rho}$ as in \cref{prop:upper-bound-noisy-m-a-sgd}.
\end{proof}

We are now ready to prove \cref{prop:unlearn-noisy-m-A-SGD}.

\begin{proof}[Proof of  \cref{prop:unlearn-noisy-m-A-SGD}]
We need to show that upon deletion and insertion, the probability distribution of the entire state maintained by the algorithm, which is all iterates as well as mini-batches indices is transported - this, for one deletion, follows from Lemma \ref{lem:noisy-m-sgd-transport} (which is for unprojected iterates), together with the fact that projection is a deterministic operation.
Moreover, as before, the above argument also holds for insertion and generalizes arbitrary edit requests.

We now proceed to bound the probability to recompute.
This follows directly combining \cref{lem:noisy-m-sgd-accept}, \cref{prop:upper-bound-noisy-m-a-sgd} and Remark \ref{remark:tv_upto_k}.
From Remark \ref{remark:tv_upto_k}, upon $k$ edits, the total variation distance is at most $k$ times total variation distance upon 1 edit. 
Since the algorithm is $\rho$-TV stable (Proposition \ref{prop:upper-bound-noisy-m-a-sgd}), and the assumption that the number of samples are between $n/2$ and $2n$, the total variation distance upon $k$ edits is at most $2k\rho$. 
Hence, using \cref{lem:noisy-m-sgd-accept}, we have that the probability to recompute is probability of ``reject"  is at most $\frac{k\rho\sqrt{T}}{4}$.
\end{proof}
\section{Runtime and space complexity}
In this section, we discuss, in detail, the learning and unlearning runtime of the algorithms, as well as their space complexity. 
\label{sec:runtime}

\subsection{Learning runtime}
\label{sec:learning-runtime}

In this work, we did not aim to carefully optimize the runtime for training/learning algorithm, as long as the algorithm achieves the rate in \cref{thm:main-upper-bound}.
However, we briefly discuss the runtime of each algorithm, and highlight easy improvements, where possible. Algorithm \ref{alg:sub-sample-GD} requires $mT$ = $\rho n$ stochastic gradient computations. 
On the other hand, for Algorithm \ref{alg:noisy-m-a-sgd}, if $m\leq \sqrt{\frac{LD \rho n}{G \sqrt{d}}}$, it requires $mT = m\br{\frac{\rho n}{md}} =  \frac{(\rho n)^2}{d}$ stochastic gradient computations - setting larger $m$ only hurts the total runtime, without any advantage. Note that total stochastic gradient descent computations of \emph{noisy-m-SGD} (i.e. without acceleration, see \cref{sec:noisy-m-sgd}) is also $\frac{(\rho n)^2}{d}$; however, the key advantage of acceleration is that it allows setting larger mini-batch sizes: $T^3$ as opposed to $T$, which leads to smaller number of iterations: $\sqrt{\frac{\rho n}{\sqrt{d}}}$ as opposed to $\frac{\rho n}{\sqrt{d}}$ and hence a smaller probability of recompute.
From \cite{woodworth2016tight}, we know that mini-batch SGD (with or without acceleration) is optimal for smooth convex composite/ERM optimization in the low accuracy regime: when accuracy $\alpha \gtrsim \frac{1}{\sqrt{n}}$. In this regime, an algorithm makes at least $\Omega\br{\frac{1}{\alpha^2}}$ calls to a stochastic gradient oracle.
It can be then verified that for our accuracy, our algorithms make the optimal number of oracle calls.

For Algorithm \ref{alg:noisy-m-a-sgd}, as discussed,  
faster algorithms lead to better unlearning times. 
It is natural to ask what happens if we additionally introduce variance reduction techniques on top of acceleration to yield even faster runtimes.
In particular, what if we use Katyusha \cite{allen2017katyusha}, which has optimal runtime in terms of stochastic gradient computations. 
We argue that even though it improves the runtime of the learning algorithm, it does not yield improvement for unlearning beyond what we have with acceleration. 
From Corollary 5.8 in \cite{allen2017katyusha}, setting largest allowed $m=\sqrt{n}$, we get that $Tm = O\br{\frac{\sqrt{n}}{\sqrt{\epsilon}}}$ -- in our case, $\epsilon = \frac{\sqrt{d}}{\rho n}$, which yields $Tm = O\br{\sqrt{n}\sqrt{\frac{\rho n}{\sqrt{d}}}}$ stochastic gradient computations. Note that this is smaller than that of \emph{noisy-m-A-SGD} (unless $\rho$ is very small), however $T = \sqrt{\frac{\rho n}{\sqrt{d}}}$ --  same as that of  \emph{noisy-m-A-SGD}, and hence yields no improvement in unlearning time.
However, note that using Katyusha would give us optimal oracle complexity even in the \emph{high} accuracy regime. 

\subsection{Unlearning runtime}
\label{sec:unlearning-runtime}

We now look at how much compute it takes for Algorithm \ref{alg:unlearn-subsample-gd} and \ref{alg:unlearn-noisy-m-a-sgd} to handle the edit requests. 
We first give a general result, which holds for any TV-stable algorithm with the unlearning algorithm being the one which constructs a coupling with acceptance probability at least $1-\rho$. 
We give in-expectation bounds 
on the number of times verification fails or a full or partial recompute is triggered.

\begin{proposition}
\label{prop:runtime_main}
For a coupling based unlearning algorithm with acceptance probability at least $1-\rho$, for $k$ edit requests, the expected number of times recompute is triggered is at most $4k \rho$. 
\end{proposition}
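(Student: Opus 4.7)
The plan is to bound the expected number of recomputes by linearity of expectation over the $k$ edits. Let $R_i \in \{0,1\}$ denote the indicator that a recompute is triggered at the $i$-th edit request; then the expected total is $\sum_{i=1}^k \mathbb{P}(R_i = 1)$, and the claim reduces to showing $\mathbb{P}(R_i = 1) \leq 4\rho$ uniformly in $i$.

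First I would isolate the per-step behaviour. At step $i$, the unlearning procedure executes a coupling between $\cA(S^{i-1})$ and $\cA(S^i)$, and a recompute is triggered precisely when the realization falls in the non-diagonal of that coupling. By hypothesis, the coupling has diagonal mass at least $1-\rho$ when the two datasets are of the reference size $n$ and differ in a single sample, so in that idealized situation $\mathbb{P}(R_i = 1) \leq \rho$ directly.

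Second, I would account for the fact that the stream does not stay at the reference size: only the weaker guarantee $|S^i| \in [n/2, 2n]$ is available. Using Remark \ref{remark:tv_upto_k} (the scaling of TV stability from $\rho$ on a size-$n$ dataset to $(2k_1 + k_2)\rho$ on datasets expressed as additions/deletions relative to a size-$n$ reference), I would show that a single-sample edit between two datasets whose sizes both lie in $[n/2, 2n]$ induces a TV distance between consecutive distributions of at most $4\rho$, hence a per-step recompute probability of at most $4\rho$. This bound is deterministic in the history: whatever state the algorithm is in at step $i$, the coupling executed there has non-diagonal mass at most $4\rho$.

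Summing $\mathbb{P}(R_i = 1) \leq 4\rho$ over $i = 1, \ldots, k$ by linearity of expectation yields $\mathbb{E}[\sum_i R_i] \leq 4k\rho$. The only mild subtlety I foresee is bookkeeping: ensuring that the per-step bound really is unconditional (so that linearity of expectation can be invoked directly on the dependent sequence $\{R_i\}$, without any martingale or concentration machinery), and pinning down the exact constant factor by combining the size-variation slack with the symmetric difference condition $\Delta(S^{i-1}, S^i) = 1$. Neither step requires any new technology beyond Remark \ref{remark:tv_upto_k} and the given acceptance-probability hypothesis.
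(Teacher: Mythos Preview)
Your approach is correct and is genuinely different from the paper's. The paper does \emph{not} argue by summing per-edit indicators; instead it introduces waiting times $Z_i$ (the number of edits between the $(i-1)$-th and $i$-th recompute), argues that the $Z_i$ are marginally identically distributed thanks to exact unlearning, bounds the tail $\P{Z_i<j}\le (j-1)\rho$, and then applies Wald's identity to the stopping time $s=\min\{q:\sum_{i\le q}X_i\ge k\}$ (with $X_i$ the truncation of $Z_i$ to $[1,k]$) to obtain $\E{s}\le 2k/\E{X_1}\le 4k\rho$. The factor $4$ in the paper thus arises from two slack factors in the Wald computation (the crude bound $\sum X_i\le 2k$ and the lower bound $\E{X_1}\ge 1/(2\rho)$), not from dataset-size variation.

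Your route via $\sum_i\P{R_i=1}$ is more elementary and in fact gives the sharper bound $k\rho$: once you know (by exact unlearning) that the full state before edit $i$ is marginally distributed as $\mathbf{A}(S^{i-1})$, the hypothesis ``acceptance probability $\ge 1-\rho$ for one edit'' applies verbatim at step $i$, so $\P{R_i=1}\le\rho$, and linearity of expectation finishes. No independence or stopping-time machinery is needed. What the paper's framing buys is perhaps a cleaner connection to renewal-type intuitions, but for this statement your argument is both shorter and tighter.

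One small correction to your write-up: invoking Remark~\ref{remark:tv_upto_k} to justify the constant is not quite right. That remark bounds the \emph{TV distance} between $\cA(S^{i-1})$ and $\cA(S^i)$; but the per-step recompute probability is the non-diagonal mass of the \emph{specific} coupling being used, which for a non-maximal coupling can exceed the TV distance. Fortunately you do not need Remark~\ref{remark:tv_upto_k} at all here: the proposition's hypothesis is stated directly in terms of the coupling's acceptance probability for a single edit, and this is the quantity that governs $\P{R_i=1}$. So drop that detour and the argument goes through cleanly with constant $1$ (hence also $4$).
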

\begin{proof}[Proof of Proposition \ref{prop:runtime_main}]
We first setup some notation. 
In the general setup, for $k$ edit requests, let $s$ be the number of times a recompute is triggered. 
Let $\bc{Z_1,Z_2,\ldots, Z_s}$ be a set of random variables, where each $Z_i$ denotes how many edit requests the $i^{\text{th}}$ recompute can \emph{handle}. 
To elaborate, $Z_i$ takes value $j$, if upon $j$ edit requests, a recompute is triggered. 
The $Z_i$'s comprises to the randomness used in the algorithm like mini-batching indices or Gaussian noise, as well as the randomness used for rejection sampling.
It is important to note that $Z_i's$ are not necessarily independent. 
In particular, in Algorithm \ref{alg:unlearn-noisy-m-a-sgd}, we reuse the Gaussian noise upto the iteration in which rejection sampling fails, and only use fresh/independent Gaussian noise in the later steps.
However, note that we have exact unlearning, and the output at each step is $\rho$-TV stable (w.r.t. all the randomness used). Hence, since the above description of the distribution of $Z_i$'s depend only on the TV stability parameter,
it follows that $Z_i's$ are (marginally) identical.

We now use the fact that the unlearning algorithm constructs a coupling with acceptance probability at least $1-\rho$ to describe the probability distribution of $Z_i$. 
We have that upon one edit request, the probability that a recompute is triggered is at most $\rho$. 
This means that $Z_i < 2$ with probability $\leq \rho$. 
Using Remark \ref{remark:tv_upto_k}, this generalizes as $Z_i < j$ with probability at most $(j-1)\rho$.
Note that in our setup, we observe at most $k$ requests, so $Z_i$ taking values larger than $k$ is not meaningful. However if $k\rho < 1$, it means that probability that $Z_i$ takes values smaller than $k$ is less than $1$, and therefore there is a positive probability of $Z_i$ being larger than $k$. 
To remedy this, we define another random variable $X_i$'s which takes values in the set $\bc{1,2,\ldots, k}$. 
Furthermore, for any $i$,  $\P{X_i = j} = \P{Z_i = j}$ for $j\leq k$, but $\P{Z_i = j} = \sum_{l=k}^\infty \P{X_i=l}$. By construction, this ensures that $1\leq Z_i\leq k$, when we observe at most $k$ requests.

We want upper bounds on $s$ conditioned on the fact that $k$ requests are addressed i.e. $X_1+X_2 \ldots, X_s \geq k$. 
For this we write $s$ as $s:=\min_q\bc{2k \geq X_1+X_2 \ldots, X_q \geq k}$. 
The first inequality holds trivially since we ensured that $X_i\leq k$.
 It is easy to see that $s$ is a stopping time with respect to the filtered probability space of the stochastic process $\bc{X_i}_{i\in \bbN}$. 
Furthermore, since $X_i$'s are identical, we can apply Wald's equation to get,

\begin{align*}
     &2k \geq \mathbb{E}[X_1+X_2 + \ldots X_s] = \mathbb{E}[s] \mathbb{E}[X_1] =  \mathbb{E}[s]  \sum_{j=1}^{k} \bbP\bc{X_i \geq j} =
      \mathbb{E}[s]  \sum_{j=1}^{\infty} \bbP\bc{Z_i \geq j} 
     \\& \geq \mathbb{E}[s]  \sum_{j=1}^{1+1/\rho} \bbP\bc{Z_i \geq j}  =  \mathbb{E}[s]  \sum_{j=1}^{1+1/\rho}\br{1- \bbP\bc{Z_i < j}} \geq   \mathbb{E}[s]\br{\frac{1}{\rho} - \sum_{j=1}^{1+1/\rho}(j-1)\rho} \\&\geq  \mathbb{E}[s]\br{\frac{1}{\rho} - \int_{j=0}^{1/\rho}j\rho dj}   \geq \mathbb{E}[s]\br{\frac{1}{\rho} - \frac{1}{2\rho^2}\rho} 
     \geq  \frac{\mathbb{E}[s]}{2\rho}
\end{align*}
This gives us that $\mathbb{E}[s] \leq 4k\rho$. 
\end{proof}

Next, we look at the runtimes for Algorithm \ref{alg:unlearn-subsample-gd} and Algorithm \ref{alg:unlearn-noisy-m-a-sgd} to handle one deletion or insertion request. For this, we look at the runtime of verification, i.e., deciding if recompute needs to be triggered or not. We  show how in the standard algorithmic model of computation (say, word RAM model), using suitable data structures, this can be done efficiently. Furthermore, as standard in convex optimization, we can use  Nemirovski-Yudin's model of computation \cite{nemirovsky1983problem} which counts the number of accesses to the first-order (gradient) information of the function, and a projection oracle.
Let $\mathfrak{G}$ denote the compute cost for one gradient access or projection in the standard model of computation -- 
we assume that both oracles require the same compute. In the rest of the discussion, we provide runtime as a function of the problem parameters ignoring all constants. Furthermore, since we assumed that the number of samples at any point in the stream is between $\frac{n}{2}$ and $2n$, we will just work with $n$ samples, and everything would still be the same, up to constants.

\paragraph{Verification runtime of Algorithm \ref{alg:unlearn-subsample-gd}.} For Algorithm \ref{alg:unlearn-subsample-gd},
note that for deletion, for every iteration, we need to check if the used mini-batch $b_t$ contained the requested point. A brute force search takes $O(m)$ time, whereas if we sort when we save the mini-batch indices $b_t$, we can do a binary search in $O(\log{m})$ time;
we can even do constant time search by storing a dictionary/hash table, giving us an $O(T)$ total time. 
The most efficient way however is to store a dictionary of sample to mini-batch iterations that the sample was used in. For this, it takes $O(1)$ time lookup for every edit request.
For insertion, similarly, at every iteration, we first sample from a Bernoulli with bias $m/n$ which takes constant time, giving us $O(T)$ total time. However, equivalently, we just sample one Bernoulli with bias $Tm/n$ and recompute based on its outcome. This gives us an $O(1)$ time lookup for every edit request.

\paragraph{Verification runtime of Algorithm \ref{alg:unlearn-noisy-m-a-sgd}.} For Algorithm \ref{alg:unlearn-noisy-m-a-sgd}, we can similarly search in constant time whether the deleted point was used in any iteration or not.
For every iteration in which the deleted point is in the mini-batch, we need to compute a gradient at a new point, so as to replace the deleted point. Sampling a point uniformly from a discrete universe takes linear time (in the size) in the worst case, but with some pre-processing can be done in logarithmic/constant time. For example, when saving the mini-batch indices $b_t$, if we save a sorted list of the indices not sampled, using binary search, we can sample in $O(\log{n-(m-1)})$ time. The more efficient way is, if we save a probability table, then we can use Alias method to sample in $O(1)$ time \cite{walker1977efficient}. Hence for such iterations, we query \emph{two} gradients, and it takes $O(d)$ compute to add/subtract this gradients.  
Since the total number of iterations in which a deleted point was sampled in, in expectation, is $\frac{Tm}{n}$, the expected total compute is $\frac{Tm(\mathfrak{G}+d)}{n}$. 

We now  consider the computational cost of rejection sampling.
In Algorithm \ref{alg:unlearn-noisy-m-a-sgd}, 
at every iteration we check if Unif$\br{0,1} \leq  \frac{\phi_{\cN(\g_{t}',\sigma^2\bbI)}(\xi_t)}{\phi_{\cN(\g_{t},\sigma^2\bbI)}(\xi_t)}$, where $\phi_{\cN(\g_{t},\sigma^2\bbI)}(\cdot)$ and $\phi_{\cN(\g_{t}',\sigma^2\bbI)}(\cdot)$ are probability densities evaluated at the sampled point $\xi_t$. We thus need to compute this ratio of probability densities -- since these are Gaussian densities, the ratio is just the following the expression:
\begin{align*}
      \frac{\phi_{\cN(\g_{t}',\sigma^2\bbI)}(\xi_t)}{\phi_{\cN(\g_{t},\sigma^2\bbI)}(\xi_t)} = \frac{\frac{1}{(\sqrt{2\pi \sigma^2})^d}\exp{-\frac{\norm{\g_{t}'-\xi_t}^2}{2\sigma^2}}}{\frac{1}{(\sqrt{2\pi \sigma^2})^d}\exp{-\frac{\norm{\g_{t}-\xi_t}^2}{2\sigma^2}}} = \exp{\frac{1}{2\sigma^2}\br{\norm{\g_{t}-\xi_t}^2 - \norm{\g_{t}'-\xi_t}^2}}.
\end{align*}

It takes $O(d)$ time to do the above computation. Moreover, we only need to compute the ratio in iterations where the means differ -- 
these  correspond to the iterations where the deleted point was sampled or the inserted point would have been sampled. By a direct computation, the expected number of such iterations is $\frac{Tm}{n}$. 
This gives us a computational cost of $\frac{Tmd}{n}$ for rejection sampling, and hence the expected runtime of verification is $\frac{Tm(\mathfrak{G}+d)}{n}$.

We now state bounds on runtime for both unlearning algorithms.

\begin{claim}
\label{claim:runtime_sub_sample_GD}
    The expected total unlearning runtime of Algorithm \ref{alg:unlearn-subsample-gd} for $k$ edit requests is 
    $O\br{\max\bc{k,\min\bc{\rho,1} k \cdot \text{Training time}}}$.
\end{claim}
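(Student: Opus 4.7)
The plan is to decompose the total unlearning runtime into two contributions, verification and recompute, and bound each separately. First I would argue that verification per edit request can be performed in $O(1)$ time using appropriate data structures. For deletion, maintain a dictionary mapping each data point index to the list of iterations in which it appeared in the stored mini-batches; then testing whether any iteration used the deleted point, and finding the first such iteration, is $O(1)$ amortized per request. For insertion, collapse the $T$ independent Bernoulli$(m/n)$ draws of \cref{alg:unlearn-subsample-gd} into a single Bernoulli$(Tm/n)$ draw (with an auxiliary draw of the iteration index conditioned on a success), so that the verification step again runs in $O(1)$. Summing over the $k$ requests gives an $O(k)$ contribution to the total expected runtime.

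Second, I would quantify the expected cost of recomputes. By \cref{prop:unlearn-subsample-GD} (combined with the stream assumption that the dataset size is always in $[n/2, 2n]$), the probability of triggering a recompute at any particular edit is $O(\min\{\rho,1\})$, since the output distribution is $\min\{\rho,1\}$-TV stable under a single edit. By linearity of expectation, the expected number of recomputes across the $k$ requests is $O(k\min\{\rho,1\})$. Each recompute restarts \emph{sub-sample-GD} from an intermediate iterate and runs for at most $T$ more steps, so its cost is upper bounded by the full training time. Therefore the expected total cost contributed by recomputes is $O(k\min\{\rho,1\}\cdot\text{Training time})$.

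Combining the two contributions yields an expected total unlearning runtime of
\begin{align*}
O(k) + O\big(k\min\{\rho,1\}\cdot\text{Training time}\big) = O\big(\max\{k,\; k\min\{\rho,1\}\cdot\text{Training time}\}\big),
\end{align*}
which is the claimed bound. The factor $\min\{\rho,1\}$ (rather than $\rho$) reflects the trivial saturation of TV distance at $1$: for $\rho \geq 1$, we simply use the $\min\{\rho,1\}=1$ stability bound, and the expression collapses to $O(k\cdot\text{Training time})$, matching full recomputation.

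The main obstacle is conceptual rather than technical: since recomputes may happen repeatedly throughout the stream, the number of recomputes is itself a random variable whose distribution depends on the correlation between the randomness reused across consecutive unlearning operations. One must therefore be careful that the per-edit bound $O(\min\{\rho,1\})$ applies even after earlier recomputes have occurred. This is handled cleanly because exact unlearning guarantees that the joint distribution of state and output at every point in the stream is identical to training on the current dataset, so the TV-stability bound applies inductively at each edit. \Cref{prop:runtime_main} formalizes exactly this pattern via a stopping-time argument (using Wald's equation), and invoking it directly gives the $O(k\rho)$ expected-recompute count without needing independence of successive events.
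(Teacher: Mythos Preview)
Your proposal is correct and follows essentially the same approach as the paper: decompose into $O(1)$-per-edit verification (via a sample-to-iteration dictionary for deletion and a single collapsed Bernoulli draw for insertion) plus recomputation cost bounded by training time, then bound the expected number of recomputes by $O(k\min\{\rho,1\})$ and combine. The paper invokes \cref{prop:runtime_main} for the recompute count, which you correctly identify in your final paragraph; your intermediate linearity-of-expectation argument is in fact already sufficient (since exact unlearning makes the marginal recompute probability at each edit at most $\min\{\rho,1\}$ regardless of history), so you have two valid routes to the same bound.
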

\begin{proof}[Proof of \cref{claim:runtime_sub_sample_GD}]
The total runtime of Algorithm \ref{alg:unlearn-subsample-gd} is the time for verification plus the runtime for recomputation, whenever a recompute is triggered. 
The recomputation time is just the training time, and in the model considered, excepted cost of one recomputation takes $O\br{T m \br{\mathfrak{G}+d}}$ time, since at every iteration, $m$ gradients are computed and vectors added. 
As discussed in \cref{sec:runtime}, the expected verification time for Algorithm \ref{alg:unlearn-subsample-gd} is $O(1)$. 
From \cref{prop:unlearn-subsample-GD}, the unlearning \cref{alg:unlearn-subsample-gd} recomputes with probability $O(\min\bc{\rho,1})$ for one edit request.
Therefore, using Proposition \ref{prop:runtime_main} which bounds the number of recomputes, we have that the expected total runtime is bounded as $kO(1) + 4k\min\bc{\rho,1} \cdot O\br{T m \br{\mathfrak{G}+d}} \leq O\br{\max\br{1, \min\bc{\rho,1}T m \br{\mathfrak{G}+d}}k}$.
For a sufficiently large $\rho$, the unlearning time of Algorithm \ref{alg:unlearn-subsample-gd} is clearly dominated by the training time. 
In particular, in the corresponding batch Algorithm \ref{alg:sub-sample-GD}, we set $m=\frac{\rho n}{T}$, giving a total runtime of $O\br{\max\br{1, \min\bc{\rho^2,1}n \br{\mathfrak{G}+d}}k}$. 
Hence for $\rho \gtrsim \frac{1}{\sqrt{n(\mathfrak{G}+d})}$, the total runtime in expectation is at most $O(\min\bc{\rho,1} \cdot k\cdot \text{Training time})$. 
In the other case, the expected total runtime is just $O(k)$.
\end{proof}

\begin{claim}
\label{claim:runtime_noisy_A_SGD}
    The expected total unlearning runtime of Algorithm \ref{alg:unlearn-noisy-m-a-sgd} for $k$ edit requests is 
    $O\br{\max\bc{k,\min\bc{ \rho\sqrt{T},1} \cdot k \cdot \text{Training time}}}$.
\end{claim}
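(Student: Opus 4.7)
The plan is to mirror the structure of the proof of Claim \ref{claim:runtime_sub_sample_GD}: decompose the total unlearning cost across $k$ edit requests into (i) the cost spent in the verification stage at every edit, and (ii) the cost spent in \adjust{} (full retraining) on the edits where at least one rejection sampling step fails. I would then bound each piece separately and combine.

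First, I would establish the per-edit verification cost. As spelled out in the paragraph ``Verification runtime of \cref{alg:unlearn-noisy-m-a-sgd}'' in \cref{sec:unlearning-runtime}, locating the iterations affected by the edit can be done in $O(1)$ time using a precomputed dictionary mapping samples to the iterations in which they appear. In each such affected iteration we must (a) swap a single term in the stored gradient $\g_t$ at cost $O(\mathfrak{G}+d)$, and (b) evaluate the Gaussian density ratio $\phi_{\cN(\g_t',\sigma^2 \bbI)}(\xi_t)/\phi_{\cN(\g_t,\sigma^2 \bbI)}(\xi_t)$, which is an $O(d)$ computation given the closed form shown in \cref{sec:unlearning-runtime}. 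Since, by the coupling of mini-batches in \cref{alg:unlearn-noisy-m-a-sgd}, the expected number of affected iterations per edit is $Tm/n$, the expected verification cost per edit is $O(Tm(\mathfrak{G}+d)/n)$, i.e.\ a $1/n$-fraction of the training time.

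Next, I would bound the number of retrains. From \cref{prop:unlearn-noisy-m-A-SGD}, the unlearning algorithm constructs a coupling whose ``non-diagonal'' mass (the probability that some rejection sampling step fails for a single edit) is at most $\rho\sqrt{T}/4$. Feeding this into \cref{prop:runtime_main} with the effective per-edit failure probability $\min\{\rho\sqrt{T},1\}$ yields that the expected number of full retrains over the stream of $k$ edits is $O(k\min\{\rho\sqrt{T},1\})$. Each retrain costs $O(Tm(\mathfrak{G}+d))$, which is precisely the training time.

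Finally, I would combine the two contributions: the expected total unlearning time is at most
\[
k\cdot O\!\left(\frac{Tm(\mathfrak{G}+d)}{n}\right) + O\!\left(k\min\{\rho\sqrt{T},1\}\right)\cdot O\!\left(Tm(\mathfrak{G}+d)\right).
\]
The first (verification) term is a factor $1/n$ smaller than the retraining cost $Tm(\mathfrak{G}+d)$, while the second (retraining) term carries the factor $\min\{\rho\sqrt{T},1\}$. Whenever $\min\{\rho\sqrt{T},1\}\gtrsim 1/n$, which holds in the regime of interest $\rho \gtrsim 1/n$ (the lower-bound regime identified in \cref{thm:main-lower-bound}), the verification cost is a lower-order term, and we obtain the stated bound $O(\min\{\rho\sqrt{T},1\}\cdot k\cdot \text{Training time})$. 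Outside this regime, the per-edit $O(1)$ dictionary-lookup overhead dominates and yields the $O(k)$ term, so in all cases the bound $O(\max\{k,\min\{\rho\sqrt{T},1\}\cdot k\cdot \text{Training time}\})$ holds. The main conceptual step is the invocation of \cref{prop:runtime_main} with the coupling failure probability supplied by \cref{prop:unlearn-noisy-m-A-SGD}; I do not expect a real obstacle beyond a careful accounting of the per-iteration $O(d)$ Gaussian ratio computation and the $Tm/n$ expected number of affected iterations per edit.
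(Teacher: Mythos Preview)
Your proposal is correct and follows essentially the same approach as the paper: decompose into per-edit verification cost $O(Tm(\mathfrak{G}+d)/n)$ plus expected recompute cost via \cref{prop:runtime_main} applied with the failure probability $\min\{\rho\sqrt{T},1\}$ from \cref{prop:unlearn-noisy-m-A-SGD} (the paper cites the underlying \cref{lem:noisy-m-sgd-accept} directly), then absorb the verification term using $\rho\gtrsim 1/n$. The only cosmetic differences are that the paper also notes the $O(d)$ reflection step is dominated by the recompute cost, and it justifies the $O(k)$ floor simply by the need to observe all $k$ requests rather than by the dictionary lookup; both arguments are equivalent.
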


\begin{proof}[Proof of \cref{claim:runtime_noisy_A_SGD}]
As before, the total runtime of Algorithm \ref{alg:unlearn-subsample-gd} is the time for verification plus the runtime for recomputation, whenever a recompute is triggered. 
As discussed in \cref{sec:runtime}, the expected verification time for Algorithm \ref{alg:unlearn-noisy-m-a-sgd} is $O\br{\frac{Tm(\mathfrak{G}+d)}{n}}$. 
The recomputation in this case may be partial but it also includes a reflection.
The reflection operation with $d$ dimensional vectors takes $O(d)$ compute. 
Furthermore, we upper bound the partial recomputation time by worst-case full recomputation time, giving a recomputation time 
$O\br{T m \br{\mathfrak{G}+d}}$.
From \cref{lem:noisy-m-sgd-accept}, we have that the unlearning coupling is not maximal but recomputes with probability $\min\bc{\rho \sqrt{T},1}$.
Finally, by Proposition \ref{prop:runtime_main} we have that the expected total runtime is bounded as $kO\br{T m \br{\mathfrak{G}+d}} + 4k\min\bc{\rho \sqrt{T},1} \cdot O\br{ \frac{Tm(\mathfrak{G}+d)}{n}} \leq O\br{\max\br{\min\bc{\rho \sqrt{T},1},\frac{1}{n}}kmT(\mathfrak{G}+d)}$.
In contrast, for Algorithm \ref{alg:unlearn-noisy-m-a-sgd}, the runtime is at most $O\br{\max\br{\min\bc{\rho \sqrt{T},1},\frac{1}{n}}kmT(\mathfrak{G}+d)}$. 
Our lower bounds will show that we need $\rho \gtrsim \frac{1}{n}$ to get any non-trivial accuracy.
Therefore the maximum is always obtained by $\min\bc{\rho \sqrt{T},1}$. 
Moreover, $k$ is a trivial lower bound on runtime, since we need to observe all $k$ edit requests. 
Hence, we get that the total runtime in expectation, is at most $O\br{\max\bc{k,\min\bc{\rho \sqrt{T},1}\cdot k \cdot \text{Training time}}}$. 
\end{proof}
\subsection{Space complexity}
\label{sec:space-complexity}
In this work, the objective was not to optimize the memory used, but rather, to study if the problem can be solved computationally efficiently, no matter how much (reasonable) memory the algorithm uses. However, we discuss, in this section, that the space complexities of the proposed algorithms, which we will see, is arguably, reasonably small.
We ignore the space used to store the dataset.
In both algorithms, we save a hash-table of iterations to samples - since we do $T$ iterations with $m$ samples each, this takes space of $O(Tm)$ words. 
We also store all the iterates, which are $d$-dimensional vectors, so this takes a space of $O(dT)$ words. 
Finally, we also store a dictionary of iterations to models, which takes  $O(T)$ space. The space complexity therefore is $O(T(\max\bc{m,d})$. Plugging in the values of $T$, we get the following.

\paragraph{Algorithm \ref{alg:sub-sample-GD}:} Plugging $T \leq \frac{\rho n}{m}$ from \cref{prop:upper_bound_subsample}, we get space complexity = $O\br{\rho n \max\bc{1,\frac{d}{m}}}$.
As remarked in \cref{sec:alg-learning-sub-sample-GD}, we can improve the space complexity by not requiring to save all the iterates and yet have the same unlearning runtime. In the proof of \cref{claim:runtime_sub_sample_GD}, we upper bound the recomputation time by a \emph{full} re-computation time - this means that the upper bound on unlearning runtime holds even if the algorithm does full retraining everytime verification fails. The unlearning \cref{alg:unlearn-subsample-gd} can thus be modified as follows: for deletion, instead of continue retraining from iteration $t$ where the deleted point participates, we can just do \emph{full} retraining, with fresh randomness for all mini-batches. For insertion, note that when if condition is met (line 6 in \cref{alg:unlearn-subsample-gd}), we use the iterate $\w_t$ to compute the gradient on the inserted point (line 8 in \cref{alg:unlearn-subsample-gd}); however, if we don't save $\w_t$, we can just compute it on the fly by doing a full retraining with the same old mini-batches. After $\w_t$ is computed, we just continue as in \cref{alg:unlearn-subsample-gd}.

With the above modification, we only need to save a hash-table of used samples to binary values which correspond to whether they were used or not, which takes $O(Tm)$ words, and a $d$ dimensional model. Hence, the space complexity of \cref{alg:unlearn-subsample-gd} is   $O(Tm+d)$ words.

\paragraph{Algorithm \ref{alg:noisy-m-a-sgd}:} 
From Proposition \ref{prop:upper-bound-noisy-m-a-sgd}, note that if $m\leq O(T^3)$, $T = \frac{\rho n}{md}$, and therefore, $dT =  \frac{\rho n}{m}$.
If we use the largest mini-batch size $m= O(T^3)$, then $T = \sqrt{\frac{\rho n}{\sqrt{d}}}$, and hence $dT = d^{3/4}\sqrt{\rho n}$.
Therefore, the space complexity is $O(T\max\bc{m,d}) \leq O\br{\max\bc{\frac{(\rho n)^2}{d},d^{3/4}\sqrt{\rho n}}}$ words.
\section{Other algorithms and batch unlearning}
To demonstrate the generality of our framework, we give two more algorithms. The first is \emph{noisy-m-SGD} which is the same as \cref{alg:noisy-m-a-sgd} but without acceleration, and the second is \emph{quantized-m-SGD}, based on randomized quantization. We note that both algorithms have worse theoretical guarantees than \cref{alg:noisy-m-a-sgd}, however the first establishes our claim that acceleration is beneficial, whereas the second shows how a previous work of \cite{ginart2019making} for $k$-means clustering, can, not only be seen as a special case of our framework, but also extended to general convex risk minimization problems. 
Moreover, in the second case, we consider a more general setup of \emph{batch} edit requests, and show that our techniques are flexible enough to easily generalize to the batch variant.
\subsection{\emph{noisy-m-SGD}}
\label{sec:noisy-m-sgd}

\begin{algorithm}[ht]
\caption{\emph{noisy-m-SGD}($\w_{t_0},t_0$)}
\label{alg:noisy-m-sgd}
\begin{algorithmic}[1]
\REQUIRE{Initial model $\w_{t_0}$, data points $\bc{\z_1,\ldots, \z_n}$, $T, \eta, m$}
\STATE $\w_0 = 0$
\FOR{$t=t_0,t_0+1,\ldots, T$}
\STATE Sample mini-batch $b_t$ of size $m$ uniformly randomly
\STATE $\g_t = \frac{1}{m}\sum_{j \in b_t} \nabla f(\w_t,\z_j) $
\STATE Sample $\theta_t \sim \cN(0,\sigma^2\bbI_d)$
\STATE $\w_{t+1} = \cP\br{\w_t - \eta \br{ \g_t +\theta_t}}$
\STATE Save($b_t, \theta_t,\w_t, \g_t$)
\ENDFOR
\ENSURE{$\hat \w_S = \frac{1}{T}\sum_{t=1}^{T+1}\w_t$}
\end{algorithmic}
\end{algorithm}

\begin{proposition}
\label{prop:upper_bound_noisy-m-sgd}
Let $f(., \z)$ be an $L$-smooth $G$-Lipschitz convex function $\forall \ \z$.
Algorithm \ref{alg:noisy-m-sgd}, run with $t_0=1, \eta = \min\bc{\frac{1}{2L}, \frac{D}{\br{\frac{G}{\sqrt{m}} +\sigma}\sqrt{T}}}$, $\sigma = \frac{8\sqrt{T}G}{n\rho}$, and $T\geq \frac{(\rho n)^2}{16m^2}$ outputs $\hat \w_S$ which is $
\min\bc{1,\rho}$-TV stable and satisfies $\E{\hat F_S(\hat \w_S) - \hat F_S(\w^*_S)} \lesssim \frac{GD\sqrt{d}}{\rho n}$ 
\end{proposition}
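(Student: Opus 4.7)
The proof will essentially parallel that of Proposition \ref{prop:upper-bound-noisy-m-a-sgd}, but with a standard (non-accelerated) SGD convergence bound replacing the accelerated one, so the main work is to verify that the TV-stability argument is insensitive to acceleration and then to rebalance the step size.

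For the stability half, I would repeat the R\'enyi-divergence calculation verbatim: fix an iteration $t$ and neighbouring datasets $S,S'$, condition on the prior iterates and the sampled mini-batch $b_t$, and observe that both $\w_{t+1}$ and $\w_{t+1}'$ are post-processings of Gaussians $\cN(\g_t,\sigma^2\bbI)$ and $\cN(\g_t',\sigma^2\bbI)$ whose means differ by at most $2G/m$. This gives $D_\alpha \leq 2\alpha G^2/(m^2\sigma^2)$ per step. Unconditioning on $b_t$ via amplification-by-subsampling (\cite{balle2018privacy}, applicable whenever $8G^2/(m^2\sigma^2)\leq 1.256$, which is equivalent to $T\geq (n\rho)^2/(16m^2)$ after plugging in $\sigma$) replaces $m^2$ by $n^2$ up to constants, then $T$ steps of composition, data-processing down to the final iterate, Pinsker, and taking $\alpha\to 1$ yield $\mathrm{TV}(\hat\w_S,\hat\w_{S'})\lesssim \sqrt{T}G/(n\sigma)$. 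Setting $\sigma = 8\sqrt{T}G/(n\rho)$ and intersecting with the trivial bound $1$ gives $\min\{\rho,1\}$-TV stability, exactly as in the accelerated case.

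For the accuracy half, the subsampled gradient is still unbiased with variance $\leq 2G^2/m$ (proved in \cref{prop:upper_bound_subsample}), so the noisy stochastic gradient has variance $\cV^2 \leq 2G^2/m + \sigma^2 d$. Invoking the standard convergence of projected SGD on an $L$-smooth convex function (e.g.\ Theorem~4.1 of \cite{allen2018make}) with $\eta\leq 1/(2L)$ yields
\begin{align*}
\E{\hat F_S(\hat\w_S) - \hat F_S(\w_S^*)} \lesssim \eta\cV^2 + \frac{D^2}{\eta T} \lesssim \frac{GD}{\sqrt{Tm}} + \frac{\sigma D\sqrt{d}}{\sqrt{T}} + \frac{LD^2}{T},
\end{align*}
after choosing $\eta = \min\{1/(2L),\, D/((G/\sqrt{m}+\sigma)\sqrt{T})\}$ to balance the first and third terms. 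Plugging in $\sigma = 8\sqrt{T}G/(n\rho)$ collapses the middle term to $GD\sqrt{d}/(n\rho)$, so the bound becomes $GD/\sqrt{Tm} + GD\sqrt{d}/(n\rho) + LD^2/T$.

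To finish, I would choose $T$ and $m$ so that the subsampling and optimization terms are dominated by $GD\sqrt{d}/(n\rho)$: requiring $Tm \gtrsim (n\rho)^2/d$ absorbs the $GD/\sqrt{Tm}$ term, and $T \gtrsim LD(n\rho)/(G\sqrt{d})$ absorbs the $LD^2/T$ term; the hypothesis $T\geq (n\rho)^2/(16m^2)$ coming from amplification is then satisfied in the same parameter regime argued in \cref{corr: upper-bound-m-A-sgd}. The main conceptual point (and the reason this is weaker than the accelerated result) is that without acceleration, the largest mini-batch size $m$ that can be used without inflating the total gradient count $Tm$ is only $O(T)$ rather than $O(T^3)$, so the resulting iteration count $T$, which ultimately controls the unlearning recompute probability $\rho\sqrt{T}$, is larger; however, this sub-optimality is purely a matter of choosing parameters and does not affect the excess empirical risk bound stated here. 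The only place I expect any care is in verifying that the amplification step applies with the $\alpha\leq 2$ restriction and then extending to $\alpha\to 1$ by monotonicity of R\'enyi divergence, which is handled exactly as in the proof of \cref{prop:upper-bound-noisy-m-a-sgd}.
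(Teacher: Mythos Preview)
Your proposal is correct and follows essentially the same approach as the paper: both defer the TV-stability argument entirely to the accelerated case (Proposition~\ref{prop:upper-bound-noisy-m-a-sgd}), then invoke Theorem~4.1 of \cite{allen2018make} for non-accelerated SGD with $\eta = \min\{1/(2L), D/(\tilde G\sqrt{T})\}$ to obtain the three-term bound $LD^2/T + GD/\sqrt{Tm} + GD\sqrt{d}/(n\rho)$. The paper's proof of this proposition actually stops at the three-term bound (the further balancing of $T,m$ you sketch is carried out in Corollary~\ref{cor:upper-bound-noisy-m-sgd}), so your write-up is in fact slightly more complete than the paper's in matching the single-term bound stated.
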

\begin{proof}[Proof of \cref{prop:upper_bound_noisy-m-sgd}]
The $TV$-stability guarantee of $\rho = \frac{8\sqrt{T}G}{n}$ follows exactly as in the proof of \cref{prop:upper-bound-noisy-m-a-sgd}.
We now proceed to the accuracy guarantee, which follows simply by guarantee of SGD on smooth convex functions. We have already shown in \cref{prop:upper-bound-noisy-m-a-sgd} that the gradients are unbiased and its variance bounded by $\frac{2G^2}{m} + \frac{2G^2}{m} + \sigma^2d$.

Therefore, using Theorem 4.1 in \cite{allen2018make} with step-size $\eta \leq \frac{1}{2L}$, we have
\begin{align*}
   \E{ \hat F(\hat \w) - \hat F(\w^*)} \leq O\br{ 2\eta\cV^2 + \frac{D^2}{\eta T}} = O\br{2\eta \br{\frac{2G^2}{m} +\sigma^2d} + \frac{D^2}{\eta T}}
\end{align*}

Let $\tilde G^2 = \frac{2G^2}{m} +\sigma^2d$, balancing the trade-off in $\eta$ gives us $\eta = \frac{D}{\tilde G\sqrt{T}}$. Therefore setting $\eta = \min\bc{\frac{1}{2L},\frac{D}{\tilde G\sqrt{T}}}$ gives us 
\begin{align*}
   \E{ \hat F(\hat \w) - \hat F(\w^*)} 
   &\leq O\br{\frac{LD^2}{T} + \frac{\tilde GD}{\sqrt{T}}} \leq O\br{\frac{LD^2}{T}  + \frac{GD}{\sqrt{Tm}} +  \frac{\sigma\sqrt{d} D}{\sqrt{T}}} \\
  &\leq O\br{\frac{LD^2}{T}  + \frac{GD}{\sqrt{Tm}} + \frac{G D\sqrt{d}}{n \rho}} 
\end{align*}

Finally, the condition in the sub-sampling amplification $\frac{8G^2}{m^2\sigma^2} \leq 1.256$ again becomes $T\geq \frac{(n\rho)^2}{16m^2}$.
\end{proof}

We now show that \cref{alg:noisy-m-sgd} achieves the same upper bound on excess empirical risk as \cref{alg:noisy-m-a-sgd}.

\begin{corollary}
\label{cor:upper-bound-noisy-m-sgd}
Let $f(., \z)$ be an $L$-smooth $G$-Lipschitz convex function $\forall \ \z$.
Algorithm \ref{alg:noisy-m-sgd}, run with $m \geq \min \bc{\frac{d}{16}, \frac{1}{4}\br{\frac{(\rho n) G\sqrt{d}}{LD}}^{1/2}}$, $\eta = \min\bc{\frac{1}{2L}, \frac{D}{\br{\frac{G}{\sqrt{m}} +\sigma}\sqrt{T}}}$, $\sigma = \frac{8\sqrt{T}G}{n\rho}$, and $T=\max\bc{\frac{(\rho n)^2}{md}, \frac{LD\rho n}{G\sqrt{d}}}$ outputs $\hat \w_S$ which is $
\rho$-TV stable and satisfies $\E{\hat F_S(\hat \w_S) - \hat F_S(\w^*_S)} \lesssim \frac{GD\sqrt{d}}{\rho n}$ 

\end{corollary}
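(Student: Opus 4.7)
The plan is to proceed in direct analogy with the proof of Corollary \ref{corr: upper-bound-m-A-sgd} for the accelerated variant, the only difference being that the ``acceleration term'' $\frac{LD^2}{T^2}$ is replaced by $\frac{LD^2}{T}$ in the bound of Proposition~\ref{prop:upper_bound_noisy-m-sgd}. So I would start from the three-term upper bound
\begin{align*}
    \E{\hat F_S(\hat \w_S) - \hat F_S(\w^*_S)} \lesssim \frac{LD^2}{T} + \frac{GD}{\sqrt{Tm}} + \frac{GD\sqrt{d}}{\rho n}
\end{align*}
guaranteed by Proposition~\ref{prop:upper_bound_noisy-m-sgd} (under the choice of $\sigma$ and $\eta$ in the corollary, which are inherited verbatim), and balance the two optimization-error terms against the stability term $\frac{GD\sqrt{d}}{\rho n}$.

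Next I would perform the two one-dimensional balances. Setting $\frac{GD}{\sqrt{Tm}} = \frac{GD\sqrt{d}}{\rho n}$ yields $T = \frac{(\rho n)^2}{md}$, while setting $\frac{LD^2}{T} = \frac{GD\sqrt{d}}{\rho n}$ yields $T = \frac{LD\rho n}{G\sqrt{d}}$. Choosing $T = \max\bc{\frac{(\rho n)^2}{md}, \frac{LD \rho n}{G\sqrt{d}}}$ ensures both the first and second terms are dominated by the third, giving the claimed $\frac{GD\sqrt{d}}{\rho n}$ bound. The $\min\{\rho,1\}$-TV-stability statement is inherited directly from Proposition~\ref{prop:upper_bound_noisy-m-sgd} and requires no additional work, since the TV bound there depends only on $\sigma$, $T$, $n$, and the sub-sampling amplification, all of which are unchanged.

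The only nontrivial step is verifying that the side condition $T \geq \frac{(\rho n)^2}{16 m^2}$ from Proposition~\ref{prop:upper_bound_noisy-m-sgd} (which in turn comes from the regime of validity of the privacy amplification by subsampling lemma of~\cite{balle2018privacy}) is implied by the stated lower bound on $m$. Concretely: the condition $\frac{(\rho n)^2}{md} \geq \frac{(\rho n)^2}{16 m^2}$ reduces to $m \geq \frac{d}{16}$, and the condition $\frac{LD \rho n}{G\sqrt{d}} \geq \frac{(\rho n)^2}{16 m^2}$ reduces, after rearranging, to $m \geq \frac{1}{4}\sqrt{\frac{(\rho n)\, G \sqrt{d}}{LD}}$. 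Requiring $m$ to exceed the smaller of these two quantities then suffices, which matches the hypothesis $m \geq \min\bc{\frac{d}{16}, \frac{1}{4}\sqrt{\frac{(\rho n)G\sqrt{d}}{LD}}}$ in the statement.

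I do not anticipate a genuine obstacle here: the argument is a routine balancing of three monotone terms in $T$ and $m$, followed by a bookkeeping check of the amplification regime, exactly parallel to the accelerated case. The only cosmetic contrast with Corollary~\ref{corr: upper-bound-m-A-sgd} is that $m$ enters the side condition through a square root rather than a fourth root (reflecting the replacement of $T^2$ by $T$), which is what accounts for the different exponent on the $(\rho n) G \sqrt{d} / (LD)$ factor in the admissible mini-batch range.
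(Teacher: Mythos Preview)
Your proposal is correct and follows essentially the same approach as the paper: start from the three-term bound of Proposition~\ref{prop:upper_bound_noisy-m-sgd}, balance each optimization term against $\frac{GD\sqrt{d}}{\rho n}$ to obtain the two candidate values of $T$, take the max, and then verify that the side condition $T \geq \frac{(\rho n)^2}{16m^2}$ reduces to the stated lower bound on $m$. The paper's proof is identical in structure and detail.
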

\begin{proof}[Proof of \cref{cor:upper-bound-noisy-m-sgd}]
We start with the result in \cref{prop:upper_bound_noisy-m-sgd}.
Note that as long as $\frac{GD}{\sqrt{mT}} \gtrsim \frac{LD^2}{T} \iff m \lesssim \frac{TG^2}{LD}$, the second term is larger than the first.
We balance the two trade-offs in $T$.
Optimizing the trade-off between second and third term gives us
$\frac{GD}{\sqrt{mT}} = \frac{GD\sqrt{d}}{\rho n} \iff T = \frac{(\rho n)^2}{md}$;
and optimizing the second trade-off gives us
$\frac{\sqrt{d}}{\rho n} = \frac{LD^2}{T} \iff T = \frac{LD^2(\rho n)}{\sqrt{d}}$. Hence setting $T = \max\br{\frac{(\rho n)^2}{md},\frac{LD^2(\rho n)}{\sqrt{d}} }$ yields an expected excess empirical risk of $O\br{\frac{GD\sqrt{d}}{n \rho}}$. 

We now look at the condition $T\geq \frac{(n\rho)^2}{16m^2}$ given in \cref{prop:upper_bound_noisy-m-sgd}, with $T$ set as  $T = \max\br{\frac{(\rho n)^2}{md},\frac{LD^2(\rho n)}{\sqrt{d}} }$.
We therefore require $\frac{(\rho n)^2}{md} \geq \frac{(\rho n)^2}{16m^2} \iff m \geq \frac{d}{16}$, as well as $\frac{LD(\rho n)}{G\sqrt{d}} \geq  \frac{(\rho n)^2}{16m^2}  \iff  m \geq \frac{1}{4}\br{\frac{(\rho n) G\sqrt{d}}{LD}}^{1/2}$ - this recovers the condition $m \geq \min \bc{\frac{d}{16}, \frac{1}{4}\br{\frac{(\rho n) G\sqrt{d}}{LD}}^{1/2}}$ in the Proposition statement.
Hence, combining all the above arguments, we get that for any  $m \geq \min \bc{\frac{d}{16}, \frac{1}{4}\br{\frac{(\rho n) G\sqrt{d}}{LD}}^{1/2}}$  , setting $T = \max\bc{\frac{(\rho n)^2}{md},\frac{LD(\rho n)}{G\sqrt{d}}}$, yields an expected excess empirical risk of $O\br{\frac{GD\sqrt{d}}{n \rho}}$.
\end{proof}

\begin{algorithm}[ht]
    \caption{Unlearning for \emph{noisy-m-SGD}}
    \label{alg:unlearn-noisy-m-sgd}
    \begin{algorithmic}[1]
        \REQUIRE{Delete point with index $j$ or insert $\z$ (with index $n+1$) for \emph{noisy-m-SGD}}
        \FOR{$t=1,2\ldots, T$}
        \STATE Load$\br{\theta_t, \w_t, b_t, \g_t)}$
        \IF{\textit{deletion} and $j\in b_t$}
        \STATE Sample $i \sim \text{Uniform}([n]\backslash b_t)$
        \STATE $ \g_t' = \g_t -   \frac{1}{m}\br{\nabla f(\w_t,\z_j) -  \nabla f(\w_t,\z_i)}$
        \STATE Save($ \g_t', b_t\backslash \bc{j} \cup \bc{i} $)
        \ELSIF{\textit{insertion}  and Bernoulli$\br{\frac{m}{n+1}}$}
        \STATE Sample $i \sim \text{Uniform}(b_t)$
        \STATE $\g_t' = \g_t-  \frac{1}{m}\br{\nabla f(\w_t,\z_i) -  \nabla f(\w_t,\z)}$
        \STATE Save($ \g_t', b_t\backslash \bc{i} \cup \bc{n+1} $)
        \ELSE
        \CONTINUE
        \ENDIF
        \STATE $\xi_t = \g_t + \theta_t$
        \IF{Uniform$\br{0, 1} \geq \frac{\phi_{\cN(\g_t',\sigma^2\bbI)}(\xi_t)}{ \phi_{\cN(\g_t,\sigma^2\bbI)}(\xi_t)}$}
        \STATE $ \xi'_t = \text{reflect}(\xi_t,\g_t', \g_{t})$
        \STATE $\w_{t+1} = \w_t - \eta  \xi_t'$
        \STATE $\text{Save}(\xi'_t)$
        \STATE \emph{noisy-m-SGD($\w_{t+1},t+1$)} \hfill{\textit{ // Continue retraining, on current dataset}}
        \BREAK
        \ENDIF
        \ENDFOR
    \end{algorithmic}
\end{algorithm}

\begin{remark}
 The choice of $T$ in Proposition \ref{prop:upper-bound-noisy-m-a-sgd} yields that the largest mini-batch size that can be set, without hurting runtime, is $m = \frac{\rho n G}{\sqrt{d} L D}$.
 Furthermore, the condition $m \geq \min \bc{\frac{d}{16}, \frac{1}{4}\br{\frac{(\rho n) G\sqrt{d}}{LD}}^{1/2}}$ becomes $T\geq \br{\frac{\sqrt{d}LD}{4G}}^2$.
\end{remark}

We now state and prove the main theorem for this section.
\begin{theorem}
\label{thm:main-thm-noisy-sgd}
Let $f(\cdot, \z)$ be an $L$-smooth $G$-Lipschitz convex function $\forall \ \z$.
For any $\frac{1}{n} \leq \rho\leq 1$, using \cref{alg:noisy-m-sgd} as the learning algorithm and \cref{alg:unlearn-noisy-m-sgd} as its unlearning algorithms, then given a stream of edit requests,
\begin{enumerate}
    \item Satisfies exact unlearning at every point in the stream.
    \item At time $i$ in the stream of edit requests, outputs $\hat \w_{S^i}$, such that if $\br{\frac{L^{1/2}D^2 \sqrt{d}}{G (\rho n)}}^{2/3} \leq \frac{GD}{\sqrt{\rho n}}$, then its with excess empirical risk bounded as,
    \begin{align*}
    \E{\hat F_S(\hat \w_{S^i}) - \hat F_{S}(\w_{S^i}^*)} \lesssim
    \br{\frac{L^{1/2}D^2 \sqrt{d}}{G (\rho n)}}^{2/3}
\end{align*}
\item For $k$ edit requests, the  expected total unlearning runtime is $O(\max\bc{\rho k \cdot \text{Training time},k})$  
\end{enumerate}

\end{theorem}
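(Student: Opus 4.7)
The plan is to mirror the proof of Theorem~\ref{thm:main-result}, but using \emph{noisy-m-SGD} (\cref{alg:noisy-m-sgd}) together with its unlearning counterpart \cref{alg:unlearn-noisy-m-sgd}, and then to rebalance $T$ and the mini-batch size $m$ without the benefit of Nesterov acceleration. Starting from Proposition~\ref{prop:upper_bound_noisy-m-sgd}, for a chosen stability level $\tilde\rho$ the learning algorithm is $\min\{\tilde\rho,1\}$-TV-stable and has excess empirical risk bounded by $O\br{\frac{LD^2}{T} + \frac{GD}{\sqrt{Tm}} + \frac{GD\sqrt d}{\tilde\rho n}}$. Since the theorem asks for a total expected unlearning cost proportional to $\rho k$ rather than $\tilde\rho\sqrt T\,k$, I will set $\tilde\rho = \rho/\sqrt T$ exactly as in the accelerated case.

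First I would establish exact unlearning and the recompute bound. \cref{alg:unlearn-noisy-m-sgd} constructs the same kind of coupling as \cref{alg:unlearn-noisy-m-a-sgd}: at every step it couples mini-batch indices by the replace-if-used rule (Claims~\ref{claim:sgd_deletion}, \ref{claim:sgd_insetion}), verifies the current noisy gradient via one-step rejection sampling of the ratio of Gaussian conditional densities, and on failure reflects through the midpoint of the two Gaussian means before continuing to retrain on $S'$. The underlying Markov chain is now first-order in $\w$ (there is no auxiliary $\accw$ state), so the inductive argument of Lemma~\ref{lem:m-sgd_coupling} simplifies: the conditioning on $(\w_t,\w_{t-1})$ is replaced throughout by conditioning on $\w_t$ alone, and the same reflection-coupling result (Lemma~\ref{lem:reflection_coupling_basic}) delivers the correct marginal, hence exact unlearning. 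The accept-probability calculation of Lemma~\ref{lem:noisy-m-sgd-accept} carries over verbatim because the pairwise Gaussian TV bound is unchanged, yielding $\P{\text{accept}} \geq 1 - \sqrt T\tilde\rho/8$; Proposition~\ref{prop:runtime_main} then bounds the expected number of recomputes over $k$ edits by $O(\tilde\rho\sqrt T\,k) = O(\rho k)$ once we plug in $\tilde\rho = \rho/\sqrt T$.

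Next I would rebalance accuracy. After the substitution the excess-risk bound becomes $O\br{\frac{LD^2}{T} + \frac{GD}{\sqrt{Tm}} + \frac{GD\sqrt{dT}}{\rho n}}$. Taking the largest mini-batch that does not inflate training wall-clock, $m = TG^2/(LD)^2$, folds the middle term into $\frac{LD^2}{T}$. Balancing the remaining two terms gives $T^{3/2} = LD\rho n/(G\sqrt d)$, i.e.\ $T = \br{LD\rho n/(G\sqrt d)}^{2/3}$, and the resulting excess risk is $O(LD^2/T)$, which simplifies to the claimed bound. The only regime check is the admissibility condition $T \geq (\tilde\rho n)^2/(16m^2)$ from Proposition~\ref{prop:upper_bound_noisy-m-sgd}; substituting $\tilde\rho = \rho/\sqrt T$ and $m = TG^2/(LD)^2$ and simplifying, this reduces, exactly as in the accelerated analysis, to the statement that \emph{noisy-m-SGD}'s accuracy bound is at most that of \emph{sub-sample-GD}, which is precisely the hypothesis of item~2; outside that regime one falls back on \cref{alg:sub-sample-GD}, whose guarantees via Proposition~\ref{prop:upper_bound_subsample} and \cref{prop:unlearn-subsample-GD} are unaffected.

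The runtime claim is the direct analogue of Claim~\ref{claim:runtime_noisy_A_SGD}: expected verification time per edit is $O(Tm(\mathfrak G + d)/n)$ and a full recompute costs $O(Tm(\mathfrak G + d))$, so combining with Proposition~\ref{prop:runtime_main} and the $O(\rho k)$ bound on recomputes yields total expected cost $O(\max\bc{k, \rho k\cdot\text{Training time}})$. The extension from one edit request to the whole stream is the same induction used in the proof of Theorem~\ref{thm:main-result}, together with the $n/2 \leq |S^i| \leq 2n$ assumption. I expect the main bookkeeping obstacle to be the regime check in the previous paragraph: verifying that $T \geq (\tilde\rho n)^2/(16 m^2)$ holds simultaneously with the chosen $T$, $m$, and $\tilde\rho = \rho/\sqrt T$ is where the hypothesis $\br{\frac{L^{1/2}D^2\sqrt d}{G(\rho n)}}^{2/3} \leq \frac{GD}{\sqrt{\rho n}}$ really enters, and it requires the same algebraic simplification that already appears in the proof of Theorem~\ref{thm:main-result}.
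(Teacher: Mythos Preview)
Your proposal is correct and follows essentially the same approach as the paper's own proof: start from Proposition~\ref{prop:upper_bound_noisy-m-sgd}, observe that the coupling/unlearning analysis of \cref{alg:unlearn-noisy-m-a-sgd} carries over to \cref{alg:unlearn-noisy-m-sgd} (the paper simply asserts this, while you helpfully note that the Markov chain is first-order so the conditioning in Lemma~\ref{lem:m-sgd_coupling} simplifies), substitute $\tilde\rho=\rho/\sqrt T$, take $m=(G/LD)^2T$, and balance $T$ to obtain the claimed rate. Your treatment of the admissibility condition $T\geq(\tilde\rho n)^2/(16m^2)$ as equivalent to the stated hypothesis is exactly what the paper does; the only superfluous remark is the fallback to \emph{sub-sample-GD}, since the theorem as stated simply conditions on the hypothesis rather than covering both regimes.
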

\begin{proof}[Proof of \cref{thm:main-thm-noisy-sgd}]
We proceed as in the proof of \cref{thm:main-result}. For any $0< \tilde \rho \leq 1$, from \cref{prop:upper_bound_noisy-m-sgd}, the output $\hat\w_{S}$ is $\tilde \rho$-TV stable, and the excess empirical risk using \cref{alg:noisy-m-sgd} on a dataset $S$ on $n$ points, is bounded as,
\begin{align*}
    \mathbb{E}\hat F_S(\hat \w_S) - \hat F_S(\w^*_S) \leq  O\br{\frac{LD^2}{T}  + \frac{GD}{\sqrt{Tm}} + \frac{G D\sqrt{d}}{n \tilde \rho}} 
\end{align*}

It can be easily verified that \cref{prop:unlearn-noisy-m-A-SGD} and \cref{prop:unlearn-noisy-m-A-SGD} still holds for \emph{noisy-m-SGD}, which gives us that the algorithm satisfies exact unlearning at every time in the stream, proving the first part of the claim, Moreover, its recompute probability bounded by $O(\tilde \rho k \sqrt{T})$ and therefore the unlearning runtime bounded by $O(\max \bc{k,\tilde \rho k \sqrt{T}\cdot \text{Training time}}$. Substituting $\tilde \rho = \frac{\rho}{\sqrt{T}}$, and using the largest mini-batch size $m = \br{\frac{G}{LD}}^2T$, the upper bound on   excess empirical risk becomes $\frac{LD^2}{T} + \frac{G D\sqrt{d}\sqrt{T}}{n \rho} $. Optimizing the trade-off, we have $\frac{LD^2}{T} = \frac{LD^2}{T} \iff T = \br{\frac{LD (\rho n)}{G\sqrt{d}}}^{2/3}$, and the excess empirical risk bound upper bound is $\frac{LD}{T} = \br{\frac{L^{1/2}D^2 \sqrt{d}}{G (\rho n)}}^{2/3}$.
Note that this also proves the third part of the claim. Furthermore, as in the proof of \cref{thm:main-result}, it can be verified that the condition $T\geq \frac{(\tilde \rho n)^2}{16m^2}$ is equivalent to $\br{\frac{L^{1/2}D^2 \sqrt{d}}{G (\rho n)}}^{2/3} \leq \frac{1}{\sqrt{\rho n}}$, which just means that the excess empirical risk of \emph{noisy-m-SGD} is at most that of \emph{sub-sample-GD}.
Finally, the upper bound holds for any point in the stream using the assumption that the number of samples are between $\frac{n}{2}$ and $2n$, thereby establishing the second claim.
\end{proof}

\subsection{\emph{quantized-m-SGD}}
\label{sec:qsgd}

The work of \cite{ginart2019making} considers unlearning in $k$-means clustering. The key algorithmic technique is randomized quantization of vectors to a $\tau$-lattice.  
The intuition is that if the vector is an average of $n$ data points which are bounded in norm, then upon changing one data point, the vectors $O\br{\frac{1}{n}}$ close.
Therefore, if the lattice is sufficiently coarse, then it would ensure that both are mapped up the same point in the lattice. However, if we consider deterministic quantization, then there exists points such that for any $\epsilon>0$, shifting the point by $\epsilon$ changes the quantized point. Therefore, we first shift the lattice by a uniformly random phase, which ensures that such a situation occurs with a small probability. 

In their application of $k$-means clustering, this vector is a cluster centroid, which is an average of the data points in the cluster. We apply this idea to convex risk minimization problems, wherein we quantize average gradients, which by the Lipshcitzness assumption are bounded in norm.

We now introduce the quantization operation formally.
Given a vector $\x$, let $\theta \sim \text{Unif}\left[-\frac{1}{2},-\frac{1}{2}\right]^d$, consider the quantization given by:
\begin{align*}
    Q_{\theta}(\x) = \tau\br{\theta + \arg \min_{j \in \Z^d} \br{\x - \tau(\theta + j)}}
\end{align*}

We now state a result about the quantization operation.

\begin{lemma}
\label{claim:quantize}
Let $B_\delta(\u)$ denote the Euclidean call of radius $\delta$ centered at $\u$.
The following holds for the quantization operation,
\begin{enumerate}
    \item For any $\x$, $\E{Q_\theta(\x)}= \x$ and $\E{\norm{\x-\E{\x}}}^2 \leq \tau^2d $
   \item For any vector $\u$, $\P{ \exists \v \in B_\delta(\u):Q_\theta(\u)\neq \Q_\theta(\v)} \leq \frac{2d \delta}{\tau}$
\end{enumerate}
\end{lemma}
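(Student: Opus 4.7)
The plan is to exploit the product structure of the shifted lattice $\tau(\theta + \Z^d)$ together with separability of the Euclidean norm: the nearest-lattice-point projection decomposes coordinate-wise as $Q_\theta(\x)_i = \tau(\theta_i + \mathrm{round}(x_i/\tau - \theta_i))$, with the $\theta_i$'s independently uniform on $[-1/2, 1/2]$. Both parts of the lemma then reduce to one-dimensional statements, one combined by independence and the other by a union bound.

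For part 1, I would introduce the per-coordinate rounding error $r_i := \mathrm{round}(x_i/\tau - \theta_i) - (x_i/\tau - \theta_i)$, so that the quantization error in coordinate $i$ is $Q_\theta(\x)_i - x_i = \tau r_i$. Since $x_i/\tau - \theta_i$ is uniform on an interval of length one, and the map $u \mapsto \mathrm{round}(u) - u$ is piecewise linear with slope $-1$ and sends any length-one interval bijectively onto $[-1/2, 1/2]$, the pushforward distribution of $r_i$ is uniform on $[-1/2, 1/2]$. This immediately gives $\E{r_i} = 0$ (hence $\E{Q_\theta(\x)} = \x$) and $\E{r_i^2} = 1/12$. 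Summing over the $d$ coordinates, which are mutually independent because the $\theta_i$'s are, yields $\E{\norm{Q_\theta(\x) - \x}^2} = d\tau^2/12 \leq d\tau^2$.

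For part 2, I would observe that $Q_\theta(\u) \neq Q_\theta(\v)$ iff $\u$ and $\v$ lie in distinct cells of the shifted lattice, so the event of interest is contained in the event that $B_\delta(\u)$ crosses at least one cell-boundary hyperplane of the form $\bc{x : x_i = \tau(\theta_i + k + 1/2)}$ for some $i \in [d]$ and $k \in \Z$. Projecting $B_\delta(\u)$ onto the $i$-th axis gives an interval of length $2\delta$ centered at $u_i$, which meets such a hyperplane iff the closest boundary in coordinate $i$ lies within $\delta$ of $u_i$. Since $\theta_i$ is uniform, this closest distance is uniform on $[0, \tau/2]$, so the coordinate-wise event has probability $\min\bc{2\delta/\tau, 1}$. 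A union bound over the $d$ coordinates then yields the claimed $2d\delta/\tau$ (trivially when the bound exceeds one).

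The only real subtlety is verifying that $r_i$ is uniform on $[-1/2, 1/2]$: this is intuitive but warrants a brief change-of-variables calculation splitting the length-one domain of $u = x_i/\tau - \theta_i$ at the unique midpoint of a lattice cell it contains. Apart from this, the proof is essentially bookkeeping that leverages separability of the Euclidean projection onto a product lattice, independence of coordinates under the product uniform measure on $\theta$, and a one-dimensional union bound.
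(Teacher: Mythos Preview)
Your proposal is correct and follows essentially the same approach as the paper. For part 1, the paper simply asserts that $Q_\theta(\x) \sim \text{Unif}\bigl[\x-\tfrac{\tau}{2},\x+\tfrac{\tau}{2}\bigr]^d$ and reads off mean and variance, which is exactly what your per-coordinate rounding-error argument establishes (with more detail on why the error is uniform). For part 2, the paper does not give a proof at all but defers to Lemma~C.2 of \cite{ginart2019making}; your self-contained coordinate-wise union-bound argument is the standard one and is a welcome addition.
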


\begin{proof}[Proof of \cref{claim:quantize}]
Note that for a given $\x$, $\Q_\theta(\x) \sim \text{Unif}\left[\x-\frac{\tau}{2},\x+\frac{\tau}{2}\right]^d$, hence $\E{Q_\theta(\w)}= \w$. Furthermore, since $\w-\E{\w} \sim \text{Unif}\left[-\frac{\tau}{2},\frac{\tau}{2}\right]^d$, we have $\E{\norm{\w-\E{\w}}}^2 =d \mathbb{E}\br{\w_1-\E{\w_1}}^2 =  \frac{d\tau^2}{12}$.
The second part of the claim is Lemma C.2 in \cite{ginart2019making}.
\end{proof}

To see why \cite{ginart2019making} is a special case of our framework, note that the total variation distance between two random variables is at most the probability of disagreement under \emph{any} coupling. \cite{ginart2019making} uses the same quantization randomness (used for training) for verifying after the edit request - this  corresponds to a trivial coupling between the quantization randomness, hence the total variation distance between the outputs is bounded by the upper bound on the probability that the quantized points change (see \cref{claim:quantize}). This establishes that it is a TV stable method. Finally, as said before, using the same quantization randomness corresponds to a trivial coupling, but can be shown to be maximal since the probability distribution is uniform around the to-be-quantized point. Therefore, we have that \cite{ginart2019making} uses a maximal coupling based unlearning method.

\paragraph{Batch unlearning:} 
We consider a batch unlearning setup, wherein instead of observing an insertion or deletion request, we observe a batch edit request with insertions and deletions. We demonstrate that our general approach of coupling mini-batch indices is flexible enough to handle this variant naturally. The batch unlearning ideas and results extend to other algorithms: \emph{noisy-m-A-SGD}, \emph{noisy-m-SGD} and \emph{subsample-GD}. We also note that the computational benefit of batch unlearning as opposed to handling edits one by one is only a constant factor, which at best is two.

We now discuss how we extend the randomized quantization idea to convex risk minimization.
In our learning algorithm \emph{quantized-m-SGD}, at each iteration, we draw a mini-batch of $m$ samples, uniformly randomly from $n$ samples, use it to compute the gradient on the previous iterate , quantize using a randomly sampled phase, and update. Algorithm \ref{alg:unlearn_qsgd} implements the above procedure.

\begin{algorithm}[ht]
\caption{\emph{quantized-m-SGD}}
\label{alg:qsgd}
\begin{algorithmic}[1]
\REQUIRE{Initial model $\w_1$, data points $\bc{\z_1,\ldots, \z_n}$,$T,\eta$}
\FOR{$t=1,2\ldots, T$}
\STATE Sample mini-batch $b_t$ of size $m$ uniformly randomly
\STATE Sample $\theta_t \sim \text{Unif}\left[-\frac{1}{2},\frac{1}{2}\right]^d$
\STATE $\g_t = \frac{1}{m}\sum_{j \in b_t} \nabla f(\w_t,\z_j)$
\STATE $\w_{t+1} = \w_t - \eta Q_{\theta_t}\br{\g_t}$
\STATE Save($\theta_t,\w_t, b_t, \g_t$)
\ENDFOR
\ENSURE{$\hat \w_S = \frac{1}{T+1}\sum_{j=1}^{T+1} \w_j$}
\end{algorithmic}
\end{algorithm}

We first prove a lemma which bounds the total variation distance between outputs generated by \emph{quantized-m-SGD} on arbitrarily differing datasets - these can be thought of as arising after a batch edit request.

\begin{lemma}
\label{lemma:tv-stability-qsgd}
Let $S$ and $S'$ be two datasets of $n$ and $n+k_2$ points respectively, such that  $S$ has $k_1$ points which differ from $S'$ i.e. $\abs{S\backslash S'}=k_1$, therefore $S$ and $S'$ differ by $k_1+k_2$ points.
Let $\bc{\w_j}_{j=1}^T$ and $\bc{\w_j'}_{j=1}^T$ be iterates of \emph{quantized-m-SGD} on datasets $S$ and $S'$ respectively. The total variation distance between distribution of average iterates $\hat \w_S$ and $\hat \w_{S'}$ is bounded as,

\begin{align*}
    \text{TV}(\hat \w_S, \hat \w_{S'}) \leq 
    \frac{4GTd(k_1+k_2)}{n\tau} 
\end{align*}
\end{lemma}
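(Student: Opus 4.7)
My plan is to bound $\text{TV}(\hat\w_S,\hat\w_{S'})$ by explicitly constructing a coupling of the two executions of Algorithm \ref{alg:qsgd} and then using the fact that $\text{TV}$ is at most the probability of disagreement under any valid coupling (followed by data-processing for the averaging map). The coupling has two ingredients: (i) share the quantization phase $\theta_t$ between both runs at every iteration, and (ii) couple the mini-batch draws $b_t \sim \mu_{n,m}$ and $b_t' \sim \mu_{n+k_2,m}$ so that they reference the same data points in $S \cap S'$ as often as possible, with the remaining positions filled by independent samples from the ``extra'' sets $S \setminus S'$ and $S' \setminus S$ in a way that preserves both marginals (a direct combinatorial generalization of the single-edit couplings in Claims \ref{claim:sgd_deletion}--\ref{claim:sgd_insetion}).

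I will then argue inductively that the iterates stay coupled with high probability. Since $\theta_t$ is shared, $\w_{t+1} = \w_{t+1}'$ whenever $Q_{\theta_t}(\g_t) = Q_{\theta_t}(\g_t')$. By $G$-Lipschitzness and the mini-batch coupling, $\|\g_t - \g_t'\| \leq (2G/m)\,|b_t \triangle b_t'|$, where the symmetric difference counts mini-batch positions referring to distinct data points. Each of the at most $2k_1+k_2$ differing data points enters a size-$m$ sub-sample with probability at most $m/n$, so $\mathbb{E}[|b_t \triangle b_t'|] \lesssim m(k_1+k_2)/n$ and hence $\mathbb{E}\|\g_t - \g_t'\| \lesssim G(k_1+k_2)/n$. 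The second part of Claim \ref{claim:quantize} then bounds the per-step disagreement probability, conditional on $\g_t,\g_t'$, by $(2d/\tau)\|\g_t - \g_t'\|$; taking an outer expectation and union-bounding over the $T$ iterations yields the claimed $\frac{4GTd(k_1+k_2)}{n\tau}$ up to the explicit constant.

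The main obstacle will be specifying the mini-batch coupling cleanly so that both marginals $\mu_{n,m}$ and $\mu_{n+k_2,m}$ are exactly respected while $\mathbb{E}|b_t \triangle b_t'|$ is $O(m(k_1+k_2)/n)$; the constructions in Claims \ref{claim:sgd_deletion}--\ref{claim:sgd_insetion} supply the right combinatorial scaffolding. A conceptually simpler alternative is to decompose the transition from $S$ to $S'$ into a chain of $2k_1+k_2 \leq 2(k_1+k_2)$ single-edit steps, establish a per-step $\text{TV}$ bound of order $GTd/(n\tau)$ by applying the above coupling to a single insertion or deletion, and then invoke Remark \ref{remark:tv_upto_k} to sum via the triangle inequality for $\text{TV}$; this route trades slightly looser constants for a cleaner coupling argument.
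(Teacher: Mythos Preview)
Your proposal is correct and shares the paper's high-level structure---sharing the quantization phase $\theta_t$, bounding the per-step probability that $Q_{\theta_t}(\g_t) \neq Q_{\theta_t}(\g_t')$ via Claim~\ref{claim:quantize}, then a union bound over the $T$ iterations and data-processing for the averaging map. The difference is in how the per-step bound is obtained.

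The paper does \emph{not} construct an explicit coupling of $\mu_{n,m}$ and $\mu_{n+k_2,m}$ inside this lemma. Instead it proves the per-step bound by induction on $k_2$: in the base case $k_2=0$ both marginals are $\mu_{n,m}$, so the trivial coupling $b^{(1)}=b^{(2)}$ is used; conditioning on the number $j$ of differing data points sampled gives a quantization-failure probability $4Gdj/(m\tau)$, and summing over $j$ via Vandermonde's identity yields $4Gdk_1/(n\tau)$. For $k_2\geq 1$ the paper decomposes the law under $S'$ as a mixture according to whether the ``last'' extra point is sub-sampled, and applies the inductive hypothesis---so no coupling of differently-sized sub-sampling distributions is ever needed here.

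Your direct-coupling route is exactly the machinery the paper develops \emph{later}, in the ``Batch coupling'' paragraph and Claims~\ref{claim:qsgd_coupling1}--\ref{claim:qsgd_coupling2}, and then uses in the proof of Proposition~\ref{prop:unlearn-qsgd}, where the expected number of differing mini-batch indices is computed to be $m(k_1+k_2)/n$ via the same Vandermonde identity. So your primary approach is valid and arguably more streamlined than the paper's induction; it simply front-loads a construction the paper postpones to the unlearning analysis. Your triangle-inequality alternative is also sound and, as you note, loses at most a factor of $2$ (producing $(2k_1+k_2)$ in place of $(k_1+k_2)$; cf.\ Remark~\ref{remark:tv_upto_k} and the remark following Proposition~\ref{prop:upper-bound-qsgd}).
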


\begin{proof}[Proof of \cref{lemma:tv-stability-qsgd}]
Without loss of generality, we enumerate $S$ and $S'$ into subsets as follows: let $S_1$ and $S_1'$ be the first $n-k_1$ elements of $S$ and $S'$ which are the same. Let $S_2$ and $S_2'$ be the next $k_1$ differing elements in $S$ and $S'$ respectively. Finally, let $S_3'$ be the last $k_2$ elements of $S'$.

We look at iteration $t$ of \emph{quantized-m-SGD} and fix the previous model $\w_t = \w_t' = \w$. We will now compute the conditional total variation distance between $\w_{t+1}$ and $\w_{t+1}'$. Note that since the only randomness is in the sub-sampling and quantization, we can compute the total variation distance between sub-sampled quantized gradients on fixed $\w$ for both datasets, and this will lower bound total variation distance between the iterates $\w_{t+1}$ and $\w_{t+1}'$ by data processing inequality.
Let $b^{(1)}$ and $b^{(2)}$ be a uniform sample of $m$ points from datasets $S$ and $S'$ respectively. 
For a fixed $\w$, let the gradient on $S$ indexed by $b^{(1)}$ be denoted as $\g_{b^{(1)}}^S(\w) = \frac{1}{m}\sum_{j \in b^{(1)}} \nabla f(\w,\z_{j}^S)$, and similarly for $S'$. 
Let $P$ and $Q$ denote the probability distribution of $\g_{b^{(1)}}^S(\w)$ and $\g_{b^{(2)}}^{S'}(\w)$ respectively.
We have the following claim, which we will prove via mathematical induction on $k_2$: for any measurable set $R$, for any $k_2$, $ \abs{P(R)-Q(R)} \leq \frac{4Gdk_1}{n\tau} + \sum_{j=1}^{k_2}\frac{4Gd}{(n+j)\tau}$.

\paragraph{Base case 1: $\mathbf{k_2=0:}$} 
Firstly note that both $b^{(1)} \sim \text{Unif}([n],m)$ and  $b^{(2)} \sim \text{Unif}([n],m)$, and consider the trivial coupling $b^{(1)}=b^{(2)}=b$, where $b \sim \text{Unif}([n],m)$, be a uniform sample of $m$ points from $[n]$. 
We now use the fact that total variation distance is at most the probability of disagreement for any coupling. This gives us that 
\begin{align*}
   \text{TV}(Q_\theta(\g_{b^{(1)}}^S(\w)), Q_\theta(\g_{b^{(2)}}^{S'}(\w)) \leq \P{Q_\theta(\g_b^S(\w)) \neq Q_\theta(\g_b^{S'}(\w))}
\end{align*}
We will focus on upper bounding the right hand side.
The proof follows by using the quantization guarantee (Claim \ref{claim:quantize}) combined amplification from subsampling. 
Without loss of generality, assume that the first $k_1$ samples in $S$ and $S'$ are the ones that differ. Fix the random (uniform) sample $b$ of indices - suppose for this fixed value of $b$, exactly $j$ differing data points are sampled. From $G$ Lipschitzness, and that we have exactly $j$ differing data points,  $\norm{\g_b^S(\w) - \g_b^{S'}(\w) } \leq \frac{2Gj}{m}$. Hence, applying Claim \ref{claim:quantize}, we have that  $$\P{Q_\theta(\g_b^S(\w)) \neq Q_\theta(\g_b^{S'}(\w)) \Big\vert b \text{ producing } j \text{ differing samples}}  \leq \frac{4 Gdj}{\tau m}.$$ 
We will now integrate with respect to the randomness in $b$ - for this, we need to calculate the probability that a sample of $b$ (uniform $m$ out of $n$) produces exactly $j$ differing data points, call it $p(j)$. By direct computation, we have that $p(j) = \frac{{k_1 \choose j}{n-k_1 \choose m-j}}{{n \choose m}}$. Hence we have, \begin{align*}
    \P{Q_\theta(\g_b^S(\w)) \neq Q_\theta(\g_b^{S'}(\w))} &= \sum_{j=0}^{k_1} p(j) \P{Q_\theta(\g_b^S(\w)) \neq Q_\theta(\g_b^{S'}(\w)) \Big \vert \ b \ \text{produces } j \ \text{differing samples}} \\
    & \leq \sum_{j=0}^{k_1} \frac{{{k_1} \choose j}{n-{k_1} \choose m-j}}{{n \choose m}} \frac{4 Gdj}{\tau m}  
      =  \frac{m{k_1}}{n} \frac{4 Gd}{\tau m}  =  \frac{4 Gd{k_1}}{\tau n} 
\end{align*}
where the second last equality is a consequence of Vandermonde's identity, as we show below. We need to show that $\sum_{j=0}^{k_1} \frac{{{k_1} \choose j}{n-{k_1} \choose m-j}j}{{n \choose m}} =  \frac{m{k_1}}{n} \iff \sum_{j=0}^{k_1} {{k_1} \choose j}{n-{k_1} \choose m-j}j = \frac{m{k_1}}{n} {n \choose m} = {k_1}{n-1 \choose m-1}$. This holds because,
\begin{align*}
    \sum_{j=0}^{k_1} {{k_1} \choose j}{n-{k_1} \choose m-j}j &=   \sum_{j=0}^{k_1} \frac{{k_1}}{j}{{k_1}-1 \choose j-1}{n-{k_1} \choose m-j}j = {k_1}\sum_{j=0}^{k_1}{{k_1}-1 \choose j-1}{n-{k_1} \choose m-j} \\
    &={k_1}\sum_{j=0}^{{k_1}-1}{{k_1}-1 \choose j}{n-{k_1} \choose (m-1)-j} = {k_1}{n-1 \choose m-1}
\end{align*}
where in the second last equality, we re-indexed the sum which removes the first element, but it was zero anyway, and the last equality follows from Vandermonde's identity.

\paragraph{Base case 2: $\mathbf{k_2=1:}$} In this case, $S'$ has one more element that $S'$ - let this point be denoted as $q$.
In this case, the probability distribution using $S'$ has the form $Q = \br{1-\frac{m}{n+1}}Q_1+\frac{m}{n+1}Q_2$, where $Q_1$ is the probability distribution conditioned on the event that $q$ is sub-sampled, and $Q_2$ is the probability distribution conditioned on the complementary event. For any measurable set $R$, we have, 
\begin{align*}
    \abs{P(R)-Q(R)} = \abs{\br{1-\frac{m}{n+1}}Q_1(R)+\frac{m}{n+1}Q_2(R)-P(R)}
\end{align*}
Note that $Q_1,Q_2$ and $P$ are all probability distributions over $n$ elements. Furthermore, $P$ and $Q_1$ are probability distributions over $k_1$ differing elements, therefore we can use base case $k_2=0$ to get that $\abs{P(R)-Q_1(R)} \leq \epsilon_1:=\frac{4Gdk_1}{n\tau}$. We therefore get,
\begin{align*}
    \abs{P(R)-Q(R)} &\leq \abs{\br{1-\frac{m}{n+1}}Q_1(R)+\frac{m}{n+1}Q_2(R)-Q_1(R)+\epsilon_1} \\
    & = \abs{\frac{m}{n+1}\br{Q_2(R)-Q_1(R)} + \epsilon_1}
\end{align*}
Finally note that $Q_1$ and $Q_2$ are probability distributions over $n$ such that upon sub-sampling $m$ elements, there is exactly one differing element, therefore
we get, $\abs{Q_2(R)-Q_1(R)} \leq \frac{4Gd}{m\tau}$. We therefore have that 
\begin{align*}
  \abs{P(R)-Q(R)} &\leq  \frac{m}{n+1}\frac{4Gd}{m\tau} + \frac{4Gdk_1}{n\tau} = \frac{4Gd}{(n+1)\tau} + \frac{4Gdk_1}{n\tau} 
\end{align*}

\paragraph{Induction Hypothesis:} Suppose the following holds for $k_2 \leq \tilde k$: for any measurable set $R$, $\abs{P(R)-Q(R)} \leq \frac{4Gdk_1}{n\tau} + \sum_{j=1}^{\tilde k}\frac{4Gd}{(n+j)\tau}$. 
\paragraph{Induction Step: $k_2 = \tilde k+1$}
Let the \emph{last} element of $S'$ be $q$. As in the base case, we decompose the distribution $Q$ into a mixture of two components based on whether $q$ is sampled or not. We have $Q = \br{1- \frac{m}{n+\tilde k+1}} Q_{1} + \frac{m}{n+\tilde k+1}Q_2$. Note that $Q_1$ is a probability distribution which does not use the last element of $S'$. Therefore we can use Induction hypothesis which gives us that $\abs{Q_1(R)-P(R)}\leq \frac{4Gdk_1}{n\tau} + \sum_{j=1}^{\tilde k}\frac{4Gd}{(n+j)\tau}$. We therefore get, 
\begin{align*}
    \abs{P(R)-Q(R)} &= \abs{P(R)-\br{1- \frac{m}{n+\tilde k+1}} Q_{1}(R) - \frac{m}{n+\tilde k+1}Q_2(R)} \\
    & \leq \abs{\frac{m}{n+\tilde k+1}\br{Q_1(R)-Q_2(R)} + \frac{4Gdk_1}{n\tau} + \sum_{j=1}^{\tilde k}\frac{4Gd}{(n+j)\tau}} \\
    &\leq \abs{\frac{m}{n+\tilde k+1} \frac{4Gd}{m\tau} + \frac{4Gdk_1}{n\tau} + \sum_{j=1}^{\tilde k}\frac{4Gd}{(n+j)\tau}} \\
    & = \frac{4Gdk_1}{n\tau} + \sum_{j=1}^{\tilde k+1}\frac{4Gd}{(n+j)\tau}
\end{align*}
where in the last inequality, as in the base case, we used that fact that distributions $Q_1$
 and $Q_2$ differ because in one we subsample the last element where as in the other we don't, so from Claim \ref{claim:quantize}, for two data sets of size $m$ differing in one element, the failure probability is $\frac{4Gd}{m\tau}$. This completes the induction argument.
We bound the sum simply as $\sum_{j=1}^{k_2}\frac{4Gd}{(n+j)\tau} \leq \frac{4Gdk_2}{n\tau}$, which gives us that the whole term is bounded by $\frac{4Gd(k_1+k_2)}{n\tau}$.

The above, by an application of data processing inequality, shows that the conditional TV distance between $\w_{t+1}$ and $\w_{t+1}'$ is at most $\frac{4Gd(k_1+k_2)}{n\tau}$. Note that the upper bound holds uniformly over all conditioning events. 
Moreover, from the maximal coupling characterization of TV distance, we have that for any coupling of $\w_{t+1}$ and $\w_{t+1}'$,
the conditional probability of disagreement is at most $\frac{4Gd(k_1+k_2)}{n\tau}$. Consider the coupling which just concatenates all these couplings, then an application of union bound over the $T$ iterates, the joint probability of disagreement under this coupling is at most $\frac{4Gd(k_1+k_2)T}{n\tau}$ which gives us our upper bound on TV distance between joint iterates.
Finally, by data processing inequality, the same upper bound holds for the average iterates which finishes the proof.
\end{proof}

We now establish the guarantees on the learning \cref{alg:qsgd}.
To handle batch edit request, we extend the notion of exact unlearning with one edit request to batch request: we term it exact batch unlearning. We similarly also extend $\rho$-TV-stability to $(k_1,k_2,\rho)$-TV stability, which is $\rho$-TV stability under arbitrary $k_1$ deletions and $k_1+k_2$ insertions, as well as $k_1$ insertions and $k_1+k_2$ deletions.
\begin{proposition}
\label{prop:upper-bound-qsgd}
Let $f(., \z)$ be an $L$-smooth $G$-Lipschitz convex function $\forall \ \z$.
Algorithm \ref{alg:noisy-m-sgd}, run with $\eta = \min\bc{\frac{1}{2L}, \frac{D}{\br{\frac{G}{\sqrt{m}} +\tau \sqrt{d}}\sqrt{T}}}$, $\tau =  \frac{4GdT}{\rho n}$, and $T = \max \bc{\frac{(\rho n)}{d^{3/2}\sqrt{m}},\br{\frac{LD (\rho n)}{Gd^{3/2}}}^{2/3} }$ outputs $\hat \w_S$ which is $(k_1,k_2,
(k_1+k_2)\rho)$-TV stable and satisfies $ \E{ \hat F_S(\hat \w_S) - \hat F_S(\w^*_S)}  \lesssim \br{\frac{\sqrt{L}GD^2 d^{3/2}}{\rho n}}^{2/3}$.
    
\end{proposition}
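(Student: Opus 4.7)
My plan is to split the proof into two independent pieces: (a) the $(k_1,k_2,(k_1+k_2)\rho)$-TV-stability claim, and (b) the excess empirical risk bound. The first piece is essentially a direct consequence of Lemma \ref{lemma:tv-stability-qsgd}, which already gives $\text{TV}(\hat \w_S,\hat \w_{S'})\leq \frac{4GTd(k_1+k_2)}{n\tau}$ for any two datasets $S, S'$ with $|S\setminus S'|=k_1$ and $|S'|-|S|=k_2$. Substituting the prescribed $\tau = \frac{4GdT}{\rho n}$ collapses this bound to exactly $(k_1+k_2)\rho$, which is the desired stability parameter. So this part is essentially bookkeeping.

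For the accuracy bound, I plan to view \emph{quantized-m-SGD} as projected SGD with a stochastic gradient oracle $\widehat{\g}_t := Q_{\theta_t}(\g_t)$, and verify that this oracle is unbiased with controlled second moment, so that the classical smooth-convex SGD convergence result (Theorem 4.1 of \cite{allen2018make}, as used in \cref{prop:upper_bound_subsample}) applies. Unbiasedness follows by a tower argument: conditional on the mini-batch $b_t$, Claim \ref{claim:quantize}(1) gives $\E_{\theta_t}[Q_{\theta_t}(\g_t)\mid b_t]=\g_t$, and then the sub-sampling calculation from the proof of \cref{prop:upper_bound_subsample} gives $\E_{b_t}[\g_t]=\nabla \hat F_S(\w_t)$. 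For the variance, the quantization noise is conditionally independent of the sub-sampling noise, so the cross term vanishes and
\begin{align*}
    \cV^2 \;=\; \E\norm{\widehat{\g}_t-\nabla \hat F_S(\w_t)}^2 \;\leq\; \underbrace{\E\norm{Q_{\theta_t}(\g_t)-\g_t}^2}_{\leq\, \tau^2 d\ \text{by Claim \ref{claim:quantize}(1)}} \;+\; \underbrace{\E\norm{\g_t-\nabla \hat F_S(\w_t)}^2}_{\leq\, 2G^2/m}.
\end{align*}

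Plugging this variance into the standard SGD bound with $\eta\leq 1/(2L)$ yields $\E[\hat F_S(\hat\w_S)-\hat F_S(\w_S^*)]\lesssim \eta\cV^2 + D^2/(\eta T)$, and choosing $\eta = \min\{1/(2L), D/((G/\sqrt{m}+\tau\sqrt{d})\sqrt{T})\}$ (exactly as in the proposition) simplifies the bound to $\frac{LD^2}{T} + \frac{GD}{\sqrt{Tm}} + \frac{\tau\sqrt{d}\,D}{\sqrt{T}}$. Substituting $\tau = \frac{4GdT}{\rho n}$ rewrites the last term as $\frac{4Gd^{3/2} D\sqrt{T}}{\rho n}$, giving a three-term trade-off in $T$.

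The remaining step is to optimize $T$. Balancing the second and third terms gives $T = \rho n/(d^{3/2}\sqrt{m})$, and balancing the first and third terms gives $T = \bigl(LD\rho n/(G d^{3/2})\bigr)^{2/3}$; taking the maximum as in the statement makes the third term dominate both of the others, yielding excess risk $\lesssim \frac{G d^{3/2} D \sqrt{T}}{\rho n} = \bigl(\sqrt{L} G D^2 d^{3/2}/(\rho n)\bigr)^{2/3}$, as claimed. I do not expect any serious obstacle here: all the hard work (the per-iteration coupling-across-distributions analysis needed to transfer TV between Gaussian-style noise and quantized noise in the presence of sub-sampling) has already been carried out in Lemma \ref{lemma:tv-stability-qsgd}, and the SGD convergence analysis is standard once unbiasedness and the variance bound are in place. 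The only mild subtlety to watch is ensuring that the quantization randomness $\theta_t$ is freshly drawn each iteration and independent of $b_t$ and the past, so that the conditional-expectation factorization used to kill the cross term and to invoke the SGD bound is legitimate.
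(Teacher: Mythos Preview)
Your proposal is correct and follows essentially the same approach as the paper: invoke Lemma~\ref{lemma:tv-stability-qsgd} and substitute $\tau=\frac{4GdT}{\rho n}$ for the stability claim, then use unbiasedness of the quantized sub-sampled gradient (via Claim~\ref{claim:quantize} and the sub-sampling calculation from \cref{prop:upper_bound_subsample}) together with the variance bound $\cV^2\leq \frac{2G^2}{m}+\tau^2 d$ in the standard smooth-convex SGD guarantee, and finally balance the three-term trade-off in $T$ exactly as you describe. The only cosmetic discrepancy is that Algorithm~\ref{alg:qsgd} as written has no projection step, so ``projected SGD'' is a slight misnomer, but this does not affect the argument.
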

\begin{proof}[Proof of \cref{prop:upper-bound-qsgd}]

The $(k_1,k_2,
(k_1+k_2)\rho)$-TV stability guarantee follows from \cref{lemma:tv-stability-qsgd} by taking a supremeum over all datasets $S$ and $S'$ of sizes $n$ and $n+k_2$ (or $n-k_2)$ to get that that TV stability is uniformly upper bound by $\frac{4GTd(k_1+k_2)}{n\tau} =(k_1+k_2)\rho$, where the equality follows upon setting $\tau = \frac{4GdT}{\rho n}$.
For the excess empirical risk bound, we use the guarantee on excess empirical risk of SGD on smooth convex functions (for example, Theorem 4.1 from \cite{allen2018make}), combined with the fact in \cref{claim:quantize} that quantization produces unbiased estimates of the gradient with bounded variance $\cV^2 \leq \frac{2G^2}{m} + \tau^2 d = \frac{2G^2}{m} +\frac{16G^2d^3T^2}{(\rho n)^2}$. Therefore, choosing step size $\eta \leq \frac{1}{2L}$, we get

\begin{align*}
   \E{ \hat F_S(\hat \w_S) - \hat F_S(\w^*_S)} \leq O\br{ 2\eta\cV^2 + \frac{D^2}{\eta T}} = O\br{2\eta \br{\frac{2G^2}{m} +\frac{16G^2d^3T^2}{(\rho n)^2}} + \frac{D^2}{\eta T}}
\end{align*}

Define $\tilde G^2 = 2 \br{\frac{2G^2}{m} +\frac{16G^2d^3T^2}{(\rho n)^2}} $ and set $\eta = \min\bc{\frac{1}{2L}, \frac{D}{\tilde G\sqrt{T}}}$, which makes the upper bound
\begin{align*}
   \E{ \hat F_S(\hat \w_S) - \hat F(\w^*_S)} &\leq O\br{ 2\eta\cV^2 + \frac{D^2}{\eta T}} = O\br{\frac{LD^2}{T} + \frac{\tilde GD}{\sqrt{T}}}\\
   &\leq O\br{\frac{LD^2}{T}+\frac{GD}{\sqrt{Tm}} + \frac{GD d^{3/2}\sqrt{T}}{(\rho n)}}
\end{align*}

Balancing the trade-off between the las two terms gives us $T=\frac{(\rho n)}{d^{3/2}\sqrt{m}}$. Similarly, balancing the trade-off between the first and last term gives us $T = \br{\frac{LD (\rho n)}{Gd^{3/2}}}^{2/3}$. Hence setting $T = \max \bc{\frac{(\rho n)}{d^{3/2}\sqrt{m}},\br{\frac{LD (\rho n)}{Gd^{3/2}}}^{2/3} }$ gives us that the expected excess empirical risk is bounded by $\br{\frac{\sqrt{L}GD^2 d^{3/2}}{\rho n}}^{2/3}$ and completes the proof.
\end{proof}

\begin{remark}
We see that the TV stability parameter above is $(k_1+k_2)\rho$ as opposed to $(k_1+2k_2)\rho$ which is what we would obtain with $\rho$-TV stability for one edit request and using the triangle inequality of TV distance (see \cref{remark:tv_upto_k}).
\end{remark}

\begin{remark}
The largest mini-batch size, without hurting runtime, is $m = \br{\frac{G}{LD}}^2T = \br{\frac{G^2 (\rho n)}{(LD)^2d^{3/2}}}^{2/3}$, which gives us $T=\br{\frac{LD (\rho n)}{Gd^{3/2}}}^{2/3}$.
\end{remark}

We now proceed to unlearning. The unlearning algorithm (\cref{alg:unlearn_qsgd}) upon observing an edit request comprising of both insertions and deletions, first couples the mini-batch indices (described formally in the next paragraph), and computes the gradient on the new mini-batch It then uses the \emph{same} quantization randomness as in training, and checks if the quantized point changes. If it does, in any iteration, then it calls recompute.
The use of the same quantization randomness corresponds to a trivial coupling between the quantization randomness.
We explain the coupling procedure in detail below.

\paragraph{Batch coupling: } We setup some notation. Consider the training dataset $S$ and dataset realized after the batch edit request $S'$.
 Given a vector $\w$, let $Q_{\theta_1}(\g_{b_1}^S(\w))$ denote the quantized gradient vector where $\theta_1$ is the quantization randomness and $b_1$ is the mini-batching randomness on dataset $S'$. Similarly, $Q_{\theta_2}(\g_{b_1}^{S'}(\w))$ denotes the quantized vector with $\theta_2$ as the quantization randomness and $b_2$ as the mini-batching randomness on dataset $S'$. We couple $\theta_1$ and $\theta_2$ by considering the trivial coupling $\theta_1 = \theta_2$ i.e. the joint probability measure is defined only on the diagonal of the product measure. To couple the mini-batch indices, we consider two cases: if the training dataset $S$ has less more or more points than $S'$. For simplicity, \cref{alg:unlearn_qsgd} is the pseudo-code corresponding only to the first case.

In the first case, suppose $S$ has $n$ points and $S'$ has $n+k_2$ points, realized after $k_1 $ deletions and $k_1+k_2$ insertions. Without loss of generality, order the two datasets as follows: the first $n-k_1$ points in $S$ and $S'$ are the same, call these $S_1=S_1'$, next we have the last $k_1$ points of $S$, and arbitrary $k_1$ points of $S'$ - call these $S_2$ and $S'_2$, and moreover let the mapping of indices from $S_2 \rightarrow S_2'$ by denoted by $\iota$. Finally we have the rest of $k_2$ points of $S'$, call this $S_3'$. 
In the following discussion, and in \cref{alg:unlearn_qsgd}, when we consider elements of these sets, we mean their indices.
As before, let $\mu_{n,m}$ and $\mu_{n+k_2,m}$  denote the probability measures correspondingly to sampling $m$ elements uniformly from a discrete universe of size $n$ (i.e. $S$) and $n+k_2$ (i.e. $S'$) respectively.
 These sub-sampling measures are coupled in the following way in \cref{alg:unlearn_qsgd}.
We first sample $b^{(1)}\sim \mu_{n,m}$ (during training). Let $b^{(1)}_2$ be the $m_2$ indices in $S_2$: replace these by the corresponding indices in $S_2'$ i.e. $\iota(b^{(1)} \cap S_2)$.
Next, sample $b\sim \mu_{n+k_2,m}$: let $b_3$ be the $m_3$ indices which are in $S_3'$. We now resample  $b^{(2)}_1 \sim \text{Unif}(b^{(1)},m_3)$ - these are indices used in training, which are now to be replaced.  Define $b^{(2)} = \br{b^{(1)} \backslash \bc{b^{(1)} \cap S_2} \cup  \bc{\iota(b^{(1)} \cap S_2)}}\backslash  b^{(2)}_1 \cup \bc{S_3' \cap b}$. Let the distribution of $b^{(2)}$ produced in the above way be denoted as $\mu^{\text{edit}}_{n, m}$.
We now show that $(b^{(1)}, b^{(2)})$ is indeed a coupling of $\mu_{n,m}$ and $\mu_{n+k_2,m}$.

\begin{claim}
\label{claim:qsgd_coupling1}
  With the construction described above, we have that $b^{(1)} \sim \mu_{n,m}$ and $b^{(2)} \sim \mu^{\text{edit}}_{n, m} = \mu_{n+k_2,m}$.
\end{claim}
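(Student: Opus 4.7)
The first conclusion $b^{(1)} \sim \mu_{n,m}$ is immediate from the construction, so the substantive claim is $\mu^{\text{edit}}_{n,m} = \mu_{n+k_2,m}$. My plan is to track the distribution through the three stages of the construction and match them piece-by-piece against a convenient decomposition of the target $\mu_{n+k_2,m}$.

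First I would observe that the $S_2 \to S_2'$ substitution is a bijection on $m$-subsets from $S = S_1 \cup S_2$ onto $m$-subsets of $S_1' \cup S_2'$ (using $S_1 = S_1'$ and the bijection $\iota\colon S_2 \to S_2'$), so the intermediate set $\tilde T := (b^{(1)} \setminus S_2) \cup \iota(b^{(1)} \cap S_2)$ is uniform on $m$-subsets of the size-$n$ universe $S_1' \cup S_2'$. Next I would invoke the standard hypergeometric decomposition for the target: if $B \sim \mu_{n+k_2,m}$ on $S' = (S_1' \cup S_2') \cup S_3'$ with $|S_3'| = k_2$, then $M := |B \cap S_3'|$ is hypergeometric with parameters $(n+k_2, k_2, m)$, and conditional on $M = m_3$, the pair $(B \cap (S_1' \cup S_2'),\, B \cap S_3')$ is the product of an independent uniform $(m-m_3)$-subset of $S_1' \cup S_2'$ and a uniform $m_3$-subset of $S_3'$.

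The third stage is to check that the construction reproduces each component of this decomposition: (i) $m_3 = |b \cap S_3'|$ with $b \sim \mu_{n+k_2,m}$ clearly has the correct hypergeometric marginal; (ii) $S_3' \cap b$ conditional on $m_3$ is uniform on $m_3$-subsets of $S_3'$ by exchangeability of the uniform distribution; (iii) $\tilde T \setminus b^{(2)}_1$ given $m_3$ is uniform on $(m-m_3)$-subsets of $S_1' \cup S_2'$. The only step requiring computation is (iii): for a fixed $(m-m_3)$-subset $A \subseteq S_1' \cup S_2'$, summing over $\tilde T \supseteq A$ with $|\tilde T| = m$ gives
\begin{equation*}
\P(\tilde T \setminus b^{(2)}_1 = A) \;=\; \binom{n-m+m_3}{m_3} \cdot \binom{n}{m}^{-1} \binom{m}{m_3}^{-1},
\end{equation*}
which equals $\binom{n}{m-m_3}^{-1}$ via the factorial identity $\binom{n-m+m_3}{m_3}\binom{n}{m-m_3} = \binom{n}{m}\binom{m}{m_3}$.

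Finally, I would assemble the pieces. Independence of $(\tilde T \setminus b^{(2)}_1,\, S_3' \cap b)$ conditional on $m_3$ follows because $b^{(1)}$, the resampling producing $b^{(2)}_1$, and $b$ draw on disjoint randomness sources. Combining (i)--(iii) with this conditional independence and the hypergeometric decomposition above yields $b^{(2)} = (\tilde T \setminus b^{(2)}_1) \cup (S_3' \cap b) \sim \mu_{n+k_2,m}$. The main obstacle is purely bookkeeping of the independence structure and reconciling the fact that $b^{(2)}_1$ is nominally sampled from $b^{(1)}$ but effectively carves a uniform $m_3$-subset out of $\tilde T$ (this requires a one-line use of $\iota$ being a bijection to see that sampling from $b^{(1)}$ versus from $\tilde T$ yields equidistributed subsets). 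No single step is hard in isolation.
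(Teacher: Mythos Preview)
Your proposal is correct and essentially follows the same approach as the paper. The paper fixes an $m$-subset $E \subseteq S'$, splits it as $E_1 = E \cap (S_1' \cup S_2')$ and $E_2 = E \cap S_3'$, and directly multiplies the three probabilities $\frac{\binom{n}{m-m_3}}{\binom{n+k_2}{m}} \cdot \frac{\binom{n-(m-m_3)}{m_3}}{\binom{n}{m}} \cdot \frac{1}{\binom{m}{m_3}}$ to obtain $\binom{n+k_2}{m}^{-1}$; your steps (i)--(iii) reproduce exactly these three factors, just packaged via the hypergeometric decomposition of $\mu_{n+k_2,m}$ rather than a pointwise computation, and your factorial identity is the same simplification the paper carries out.
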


\begin{proof}[Proof of \cref{claim:qsgd_coupling1}]
$b^{(1)} \sim \mu_{n,m}$ follows trivially by construction. For the other part, for any set $E$ of $m$ indices arising from the coupling construction, let $E_1$ be the set of $m-m_3$ points from $S \cup S_2'$ and $E_2$ be the set of $m_3$ points from $S_3'$. Since these $m_3$ points of $E_2$ need to be selected when sampling $b$, the probability of sampling these points is $\frac{{n \choose m-m_3}}{{n+k_2 \choose m}}$, where the numerator denotes the number of ways to sample from $S' \backslash S_3'$. For the points in $E_1$, these come from $b^{(1)}$ and replacement using $S_2'$ (which is a deterministic operation). 
Hence, probability of $E_1$ is $\frac{{n-(m-m_3) \choose m_3}}{{n \choose m}}$, where the numerator denotes the number of ways to sample rest of elements not in $E_1$ when sampling $b^{(1)}$. Finally, we need to consider the re-sampling step i.e sampling $b^{(2)}_1$ - note that the draw of $E_1$ and $E_2$ fixes the set produced by this re-sampling, and thus its probability is $\frac{1}{{m \choose m_3}}$. This gives us
\begin{align*}
    \mu^{\text{edit}}_{n, m}(E) &= \frac{{n \choose m-m_3}}{{n+k_2 \choose m}} \cdot \frac{{n-(m-m_3) \choose m_3}}{{n \choose m}} \cdot \frac{1}{{m \choose m_3}} \\
    &= \frac{1}{{n+k_2 \choose m}} \cdot \frac{n! (n-(m-m_3))! m! (n-m)! m_3! (m-m_3)!}{(m-m_3)! (n-(m-m_3))! m_3! (n-m)! n! m!} = \frac{1}{{n+k_2 \choose m}} = \mu_{n+k_2,m}(E)
\end{align*}
\end{proof}

In the second case, $S$ has more samples than $S'$ - let number of samples in $S$ be $n$, and in $S'$ be $n-k_2$ and there $k_1$ samples in $S'$ not in $S$. As before we order the sets as: let $S_1=S_1'$ be the $n-k_2-k_1$ samples which are the same in both $S$ and $S'$. Let $S_2$ be the next $k_1$ samples in $S$, which correspond to $S_2'$, the rest of $k_1$ samples in $S'$ - the mapping from $S_2$ to $S_2'$ being $\iota$. Finally let $S_3$ be the rest of $k_2$ samples in $S$.
We first sample $b^{(1)} \sim \mu_{n,m}$ (during training).  
Let $b^{(1)}_2$ be the $m_2$ indices in $S_2$: replace these by the corresponding indices in $S_2'$ i.e. $\iota(b^{(1)} \cap S_2)$.
Let $b^{(1)}_3$ denote the sub-sampled indices which are in the last $k_2$ indices of $S$, and let $m_3 = \abs{b^{(1)}_3}$.  We re-sample $m_3$ indices as $b = \text{Unif}((S' \backslash \iota(b^{(1)} \cap S_2) \backslash b^{(1)}),m_3)$. 
Finally, define $b^{(2)} = \br{b^{(1)} \backslash \bc{b^{(1)} \cap S_2} \cup  \bc{\iota(b^{(1)} \cap S_2)}}\backslash  b^{(1)}_3 \cup b$. Let the distribution of $b^{(2)}$ produced in the above way be denoted as $\mu^{\text{edit}}_{n, m}$.
We now show that $(b^{(1)}, b^{(2)})$ is indeed a coupling of $\mu_{n,m}$ and $\mu_{n-k_2,m}$.

\begin{claim}
\label{claim:qsgd_coupling2}
  With the construction described above, we have that $b^{(1)} \sim \mu_{n,m}$ and $b^{(2)} \sim \mu^{\text{edit}}_{n, m} = \mu_{n-k_2,m}$.
\end{claim}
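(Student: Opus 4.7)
The plan is to mirror the structure of the proof of \cref{claim:qsgd_coupling1}, which proceeds by direct combinatorial computation. That $b^{(1)} \sim \mu_{n,m}$ is immediate from the construction. For $b^{(2)}$, I would fix an arbitrary size-$m$ subset $E \subseteq S'$ with $|E \cap S_1| = e_1$ and $|E \cap S_2'| = e_2$ (so $e_1 + e_2 = m$) and compute $\mu^{\text{edit}}_{n,m}(E)$ by summing over the partition sizes $(m_1, m_2, m_3) := (|b^{(1)} \cap S_1|, |b^{(1)} \cap S_2|, |b^{(1)} \cap S_3|)$. For any fixed admissible triple (with $m_1 + m_2 + m_3 = m$, $m_1 \leq e_1$, $m_2 \leq e_2$, $m_3 \leq k_2$), the event $\{b^{(2)} = E\}$ requires $b^{(1)} \cap S_1$ to be an $m_1$-subset of $E \cap S_1$ ($\binom{e_1}{m_1}$ choices), $\iota(b^{(1)} \cap S_2)$ to be an $m_2$-subset of $E \cap S_2'$ ($\binom{e_2}{m_2}$ choices), $b^{(1)} \cap S_3$ to be any $m_3$-subset of $S_3$ ($\binom{k_2}{m_3}$ choices), and the resampled $b$ to hit the remaining $m_3$ elements of $E$ exactly, which happens with probability $1/\binom{n-k_2-m_1-m_2}{m_3}$ since the resampling universe has size $(n-k_2) - m_1 - m_2$. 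Dividing by $\binom{n}{m}$ for $b^{(1)}$ yields
\[
\mu^{\text{edit}}_{n,m}(E) \;=\; \frac{1}{\binom{n}{m}} \sum_{m_1+m_2+m_3 = m} \frac{\binom{e_1}{m_1}\binom{e_2}{m_2}\binom{k_2}{m_3}}{\binom{n-k_2-m_1-m_2}{m_3}}.
\]

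The main obstacle will be reducing this sum to $\binom{n}{m}/\binom{n-k_2}{m}$, so that $\mu^{\text{edit}}_{n,m}(E) = 1/\binom{n-k_2}{m} = \mu_{n-k_2,m}(E)$. I expect this to follow from two nested applications of Vandermonde's identity, analogous to the simplification used in \cref{claim:qsgd_coupling1}: first summing over $m_3$ (with $m_1,m_2$ fixed) to collapse the inner binomial ratio, then summing over $(m_1,m_2)$ to recover $\binom{e_1+e_2}{m} = \binom{m}{m}$.

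To avoid the combinatorial grinding entirely, my first attempt would instead be a symmetry argument. Extend $\iota$ to the bijection $\psi : S \to T$ that is the identity on $S_1 \cup S_3$ and equals $\iota$ on $S_2$, where $T := S_1 \cup S_2' \cup S_3$ has size $n$. Then $\tilde b := \psi(b^{(1)})$ is a uniform size-$m$ subset of $T$, and the coupling defines $b^{(2)}$ by keeping $\tilde b \cap (S_1 \cup S_2')$ and refilling up to size $m$ using a uniform random subset of $(S_1 \cup S_2') \setminus \tilde b$. Both steps are equivariant under any permutation of $S_1 \cup S_2' = S'$: the law of $\tilde b$ is invariant under such permutations since they are restrictions of permutations of $T$, and the refill step only sees the complement within $S_1 \cup S_2'$. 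Hence the law of $b^{(2)}$ is invariant under all permutations of $S'$, and since $|b^{(2)}| = m$ deterministically, $b^{(2)}$ must be uniform on $\binom{S'}{m}$, i.e., $b^{(2)} \sim \mu_{n-k_2,m}$, as required.
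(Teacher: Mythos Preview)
Your symmetry argument is correct and is a genuinely different route from the paper. The paper proceeds by direct combinatorial computation: it fixes an $m$-subset $E$ of $S'$, sums over the number $m_3$ of elements of $E$ that arise from the resampling step, and reduces the resulting expression
\[
\mu^{\text{edit}}_{n,m}(E) \;=\; \frac{1}{\binom{n}{m}}\sum_{m_3=0}^{m}\frac{\binom{m}{m_3}\binom{k_2}{m_3}}{\binom{n-k_2-(m-m_3)}{m_3}}
\]
to $1/\binom{n-k_2}{m}$ by factoring out $1/\binom{n-k_2}{m}$ and applying Vandermonde's identity. Your equivariance argument bypasses this entirely: once you observe that $\tilde b = \psi(b^{(1)})$ is uniform on $\binom{T}{m}$ and that the ``keep $\tilde b\cap S'$, refill from $S'\setminus\tilde b$'' map commutes (in law) with any permutation of $S'$ extended by the identity on $S_3$, invariance of the law of $b^{(2)}$ under $\mathrm{Sym}(S')$ forces it to be uniform on $\binom{S'}{m}$. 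This is cleaner and makes transparent why the answer does not depend on the split $(e_1,e_2)$ of $E$ across $S_1$ and $S_2'$.

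One small correction to your combinatorial sketch: since $m_1+m_2+m_3=m$, you cannot ``first sum over $m_3$ with $m_1,m_2$ fixed'' --- $m_3$ is already determined. The right order is to sum over $(m_1,m_2)$ with $m_1+m_2=m-m_3$ fixed (Vandermonde gives $\binom{e_1+e_2}{m-m_3}=\binom{m}{m_3}$), which collapses your triple sum exactly to the paper's single sum over $m_3$; the remaining reduction is then the paper's algebra-plus-Vandermonde step, not a second Vandermonde on $(m_1,m_2)$. This is a minor slip in the sketch and does not affect your main (symmetry) proof.
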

\begin{proof}[Proof of \cref{claim:qsgd_coupling2}]
$b^{(1)} \sim \mu_{n,m}$ follows trivially by construction. For the other part, let $E$ be a set of $m$ indices from $[n-k_2]$. 
Note that \emph{any} number of points in $E$ can arise due to re-sampling (i.e. when sampling $b'$), hence we need to consider all such possibilities - let $m_3$ be the number of indices in $E$ produced via re-sampling. Fixing one of ${m \choose m_3}$ combinations, the probability that it was re-sampled is $\frac{1}{{n-k_2-(m-m_3) \choose m_3}}$. From the rule of sum, the probability that \emph{any} $m_3$ sized set was produced via re-sampling is $\frac{{m \choose m_3}}{{n-k_2-(m-m_3) \choose m_3}}$. 
For each such set, it could arise from any of $m_3$ points from $k_2$, which gives us ${k_2 \choose m_3}$ possibilities. The probability of choosing any such set, when sampling $b^{(1)}$, is $\frac{{k_2 \choose m_3}}{{n \choose m}}$. We now combine these and apply the rule of sum on different choices of $m_3$, from $0$ to $m$. We get,

\begin{align*}
    \mu^{\text{edit}}_{n, m} (E) &= \sum_{m_3=0}^m \frac{{m \choose m_3}}{{n-k_2-(m-m_3) \choose m_3}} \cdot \frac{{k_2 \choose m_3}}{{n \choose m}} \\
    & = \frac{1}{{n \choose m}}\sum_{m_3=0}^m\frac{m!(n-k_2-m)!m_3! }{m_3!(m-m_3)! (n-k_2-m+m_3)!} {k_2 \choose m_3}\\
    & = \frac{1}{{n \choose m}}\sum_{m_3=0}^m \frac{m!(n-k_2-m)!}{(n-k_2)!} \cdot \frac{(n-k_2)!}{(m-m_3)! (n-K_2-(m-m_3))!}{k_2 \choose m_3}\\
    & = \frac{{n-k_2 \choose m}}{{n \choose m}} \sum_{m_3=0}^m {n-k_2 \choose m-m_3} {k_2 \choose m_3}  = \frac{{n-k_2 \choose m}}{{n \choose m}} \cdot {n \choose m} ={n-k_2 \choose m} = \mu_{n-k_2,m} 
\end{align*}
where the third last equality follows from Vandermonde's identity.
\end{proof}

\begin{algorithm}[!htpb]
    \caption{Batch unlearning for \emph{quantized-m-SGD}}
    \label{alg:unlearn_qsgd}
    \begin{algorithmic}[1]
        \REQUIRE{Edit request produces dataset $S'$ of $n+k_2$ points, with $k_1$ deletions and $k_1+k_2$ insertions; let $S_1, S_2$ and $S_1', S_2'$ and $S_3'$ be partitions of  $S$ and $S'$ respectively, as defined in ``Batch coupling"}
        \FOR{$t=1,2\ldots, T$}
        \STATE Load($\theta_t,\w_t, b_t, \g_t$)
        \STATE $b \sim \text{Unif}(S',m)$
        \STATE $m_3 = \abs{\bc{x \in S_3 \cap b}}$
        \STATE $b^{(2)}_1 \sim \text{Unif}(b_t,m_3)$
        \STATE $\g_t' = \g_t -  \frac{1}{m}\br{\sum\limits_{j \in b_t \cap S_2}\nabla f(\w_t,\z_j) +  \sum\limits_{j \in   \iota(b_t \cap S_2)} \nabla f(\w_t,\z_j) - \sum\limits_{j \in b^{(2)}_1}\nabla f(\w_t,\z_j) + \sum\limits_{j \in S_3 \cap b}\nabla f(\w_t,\z_j)} $
        \STATE $b^{(2)}_t = \br{b_t \backslash \bc{b_t \cap S_2} \cup  \bc{\iota(b_t \cap S_2)}}\backslash  b^{(2)}_1 \cup \bc{S_3' \cap b}$
        \STATE Save($ \g_t', b^{(2)}_t$)
        \IF{$Q_{\theta_t}\br{\g_t} \neq Q_{\theta_t}\br{\g_t'}$}
        \STATE \emph{quantized-m-SGD} \hfill{\textit{// Recompute on current dataset}}
        \BREAK
        \ENDIF
        \ENDFOR
    \end{algorithmic}
\end{algorithm}

We now state the main result about unlearning.

\begin{proposition}
\label{prop:unlearn-qsgd}
(Algorithm \ref{alg:qsgd}, Algorithm \ref{alg:unlearn_qsgd})  satisfies exact batch unlearning. 
Moreover, for $k$ batch edit requests, where the $i^{\text{th}}$ request comprises of $k^i_1$ deletions and $k^i_1+k^i_2$ insertions, or $k^i_1$ insertions and $k^i_1+k^i_2$ deletions, Algorithm \ref{alg:unlearn_qsgd} recomputes with probability at most $2\sum_{i=1}^k(k^i_1+k^i_2)\rho$.
\end{proposition}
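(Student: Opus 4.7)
The plan is to prove the two claims separately, using the coupling analysis from \cref{lemma:tv-stability-qsgd} together with the mini-batch transport results in \cref{claim:qsgd_coupling1} and \cref{claim:qsgd_coupling2}.

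For exact batch unlearning, I will argue that \cref{alg:unlearn_qsgd} implements a valid coupling between the laws of the full algorithmic state under \cref{alg:qsgd} on the pre- and post-edit datasets. The coupling has two parts: the trivial identity coupling on the quantization phases $\theta_t$ (saved during training and reused) and the mini-batch coupling $b_t \mapsto b_t^{(2)}$ from the ``Batch coupling'' construction preceding \cref{claim:qsgd_coupling1}. By \cref{claim:qsgd_coupling1} and \cref{claim:qsgd_coupling2}, each $b_t^{(2)}$ has the correct marginal subsampling law on the post-edit dataset, while independence across $t$ is preserved because the coupling acts iteration-wise and does not depend on the $\theta_t$'s. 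When the check $Q_{\theta_t}(\g_t) = Q_{\theta_t}(\g_t')$ holds at every iteration, the update rule $\w_{t+1} = \w_t - \eta Q_{\theta_t}(\cdot)$ produces the same iterate whether the stored $\g_t$ or the recomputed $\g_t'$ is used, so the stored state $(\theta_t,\w_t,b_t^{(2)},\g_t')$ after unlearning is exactly what \cref{alg:qsgd} on $S'$ would produce under the coupled randomness. When the check fails, the algorithm recomputes from scratch on $S'$, directly producing a sample from the target law. Inducting on the stream (with inductive hypothesis that exact batch unlearning holds at time $i-1$) extends the claim to every point in the sequence of $k$ batch edits.

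For the recompute probability, the plan is to re-use the per-iteration disagreement bound from the proof of \cref{lemma:tv-stability-qsgd}. That proof shows, via \cref{claim:quantize}(2) combined with the Vandermonde-identity computation (base case $k_2=0$) and induction on $k_2$, that for a single batch edit modifying $k_1+k_2$ data points of a dataset of size $n$, the event $Q_{\theta_t}(\g_t)\neq Q_{\theta_t}(\g_t')$ at any fixed iteration $t$ has probability at most $\tfrac{4Gd(k_1+k_2)}{n\tau}$. A union bound across the $T$ iterations yields a single-edit recompute probability at most $\tfrac{4GTd(k_1+k_2)}{n\tau}$, which equals $(k_1+k_2)\rho$ upon substituting $\tau=4GdT/(\rho n)$ from \cref{prop:upper-bound-qsgd}. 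For the $i$-th edit in the stream, the current dataset size lies in $[n/2,2n]$ by the standing assumption, inflating the bound to at most $2(k_1^i+k_2^i)\rho$. A final union bound over the $k$ batch edits gives the claimed $2\sum_{i=1}^k (k_1^i+k_2^i)\rho$.

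The delicate step will be the first claim: lifting the per-iteration mini-batch transport to the entire $T$-step coupled trajectory while simultaneously handling the interaction between the verify-and-recompute dichotomy and the joint law of the saved state. Concretely, I will need an analog of \cref{lem:noisy-m-sgd-transport}, relying on the fact that (i) the shared-phase coupling of the quantization map is maximal at the agreement level used here and (ii) conditioned on disagreement, a fresh recompute on $S'$ restores the correct marginal on the output. This requires careful bookkeeping of conditional distributions across iterations, but no new probabilistic machinery beyond what was developed for the noisy-SGD case in \cref{sec:proofs-unlearning}.
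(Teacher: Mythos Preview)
Your plan for exact batch unlearning is essentially the paper's: invoke \cref{claim:qsgd_coupling1} and \cref{claim:qsgd_coupling2} for mini-batch transport, treat the shared phases $\theta_t$ as part of the deterministic algorithmic map (so once mini-batches are transported, the whole state is), and induct along the stream.

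The gap is in the recompute-probability step. You propose to re-use the bound from the proof of \cref{lemma:tv-stability-qsgd}, reading it as $\mathbb{P}\bigl[Q_{\theta_t}(\g_t)\neq Q_{\theta_t}(\g_t')\bigr]\leq 4Gd(k_1+k_2)/(n\tau)$ for the pair $(\g_t,\g_t')$ produced by \cref{alg:unlearn_qsgd}. But that lemma establishes a \emph{total-variation} bound between the marginal laws of the quantized gradient on $S$ and $S'$; for $k_2\ge 1$ its proof works by mixture decomposition and induction on $k_2$, not by analyzing any particular coupling. A TV bound is a \emph{lower} bound on the disagreement probability under an arbitrary coupling, so it cannot by itself upper-bound the recompute probability of the specific batch coupling implemented in \cref{alg:unlearn_qsgd}. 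Only the $k_2=0$ base case of \cref{lemma:tv-stability-qsgd} is a genuine coupling computation, and even that coupling (identical index sets, different data at those indices) is not literally the one the algorithm uses.

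What the paper does instead is a direct analysis of the algorithm's coupling: for a fixed iteration it computes $p(j)$, the probability that the coupled mini-batches $(b^{(1)},b^{(2)})$ differ in exactly $j$ indices, and then averages the conditional bound $4Gdj/(m\tau)$ from \cref{claim:quantize} against $p(j)$. This is carried out separately for the two cases of the batch coupling. The case $|S'|<|S|$ gives $p(j)=\binom{n-(k_1+k_2)}{m-j}\binom{k_1+k_2}{j}/\binom{n}{m}$ via Vandermonde. The case $|S'|>|S|$ is harder: because the resampling step can re-select elements of $S_2$, determining $p(j)$ requires a double-counting argument to show $p(j)=\binom{n-k_1}{m-j}\binom{k_1+k_2}{j}/\binom{n+k_2}{m}$. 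After that, a Vandermonde-type summation over $j$ yields $4Gd(k_1+k_2)/(n\tau)$ per iteration, and your union-bound-over-$T$ and factor-$2$ adjustment for dataset size finish as you describe. Your proposal should be amended to do this direct $p(j)$ computation rather than citing \cref{lemma:tv-stability-qsgd}.
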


\begin{proof}[Proof of \cref{prop:unlearn-qsgd}]
We consider one batch edit request of $k_1$ deletions and $k_1+k_2$ insertions (case 1) and $k_1$ insertions and $k_1+k_2$ deletions (case 2).
We have that applications of Claims \ref{claim:qsgd_coupling1} and \ref{claim:qsgd_coupling1} give us that mini-batches are transported, for cases 1 and 2 respectively.
Moreover, since we consider a trivial coupling of quantization randomness, we can consider it part of the (randomized) algorithmic map. Therefore, as in the proof of \cref{prop:unlearn-subsample-GD}, transportation of mini-batches suffices to give us that \cref{alg:unlearn_qsgd} satisfies exact unlearning.
Repeated application of the above generalizes it to arbitrary $k$ edits. 
We now proceed to bound the probability of recompute directly for a batch edit request.
For a fixed model $\w$, and a fixed iteration, we fix
the mini-batches $(b^{(1)},b^{(2)})$ such that $b^{(1)}$ and $b^{(2)}$ differ by $j$ indices.
From \cref{claim:quantize}, we have

\begin{align*}
     \mathbb{P}_{\theta,(b^{(1)},b^{(2)})}\left[Q_\theta(\g_{b^{(1)}}^S(\w)) \neq Q_\theta(\g_{b^{(2)}}^{S'}(\w)) | (b^{(1)},b^{(2)}) \text{ such that }b^{(1)}, b^{(2)}\text{ differ in }j \text{ indices} \right]\leq  \frac{4Gdj}{m\tau}
\end{align*}

We now integrate over the conditioning event. To do this, we need to compute the probability of the event that sampling $(b^{(1)},b^{(2)})$ generates $j$ differing indices - denote this as $p(j)$. 

Since we have two case for coupling constructions, we consider each one by one. We first look at the second case:
from construction of the coupling, it is easy to verify that $j$ differing indices can be produced when, for any $i$, $b^{(1)}$ samples $i$ elements from the $k_1$ differing items  and $j-i$ indices from the last $k_2$ indices, for any $i$ from $0$ to $j$.  Hence, by direct computation, we have

\begin{align*}
    p(j) = \sum_{i=0}^j\frac{{k_1 \choose i} {k_2 \choose j-i} {n-(k_1+k_2) \choose m-j}}{{n \choose m}} = \frac{{n-(k_1+k_2) \choose m-j}}{{n \choose m}}\sum_{i=0}^j{k_1 \choose i} {k_2 \choose j-i} = \frac{{n-(k_1+k_2) \choose m-j}{k_1+k_2 \choose j}}{{n \choose m}}
\end{align*}

where the last equality follows from Vandermonde's identity. Plugging this in the following, we have,
\begin{align*}
    &\mathbb{P}_{\theta,(b^{(1)},b^{(2)})}\left[Q_\theta(\g_{b^{(1)}}^S(\w)) \neq Q_\theta(\g_{b^{(2)}}^{S'}(\w))\right] \\
    &= \sum_{j=0}^{k_1+k_2} p(j)  \mathbb{P}_{\theta,(b^{(1)},b^{(2)})}\left[Q_\theta(\g_{b^{(1)}}^S(\w)) \neq Q_\theta(\g_{b_2}^{S'}(\w)) |b^{(1)}, b^{(2)}\text{ differ in }j \text{ indices} \right] \\
    &\leq \sum_{j=0}^{k_1+k_2}\frac{{n-(k_1+k_2) \choose m-j}{k_1+k_2 \choose j}}{{n \choose m}} \frac{4Gdj}{m\tau} 
    = \frac{m(k_1+k_2)}{n}\frac{4Gd}{ m\tau} \\
    & = \frac{4Gd(k_1+k_2)}{n\tau} \leq \frac{k\rho}{T}
\end{align*}
where the second equality is a consequence of Vandermonde's identity proved in \cref{lemma:tv-stability-qsgd} (Base case $k_2=0$) and the last inequality follows by plugging in $\tau = \frac{4GdT}{\rho n}$.

We now look at the first case (when $S$ is smaller than $S'$), which is slightly more involved. Let $i_1$ denote the number of indices in $b^{(1)} \cap S_1$,
and let $i_2$ be the number of indices in $b^{(2)} \cup S_3'$. Furthermore, since we resample $i_2$ indices from $b^{(1)}$, let $i_3$ be the number of indices from $S_1$ which are re-sampled. It can be verified that if $b^{(1)}$ and $b^{(2)}$ differ in $j$ indices, then we need to have $i_1+i_3=j$. This is because it can happen that both $i^{(2)}$ is large, but upon re-sampling, it chooses elements from $k_1$, which does not increase the number of different indices between  $b^{(1)}$ and $b^{(2)}$ Also, note that by construction $i_3\leq i_2\leq j$. Hence the probability $p(j)$, by direct computation is, 
\begin{align*}
    p(j) &= \sum_{i_1,i_2,i_3=0, i_1+i_3=j, i_3\leq i_2\leq j}^j 
    \frac{{k_1 \choose i_1}{n-k_1\choose m-i_1}}{{n \choose m}}
    \cdot \frac{{k_2 \choose i_2} {n \choose m-i_2}}{{n+k_2 \choose m}} \cdot 
    \frac{{i_1 \choose i_2-i_3}}{{m \choose i_2}} \\
    & = \sum_{i_1=0}^j\sum_{i_2=j-i_1}^j 
    \frac{{k_1 \choose i_1}{n-k_1\choose m-i_1}}{{n \choose m}}
    \cdot \frac{{k_2 \choose i_2} {n \choose m-i_2}}{{n+k_2 \choose m}} \cdot 
    \frac{{i_1 \choose i_2-(j-i_1)}}{{m \choose i_2}}
\end{align*}
where in the second equality, we substituted $i_3 = j-i_1$. We now claim that $p(j) = \frac{{n-k_1 \choose m-j}{k_1+k_2 \choose j}}{{n+k_2 \choose m}}$, which we will argue via a double counting argument. Note that it suffices to show that 
$ \sum_{i_1=0}^j\sum_{i_2=j-i_1}^j 
    {k_1 \choose i_1}{n-k_1\choose m-i_1}
    \cdot {k_2 \choose i_2} {n \choose m-i_2}
    \frac{{i_1 \choose i_2-(j-i_1)}}{{m \choose i_2}} = {n-k_1 \choose m-j}{k_1+k_2 \choose j}{n \choose m}$
. Consider set $A$ of $n+k_2$ elements, composed of $A_1$ of $n-k_1$, $A_2$ of $k_1$ and $A_3$ of $k_2$ elements, and a $B$ of $n$ elements, composed of $B_1$ of $n-k_1$ and $B_2$ of $k_1$ elements. Note that the expression ${n-k_1 \choose m-j}{k_1+k_2 \choose j}{n \choose m}$ is the size of number of combinations of $2m$ elements, $m$ each from $A$ and $B$ such that the number of elements from $A_2 \cup A_3$ is $j$.
We will show that the other expression also counts this set, via basic combinatorial rules. For this, consider combinations of $m$ elements from $B$ $A_3$ and  such that we have $i_2$ elements from $A_3$ and the rest $m-i_2$ from $B$. Also, consider combinations of $m$ elements from $A_1 \cup A_2$ which consists of $i_1$ elements from $A_2$ the rest from $A_1$. We now modify these as follows, out of $m$ elements from $A$, select $i_2$ elements and replace thse from elements from $A_3$ - not that if it turns out that out of $i_2$ selected, $j-i_1$ are from $A_1$, then the number of elements from $A_2 \cup A_3$ after replacement becomes exactly $j$. However, also note that for each such combination arising, there are ${m \choose i_2}$ combinations of samples from $A_1$ and $A_2$, which give the same \emph{final} combination after replacement. Hence, we need to apply the rule of division, so as not to repeatedly count the same combination. Finally,
using the rule of sum to consider all possible values of $i_1$ and $i_2$ retrieves the expression $ \sum_{i_1=0}^j\sum_{i_2=j-i_1}^j 
    {k_1 \choose i_1}{n-k_1\choose m-i_1}
    \cdot {k_2 \choose i_2} {n \choose m-i_2}
    \frac{{i_1 \choose i_2-(j-i_1)}}{{m \choose i_2}} = {n-k_1 \choose m-j}{k_1+k_2 \choose j}{n \choose m}$ and completes the argument.

We again plug in the above in the following expression to get,
\begin{align*}
     &\mathbb{P}_{\theta,(b^{(1)},b^{(2)})}\left[Q_\theta(\g_{b^{(1)}}^S(\w)) \neq Q_\theta(\g_{b^{(2)}}^{S'}(\w))\right] \\
    &= \sum_{j=0}^{k_1+k_2} p(j)  \mathbb{P}_{\theta,(b^{(1)},b^{(2)})}\left[Q_\theta(\g_{b^{(1)}}^S(\w)) \neq Q_\theta(\g_{b_2}^{S'}(\w)) |b^{(1)}, b^{(2)}\text{ differ in }j \text{ indices} \right] \\
    &\leq \sum_{j=0}^{k_1+k_2}\frac{{n-k_1 \choose m-j}{k_1+k_2 \choose j}}{{n+k_2 \choose m}} \frac{4Gdj}{m\tau} 
    = \frac{m(k_1+k_2)}{n}\frac{4Gd}{ m\tau} \\
    & = \frac{4Gd(k_1+k_2)}{n\tau} 
\end{align*}
where the second equality is again a consequence of Vandermonde's identity as in \cref{lemma:tv-stability-qsgd}, and the last inequality follows by plugging in $\tau = \frac{4GdT}{\rho n}$. Finally, we condition on the iterates till iteration $t$, which gives us the conditional probability of the iterates differing at iteration $t$ is at most $\frac{(k_1+k_2)\rho}{T}$. Taking a union bound over all $T$ iterations 
gives us that probability is at most $(k_1+k_2)\rho$. Finally, we extend it to $k$ edit request, by using the fact, by assumption than the number of data points at any point in the stream is between $\frac{n}{2}$ and $2n$. This, with the result for one edit request, directly give us the probability to recompute is at most $2\sum_{i=1}^k(k^i_1+k^i_2)\rho$.
\end{proof}

We now state and prove the main result.

\begin{theorem}
\label{thm:main-thm-qsgd}
Let $f(\cdot, \z)$ be an $L$-smooth $G$-Lipschitz convex function $\forall \ \z$.
For any $\frac{1}{n} \leq \rho < \infty$, using \cref{alg:qsgd} as the learning algorithm and \cref{alg:unlearn_qsgd} as its unlearning algorithm, then given a stream of batch edit requests,
\begin{enumerate}
    \item Satisfies exact batch unlearning at every point in the stream.
    \item At time $i$ in the stream of edit requests, outputs $\hat \w_{S^i}$, such that its excess empirical risk bounded as,
    \begin{align*}
    \E{\hat F_S(\hat \w_{S^i}) - \hat F_{S}(\w_{S^i}^*)} \lesssim
    \br{\frac{\sqrt{L}GD^2 d^{3/2}}{\rho n}}^{2/3}
\end{align*}

\item For $k$ batch edit requests, where the $i^{\text{th}}$ request comprises of $k^i_1$ deletions and $k^i_1+k^i_2$ insertions, or $k^i_1$ insertions and $k^i_1+k^i_2$ deletions, the  expected total unlearning runtime is \\$O(\max\bc{\min\bc{\rho,1} \sum_{i=1}^k (k_1^1+k_2^i) \cdot \text{Training time},k})$  

\end{enumerate}
\end{theorem}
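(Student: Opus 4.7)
The plan is to combine the three main ingredients already established for \emph{quantized-m-SGD}: the accuracy and TV-stability bound of \cref{prop:upper-bound-qsgd}, the exact batch unlearning and recompute-probability bound of \cref{prop:unlearn-qsgd}, and a verification/recompute runtime decomposition in the spirit of \cref{claim:runtime_sub_sample_GD}. This is structurally the same as the proof of \cref{thm:main-result}, with the only new wrinkle being the bookkeeping required for batch edits.

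Claims~1 and~2 will follow essentially immediately from the cited propositions. Claim~1 follows by induction on the length of the stream: \cref{prop:unlearn-qsgd} guarantees exact batch unlearning after a single batch edit, and whenever a recompute is triggered the algorithmic state is re-drawn as if \cref{alg:qsgd} were freshly invoked on the then-current dataset, so the exactness propagates. Claim~2 follows by applying \cref{prop:upper-bound-qsgd} to the dataset $S^i$ at time $i$: the standing assumption $|S^i| \in [n/2, 2n]$ lets us absorb the size into constants absorbed in $\lesssim$, so the bound $\br{\tfrac{\sqrt{L}\,GD^2 d^{3/2}}{\rho n}}^{2/3}$ holds uniformly along the stream.

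Claim~3 is where the real work lies. The plan is to decompose the total unlearning cost into a verification cost and a recomputation cost. Using the dictionary/hash-table indexing of saved mini-batches described in \cref{sec:space-complexity}, verification for each batch edit only needs to touch the iterations that actually involved one of the $k_1^i+k_2^i$ changed points (plus an $O(1)$ per-edit bookkeeping term), contributing at most $O(k)$ plus a lower-order term to the total expected cost. Each recompute costs a full Training time, so what remains is bounding the expected number of recomputes triggered across the entire stream.

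The main obstacle, and the step to handle carefully, is bounding this expected count across correlated recompute events. The clean route will be linearity of expectation combined with the inductive structure from Claim~1: at the $i^{\text{th}}$ batch edit, conditioned on everything up to that point, the state is distributed identically to a fresh run of \cref{alg:qsgd} on the then-current dataset, so the single-batch-edit result of \cref{prop:unlearn-qsgd} yields a per-edit recompute probability of at most $\min\bc{2(k_1^i+k_2^i)\rho,\,1}$. Summing over $i$ and using $\min\bc{2x,1}\leq 2\min\bc{x,1}$ gives expected recomputes at most $2\min\bc{\rho,1}\sum_{i=1}^k (k_1^i+k_2^i)$, which combines with the $O(k)$ verification bound to produce the stated $O\br{\max\bc{k,\,\min\bc{\rho,1}\sum_{i=1}^k (k_1^i+k_2^i)\cdot \text{Training time}}}$ expected unlearning runtime.
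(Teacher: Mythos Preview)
Your proposal is correct and follows essentially the same route as the paper: Claims~1 and~2 come directly from \cref{prop:unlearn-qsgd} and \cref{prop:upper-bound-qsgd} together with the $|S^i|\in[n/2,2n]$ assumption, and Claim~3 is a verification-plus-recompute decomposition in the style of \cref{claim:runtime_noisy_A_SGD}.

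The one noteworthy difference is in how you bound the expected number of recomputes. The paper defers to \cref{prop:runtime_main}, whose Wald's-equation argument is phrased for single edits with a fixed per-edit acceptance probability and does not directly accommodate batch edits of varying sizes $(k_1^i,k_2^i)$. You instead argue directly via linearity of expectation: exact unlearning (Claim~1) guarantees that just before the $i^{\text{th}}$ batch edit the entire state is marginally distributed as a fresh run of \cref{alg:qsgd} on $S^{i-1}$, so the single-batch bound of \cref{prop:unlearn-qsgd} gives $\mathbb{P}[R_i=1]\leq\min\{2(k_1^i+k_2^i)\rho,1\}$, and summing yields the claimed expected count. This is cleaner for the batch setting and sidesteps the need to adapt \cref{prop:runtime_main}. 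One small nit: the inequality you cite, $\min\{2x,1\}\leq 2\min\{x,1\}$, is not quite the step you actually need; what you use is $\min\{2a\rho,1\}\leq 2\min\{\rho,1\}\,a$ for $a\geq 1$, which is easily checked by cases on $\rho\lessgtr 1$.
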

\begin{proof}[Proof of \cref{thm:main-thm-qsgd}]
The first and the second claims follow from \cref{prop:unlearn-qsgd} and \cref{prop:upper-bound-qsgd} respectively combined with the assumption that the number of samples at every point in the stream is between $\frac{n}{2}$ and $2n$.
Finally, as in the the proof \cref{claim:runtime_noisy_A_SGD} for runtime \emph{noisy-m-A-SGD}, we can use the same data-structures together with the fact the quantization operation takes $O(d)$ time, to get that the claimed runtime. These together finish the proof of \cref{thm:main-thm-qsgd}.
\end{proof}
\section{Lower bounds on excess empirical risk}
\label{sec:lower-bounds}

Give a convex function $f(\cdot, \z)$, we consider empirical risk minimization on a dataset of $n$ points.
We assume $f(\cdot, \z)$ is $1$-Lipschitz for all $\z$, and diam$(\cW) \leq 1$. This is only for simplification as the bounds scale naturally with these constants, as discussed in \cite{bassily2014private}. We look at algorithms, which given two datasets $S$ and $S'$ of size $n$ differing by one point,  \emph{disagree} only on a set of measure at most an $\rho$. 

We have from the optimal transport connection that this requirement is equivalent to the total variation distance being at most $\rho$. We want to understand then what is the lower bound on excess empirical risk:
 \begin{align*}
  \underset{\Delta(S,S')=1}{\sup} \mathbb{P}\left[\cA(S') \neq \cA(S)\right] \leq \rho  &\iff  \underset{\Delta(S,S')=1}{\sup}\text{TV}(\cA(S),\cA(S')) \leq \rho  \\
  &\implies
  \E{\left[\text{excess empirical risk}\right]} \geq \alpha(\rho,n,d)
\end{align*}

We focus on proving the implication. \cite{bassily2014private} gave lower bounds on accuracy for DP algorithms by providing a reduction to computing mean of the dataset. We present and give the proof of the reduction, adapted to our context, for completeness. The reduction is that if we have a TV-stable algorithm for empirical risk minimization for a particular $f$ with some accuracy, then we have a TV-stable algorithm for mean computation problem  with \emph{certain} accuracy.
We will look at mean computation problem over datasets with norm of the mean being $\Theta(M)$, for some given $M$. Let $\mu(S) = \frac{1}{n}\sum_{j=1}^n\z_i$ denote the mean of dataset $S=\bc{\z_1,\z_2,\cdots \z_n}$.

Let the optimal accuracy of such a mean computation problem be denoted as follows:

\begin{align*}
    \alpha_{\text{mean}}^2(n,\rho, d,M):=\min_{\cA: \rho\text{-TV-stable}} \max_{\substack{S=\{\z_i\}_{i\in [n]}: \norm{\z_i}\leq 1,\\ M/2\leq \norm{\mu(S)}\leq 2M}} \bbE_\cA \norm{\cA(S)-\frac{1}{n}\sum_{i=1}^n\z_i}^2
\end{align*}

\begin{proposition}
\label{prop:lower_bound_reduction}
For any $\rho$-TV stable algorithm $\cA$, there exists a $1$-Lipschitz convex function $f$, a constraint set $\cW$ with diameter($\cW)\leq 1$ and a dataset $S$ of $n$ data point such that 
$$\hat F_S(\cA(S)) - \hat F_S(\w^*) \geq  \max_M\bc{\frac{\alpha_{\text{mean}}^2(n,\rho, d,M)}{2M}}$$
\end{proposition}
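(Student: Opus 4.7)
My plan is to adapt the classical Bassily--Smith--Thakurta reduction, originally used in the differential privacy literature, to the TV-stable setting: exhibit a single ERM instance (with a linear loss) on which any $\rho$-TV-stable learner can be post-processed into a $\rho$-TV-stable mean estimator, so that the known lower bounds on $\alpha_{\mathrm{mean}}$ transfer to ERM. Concretely, I would fix the target magnitude $M$, take $\cW$ to be a Euclidean ball around the origin (with radius calibrated so that $\mathrm{diam}(\cW) \leq 1$), and take the linear loss $f(\w, z) = -\langle \w, z\rangle$, which is convex and $1$-Lipschitz whenever the data points satisfy $\|z_i\| \leq 1$ (the constraint in the definition of $\alpha_{\mathrm{mean}}$). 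For any dataset $S$ with $\mu := \mu(S) \neq 0$, the empirical risk $\hat F_S(\w) = -\langle \w, \mu\rangle$ is minimized over $\cW$ at $\w^*_S = r\,\mu/\|\mu\|$, where $r$ is the radius of $\cW$, so that $\w^*_S$ encodes the direction of $\mu$.

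The central computation relates the excess empirical risk to the squared distance from $\w^*_S$. A direct expansion of $\|\w - \w^*_S\|^2$ using $\w, \w^*_S \in \cW$ and $\|\w^*_S\| = r$ yields $\|\w - \w^*_S\|^2 \leq c_1\,\mathrm{ER}(\w)/\|\mu\|$ for an explicit constant $c_1$, where $\mathrm{ER}(\w) := \hat F_S(\w) - \hat F_S(\w^*_S)$; hence any ERM algorithm $\cA$ with expected excess risk at most $\alpha$ satisfies $\E\|\cA(S) - \w^*_S\|^2 \leq c_1\alpha/\|\mu\|$. Now define the mean estimator $\cA'(S) := \lambda\, \cA(S)$ with $\lambda = M/r$, so that $\lambda \w^*_S = \mu$ whenever $\|\mu\| = M$. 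The map $\cA \mapsto \cA'$ is pure post-processing, hence $\cA'$ is $\rho$-TV-stable. Restricting attention to the sub-class of datasets with $\|\mu(S)\| = M$ exactly---which the hard instances in the subsequent Theorems \ref{thm:lower_bound11} and \ref{thm:lower_bound2} will naturally satisfy---we obtain $\E\|\cA'(S) - \mu\|^2 = \lambda^2\, \E\|\cA(S) - \w^*_S\|^2 \leq c_2\,M\alpha$ for an absolute constant $c_2$. Comparing this with the defining infimum--supremum of $\alpha^2_{\mathrm{mean}}$ gives $\alpha^2_{\mathrm{mean}}(n,\rho,d,M) \leq c_2\,M\alpha$, i.e., $\alpha \geq \alpha^2_{\mathrm{mean}}/(c_2 M)$, and maximizing over $M$ yields the statement.

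The main obstacle is pinning down the constant: with $\cW = \overline{B(0, 1/2)}$ (so $\mathrm{diam}(\cW) = 1$) the computation above gives $c_2 = 4$ rather than the stated $c_2 = 2$; achieving $c_2 = 2$ requires either the slightly larger ball $\overline{B(0,1)}$ or a careful reoptimization of the constant in the key inequality $\|\w - \w^*_S\|^2 \leq c_1\,\mathrm{ER}(\w)/\|\mu\|$. Both absorb into the $GD$ normalization, since the proposition adopts $G = D = 1$ only for simplicity, and the leading dependence $\alpha \gtrsim \alpha^2_{\mathrm{mean}}/M$ is all that is used in the downstream proof of Theorem \ref{thm:main-lower-bound}. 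A secondary concern is verifying that the restriction to $\|\mu(S)\| = M$ used in Step 3 is consistent with how the lower-bound theorems instantiate their hard datasets; a quick inspection confirms that those constructions do place all their data points so that the mean has a fixed prescribed magnitude, so no generality is lost.
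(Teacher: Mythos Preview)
Your proposal is correct and essentially identical to the paper's proof: the same linear loss $f(\w,\z)=-\langle \w,\z\rangle$, the same minimizer $\w^*_S=\mu/\|\mu\|$ (scaled by the radius), the same post-processing mean estimator $\hat\mu(S)=\text{const}\cdot\cA(S)$, and the same key inequality $\|\mu\|\cdot\|\cA(S)-\w^*_S\|^2 \le 2\,\mathrm{ER}(\cA(S))$ obtained from $\|\cA(S)-\w^*_S\|^2 \le 2(1-\langle\cA(S),\w^*_S\rangle)$. Your diagnosis of the constant is also spot on: the paper simply takes $\cW$ to be the unit ball (radius $1$, hence diameter $2$), which gives $c_2=2$ directly, and relies on the earlier remark that the $GD$ scaling is immaterial; so the tension you flagged is resolved exactly as you anticipated.
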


\begin{proof}[Proof of Proposition \ref{prop:lower_bound_reduction}]
We follow the proof in \cite{bassily2014private}.
Consider dataset $S=\bc{\z_1,\z_2,\ldots, \z_n}$, $\z_i \in \bc{-\frac{1}{\sqrt{d}},\frac{1}{\sqrt{d}}}^d$ - the dataset is therefore constrained to lie in the unit Euclidean ball. Consider the following function $f(\w,\z) = -\ip{\w}{\z}$ with the constraint set $\cW$ being the unit Euclidean ball. It is easy to see that $f(\cdot,\z)$ is $1$-Lipschitz for all $\z$. The empirical risk becomes $\hat F_S(\w) = -\ip{\w}{\frac{1}{n}\sum_{i=1}^n\z_i}$, the minimum of which over the unit ball is $\w_S^* = \frac{\frac{1}{n}\sum_{i=1}^n\z_i}{\norm{\frac{1}{n}\sum_{i=1}^n\z_i}}$. 

Given an algorithm $\cA$ for empirical risk minimization, let the reduced mean estimate be $\hat \mu(S) = \norm{\frac{1}{n}\sum_{i=1}^n\z_i} \cA(S)$. The accuracy (mean-squared error) of $\hat \mu$ is,
\begin{align*}
    \norm{\hat \mu(S) -  \mu(S)}^2 &= \norm{\norm{\frac{1}{n}\sum_{i=1}^n\z_i} \cA(S) - \frac{1}{n}\sum_{i=1}^n \z_i}^2
     = \norm{\norm{\frac{1}{n}\sum_{i=1}^n\z_i} \br{\cA(S) - \frac{\frac{1}{n}\sum_{i=1}^n \z_i}{\norm{\frac{1}{n}\sum_{i=1}^n\z_i}}}}^2 \\
    & = \norm{\frac{1}{n}\sum_{i=1}^n\z_i}^2  \norm{\cA(S) - \w_S^*}^2
     \leq  \norm{\frac{1}{n}\sum_{i=1}^n\z_i} 2\br{\hat F_S(\cA(S)) - \hat F_S(\w^*)}
\end{align*}
where the last inequality follows using the following computation, wherein we use the fact all data point are in the unit ball.
\begin{align*}
    \norm{\frac{1}{n}\sum_{i=1}^n\z_i}^2 \norm{\cA(S) - \w_S^*}^2 &\leq \norm{\frac{1}{n}\sum_{i=1}^n\z_i} 2\br{1 - \ip{\cA(S)}{\w_S^*}} \\&=  2\br{\norm{\frac{1}{n}\sum_{i=1}^n\z_i} - \norm{\frac{1}{n}\sum_{i=1}^n\z_i}\ip{\cA(S)}{\frac{\frac{1}{n}\sum_{i=1}^n\z_i}{\norm{\frac{1}{n}\sum_{i=1}^n\z_i}}}} \\
    & = 2\br{\ip{\w^*_S}{\frac{1}{n}\sum_{i=1}^n\z_i} - \ip{\cA(S)}{\frac{1}{n}\sum_{i=1}^n\z_i}} \\&= 2(\hat F_S(\cA(S)) - \hat F_S(\w^*)
\end{align*}
We therefore get,
$$\hat F_S(\cA(S)) - \hat F_S(\w^*) \geq \frac{1}{2\norm{\frac{1}{n}\sum_{i=1}^n\z_i}}\norm{\hat \mu(S) -  \mu(S)}^2$$

There are two things left to show: a bound on $\frac{1}{2\norm{\frac{1}{n}\sum_{i=1}^n\z_i}}\norm{\hat \mu(S) -  \mu(S)}^2$ and show that 
the reduced algorithm $\hat \mu(S)$ is also $\rho$-TV stable. 
We proceed with the latter:
note that $\hat \mu(S) =\norm{\frac{1}{n}\sum_{i=1}^n\z_i} \cA(S)$. However the term $\norm{\frac{1}{n}\sum_{i=1}^n\z_i}$ depends on the dataset, and even if, for a neighbouring dataset $S'$, $\cA(S)$ and  $\cA(S')$ are $\rho$-close in total variation, this data dependent scaling can potentially increase the distance. However, if instead we define $\hat \mu(S) =M \cA(S)$, where $M$ is a constant, then 
it is indeed $\rho$ TV stable. Moreover, for reasonable values 
of $M$, the there exists dataset for which $\norm{\frac{1}{n}\sum_{i=1}^n\z_i} = \Theta(M)$.
Finally, note that by definition, $\norm{\hat \mu(S) -  \mu(S)}^2 \geq \alpha_{\text{mean}}^2(n,\rho, d,M)$. Taking a max over all $M$ gives us the desired statement.
\end{proof}

\subsection{Lower bound for mean computation}

In this section, we look at the problem of mean computation with TV stability constraint.
Note that to establish lower bounds on excess empirical risk, we need to look at mean computation over data sets with means between $M/2$ and $2M$, for a given $M$. However, we will see the mean computation even over the unit ball has same accuracy convex ERM. We will therefore establish lower bounds for the general mean computation problem, but the construction will use datasets with means $\Theta(M)$ for certain values of $M$.
Given a dataset $S=\bc{\x_1,\x_2,\ldots, \x_n}$, where $\x_i \in \bc{-\frac{1}{\sqrt{d}},\frac{1}{\sqrt{d}}}^d$ for all $i,$ the task is to compute the mean $\mu(S) = \frac{1}{n}\sum_{j=1}\x_j$, while ensuring that the procedure is $\rho$-TV-stable. 
This task is often considered in the differential privacy literature, however with the data points being $\x_j \in \bc{0,1}^d$. The mean computation task then corresponds to releasing all one-way marginals of the database. Since we want to consider data points which lie inside the Euclidean ball, we therefore scale it accordingly. Given an algorithm $\cA(S)$, the accuracy is defined as mean-squared error: $\alpha^2 :=  \alpha^2_{\text{mean}}(n,\rho,d) =\E{\norm{\cA(S)-\mu(S)}^2}$ where the expectation is over the randomization of the algorithm.

We first describe two algorithms for this problem and give upper bounds.

\paragraph{\emph{Subsample-mean}:} Consider an algorithm which sub-samples a $\rho$-fraction of the dataset and outputs the mean on it. 
\begin{claim}
\label{claim:mean_computation_subsample_upper_bound}
The \emph{Subsample-mean} procedure satisfies $\rho$-TV-stability and has accuracy  $\alpha^2 \leq O\br{\frac{1}{\rho n}}$.
\end{claim}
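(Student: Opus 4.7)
The plan is to mirror the two-step analysis of \cref{prop:upper_bound_subsample}: establish $\rho$-TV-stability by a coupling argument on the sub-sampling measure, then bound the mean-squared error by a direct variance computation on the sub-sampled average.

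For TV-stability, I would set $m = \rho n$ as the sub-sample size. As in the proof of \cref{prop:upper_bound_subsample}, for neighbouring datasets $S, S'$ (WLOG differing in the $n^{\text{th}}$ index), embed the sub-sampling measures on a common sigma-algebra and observe that the mean-computation map from sub-sampled indices to $\bbR^d$ is the same deterministic map for both $S$ and $S'$. By the data-processing inequality, $\text{TV}(\cA(S),\cA(S')) \leq \text{TV}(\mu_{n,m},\mu_{n-1,m})$, which by a one-shot (no product over $T$) version of the union bound in \cref{prop:upper_bound_subsample} is at most $\mu_{n,m}(\{b : n \in b\}) = m/n = \rho$. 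The insertion case is symmetric.

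For accuracy, let $b$ be a uniformly random size-$m$ subset of $[n]$ and let $\hat\mu_b = \tfrac{1}{m}\sum_{j\in b}\x_j$. A direct computation (identical to the variance bound in the proof of \cref{prop:upper_bound_subsample}, applied to the vectors $\x_j$ in place of $\nabla f(\w,\z_j)$) gives $\E_b \hat\mu_b = \mu(S)$ and $\E_b \|\hat\mu_b - \mu(S)\|^2 \leq 2\br{\tfrac{1}{m} - \tfrac{1}{n}} \max_j \|\x_j\|^2 \leq \tfrac{2}{m}$, using $\|\x_j\| \leq 1$ since $\x_j \in \{-1/\sqrt{d},1/\sqrt{d}\}^d$. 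Substituting $m = \rho n$ yields $\alpha^2 \leq O(1/(\rho n))$, as claimed.

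No step looks like a genuine obstacle here since both components are essentially one-line specializations of arguments already developed for \emph{sub-sample-GD}. The only minor subtlety is making sure that when $\rho n < 1$ the statement is vacuous (the bound becomes trivial against the $O(M^2) = O(1)$ ceiling from $\norm{\mu(S)} \leq 2M$ with $M=O(1)$); this can be handled by simply taking $m = \max(1,\rho n)$ and noting the $O(\cdot)$ absorbs the rounding.
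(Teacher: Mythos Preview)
Your proposal is correct and takes essentially the same approach as the paper's own proof: both argue TV-stability via the event that the differing point is sub-sampled (probability $m/n = \rho$) and both bound accuracy by invoking the variance computation from the proof of \cref{prop:upper_bound_subsample} applied to bounded vectors. Your write-up is simply a more explicit rendering of the paper's two-sentence argument, and the edge-case remark about $\rho n < 1$ is a harmless addition.
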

\begin{proof}[Proof of \cref{claim:mean_computation_subsample_upper_bound}]
The $\rho$-TV stability claim follows since TV distance is witnessed by the event that a differing sample is sub-sampled, which happens with probability $\frac{\rho n}{n} =\rho$.
The proof of accuracy follows from the proof of \cref{prop:upper_bound_subsample}, wherein we computed the gradient on uniformly sub-sampled $m$ out of $n$ points -  we showed that the mean on sub-sampled points is an unbiased estimate of the average gradient. Furthermore, since the gradients were bounded as well, the expected accuracy of mean computation is the same as the variance of gradient computation, which we derived to be $O\br{\frac{1}{m}} = O\br{\frac{1}{\rho n}}$.
\end{proof}

\paragraph{\emph{Noisy-mean}:} The algorithm computes the mean and adds $N(0,\sigma^2 I)$ noise to it with $\sigma^2 = \frac{C}{n^2\rho^2}$, where $C$ is an appropriate universal constant. 
 
\begin{claim}
\label{claim:mean_computation_noisy_upper_bound}
The \emph{Noisy-mean} procedure satisfies $\rho$-TV-stability and has accuracy $\alpha^2 \leq O\br{\frac{d}{(\rho n)^2}}$
\end{claim}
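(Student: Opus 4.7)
My plan is to prove both claims by standard Gaussian mechanism arguments, handling TV-stability and accuracy separately.

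For TV-stability, I would first observe that for any two neighboring datasets $S, S'$ differing in one sample, the empirical means differ by at most $\|\mu(S)-\mu(S')\| \leq \frac{2}{n}$, since each data point lies in the unit Euclidean ball (as $\|\x_i\|^2 = \sum_{j=1}^d (1/\sqrt{d})^2 = 1$). The outputs $\cA(S)$ and $\cA(S')$ are therefore distributed as $N(\mu(S), \sigma^2 I)$ and $N(\mu(S'), \sigma^2 I)$ respectively — two multivariate Gaussians with identical covariance and means separated by at most $2/n$. I would then invoke Pinsker's inequality together with the closed-form KL divergence between Gaussians of equal covariance, giving
\begin{align*}
\mathrm{TV}(\cA(S), \cA(S'))^2 \leq \tfrac{1}{2} D_{\mathrm{KL}}(\cA(S)\Vert \cA(S')) = \frac{\|\mu(S)-\mu(S')\|^2}{4\sigma^2} \leq \frac{1}{n^2\sigma^2}.
\end{align*}
Choosing the universal constant $C$ in $\sigma^2 = C/(n^2\rho^2)$ to be at least $1$ (so that $\sigma \geq 1/(n\rho)$) yields $\mathrm{TV}(\cA(S),\cA(S')) \leq \rho$, uniformly in $S,S'$.

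For the accuracy claim, since $\cA(S) = \mu(S) + \xi$ with $\xi \sim N(0,\sigma^2 I)$, we have $\mathbb{E}\|\cA(S) - \mu(S)\|^2 = \mathbb{E}\|\xi\|^2 = \sigma^2 d = O\!\left(\frac{d}{(\rho n)^2}\right)$, which holds for \emph{every} dataset $S$, in particular the worst case. Combining the two gives the claim.

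There is no real obstacle here — the only mildly delicate step is tracking constants to ensure the chosen $\sigma$ simultaneously achieves the stated TV bound and the stated accuracy, which is handled by fixing $C$ appropriately in the definition of $\sigma^2$. Everything else is a one-line application of Pinsker and of the variance of an isotropic Gaussian.
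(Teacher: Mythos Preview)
Your proposal is correct and essentially matches the paper's proof: both bound the mean sensitivity by $2/n$, bound the TV between the resulting Gaussians by $O(1/(n\sigma))$, and compute the accuracy as $\mathbb{E}\|\xi\|^2 = d\sigma^2$. The only cosmetic difference is that the paper cites the Devroye et al.\ bound on TV between shifted isotropic Gaussians directly, whereas you derive the same $1/(n\sigma)$ bound via Pinsker and the closed-form KL divergence.
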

\begin{proof}[Proof of \cref{claim:mean_computation_noisy_upper_bound}]
  Since the difference in means of two datasets, in norm, is at most $\frac{2}{n}$, the outputs are two multivariate Gaussians with variance $\sigma^2$ and means separated by $\frac{2}{n}$. From \cite{devroye2018total}, the total variation distance between such Gaussian sis at most $O\br{\frac{1}{n\sigma}} = \rho$. For the accuracy, we have $\alpha^2 = \E{\norm{\mu(S) + \xi - \mu(S)}^2} = \E{\norm{\xi}^2} = d \sigma^2 = O\br{\frac{d}{n^2\rho^2}}$.
\end{proof}

If the above procedures are optimal, then we expect a lower bound of $\alpha^2 \gtrsim \min\bc{\frac{1}{\rho n}, \frac{d}{(\rho n)^2}}$. Equivalently, for a fixed accuracy $\alpha$, we expect a sample complexity lower bound of $n\gtrsim \min\bc{\frac{1}{\rho \alpha^2},\frac{\sqrt{d}}{\rho \alpha}}$.

\subsubsection{Lower bound I}
In this section, we give a $\Omega\br{\frac{1}{\alpha \rho}}$ lower bound on sample complexity.
The key ingredient is the following result, where the proof is based on a simple reduction argument.

\begin{proposition}
\label{prop:lower_bound_dp_reduction}
Suppose there exists a $\rho$-TV-stable algorithm such that for any dataset of $n$ points, it achieves an accuracy of $ \alpha$. Then there exists a $0.1$-TV stable algorithm such the for any dataset of size $\lceil 100n \alpha \rho \rceil$, it  achieves a $0.1$-accuracy.
\end{proposition}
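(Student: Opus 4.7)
The plan is to reduce the size-$m$ mean-estimation problem (with $m := \lceil 100 n\alpha\rho\rceil$) to the size-$n$ problem solved by $\cA$, via a \emph{replicate-and-zero-pad} construction. Given any input $S' = \{\z_1,\ldots,\z_m\}$, I would form a size-$n$ dataset $S$ consisting of $k := \lfloor 1/(10\rho)\rfloor$ copies of each $\z_i$ together with $n - km$ copies of the origin (which lies in the unit ball, so $S$ is a valid input to $\cA$), and define the reduced algorithm by $\mathcal{B}(S') := (n/(km))\cdot\cA(S)$. The replication factor $k$ is engineered so that $k\rho \leq 0.1$, while $m$ is engineered so that $km$ is essentially proportional to $n\alpha$ and the inverse scaling $n/(km)$ shrinks the error of $\cA$ down to $0.1$.

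For stability, if $S'_1$ and $S'_2$ differ in a single point, then the corresponding padded datasets $S_1$ and $S_2$ differ in exactly $k$ points (the $k$ replicated copies of the changed sample), so the second item of Remark~\ref{remark:tv_upto_k} yields $\text{TV}(\cA(S_1),\cA(S_2)) \leq k\rho \leq 0.1$; deterministic scaling is post-processing and cannot increase TV distance, so $\mathcal{B}$ is $0.1$-TV stable. For accuracy, the zero-padded samples contribute nothing to the sum, giving $\mu(S) = (km/n)\mu(S')$, so $\mathcal{B}(S') - \mu(S') = (n/(km))(\cA(S) - \mu(S))$ and
\begin{align*}
\E\|\mathcal{B}(S') - \mu(S')\|^2 \;=\; \Big(\tfrac{n}{km}\Big)^{2}\E\|\cA(S) - \mu(S)\|^2 \;\leq\; \Big(\tfrac{n}{km}\Big)^{2}\alpha^2 .
\end{align*}
Plugging in $m \geq 100 n\alpha\rho$ and $k \geq 1/(20\rho)$ (valid once $1/(10\rho) \geq 2$) gives $km \gtrsim 5 n\alpha$, so the bound is at most $(0.2)^2$, i.e.\ $0.1^2$ up to universal constants absorbed by the factor $100$ in the statement.

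The only subtle points are the floor/ceiling bookkeeping (one must verify simultaneously that $km \leq n$, so $n - km \geq 0$ copies of the origin fit, and $km \geq 10 n\alpha$, so the scaled error is at most $0.1$), together with the degenerate regimes $\rho \gtrsim 0.1$ (where $k < 1$) or $\alpha \gtrsim 0.1$ (where the required $km$ exceeds $n$). Both regimes are benign: when $\rho \geq 0.1$ the hypothesis on $\cA$ is already weaker than the conclusion, so $\mathcal{B}:=\cA$ suffices; when $\alpha \geq 0.1$ the trivial estimator $\hat\mu\equiv 0$ attains $O(1)$ accuracy for data in the unit ball, so the claim is vacuous or handled separately. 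The essential mechanism is that replication amplifies stability multiplicatively (by $k$) while the inverse scaling on the output amplifies accuracy multiplicatively (by $n/(km)$), and $m$ is chosen to simultaneously tune both quantities to $0.1$.
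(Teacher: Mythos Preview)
Your proposal is correct and follows essentially the same replicate-and-pad-then-rescale argument as the paper: both construct the size-$n$ dataset from $\Theta(1/\rho)$ copies of $S'$ plus mean-zero padding, define the new estimator as $(n/(km))\cA(S)$, and read off stability from the group property and accuracy from the rescaling. The only cosmetic differences are that the paper pads with canceling pairs $\pm(1/\sqrt d,\ldots,1/\sqrt d)$ rather than the origin (since in its setup the data domain is the hypercube $\{\pm 1/\sqrt d\}^d$, where $0$ is not a valid point), and uses $K=\lceil 0.1/\rho\rceil$ rather than your $\lfloor 1/(10\rho)\rfloor$; neither affects the substance of the argument.
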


\begin{proof}[Proof of \cref{prop:lower_bound_dp_reduction}]
Let $n' =\lceil 100n \alpha \rho \rceil$.  Consider a dataset $S'$ of $n'$ points. We construct a dataset $S$ of $n$ points by concatenating $K = \lceil 0.1/\rho \rceil$ copies of $S'$ followed by $\lceil\frac{n-Kn'}{2}\rceil$ copies of a constant sample, all ones $(\frac{1}{\sqrt{d}},...,\frac{1}{\sqrt{d}})$ and $\lceil\frac{n-Kn'}{2}\rceil$ copies of a constant sample, all ones $(-\frac{1}{\sqrt{d}},...,-\frac{1}{\sqrt{d}})$.

Consider the algorithm wherein we compute the stable-mean on $D'$ by $\cA'$, defined as computing the stable-mean on $D$ using $\cA$ and adjusting: $$\cA'(D') = \frac{n}{Kn'}\cA(D)$$

Let $\tilde S'$ be a neighbouring dataset of $S'$. 
By construction, note that $S$ and $\tilde S$ differ by $K$ samples. Furthermore, since the algorithm $\cA$ on $D$ is $\rho$-TV stable, on $K$-neighbouring datasets, it is $K\rho$ = $0.1$-TV. This establish the stability part of claim. 
The accuracy, by direct computation is $\E{\norm{\cA'(D')-\mu(D')}^2} = \frac{n^2}{K^2n'^2}\E{\norm{\cA(D)-\mu(D)}^2} \leq (0.1)^2$.
\end{proof}

\begin{theorem}
\label{thm:lower_bound11}
For the $d$-dimensional mean computation problem over the Euclidean ball, there exists a dataset $S$ of $n$ samples with mean $\norm{\mu(S)}=\Theta\br{\frac{1}{\rho n}}$ such that
the accuracy of any $\rho$ TV stable algorithm is $\alpha \geq \Omega\br{\frac{1}{\rho n}}$.
\end{theorem}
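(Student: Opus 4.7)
The plan is to combine the reduction in \cref{prop:lower_bound_dp_reduction} with a base-case impossibility for $0.1$-TV-stable mean computation on a dataset of constant size. Suppose for contradiction that some $\rho$-TV-stable algorithm $\cA$ achieves accuracy $\alpha < c/(\rho n)$ on every $n$-point dataset, for a sufficiently small absolute constant $c$. Applying the reduction produces a $0.1$-TV-stable algorithm $\cA'$ with accuracy $0.1$ on every dataset of size $n' = \lceil 100 n\alpha\rho \rceil \leq \lceil 100 c\rceil$, a quantity I can force to be at most $9$ by choosing $c$ small enough. It then suffices to rule out any such $\cA'$.

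For the base impossibility, I would use the two extremal datasets $S_1 = (e_1,\ldots,e_1)$ and $S_2 = (-e_1,\ldots,-e_1)$ of $n'$ samples, whose means are $+e_1$ and $-e_1$. These datasets are connected by a chain of $n'+1$ datasets with consecutive entries neighbouring, so $0.1$-TV-stability together with the triangle inequality of TV distance gives $\text{TV}(\cA'(S_1),\cA'(S_2)) \leq 0.1\,n'$. In the other direction, Markov's inequality applied to $\|\cA'(S_i)-\mu(S_i)\|^2 \leq 0.01$ gives $\Pr[\|\cA'(S_i)-\mu(S_i)\| \geq 1] \leq 0.01$ for $i=1,2$. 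Since the open balls of radius $1$ around $+e_1$ and $-e_1$ lie in the opposite open half-spaces $\{w:w_1>0\}$ and $\{w:w_1<0\}$, testing the event $E = \{w : w_1 > 0\}$ yields $\Pr[\cA'(S_1)\in E] \geq 0.99$ and $\Pr[\cA'(S_2)\in E] \leq 0.01$, so $\text{TV}(\cA'(S_1),\cA'(S_2)) \geq 0.98$. Combining the two estimates forces $n' \geq 9.8$, contradicting $n' \leq 9$.

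Finally, to recover the claim about $\|\mu(S)\| = \Theta(1/(\rho n))$: the reduction's dataset $S$ is built by concatenating $K=\lceil 0.1/\rho\rceil$ copies of the hard $S'$ with a balanced set of $\pm(1/\sqrt{d},\ldots,1/\sqrt{d})$ samples, so $\mu(S) = (Kn'/n)\mu(S')$. Plugging in $K \asymp 1/\rho$, $n' = O(1)$, $\|\mu(S')\|=1$, and $\alpha \asymp 1/(\rho n)$ at the boundary yields $\|\mu(S)\| \asymp Kn'/n \asymp 1/(\rho n)$; since all data points lie on the unit sphere, the instance is admissible. The main challenge is simply keeping the numerical constants coherent through the reduction; once $c$ is chosen small enough to force $n' \leq 9$, the contradiction is immediate and no further sharpening of the base case is needed.
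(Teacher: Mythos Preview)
Your approach is essentially the paper's own: apply the reduction of \cref{prop:lower_bound_dp_reduction} and supply a constant-size base-case impossibility, which you make explicit via the two-dataset $\pm e_1$ construction whereas the paper simply asserts that ``we need at least one sample.'' The only slip is that under the paper's add/delete neighbouring relation ($\Delta=1$), the chain from $S_1$ to $S_2$ has $2n'$ edges rather than $n'$, giving $\text{TV}\leq 0.2\,n'$ and hence only $n'\geq 4.9$; this is harmless since you may simply choose $c$ small enough that $\lceil 100c\rceil\leq 4$.
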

\begin{proof}[Proof of \cref{thm:lower_bound11}]
Even for accuracy $\alpha_0= 0.1$ accuracy and $\rho_0=0.1$ stability, we need at least one sample. 
Hence, using \cref{prop:lower_bound_reduction}, we get that sample complexity is $n\geq \Omega\br{\frac{1}{\rho \alpha}}$, which equivalently gives the claimed accuracy lower bound.
Note that for this one-sample dataset $S'$, $\norm{\mu(S')} =1$. 
Finally, from the reduction in \cref{prop:lower_bound_reduction}, the mean of dataset $S$ becomes $\norm{\mu(S)} = \lceil\frac{0.1}{\rho n}\rceil$, which finishes the proof.
\end{proof}

\subsubsection{Lower bound II}
In this section, we will prove the $\Omega\br{\frac{1}{\alpha^2 \rho}}$ lower bound. We first introduce a technical assumption.

\begin{assumption}
\label{ass:lb2}
For any dataset $S$, we assume that the probability distribution $\cA(S)$ is defined over the unit Euclidean ball, is absolutely continuous with respect to the uniform measure (in the unit Euclidean ball) and its
probability density function, with respect to the uniform measure, is bounded by $K$ in absolute value. 
\end{assumption}

As a remark, the above assumption can also be stated with respect to the Lebesgue measure, but then we would get a scaling of $\frac{\pi^{d/2}}{\Gamma\br{1+\frac{d}{2}}}$, which is the Lebesgue volume of the $B_d(0,1)$, to some of our terms. In order to simplify, we therefore use the uniform measure. 

\begin{theorem}
\label{thm:lower_bound2}
Let $n\geq 72, \alpha \leq \frac{1}{4}$ and $\frac{1}{n} \leq \rho \leq  \frac{1}{4}$. 
Let $\cA$ be any $\rho$-TV-stable algorithm satisfying Assumption \ref{ass:lb2} with $K\leq 2^d$.
For large enough dimension $d$, there exists a dataset $S$ of $n$ points with $\norm{\mu(S)} = \Theta\br{\frac{1}{\sqrt{\rho n}}}$ such that accuracy is lower bounded as $\alpha \geq \Omega\br{\frac{1}{\sqrt{\rho n}}}$.
\end{theorem}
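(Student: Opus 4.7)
The plan is to extract the accuracy lower bound \emph{directly} from the bounded-density assumption, essentially without using TV-stability. The parameter $\rho$ enters only in prescribing the target norm $M := c_0/\sqrt{\rho n}$ of the dataset mean, which is precisely the scale needed for \cref{prop:lower_bound_reduction} to convert an $\Omega(1/\sqrt{\rho n})$ accuracy bound into the excess-empirical-risk bound claimed in part~2 of \cref{thm:main-lower-bound}.

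First, I would fix any unit vector $u \in \bbR^d$, set $M = c_0/\sqrt{\rho n}$ for a small absolute constant $c_0 < 1/4$, and take $S$ to consist of $n$ identical copies of $Mu$. The hypothesis $\rho \geq 1/n$ ensures $\rho n \geq 1$, hence $M \leq c_0 < 1$, so every point of $S$ lies in the unit ball and $\norm{\mu(S)} = M = \Theta(1/\sqrt{\rho n})$, as required.

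Now suppose toward contradiction that $\alpha := \sqrt{\E{\norm{\cA(S) - \mu(S)}^2}} < c_0/\sqrt{\rho n}$. Markov's inequality applied to $\norm{\cA(S) - \mu(S)}^2$ gives $\P{\norm{\cA(S) - \mu(S)} \leq 2\alpha} \geq 3/4$. On the other hand, Assumption~\ref{ass:lb2} directly implies $\P{\cA(S) \in B(\mu(S), r)} \leq K \cdot r^d$ for every $r > 0$, since a Euclidean ball of radius $r$ has normalized uniform measure at most $r^d$ on the unit ball (ratio of Lebesgue volumes). Setting $r = 2\alpha$, using $K \leq O(2^d)$, and using $\rho n \geq 1$,
\[
\P{\cA(S) \in B(\mu(S), 2\alpha)} \;\lesssim\; (4\alpha)^d \;\leq\; \br{\tfrac{4c_0}{\sqrt{\rho n}}}^d \;\leq\; (4c_0)^d.
\]
Choosing $c_0$ small enough that $4c_0 < 1$, the right-hand side tends to $0$ as $d \to \infty$, and in particular falls below $3/4$ once $d$ is large enough, contradicting the Markov lower bound. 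Hence $\alpha \geq c_0/\sqrt{\rho n} = \Omega(1/\sqrt{\rho n})$.

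The main obstacle is conceptual rather than computational: one must notice that the density bound $K \leq O(2^d)$ is already strong enough, in high dimensions, that any ball of radius bounded away from $1/2$ carries exponentially-in-$d$ small probability under $\cA(S)$, so no significant probability mass can concentrate near $\mu(S)$. This is exactly why \emph{sub-sample-GD}, whose output is a discrete mixture with unbounded density, is not captured by this lower bound, as the authors note after the statement of \cref{thm:main-lower-bound}. The remaining bookkeeping---absorbing the $O$-constant in $K \leq O(2^d)$ into the choice of $c_0$ and determining how large $d$ must be---is routine and uniform across the stated range of $\rho$.
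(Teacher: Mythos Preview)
Your proof is correct and takes a genuinely different---and much shorter---route than the paper's. The paper builds a packing of $n$ neighbouring datasets $S^1,\ldots,S^n$ around $S$, uses Markov to place most of the mass of $\cA(S^i)$ in a ball of radius $1/K^{1/d}\ge 1/2$ around $\mu(S^i)$, uses the density bound only on the thin spherical caps where these balls overlap, and then invokes $\rho$-TV-stability together with disjointness of the leftover regions $A_i$ to force $n(1/2-4\alpha^2-\rho)\le 1$. You instead apply the density bound directly to the small ball $B(\mu(S),2\alpha)$: with $K\le 2^d$ its mass is at most $(4\alpha)^d$, which collapses to zero for large $d$ whenever $\alpha<1/4$, immediately contradicting the Markov lower bound of $3/4$.

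What each buys: your argument is elementary and, as you note, never touches the TV-stability hypothesis---in fact it proves the stronger statement that \emph{any} algorithm satisfying Assumption~\ref{ass:lb2} with $K\le 2^d$ has error $\Omega(1)$ on every dataset once $d$ is large, so the $\Omega(1/\sqrt{\rho n})$ conclusion is an understatement. This cleanly exposes that the density assumption, at the stated threshold $2^d$, is already doing all the work. The paper's argument, by contrast, is structured as a genuine stability lower bound (many neighbours, transfer of mass via TV), and its use of the density hypothesis is more localized (only on caps of vanishing width). That structure would be the right template if one hoped to weaken or remove the density assumption, even though in the theorem as stated both proofs require $K\le 2^d$ for the same reason (the Markov step needs a ball of radius $\ge 1/2$ in the paper; your step needs $4\alpha<1$). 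One minor note: your dataset of $n$ identical copies of $Mu$ need not lie on the hypercube $\{-1/\sqrt{d},1/\sqrt{d}\}^d$ used elsewhere in the section, but your argument is oblivious to the specific $S$, so substituting the paper's hypercube construction with mean $\Theta(1/\sqrt{\rho n})$ changes nothing.
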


\begin{proof}[Proof of \cref{thm:lower_bound2}]
We will prove the result by contradiction. 
Let "$\text{Vol}$" of a set refer to its volume with respect to the uniform measure on the unit ball. Consider the following high-dimensional setup. 
Consider a dataset $S$ (or $S^0$) which mean $\mu(S)$ such that $\norm{\mu(S)}=\Theta\br{\frac{1}{\sqrt{\rho n}}}$. It is easy to construct such datasets by considering points such that sum of $n-\lceil\sqrt{\frac{n}{\rho}}\rceil$ points is $0$ and the rest of points is the same point repeated - this uses the assumption that $\rho \geq \frac{1}{n}$.
Now consider neighbouring datasets $S^i$'s, $i\in [n]$ such that the means of $S^i$'s are all $\frac{1}{n}$ far from that of $S$, in norm.
We also need that the means of any two datasets $\norm{\mu(S^i)-\mu(S^j)} \geq \frac{1}{2n}$ for $i,j=0$ to $n$ and $i\neq j$. It is easy to see the existence of such datasets, by considering the means of $S^i$'s in \textit{near} orthogonal directions to that of $S$, which is possible when $d$ is large enough.

Suppose the algorithm $\cA$ has expected error $\alpha^2$ i.e. $\E{\norm{\cA(S)-\mu(S)}}^2 \leq \alpha^2$ with $ 72 \leq n \leq \frac{1}{\alpha^2\rho}$. 
 Consider $B_d\br{\mu(S),\frac{1}{K^{1/d}}}$, the $d$ dimensional Euclidean ball centered at $\mu(S)$ of radius $\frac{1}{K^{1/d}}$.
From Markov's inequality, we have that $\P{\cA(S)\not \in B_d\br{\mu(S),\frac{1}{K^{1/d}}}} \leq \P{\cA(S)\not \in B_d\br{\mu(S),\frac{1}{2}}} = \P{\norm{\cA(S)-\mu(S)}^2 \geq \frac{1}{4}}\leq \frac{\E{\norm{\cA(S)-\mu(S)}}^2}{(1/4)} \leq 4\alpha^2$, where in the first inequality, we used the assumption $K\leq 2^d$. Therefore, we have $\P{\cA(S) \in B_d\br{\mu(S),\frac{1}{K^{1/d}}}} \geq 1-4\alpha^2$. 

We now setup some additional notation. Let $A_i$ denote the set $B_d\br{\mu(S^i),\frac{1}{K^{1/d}}} \backslash \br{ \cup_{j=0,j \neq i}^n B_d\br{\mu(S^j),\frac{1}{K^{1/d}}}}$ i.e. the region in the ball $B_d(\mu(S^i)$ which is not contained in any of the other balls.  
Let $B_{ij}$ denote the region of intersection between $B_d\br{\mu(S^i),\frac{1}{K^{1/d}}}$ and $B_d\br{\mu(S^j),\frac{1}{K^{1/d}}}$ where $i\neq j$ and $i$ and $j$ go from $0$ to $n$. Note that set $B_{ij}$ is constituted of two spherical caps. By construction the centers of the intersecting spheres are at least $\frac{1}{2n}$ apart. 
To study the properties of such a set, we define \textbf{cap} as the region in a $d$ dimensional sphere of radius $\frac{1}{K^{1/d}}$ which intersects with another sphere of the same radius but with centers being apart by $1/2n$.
From known results \cite{chudnov1991game}, the volume of \textbf{cap} is asymptotic to $\text{Vol}\br{B_d\br{0,\frac{1}{K^{1/d}}}}\br{1-\Phi\br{\frac{\sqrt{d}}{2n}}}$ as $d \rightarrow \infty$ 
where $\Phi$ is the cumulative distribution function of a standard normal random variable. We therefore have that $ \lim_{d \rightarrow \infty } \text{Vol(\textbf{cap})} \sim \lim_{d \rightarrow \infty }\text{Vol}\br{B_d\br{0,\frac{1}{K^{1/d}}}}\br{1-\Phi\br{\frac{\sqrt{d}}{2n}}} = 0$. 
Furthermore, using Assumption \ref{ass:lb2}, we have $\P{\cA(S) \in \text{cap}} \leq K \text{Vol(\textbf{cap})}\sim K\text{Vol}\br{B_d\br{0,\frac{1}{K^{1/d}}}}\br{1-\Phi\br{\frac{\sqrt{d}}{n}}} \leq K \br{\frac{1}{K^{1/d}}}^d\br{1-\Phi\br{\frac{\sqrt{d}}{n}}} = \br{1-\Phi\br{\frac{\sqrt{d}}{n}}}$ as $d\rightarrow \infty$.
Since $\Phi(t) = \mathbb{P}_{g \sim \cN(0,1)}[g \leq t]$, we have that $1 - \Phi(t) = \mathbb{P}_{g \sim \cN(0,1)}[g  > t] \leq \frac{e^{-t^2/2}}{\sqrt{2\pi}t} $ where the last inequality follows from standard bounds on tails of normal distribution (See Proposition 2.1.2 in \cite{vershynin2018high}). Therefore, we have $\P{\cA(S) \in \textbf{cap}} \lesssim \frac{ne^{-d/4n^2}}{\sqrt{d}}$. For constant $\epsilon>0$, choosing $d \gtrsim 4n^2\ln{\br{n/\epsilon}}$ ensures that $\P{\cA(S) \in \textbf{cap}} \leq \frac{\epsilon}{2n}$ for large enough $n$ (to be specified later). Since $B_{ij}$ is made up of two conjoined caps, this gives us that for any $j=0$ to $n$ and $i\neq j$, we have that $\P{\cA(S^i)\in B_{ij}} \leq \frac{\epsilon}{n}$. 
Finally, we look at $A_i$'s by removing the mass of all $B_{ij}$'s, and using a union bound, we get that $\P{\cA(S^i) \in A_i} = \P{\cA(S^i) \in B_d\br{\mu(S^i),\frac{1}{K^{1/d}}}} - \P{ \cA(S^i) \in \cup_{j=0,j\neq i}^n B_{ij}} \geq 1-4\alpha^2 - \sum_{j=0,j\neq i}^{n} \P{\cA(S^i) \in B_{ij}} \geq 1-4\alpha^2 -\epsilon \geq \frac{1}{2}-4\alpha^2 $ where the last inequality holds for $\epsilon \leq \frac{1}{2}$. We now evaluate how large $n$ we need for this regime of $\epsilon$: recall that we set $d\gtrsim 4n^2\ln{\br{n/\epsilon}}$, this gives 
$\frac{ne^{-d/4n^2}}{\sqrt{d}} \leq \frac{\epsilon}{2n\sqrt{ \ln{\br{n/\epsilon}}}}$. We want the right hand side to be at most $\frac{\epsilon}{2n}$ for $\epsilon \leq 1/2$. Plugging in this worst-case value of $\epsilon$, we get the condition $\ln{(2n)} \geq 1$ which holds for any $n\geq 1.4$ and therefore is valid by our assumption of $n$.

We now use the fact that $A_i's$ are disjoint by construction. Therefore the total measure of $\cA(S)$ on union of $A_i's$ is at most $1$ i.e $ \P{\cA(S) \in \cup_{i=1}^n A_i} = \sum_{i=1}^n \P{\cA(S) \in A_i} \leq 1$. Furthermore, since $\cA(S)$ is $\rho$-TV stable, we have that $\P{\cA(S) \in A_i} \geq  \P{\cA(S^i) \in A_i} - \rho$.
Combining this with the previous analysis which gives a lower bound on $\P{\cA(S^i) \in A_i}$ yields
\begin{align}
\label{eqn:lb2}
  n\br{\frac{1}{2}-4\alpha^2-\rho} \leq  \sum_{i=1}^n\P{\cA(S^i) \in A_i} - \rho \leq \sum_{i=1}^n \P{\cA(S) \in A_i} \leq 1
\end{align}

We now proceed in two cases:

\textbf{Case 1:} Suppose $72 \leq n \leq \frac{17}{4\alpha^2}$. The latter condition gives us that $4\alpha^2 \leq \frac{17}{n}$. Using \cref{eqn:lb2} gives us $n(1/2-4\alpha^2-\rho)\leq 1 \iff 4\alpha^2 \geq \frac{1}{2}-\frac{1}{n}-\rho$. Upper bounding $4\alpha^2$ by $\frac{17}{n}$ gives us that $\frac{18}{n} \geq \frac{1}{2} - \rho \iff n\leq \frac{18}{(1/2-\rho)} \leq 72$ where in the last inequality we used $\rho \leq 1/4$. This gives us a contradiction.

\textbf{Case 2:} Suppose $ \frac{17}{4\alpha^2} \leq n $. We again start with \cref{eqn:lb2} which gives us $n(1/2-4\alpha^2-\rho)\leq 1 \iff \rho \geq \frac{1}{2}-\frac{1}{n}-4\alpha^2$. We want to prove the right hand side is at least $\frac{1}{n \alpha^2}$, which would give us that $n\geq \frac{1}{\rho \alpha^2}$. Suppose this is not true i.e. $\frac{1}{2} - \frac{1}{n} -4\alpha^2 \leq \frac{1}{n \alpha^2} \iff \frac{n-2-8\alpha^2n}{2n} \leq \frac{1}{n\alpha^2} \iff \alpha^2n(1-8\alpha^2) \leq 2(1+\alpha^2) \iff n \leq \frac{2(1+\alpha^2)}{\alpha^2(1-8\alpha^2)}$. Finally using the fact that $\alpha \leq \frac{1}{4}$ gives that $n \leq \frac{2(1+1/16)}{\alpha^2(1-8/16)} \leq \frac{17}{4\alpha^2}$ which yields a contradiction. 

Hence, we see that with $n>72$ samples and accuracy $\alpha^2$, we have established  that $n\geq \frac{1}{\rho \alpha^2}$ and so $\alpha \geq \frac{1}{\sqrt{\rho n}}$. 
\end{proof}
\section{Excess population risk bounds}
\label{sec:pop-risk}

The goal in machine learning is (population) risk minimization.
The population risk of $\w$, denoted by $F(\w)$ is defined as $F(\w) := \Eu{\z \sim \cD}{f(\w,\z)}$, where $\cD$ is an unknown probability distribution over data points. Analogously, given an output of algorithm $\cA$ on dataset $S =\bc{\z_i}_i$ where $\z_i \sim \cD$ i.i.d., denoted as $\cA(S)$, we will give guarantees on the \emph{expected excess population risk}, defined as $\E{F(\cA(S))- F(\w^*)}$, where $\w^*$ is the population risk minimizer: $\w^* \in \arg \min_{\w \in \cW}  F(\w)$, and the expectation is taken with respect to randomness in  algorithm $\cA$ as well as sampling $S$.

\subsection{Upper bounds}
In this section, we will bound the expected excess population risk  appealing to connections between algorithmic stability and generalization \citep{bousquet2002stability}. We first define uniform stability.

\begin{definition}[Uniform stability]
Let $\cA:\cZ^n \rightarrow \cW$ be an algorithm and $\cA(S)$ denotes its output on dataset $S$. We say that $\cA$ is $\epsilon_{\text{stable}}(n)$-uniformly stable if for any datasets $S$ and $S'$ of $n$ points such that they differ by one data point (i.e. $\Delta(S,S')= 2$), we have $\sup_{z\in \cZ}\mathbb{E}_{\cA}\left[f(\cA(S),z)- f(\cA(S'),z)\right]\leq \epsilon_{\text{stable}}(n)$
\end{definition}

A classical result \citep{bousquet2002stability} shows that expected excess population risk is at most uniform stability + expected excess empirical risk: i.e. any $\w \in \cW$, we have
\begin{align*}
    \mathbb{E}\left[F(\cA(S)) - F(\w)\right] \leq 
    \epsilon_{\text{stable}}(n)
    + \mathbb{E}\left[\hat F_S(\cA(S)) - \hat F_S(\w) \right]
\end{align*}

\begin{theorem}[Upper bound]
\label{thm:pop-risk-upper-bound}
There exists a $\rho$ TV stable algorithm, such that for any function $f(\cdot, \z)$ which is $L$-smooth $G$-Lipschitz convex $\forall \ \z$ and any dataset $S$ of $n$ points, it outputs $\hat \w_S$ which satisfies the following. 
\begin{align*}
    \E{ F(\hat \w_S) -  F(\w^*)} \lesssim \frac{GD}{\sqrt{n}}+ GD\min\bc{\frac{1}{\sqrt{\rho n}}, \frac{\sqrt{d}}{\rho n}}
\end{align*}
\end{theorem}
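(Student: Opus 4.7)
The plan is to bound the excess population risk by decomposing it into a generalization gap plus the excess empirical risk and invoking Theorem \ref{thm:main-upper-bound} for the latter. Since $\w^* \in \arg\min_{\w \in \cW} F(\w)$ is deterministic and $\E_S[\hat F_S(\w^*)] = F(\w^*)$, I can write
\[
\E[F(\hat \w_S) - F(\w^*)] \le \underbrace{\E[F(\hat \w_S) - \hat F_S(\hat \w_S)]}_{\text{(I) generalization gap}} + \underbrace{\E[\hat F_S(\hat \w_S) - \hat F_S(\w^*)]}_{\text{(II) excess empirical risk}}.
\]
Term (II) is already at most $GD\min\{1/\sqrt{\rho n}, \sqrt{d}/(\rho n)\}$ by Theorem \ref{thm:main-upper-bound}. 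For (I), I would invoke the classical Bousquet--Elisseeff bound: the expected generalization gap of a randomized symmetric algorithm is at most its uniform stability parameter $\epsilon_{\text{stable}}(n)$. Thus it suffices to show $\epsilon_{\text{stable}}(n) = O(GD/\sqrt{n})$ for both \emph{sub-sample-GD} and \emph{noisy-m-A-SGD}.

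For \emph{sub-sample-GD}, I would follow the Hardt--Recht--Singer style analysis: since each $f(\cdot,\z)$ is convex and $L$-smooth, a gradient step with $\eta \le 2/L$ is non-expansive on $\cW$, so the distance between iterates of the algorithm on two neighbouring datasets $S,S'$ grows by at most $2\eta G/m$ only in those iterations in which the differing sample belongs to $b_t$. Each mini-batch contains the differing point with probability $m/n$, so in expectation the iterate-distance after $T$ steps is at most $2\eta T G/n$. By $G$-Lipschitzness this yields $\epsilon_{\text{stable}}(n) \le 2\eta T G^2/n$. Plugging in $\eta T = D\sqrt{\rho n}/G$ from Proposition \ref{prop:upper_bound_subsample} gives $\epsilon_{\text{stable}}(n) \lesssim GD\sqrt{\rho/n} \le GD/\sqrt{n}$ since $\rho \le 1$ in the relevant regime.

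For \emph{noisy-m-A-SGD} the same framework applies but requires more care. The Gaussian noise $\theta_t$ is data-independent, so it does not affect the expected iterate-distance between neighbouring runs (the same noise sample can be coupled). The interpolation step $\accw_t = (1-\alpha_t)\w_t + \alpha_t \w_{t-1}$ with $|\alpha_t| \le 1$ is at worst doubling the per-step growth, and the gradient-difference at the differing sample is still bounded by $2G/m$. Carrying out the recursion on the pair $(\w_t, \w_{t-1})$ and using the step size $\eta = O(D/(\tilde G T^{3/2}))$ from Proposition \ref{prop:upper-bound-noisy-m-a-sgd}, the accumulated distance is $O(\eta T^2 G \cdot m/n)$, which after multiplying by $G$ gives stability at most $O(GD/\sqrt{n})$ in the chosen parameter regime.

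The main obstacle will be the stability argument for the accelerated algorithm, since Nesterov-style extrapolation is known to degrade worst-case uniform stability by factors of $T$. I would resolve this by carefully tracking the coupled iterates in the $(\w_t,\w_{t-1})$ state-space so that the $|\alpha_t|\le 1$ bound yields only a constant blow-up per step, and exploiting the fact that the step size shrinks as $T^{-3/2}$ to absorb the $T^2$ accumulation. A safety net is available: since both the TV-stability parameter and the excess empirical risk bound of Theorem \ref{thm:main-upper-bound} are invariant under replacing acceleration by vanilla noisy mini-batch SGD in the low-dimensional regime, one can fall back on the simpler, well-understood stability bound for \emph{noisy-m-SGD} if acceleration turns out to be too fragile. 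Combining the stability bound with Theorem \ref{thm:main-upper-bound} yields the claimed population risk bound.
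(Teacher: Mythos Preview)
Your high-level strategy---decompose into generalization gap plus excess empirical risk, then control the former via uniform stability---is exactly what the paper does. Your treatment of \emph{sub-sample-GD} is also essentially identical: the paper cites the bound $\epsilon_{\text{stable}}(n)\le G^2\eta T/n$ and plugs in the parameters from Proposition~\ref{prop:upper_bound_subsample} to get $GD\sqrt{\rho}/\sqrt{n}\le GD/\sqrt{n}$.

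Two points where your plan diverges from the paper and where there is a genuine gap:

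\textbf{The paper does not use the accelerated algorithm for this theorem.} Your worry about Nesterov momentum degrading stability is exactly right, and your ``safety net'' is what the paper actually does: the low-dimensional branch is handled by \emph{noisy-m-SGD} (Algorithm~\ref{alg:noisy-m-sgd}), not \emph{noisy-m-A-SGD}. So you can drop the accelerated analysis entirely.

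\textbf{You cannot treat Theorem~\ref{thm:main-upper-bound} as a black box for the noisy branch.} Your plan is to fix the parameters that achieve the empirical-risk bound $GD\sqrt{d}/(\rho n)$ and then separately show $\epsilon_{\text{stable}}=O(GD/\sqrt{n})$ for those same parameters. This fails: with the choices in Corollary~\ref{cor:upper-bound-noisy-m-sgd}, $\eta T$ can be of order $D\rho n/(G\sqrt{d})$ (when $T=LD\rho n/(G\sqrt d)$ and $\eta$ hits the $1/(2L)$ cap), giving $\epsilon_{\text{stable}}\lesssim GD\rho/\sqrt{d}$, which exceeds $GD/\sqrt{n}$ whenever $\rho^2 n>d$. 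In that regime the sum $\epsilon_{\text{stable}}+\text{empirical risk}$ is of order $GD\rho/\sqrt{d}$, not $GD/\sqrt{n}$. The paper avoids this by \emph{re-optimizing} $\eta$ and $T$ with the stability term already inside the objective: it sets $\tilde G^2=G^2T/n+\cV^2$, chooses $\eta=\min\{1/(2L),D/(\tilde G\sqrt{T})\}$, and then picks $T=\max\{\min\{\sqrt{n},\rho n/\sqrt{d}\},(LD/G)\min\{\sqrt{n},\rho n/\sqrt{d}\}\}$, which is different from the Corollary's $T$. Only after this joint balancing do all three terms collapse to $GD/\sqrt{n}+GD\sqrt{d}/(\rho n)$. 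So the fix is modest but essential: carry the stability term $G^2\eta T/n$ through the optimization of $\eta$ and $T$ rather than bounding it post hoc.
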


\begin{proof}[Proof of \cref{thm:pop-risk-upper-bound}]
We use \emph{sub-sample-GD} (\cref{alg:sub-sample-GD}) and \emph{noisy-m-SGD} (\cref{alg:noisy-m-sgd}). From Lemma 3.2 in \cite{bassily2019private}, we have that $\epsilon_{\text{stable}}(n) \leq \frac{G^2\eta T}{n}$. From \cref{prop:upper_bound_subsample}, we set $\eta = \min \bc{\frac{1}{2L},\frac{D}{G\sqrt{T}}}$, and $T= \frac{DL\sqrt{\rho n}}{G}$. We therefore have $\epsilon_{\text{stable}}(n) \leq   \frac{G^2T}{2Ln} = \frac{GD\sqrt{\rho}}{\sqrt{\rho n}}$. Using the excess empirical risk bound from \cref{prop:upper_bound_subsample}, and the fact that $\rho\leq 1$, the excess population risk is bounded as,
\begin{align*}
     \E{ F(\hat \w_S) -  F(\w^*)}  \leq \frac{GD\sqrt{\rho}}{\sqrt{n}} + \frac{GD}{\sqrt{\rho n}} \leq \frac{GD}{\sqrt{n}}+\frac{GD}{\sqrt{\rho n}}
\end{align*}

For \emph{noisy-m-SGD}, we need to balance the trade-offs more directly. In  \cref{prop:upper_bound_noisy-m-sgd}, we arrived at that when using $\eta\leq \frac{1}{2L}$, the expected excess empirical risk is bounded by $\eta \cV^2 + \frac{D^2}{\eta T}$. Using the uniform stability bound of $\frac{G^2T \eta }{n}$, the expected excess population risk is bounded as,
\begin{align*}
    \E{ F(\hat \w_S) -  F(\w^*)} \leq \frac{G^2T \eta }{n} + \eta \cV^2 + \frac{D^2}{\eta T} = \eta \br{\frac{G^2 T}{n} + \cV^2} + \frac{D^2}{\eta T}
\end{align*}
Define $\tilde G^2=  \br{\frac{G^2 T}{n} + \cV^2}$, where $\cV^2 \lesssim G^2 + \sigma^2d \lesssim G^2 + \frac{G^2 Td}{n^2 \rho}$. Setting $\eta = \min\bc{\frac{1}{2L},\frac{D}{\tilde G \sqrt{T}}}$, we get,
\begin{align*}
    \E{ F(\hat \w_S) -  F(\w^*)} &\leq \frac{LD^2}{T} + \frac{\tilde G D}{\sqrt{T}}\\ 
    &\lesssim \frac{LD^2}{T}   +\frac{G\sqrt{T}D}{\sqrt{T}\sqrt{n}} + \frac{GD}{\sqrt{T}} + \frac{GD\sqrt{T}\sqrt{d}}{\rho n\sqrt{T}}  \\
    &=  \frac{LD^2}{T}  + \frac{GD}{\sqrt{n}} + \frac{GD}{\sqrt{T}} +\frac{GD \sqrt{d}}{\rho n}
\end{align*}

Setting $T = \max \bc{ \min \bc{\sqrt{n},\frac{\rho n}{\sqrt{d}}}, \frac{LD}{G}\min \bc{\sqrt{n},\frac{\rho n}{\sqrt{d}}}}$, and combining the two results finishes the proof.
\end{proof}

\subsection{Lower bounds}

In this section, we will prove a lower bound on excess population risk for any $\rho$-TV stable algorithm. As before, we will consider the Lipschitz constant $G$ and diameter $D$ to be both $1$, as the bounds scale naturally with these constants. We first define the following quantity, which denotes the lower bound on expected excess empirical risk of $\rho$-TV-stable algorithm with $n$ points.

\begin{align*}
   \hat\alpha(n,\rho) := \inf_{\cA: \rho\text{-TV-stable}}\sup_{S: \abs{S}=n} \mathbb{E}_\cA \hat F_S(\cA(S)) - \hat F_S(\hat \w_S) 
\end{align*}

\begin{theorem}
\label{thm:popln_risk_lower_bound}
For the problem of stochastic convex optimization, there exists a data distribution $\cD$, such that any $\rho$-TV-stable algorithm $\cA$ incurs expected excess population risk, bounded as follows
\begin{align*}
    \underset{S \sim \cD^n, \cA}{\mathbb{E}}F(\cA(S)) - F(\w^*) \geq \max \bc{\Omega \br{\frac{1}{\sqrt{n}}},  \hat\alpha(n,\rho) }
\end{align*}
\end{theorem}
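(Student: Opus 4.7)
\textbf{Proof proposal for Theorem \ref{thm:popln_risk_lower_bound}.}

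The plan is to establish the two lower bounds inside the $\max$ separately and then combine them. The $\Omega(1/\sqrt{n})$ bound is the classical statistical lower bound for stochastic convex optimization and requires no stability assumption whatsoever: it follows from a standard two-point / Le Cam argument applied to the linear loss $f(w,z) = -\langle w, z \rangle$ on the unit Euclidean ball, with $z$ drawn from a Rademacher-like distribution having mean of magnitude $\Theta(1/\sqrt{n})$. Since this lower bound holds for every algorithm, it holds in particular for $\rho$-TV-stable ones.

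For the $\hat\alpha(n,\rho)$ term I would reuse the ERM-to-mean-estimation reduction already developed in Proposition \ref{prop:lower_bound_reduction}. Take the linear loss $f(w,z) = -\langle w, z\rangle$ on the unit ball and a data distribution $\cD$ whose mean $\mu = \mathbb{E}_{z\sim \cD}[z]$ has norm $M$ matched to the hard regimes of Theorem \ref{thm:main-lower-bound} ($M = \Theta(1/(\rho n))$ or $M = \Theta(1/\sqrt{\rho n})$). Then $w^\star = \mu/\|\mu\|$ and the same Cauchy--Schwarz computation as in the proof of Proposition \ref{prop:lower_bound_reduction} shows that the rescaled output $\hat\mu(S) := \|\mu\|\,\cA(S)$ is a $\rho$-TV-stable estimator of $\mu$ with
\[
\mathbb{E}\bigl[\|\hat\mu(S) - \mu\|^2\bigr] \;\le\; 2\|\mu\|\,\mathbb{E}\bigl[F(\cA(S)) - F(w^\star)\bigr].
\]
Consequently a lower bound on the population-mean MSE of any $\rho$-TV-stable estimator translates directly into the desired lower bound on excess population risk.

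To produce that population-mean MSE bound, I would pass through the empirical mean. Since $\|z\|\le 1$, the empirical mean $\bar z(S) = \tfrac{1}{n}\sum_i z_i$ satisfies $\mathbb{E}_{S\sim \cD^n}\|\bar z(S) - \mu\|^2 \le 1/n$, hence by the triangle inequality
\[
\mathbb{E}\bigl[\|\hat\mu(S) - \bar z(S)\|^2\bigr] \;\le\; 2\,\mathbb{E}\bigl[\|\hat\mu(S) - \mu\|^2\bigr] + 2/n.
\]
Now choose $\cD$ to be supported on the hard point configuration used to prove Theorems \ref{thm:lower_bound11} and \ref{thm:lower_bound2} (e.g.\ a uniform distribution on the near-orthogonal vectors giving $\|\bar z(S)\|\in [M/2, 2M]$ with probability close to $1$). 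For this $\cD$ the worst-case empirical-mean lower bound of Theorems \ref{thm:lower_bound11}--\ref{thm:lower_bound2} transfers, via a Yao-style exchange of quantifiers (equivalently, by taking $\cA$ itself as the witness algorithm in the worst-case statement and instantiating the hard dataset inside the support of $\cD$), to $\mathbb{E}_{S\sim\cD^n}\|\hat\mu(S) - \bar z(S)\|^2 \gtrsim \alpha_{\text{mean}}^2(n,\rho,d,M)$. Combining the two displays and Proposition \ref{prop:lower_bound_reduction} yields $\mathbb{E}[F(\cA(S)) - F(w^\star)] \gtrsim \hat\alpha(n,\rho) - O(1/\sqrt{n})$, and the outer $\max$ with $\Omega(1/\sqrt{n})$ absorbs the additive slack.

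The main obstacle is precisely this last step, namely converting the worst-case-over-$S$ empirical lower bound (which is what defines $\hat\alpha$) into an \emph{average-case} statement under an i.i.d.\ sampling distribution $\cD$. This requires designing $\cD$ so that a random $S\sim\cD^n$ lies, with overwhelming probability, in the family of hard datasets from Section \ref{sec:lower-bounds}, where the explicit ``near-orthogonal means'' construction is robust to small sampling perturbations. All other pieces---the Cauchy--Schwarz reduction, the $1/n$ concentration of the empirical mean, and the $O(1/\sqrt{n})$ generalization slack---are routine and fit comfortably inside the $\max$ with the statistical $\Omega(1/\sqrt{n})$ bound.
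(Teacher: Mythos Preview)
Your treatment of the $\Omega(1/\sqrt n)$ term matches the paper.

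For the $\hat\alpha(n,\rho)$ term your route is genuinely different, and the obstacle you flag at the end is a real gap, not a technicality. The step where the worst-case empirical-mean lower bound ``transfers, via a Yao-style exchange of quantifiers'' to $\mathbb{E}_{S\sim\cD^n}\|\hat\mu(S)-\bar z(S)\|^2 \gtrsim \alpha_{\text{mean}}^2$ does not go through: $\hat\alpha$ is an $\inf_\cA \sup_S$ quantity, so for your fixed estimator $\hat\mu$ the hard dataset depends on $\hat\mu$, and there is no reason a random $S\sim\cD^n$ drawn from a \emph{product} distribution should put nontrivial mass on that adversarial configuration. Yao's principle would hand you a hard distribution over entire datasets, not over single data points sampled i.i.d., and the explicit constructions in Theorems~\ref{thm:lower_bound11}--\ref{thm:lower_bound2} do not obviously furnish one i.i.d.\ law under which a random $S$ is simultaneously hard for every $\rho$-TV-stable algorithm.

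The paper sidesteps this entirely by reducing in the opposite direction. Given any dataset $S$ of $n$ points, set $\cD=\text{Unif}(S)$; then population risk under $\cD$ \emph{is} empirical risk on $S$, so if a $\rho$-TV-stable $\tilde\cA$ achieved excess population risk below $\hat\alpha(n,\rho)$ for every $\cD$, the ``resample-and-run'' algorithm $\cA(S):=\tilde\cA(\tilde S)$ with $\tilde S\sim\cD^n$ would achieve excess empirical risk below $\hat\alpha(n,\rho)$ on every $S$, contradicting the definition of $\hat\alpha$. The only nontrivial step is checking that this $\cA$ is itself $\rho$-TV-stable in $S$: couple the i.i.d.\ draws from neighbouring $S,S'$ so that whenever the differing index is sampled you swap in the other point, then use the group property of TV stability for $\tilde\cA$ together with the fact that the expected number of differing positions in $(\tilde S,\tilde S')$ equals $1$. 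No detour through mean estimation, no concentration of the empirical mean, and no average-to-worst-case conversion is needed.
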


\begin{proof}[Proof of Theorem \ref{thm:popln_risk_lower_bound}]
The $\frac{1}{\sqrt{n}}$ term follows directly since it is the lower bound for any algorithm, and so applies to $\rho$-TV stable algorithms as well.
We now focus on the second term $\hat\alpha(n,\rho)$
The proof is based on a standard reduction argument: if there is $\rho$-TV stable algorithm, which with $n$ i.i.d. samples from any distribution, achieves an expected excess population risk less than $\hat\alpha(n,\rho)$, then there is an $\rho$-TV stable algorithm which achieves an expected excess empirical risk less than $\hat\alpha(n,\rho)$ on any dataset of $n$ samples. Since the latter contradicts the definition of $\hat\alpha(n,\rho)$, this gives us that the expected excess population risk is at least or equal to $\hat\alpha(n,\rho)$. We now focus on the proof of the reduction. Consider a dataset $S$ of $n$ points. Consider $\cA$ as the following algorithm: sample $n$ i.i.d. samples from $S$, call this set $\tilde S$, and run \emph{some} $\rho$-TV algorithm $\tilde \cA$ on $\tilde S$. For a fixed $\tilde S$, from TV-stable property of $\tilde \cA$, for any neighbouring dataset $\tilde S'$ with one point differing, we have that $\text{TV}(\tilde \cA(\tilde S),\tilde \cA(\tilde S')) \leq \rho$.
Furthermore, using the \emph{group} property of TV-stability, for any dataset $\tilde S'$, we have $\text{TV}(\tilde \cA(\tilde S),\tilde \cA(\tilde S')) \leq \Delta(S,S')\rho$.  Using the maximal coupling characterization of total variation distance, we have that there exists a coupling $\tilde \pi$ of random variables $\tilde \cA(\tilde S)$ and $\tilde \cA(\tilde S')$ such that $\text{TV}(\tilde \cA(\tilde S),\tilde \cA(\tilde S')) = \mathbb{E}_{\tilde \pi}\mathbb{1}\bc{\tilde \cA(\tilde S) \neq \tilde \cA(\tilde S')}$. We now show that the algorithm $\cA$ is also $\text{TV}$-stable for dataset $S$.

Consider dataset $S'$ of $n$ points which differs from $S$ in the first sample. We now generate $\tilde S'$ by drawing $n$ i.i.d samples from $S'$. For this, consider the following coupling:  we draw $n$ i.i.d samples from $S$, call it $\tilde S$. For every draw of the first sample, replace it by the first sample of $S'$, call it $\tilde S'$. It is easy to check the $\tilde S$ and $\tilde S'$ are i.i.d. samples from $S$ and $S'$ respectively. We now proceed to show the $\cA$ is $\rho$-TV stable. We will use the fact the total variation distance is at most the probability of disagreement under any coupling. The coupling $\pi$ we consider is that we first generate $\tilde S$ and $\tilde S'$ using the aforementioned coupling, and then use the coupling $\tilde \pi$ which achieves total variation distance for worst-case fixed neighbouring datasets $\tilde S$ and $\tilde S'$. We have,
\begin{align*}
    \text{TV}(\cA(S),\cA(S')) &\leq  \mathbb{E}_{\pi}\mathbb{1} \bc{\cA(S) \neq \cA(S')} =  \mathbb{E}_{\pi}\mathbb{1}\bc{\tilde \cA(\tilde S) \neq \tilde \cA(\tilde S')}  \\
   & \leq \mathbb{E}_{\pi} \sup_{\tilde S, \tilde S'}\mathbb{1}\bc{\tilde \cA(\tilde S) \neq \tilde \cA(\tilde S')}  \leq   \mathbb{E}_{\pi} \Delta(\tilde S,\tilde S')\rho = \rho
\end{align*}
where the last equality follows from direct computation of $\Delta(\tilde S,\tilde S')$: number of differing samples, under coupling $\pi$.

We now proceed to the accuracy guarantee. From straight-forward computation, the excess population risk, under the sampling of $\tilde S$, is $\hat F_S(\tilde \cA(\tilde S)) -  \hat F_S(\w^*_S)$ - this is the excess empirical risk for dataset $S$. So if we have an upper bound on excess population risk using algorithm $\tilde \cA$, we have an upper bound on excess empirical risk for dataset $S$. This completes the reduction argument and hence the proof.
\end{proof}
\section{Algorithms for approximate unlearning}
\label{sec:approx-unlearning}
We first define the notion of approximate unlearning based on differential privacy (DP).
\begin{definition}[$(\epsilon,\delta)$-approximate-unlearning]
We say a procedure $(\mathbf{A},\mathbf{U})$ satisfies $(\epsilon,\delta)$-approximate-unlearning \unlearning~if for any 
$S, S' \subset \cX^*$ such that $\Delta(S,S')=1$ and for any measurable event $\cE \in \text{Range}(\mathbf{U}) \cap \text{Range}(\mathbf{A})$, with probability at least $1-\delta$,
\begin{align*}
   \e^{-\epsilon}\P{\mathbf{U}(\mathbf{A}(S),S'\backslash S \cup S\backslash S') \in \cE} \leq \P{\mathbf{A}(S') \in \cE} \leq e^{\epsilon}  \P{\mathbf{U}(\mathbf{A}(S),S'\backslash S \cup S\backslash S') \in \cE}
\end{align*}
\end{definition}

We now define $(g,\epsilon,\delta)$-group differential privacy. 
\begin{definition}[$(g,\epsilon,\delta)$-group differential privacy]
    An algorithm $\mathcal{A}$ satisfies $(\epsilon,\delta)$-differential privacy if for any two datasets $S$ and $S'$ such that $\Delta(S,S')\leq g$, for any measurable event $\cE \in \text{Range}(\cA)$, it satisfies
\begin{align*}
   \mathbb{P}(\mathcal{A}(S)\in \cE) \leq e^\epsilon \mathbb{P}(\mathcal{A}(S')\in \cE)+\delta
\end{align*}
\end{definition}

\begin{remark}\cite{dwork2014algorithmic}
If an algorithm satisfies $(\epsilon,\delta)$-DP, then for any $g \in \bbN$, it satisfies $(g, g \epsilon, g e^{(g-1)\epsilon}\delta)$-group differential privacy.
\end{remark}

We now define \emph{privateCompute} oracle which, basically is a differentially private solver for the said task.

\begin{definition}[\emph{privateCompute}$(S,\epsilon,\delta$) oracle]
    For a problem instance, given a dataset $S$ of $n$ points, and privacy parameters $\epsilon$ and $\delta$, a \emph{privateCompute} oracle outputs a $(\epsilon,\delta)$-differentially private solution with accuracy $\alpha_{\text{private}}(n,\epsilon,\delta)$
\end{definition}

We now give a very simple algorithm (\cref{alg:interpolate}) based on the observation above using privateCompute oracle calls.

\begin{algorithm}[!htpb]
\caption{Approximate unlearning}
\label{alg:interpolate}
\begin{algorithmic}[1]
\REQUIRE{$\epsilon,\delta,\rho$}
\STATE $i \leftarrow 0$
\STATE $\hat \w_S \leftarrow$ \emph{PrivateCompute} $\br{S,\rho \alpha, \beta_{\text{group}}\br{\left\lfloor \frac{1}{\rho}\right\rfloor,\rho \alpha, \rho \beta}}$
\STATE \textit{// Observe $k$ edit requests}
\WHILE{$t=1,2,\ldots, k$} 
\STATE $S_t \leftarrow \text{Update dataset}$(edit request)
\STATE $i+=1$
\IF{$i=\left\lfloor \frac{1}{\rho} \right\rfloor$}
\STATE $\hat \w_{S_t} \leftarrow$\emph{PrivateCompute}$\br{S_t,\rho \alpha, \beta_{\text{group}}\br{\left\lfloor \frac{1}{\rho}\right\rfloor,\rho \alpha, \rho \beta}}$
\STATE $i \leftarrow 0$
\ENDIF
\ENDWHILE
\end{algorithmic}
\end{algorithm}

\begin{theorem}
\label{claim:interpolate}
Given a set of $n$ data points to start with, and observing a stream of $k$ requests, at any time $t$ in the stream, the following hold about Algorithm \ref{alg:interpolate}:
\begin{enumerate}
    \item It satisfies $(\epsilon,\delta)$-approximate unlearning.
     \item The unlearning runtime for $k$ requests is at most $2\rho k$ privateCompute oracle calls.
    \item The accuracy is at most $\alpha_\text{private}\br{\frac{n}{2},\rho \epsilon,\delta_{\text{group}}\br{\left\lfloor \frac{1}{\rho}\right \rfloor, \rho \epsilon, \rho \delta}}$.
\end{enumerate}
\end{theorem}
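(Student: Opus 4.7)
The plan is to argue each of the three claims using the simple structure of Algorithm \ref{alg:interpolate}: it recomputes via \emph{privateCompute} once every $g:=\lfloor 1/\rho\rfloor$ edit requests, and otherwise keeps the model unchanged. Thus, at any time $t$ in the stream, the current output is $\hat\w_{S_\tau}$ where $\tau$ is the most recent recomputation time, and $|\Delta(S_\tau,S_t)|\leq g-1$. Also, by the assumption that the dataset size in the stream lies in $[n/2,2n]$, we always have $|S_\tau|\geq n/2$.

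For the unlearning guarantee, I would fix $t$ and compare the distribution of the stored model $\hat \w_{S_\tau} = \text{\emph{privateCompute}}(S_\tau,\rho\epsilon,\delta_{\text{group}}(g,\rho\epsilon,\rho\delta))$ to the hypothetical retraining output $\text{\emph{privateCompute}}(S_t,\rho\epsilon,\delta_{\text{group}}(g,\rho\epsilon,\rho\delta))$. Both are outputs of the same $(\rho\epsilon,\delta_{\text{group}}(g,\rho\epsilon,\rho\delta))$-DP procedure applied to datasets differing in at most $g$ points. I would then apply the group differential privacy remark: an $(\epsilon',\delta')$-DP algorithm is $(g\epsilon',\,g e^{(g-1)\epsilon'}\delta')$-DP on datasets differing in $g$ points. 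Substituting $\epsilon'=\rho\epsilon$ yields a privacy parameter of $g\rho\epsilon\leq\epsilon$ (since $g\rho=\lfloor 1/\rho\rfloor\rho\leq 1$), and the choice $\delta_{\text{group}}(g,\rho\epsilon,\rho\delta)$ is precisely designed so that the blown-up $\delta$ parameter equals $\delta$; this gives $(\epsilon,\delta)$-approximate unlearning as required.

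For the runtime bound, I would simply count the number of \emph{privateCompute} calls over $k$ edit requests. Ignoring the initial training, a recomputation occurs exactly at every $g$-th edit, so the number of oracle calls during unlearning is at most $\lceil k/g\rceil$. For $\rho\leq 1/2$ we have $g=\lfloor 1/\rho\rfloor\geq 1/(2\rho)$, hence $\lceil k/g\rceil\leq 2\rho k$; for $\rho>1/2$ the bound $2\rho k\geq k$ is trivial since we call the oracle at most once per edit. For the accuracy, at time $t$ the output is the \emph{privateCompute} output on $S_\tau$, which by definition incurs error at most $\alpha_{\text{private}}(|S_\tau|,\rho\epsilon,\delta_{\text{group}}(g,\rho\epsilon,\rho\delta))$; using $|S_\tau|\geq n/2$ and the standard monotonicity of the private excess risk bound in the sample size, this is bounded by $\alpha_{\text{private}}(n/2,\rho\epsilon,\delta_{\text{group}}(g,\rho\epsilon,\rho\delta))$.

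The only mildly delicate step is the unlearning argument. The main obstacle is to verify that invoking group DP is legitimate in this setting: we need approximate unlearning to compare the unlearned output (which is frozen at $\hat\w_{S_\tau}$) to the hypothetical retrained output on $S_t$, so one must carefully check that the relevant two distributions are both produced by the same randomized mechanism applied to neighboring-in-group datasets, and that the notion of $(\epsilon,\delta)$-approximate unlearning in Definition~\ref{defn:exact-unlearning} matches the two-sided indistinguishability delivered by DP. Modulo this bookkeeping (inductively propagating through the stream, so the comparison at every time $t$ reduces to a single group-DP step between $S_\tau$ and $S_t$), the rest is calculation.
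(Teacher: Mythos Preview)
Your proposal is correct and follows essentially the same approach as the paper. Both arguments identify $g=\lfloor 1/\rho\rfloor$ as the recompute period, invoke group differential privacy to compare the frozen output on $S_\tau$ against a fresh \emph{privateCompute} on $S_t$ (with $\Delta(S_\tau,S_t)\leq g$), count oracle calls as $k/g$ with the same case split $1/\rho<2$ versus $1/\rho\geq 2$ (your $\rho>1/2$ vs.\ $\rho\leq 1/2$), and use the $|S_\tau|\geq n/2$ assumption for accuracy. One minor remark: the number of oracle calls is $\lfloor k/g\rfloor$, not $\lceil k/g\rceil$, so your ceiling is an unnecessary overcount; using the floor directly gives $\lfloor k/g\rfloor\leq k/g\leq 2\rho k$ without the ``$+1$'' worry.
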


\begin{proof}[Proof of \cref{claim:interpolate}]
Consider a point $t$ in the stream, and let $j$ be such that $j\left\lfloor \frac{1}{\rho}\right \rfloor \leq t \leq (j+1)\left\lfloor \frac{1}{\rho}\right \rfloor $. 
Since the algorithm uses \emph{privateCompute} with parameters $\rho \epsilon$ and $\delta_{\text{group}}\br{\left\lfloor \frac{1}{\rho}\right \rfloor, \rho \epsilon, \rho \delta}$, it satisfies $(\rho \epsilon,\delta_{\text{group}}\br{\left\lfloor \frac{1}{\rho}\right \rfloor , \rho \epsilon, \rho \delta}$ differential privacy and hence $\br{\left\lfloor \frac{1}{\rho}\right \rfloor ,\alpha,\delta}$-group privacy. 
 Therefore, for any such $t$, since the number of requests after time $j \left\lfloor \frac{1}{\rho}\right \rfloor$ is less that or equal to  $\left\lfloor \frac{1}{\rho}\right \rfloor$, this implies it satisfies $(\alpha,\delta)$-approximate unlearning.
For the second part of the claim, note that  for $k$ updates, the number of times the algorithm calls \emph{privateCompute} is $\frac{k}{\left\lfloor \frac{1}{\rho}\right \rfloor}$.
Note that $\frac{1}{\rho} \geq 1$, so if $1 \leq \frac{1}{\rho} < 2$, then $\left\lfloor \frac{1}{\rho} \right\rfloor =1$, which gives that the update complexity is $k \leq 2k$. However, if $\frac{1}{\rho} \geq 2$, we have that $\frac{k}{\left\lfloor \frac{1}{\rho} \right\rfloor} \leq \frac{k}{\br{\frac{1}{\rho}-1}} \leq 2k\rho$, which gives the  update complexity is at most $2 \rho k$ in both cases.
For the third part of the claim, at time $j\left\lfloor \frac{1}{\rho}\right \rfloor \leq t \leq (j+1)\left\lfloor \frac{1}{\rho}\right \rfloor $,
the private estimator is computed with $n\br{j\left\lfloor \frac{1}{\rho}\right \rfloor)} \geq \frac{n}{2}$, by assumption.
Moreover the privacy parameters of the algorithm are $\rho \epsilon$ and $\delta_{\text{group}}\br{\left\lfloor \frac{1}{\rho}\right \rfloor, \rho \epsilon, \rho \delta}$ which gives the claimed accuracy bound.
\end{proof}

As an example, consider $\rho = \frac{1}{\sqrt{k}}$,
we first do \emph{privatecompute} with parameters $(\epsilon/\sqrt{k},,\delta_{\text{group}}(\epsilon/\sqrt{k},\delta/\sqrt{k},\lfloor \sqrt{k}\rfloor))$. Since after $\lfloor \sqrt{k} \rfloor$ edit requests, we would no longer satisfy the unlearning guarantee, so we now need to do \emph{privateCompute} again. However note that we would only need to do \emph{privateCompute} $\sqrt{k}$ times which gives the update computation cost.

\paragraph{Example: Convex ERM.} For convex ERM, we can use \cite{bassily2014private} to instantiate the oracle. In this case, accuracy $\alpha_{\text{private}}$ is the excepted excess empirical risk, which is $\alpha_{\text{private}}(n,\epsilon,\delta) = O\br{\frac{GD\sqrt{d}\sqrt{\log{1/\delta}}}{n \epsilon}}$. Using \cref{alg:interpolate}, given $0\leq \rho \leq 1$, at any point in the stream, we have,
\begin{align*}
    \E{\hat F_S(\hat \w_S)- \hat F_S(\w^*_S)} &\leq \alpha_\text{private}\br{\frac{n}{2},\rho \epsilon,\delta_{\text{group}}\br{\left\lfloor \frac{1}{\rho}\right \rfloor, \rho \epsilon, \rho \delta}} \\
    &\leq O\br{\frac{GD \sqrt{d} \sqrt{\log{(1/\rho \delta) \exp{\rho \epsilon\br{\left \lfloor\frac{1}{\rho}\right \rfloor -1}}}}}{n \rho \epsilon}} \\
    & \leq  O\br{GD \br{\frac{\sqrt{d}\sqrt{\log{1/\rho \delta}}}{n \rho \epsilon} + \frac{\sqrt{d}}{ n \rho \sqrt{\epsilon}}}} 
    \\
     & \leq  O\br{\frac{GD\sqrt{d}\sqrt{\log{1/\rho \delta}}}{n \rho \epsilon}} 
\end{align*}
where the last inequality holds when $\frac{\epsilon}{\log{1/\rho \delta}} \leq O(1)$, which usually is the case in DP, and so is a reasonable regime. We now compare against \cite{neel2020descent} - we ignore $G,D$ and $\text{log}$ factor in both the bounds. To have the same runtime, we need $\rho k Tm = k^2 n \iff \rho = \frac{kn}{Tm} = \frac{kd}{\epsilon^2n}$, where in the last equality we substituted $Tm = \frac{(\epsilon n)^2}{d}$, parameters for the DP convex ERM algorithm. Our accuracy bound is $O\br{\frac{\sqrt{d}}{n \rho \epsilon}} = O\br{\frac{\epsilon}{k \sqrt{d}}}$, which is smaller than that of \cite{neel2020descent}, when $\frac{\epsilon}{k \sqrt{d}} \leq \br{\frac{\sqrt{d}}{n k \epsilon}}^{2/5} \iff \epsilon^7 \leq \frac{\sqrt{d}^7 k^3}{n^2} \iff \epsilon \leq \frac{\sqrt{d}k^{3/7}}{n^{2/7}}$. Hence in regimes where the unlearning parameter $\epsilon$ is small enough, which corresponds to a stronger unlearning criterion, this algorithm is better than that of \cite{neel2020descent}.
\section{Experiments}
We run experiments on MNIST \cite{lecun1998mnist}, a standard digit classification computer vision dataset with $10$ classes. We train a logistic regression model, which can be formulated as a smooth convex risk minimization problem. 
Starting with a training dataset of $60$k points, we simulate a stream of 300 deletions of randomly chosen points and 300 insertions of new points, randomly permuted. 
We use Algorithm \ref{alg:noisy-m-sgd} as the learning algorithm, and the corresponding \cref{alg:unlearn-noisy-m-sgd} as the unlearning algorithm. 
We train for $T=200$ iterations, with mini-batch of size $m=50$ with a constant learning rate $\eta = 0.05$.
We run experiments on a range of values of standard deviation $\sigma$ of Gaussian noise, from $0$ to $1.1$ separated by the intervals of size $0.005$.
For every value of $\sigma$, we run 10 instances of the whole unlearning procedure and report average performance: accuracy and number of unstable edits (i.e. number of times a recompute is triggered) , and their standard deviations.
Note that $\sigma=0$ corresponds to standard mini-batch SGD, and therefore the accuracy obtained is the accuracy for the standard method with the aforementioned setting of the hyperparameters. Moreover, the $\sigma=0$ setting also corresponds to Algorithm \ref{alg:sub-sample-GD}, and therefore the corresponding  unlearning algorithm \cref{alg:unlearn-subsample-gd} handles edits for this case.

In \cref{fig:exp1}, we report the test accuracy (fraction of mis-classified samples in the test set) and the  number of unstable edits i.e the number of times a retrain is triggered, as a function of $\sigma$. As expected, as $\sigma$ increases, we get less unstable edits. Interestingly, for small values of $\sigma$, like $0.1$, the degradation in accuracy is not as much as compared to decrease in the the number of unstable edits. Furthermore, recall that the unlearning algorithm triggers a partial recompute - \cref{fig:exp2} plots the average number of iterations done after an unstable edit compared to the number of iterations for a full recompute.

\begin{figure}
\label{fig:experiments}
\centering
\begin{subfigure}{.5\textwidth}
  \centering
  \includegraphics[scale=0.4]{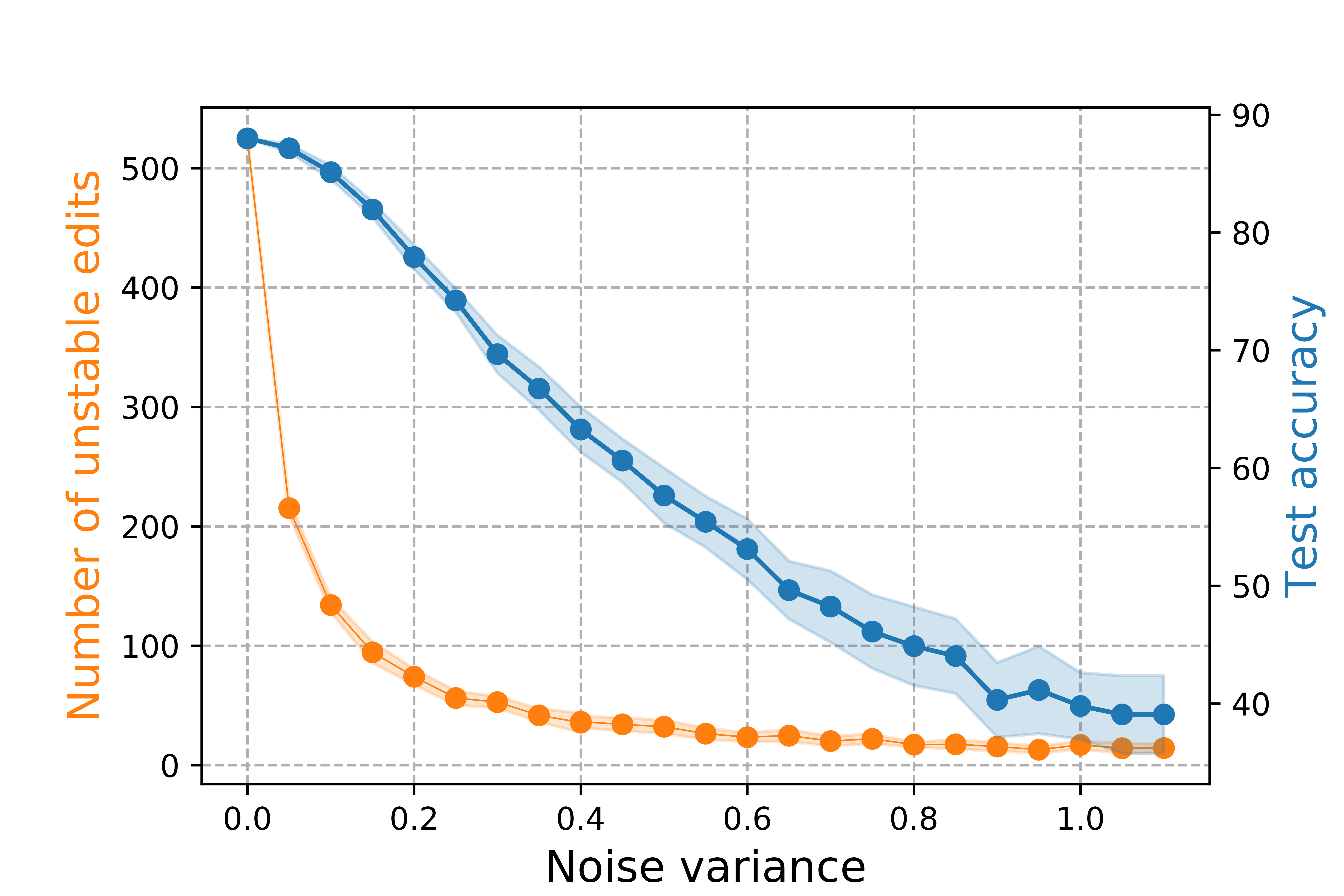}
  \caption{\small Accuracy and number of unstable edits as a function of variance of noise used.}
  \label{fig:exp1}
\end{subfigure}%
\begin{subfigure}{.5\textwidth}
  \centering
  \includegraphics[scale=0.4]{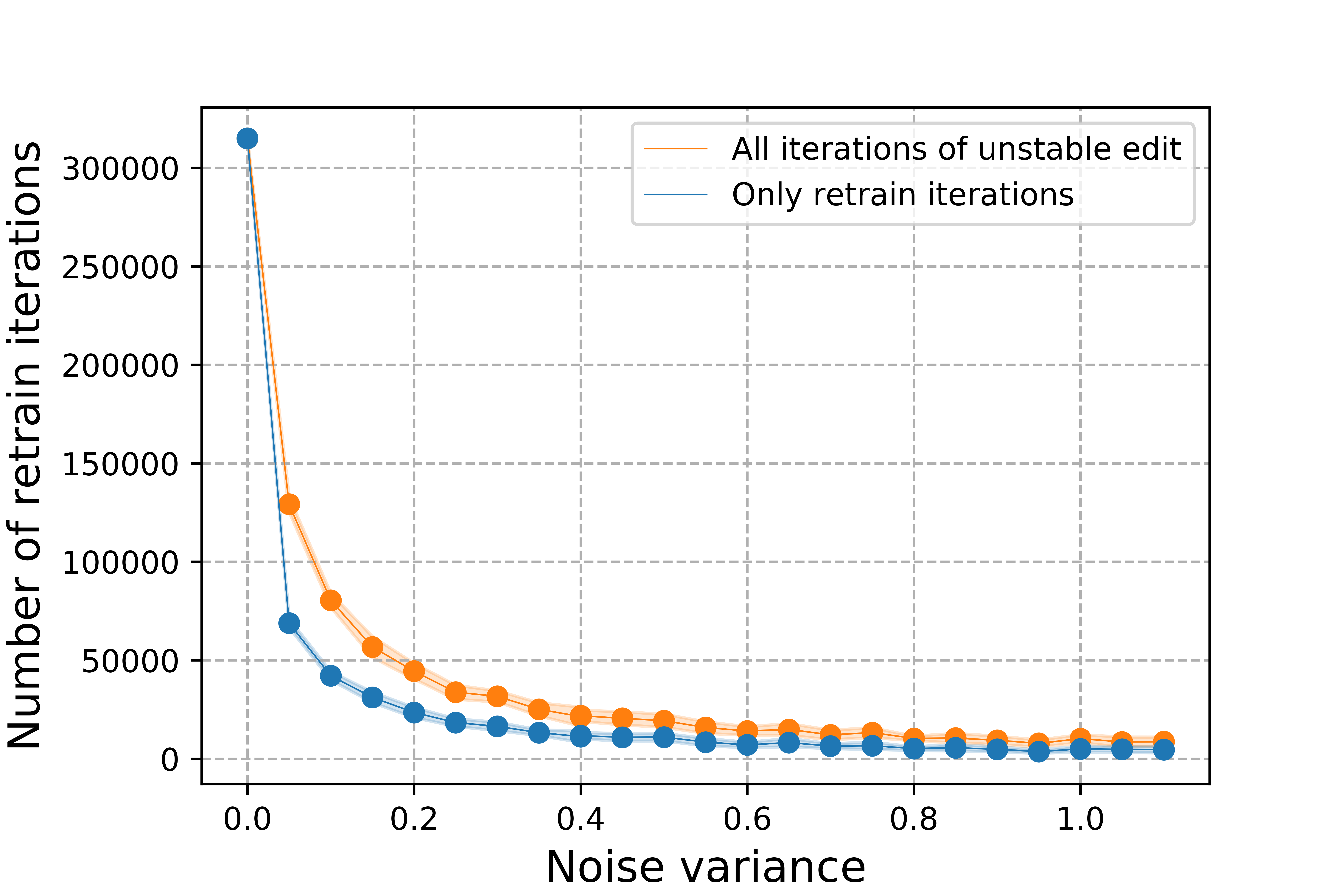}
  \caption{{\small Number of retraining iterations by unlearning algorithm  compared to all full retraining (all iterations)}}
  \label{fig:exp2}
\end{subfigure}
\end{figure}
\end{document}